\DeclareMathAlphabet{\mathpzc}{OT1}{pzc}{m}{it}
\newtheorem{theorem}{Theorem}[subsection]
\newtheorem{remark}{Remark}[subsection]
\newaliascnt{corollary}{theorem}
\newaliascnt{proposition}{theorem}
\newtheorem{proposition}[proposition]{Proposition}
\newaliascnt{lemma}{theorem}
\newtheorem{lemma}[lemma]{Lemma}
\crefname{theorem}{theorem}{theorems}
\Crefname{theorem}{Theorem}{Theorems}
\crefname{remark}{remark}{remarks}
\Crefname{remark}{Remark}{Remarks}
\crefname{proposition}{proposition}{propositions}
\Crefname{proposition}{Proposition}{Propositions}
\crefname{corollary}{corollary}{corollaries}
\Crefname{corollary}{Corollary}{Corollaries}
\crefname{example}{example}{examples}
\Crefname{example}{Example}{Examples}
\crefname{lemma}{lemma}{lemmas}
\Crefname{lemma}{Lemma}{Lemmas}
\crefname{figure}{figure}{figures}
\Crefname{figure}{Figure}{Figures}
\newtheorem{assumption}{\textbf{H}\hspace{-3pt}}
\Crefname{assumption}{\textbf{H}\hspace{-3pt}}{\textbf{H}\hspace{-3pt}}
\crefname{assumption}{\textbf{H}}{\textbf{H}}
\newtheorem{assumptionRW}{\textbf{RW}\hspace{-3pt}}
\Crefname{assumptionRW}{\textbf{RW}\hspace{-3pt}}{\textbf{RW}\hspace{-3pt}}
\crefname{assumptionRW}{\textbf{RW}}{\textbf{RW}}
\newtheorem{assumptionM}{\textbf{M}\hspace{-3pt}}
\Crefname{assumptionM}{\textbf{M}\hspace{-3pt}}{\textbf{M}\hspace{-3pt}}
\crefname{assumptionM}{\textbf{M}}{\textbf{M}}
\newtheorem{assumptionBRW}{\textbf{BRW}\hspace{-3pt}}
\Crefname{assumptionBRW}{\textbf{BRW}\hspace{-3pt}}{\textbf{BRW}\hspace{-3pt}}
\crefname{assumptionBRW}{\textbf{BRW}}{\textbf{BRW}}
\Crefname{assumptionG}{\textbf{G}\hspace{-3pt}}{\textbf{G}\hspace{-3pt}}
\crefname{assumptionG}{\textbf{G}}{\textbf{G}}
\Crefname{assumptionAp}{\textbf{A'}\hspace{-3pt}}{\textbf{A'}\hspace{-3pt}}
\crefname{assumptionAp}{\textbf{A'}}{\textbf{A'}}
\newcommand*{\addFileDependency}[1]{
  \typeout{(#1)}
  \@addtofilelist{#1}
  \IfFileExists{#1}{}{\typeout{No file #1.}}
}
\def \N{\mathbb{N}}
\def \R{\mathbb{R}}
\def \Z{\mathbb{Z}}
\def \E{\mathbb{E}}
\def \f{\mathbf{f}}
\def \P{\mathbb{P}}
\def \D{\mathbb{D}}
\def \0{\mathds{O}}
\def \Fc{\mathcal{F}}
\def \Ic{\mathcal{I}}
\def \Mc{\mathcal{M}}
\def \Pc{\mathcal{P}}
\def\0{\mathds{O}}
\def \l{\left}
\def \r{\right}
\def\uBRWtheta{u^{\mathrm{BRW},\theta}}
\newcommand{\eqdef}{:=}
\newcommand{\mustar}{\mu^{\star}}
\def\msa{\mathsf{A}}
\def\msb{\mathsf{B}}
\def\mse{\mathsf{E}}
\def\msm{\mathsf{M}}
\def\msx{\mathsf{X}}
\def\mcs{\mathcal{S}}
\def\mcx{\mathcal{X}}
\def\mce{\mathcal{E}}
\def\mcf{\mathcal{F}}
\def\rset{\mathbb{R}}
\def\zset{\mathbb{Z}}
\def\mrl{\mathrm{L}}
\def\rmd{\mathrm{d}}
\def\rme{\mathrm{e}}
\def\rmn{\mathrm{n}}
\def\rmC{\mathrm{C}}
\newcommand{\1}{\mathbbm{1}}
\newcommand{\LeftEqNo}{\let\veqno\@@leqno}
\def\ie{\textit{i.e.}}
\def\eqsp{\;}
\newcommand{\coint}[1]{\left[#1\right)}
\newcommand{\ccint}[1]{\left[#1\right]}
\def\as{\ensuremath{\text{a.s.}}}
\def\eg{e.g.}
\newcommand{\opnorm}[1]{{\left\vert\kern-0.25ex\left\vert\kern-0.25ex\left\vert #1
    \right\vert\kern-0.25ex\right\vert\kern-0.25ex\right\vert}}
\def\bfe{\mathbf{e}}
\def\Id{\operatorname{Id}}
\def\Id{\operatorname{Id}}
\newcommand\coupling[2]{\Gamma(\mu,\nu)}
\renewcommand{\geq}{\geqslant}
\renewcommand{\leq}{\leqslant}
\def\vareps{\varepsilon}
\def\tZ{\tilde{Z}}
\def\bfX{\mathbf{X}}
\def\Tf{\mathrm{T}_{f}}
\newcommand{\foX}{\overrightarrow{X}}
 \newcommand{\baX}{\overleftarrow{X}}
  \newcommand{\baY}{\overleftarrow{Y}}
 \newcommand{\KL}{\mathrm{KL}}
\def\bbf{\mathbf{b}}
\def\loss{\mathfrak{L}}
\def\loikhi2{\mathbf{\chi^2}}
\def\baq{\overleftarrow{q}}
\def\Tf{T_f}
\def\l{\left}
\def\r{\right}
\def\mE{\mathbb{E}}
\def\foq{\overrightarrow{q}}
\def\l{\left}
\def\r{\right}
\newcommand{\PE}{\mathbb{E}}
\newcommand{\tvnorm}[1]{\| #1 \|_{\mathrm{TV}}}
\def\Id{\mathrm{Id}}
\def\Mc{\mathcal{M}}
\def\mF{\mathcal{F}}
\def\MP{\mathrm{MP}}
\def\D{\mathbb{D}}
\def\mE{\mathcal{E}}
\def\mce{\mathcal{E}}
\def\mse{\mathsf{E}}
\def\msa{\mathsf{a}}
\def\canoX{\mathbf{X}}
\def\canoq{\mathbf{q}}
\def\genericq{\mathfrak{q}}
\def\msa{\mathsf{A}}
\def\msm{\mathsf{M}}
\def\um{\mathsf{um}}
\def\m{\mathsf{m}}
\def\Ic{\mathcal{I}}
\def\PP{\mathbb{P}}
\def\pX{p}
\def\muX{\mu}
\def\plusinfty{+\infty}
\def\tZ{\tilde{\mathbb{Z}}}
\def\foXrw{{X}^{\mathrm{RW}}}
\def\baXrw{\overleftarrow{X}^{\mathrm{RW}}}
\def\baXrwstar{\overleftarrow{X}^{\mathrm{RW},\star}}
\def\foqrw{{q}^{\mathrm{RW}}}
\def\baqrw{\overleftarrow{q}^{\mathrm{RW}}}
\def\rw{\gamma^{\mathrm{RW}}}
\def\foXm{{X}^{\mathrm{M}}}
\def\baXm{\overleftarrow{X}^{\mathrm{M}}}
\def\baXmstar{\overleftarrow{X}^{\mathrm{M},\star}}
\def\baYmstar{\overleftarrow{Y}^{\mathrm{M},\star}}
\def\foqm{{q}^{\mathrm{M}}}
\def\baqm{\overleftarrow{q}^{\mathrm{M}}}
\def\fopm{{p}^{\mathrm{M}}}
\def\fopmm{{p}^{\mathrm{M},1}}
\def\foXbrw{{X}^{\mathrm{BRW}}}
\def\baXbrw{\overleftarrow{X}^{\mathrm{BRW}}}
\def\baXbrwstar{\overleftarrow{X}^{\mathrm{BRW},\star}}
\def\foqbrw{{q}^{\mathrm{BRW}}}
\def\baqbrw{\overleftarrow{q}^{\mathrm{BRW}}}
\def\brw{\gamma^{\mathrm{BRW}}}
\def\uM{u^{\mathrm{M}}}
\def\uMstar{u^{\mathrm{M},\theta^\star}}
\def\uRW{u^{\mathrm{RW}}}
\def\uRWstar{u^{\mathrm{RW},\theta^\star}}
\def\uBRW{u^{\mathrm{BRW}}}
\def\uBRWstar{u^{\mathrm{BRW},\theta^\star}}
\def\tqm{\tilde{q}^{\mathrm{M}}}
\def\tqbrw{\tilde{q}^{\mathrm{BRW}}}
\def\qrw{q^{\mathrm{RW}}}
\def\mum{\mu^{\mathrm{M}}}
\def\murw{\mu^{\mathrm{RW}}}
\def\mubrw{\mu^{\mathrm{BRW}}}
\def\foprw{p^{\mathrm{RW}}}
\def\fopbrw{p^{\mathrm{BRW}}}
\def\Vrw{V^{\mathrm{RW}}}
\def\Vm{V^{\mathrm{M}}}
\def\Vbrw{V^{\mathrm{BRW}}}
\def\m{\mathbf{m}}
\def\baPrw{\overleftarrow{\mathbb{P}}^{\mathrm{RW}}}
\def\baPm{\overleftarrow{\mathbb{P}}^{\mathrm{M}}}
\def\baPbrw{\overleftarrow{\mathbb{P}}^{\mathrm{BRW}}}
\def\baPrwstar{\overleftarrow{\mathbb{P}}^{\mathrm{RW},\star}}
\def\baPmstar{\overleftarrow{\mathbb{P}}^{\mathrm{M},\star}}
\def\baPbrwstar{\overleftarrow{\mathbb{P}}^{\mathrm{BRW},\star}}
\def\tuMstar{\tilde{u}^{\mathrm{M},\theta^\star}}
\def\complement{\mathrm{c}}
\def\tmubrw{\tilde{\mu}^{\mathrm{BRW}}}
\def\tmustar{\tilde{\mu}^\star}
\def\mB{\mathcal{B}}
\def\mF{\mathcal{F}}
\def\Y{\mathrm{Y}}
\newcommand{\QQ}{\mathbb{Q}}
\def\mri{\mathrm{I}}
\def\mrm{\mathsf{m}}
\def\Puq{\P^{u\canoq}}
\def\Pq{\P^{\canoq}}
\def\baqrwstar{\overleftarrow{q}^{\mathrm{RW},\theta^\star}}
\def\baqmstar{\overleftarrow{q}^{\mathrm{M},\theta^\star}}
\def\bapmstar{\overleftarrow{p}^{\mathrm{M},\theta^\star}}
\def\baqbrwstar{\overleftarrow{q}^{\mathrm{BRW},\theta^\star}}
\def\tuM{\tilde{u}^{\mathrm{M}}}
\def\tuMstar{\tilde{u}^{\mathrm{M},\theta^\star}}
\def\Mrw{M^{\mathrm{RW}}}
\def\Mm{M^{\mathrm{M}}}
\def\tMm{\tilde{M}^{\mathrm{M}}}
\def\Mbrw{M^{\mathrm{BRW}}}
\def\bfMbrw{\mathbf{M}^{\mathrm{BRW}}}
\def\Sbrw{S^{\mathrm{BRW}}}
\def\Abrw{A^{\mathrm{BRW}}}
\def\fopbrwb{p^{\mathrm{BRW},1}}
\def\foqbrwb{q^{\mathrm{BRW},1}}
\def\dbf{\mathbf{d}}
\def\Mbf{\mathbf{M}}
\def\mstar{\mathbf{m}^\star}
\crefname{theorem}{theorem}{theorems}
\Crefname{theorem}{Theorem}{Theorems}
\crefname{remark}{remark}{remarks}
\Crefname{remark}{Remark}{Remarks}
\crefname{proposition}{proposition}{propositions}
\Crefname{proposition}{Proposition}{Propositions}
\crefname{corollary}{corollary}{corollaries}
\Crefname{corollary}{Corollary}{Corollaries}
\crefname{example}{example}{examples}
\Crefname{example}{Example}{Examples}
\crefname{lemma}{lemma}{lemmas}
\Crefname{lemma}{Lemma}{Lemmas}
\title{Non-Asymptotic Convergence of Discrete Diffusion Models: Masked and Random Walk dynamics}
\author[1]{{Giovanni Conforti}}
\author[2]{{Alain Durmus}}
\author[3]{{Le-Tuyet-Nhi Pham}\footnote{corresponding author}
\thanks{le-tuyet-nhi.pham@polytechnique.edu}}
\author[4]{{Ga\"el Raoul}}
\affil[1]{Department of Mathematics, University of Padova\\
Padova, Italy}
\affil[2,3,4]{\'Ecole Polytechnique, CMAP, IP Paris\\
Palaiseau, France}
\begin{document}
\maketitle
\begin{abstract}
  Diffusion models for continuous state spaces based on Gaussian noising processes are now relatively well understood from both practical and theoretical perspectives. In contrast, results for diffusion models on discrete state spaces remain far less explored and pose significant challenges, particularly due to their combinatorial structure and their more recent introduction in generative modelling. 
  In this work, we establish new and sharp convergence guarantees for three popular discrete diffusion models (DDMs). Two of these models are designed for finite state spaces and are based respectively on the random walk and the masking process.
The third DDM we consider is defined on the countably infinite space $\N^d$ and uses a drifted random walk as its forward process. For each of these models, the backward process can be characterized by a discrete score function that can, in principle, be estimated. However, even with perfect access to these scores, simulating the exact backward process is infeasible, and one must rely on time discretization. In this work, we study Euler-type approximations and establish convergence bounds in both Kullback–Leibler divergence and total variation distance for the resulting models, under minimal assumptions on the data distribution. 
To the best of our knowledge, this study provides the {optimal non-asymptotic} convergence guarantees for these noising processes that do not rely on boundedness assumptions on the estimated score. In particular, the computational complexity of each method scales only {linearly in the dimension, up to logarithmic factors}.
\end{abstract}

\section{Introduction}
Diffusion Models (DMs) have established themselves as a fundamental tool for the generation of complex, high-dimensional data, including images \cite[see, \textit{e.g.},][]{rombach2022high,ramesh2022hierarchical}, audio \cite{chen2020wavegrad,kong2020diffwave}, and video \cite{ho2022video,villegas2022phenaki,bar2024lumiere}.
In their first formulation, DMs define a forward process governed by stochastic differential equations (SDEs) that progressively corrupt the data with noise until it reaches an easy-to-sample prior distribution, and then learn the corresponding reverse dynamics to reconstruct samples from this prior back to the data distribution.
In their continuous-time formulation, DMs benefit from a rigorous theoretical foundation and an analytically stable learning objective \cite{song2021scorebasedgenerativemodelingstochastic,chen2022sampling,dockhorn2021score,conforti2025kl}.

By contrast, {Discrete Diffusion Models} (DDMs) continue to pose significant challenges. Multiple diffusion-based methods have recently been proposed for discrete spaces \cite{austin2021structured,shi2024simplified,campbell2022continuous,holderrieth2024generator,ren2024discrete}, or spaces of mixed type \cite{bertazzi2024piecewise}, but there is still no consensus on which approach is theoretically sound or most practically efficient. Various formulations rely on complex forward kernels or computationally unstable ratio-based estimators for backward transitions, leading to limited convergence guarantees and high computational costs in high dimensions. Furthermore, recent studies on discrete diffusion models have introduced valuable theoretical tools \cite{campbell2022continuous,holderrieth2024generator,ren2024discrete}, yet most approaches remain either too generic or rely on strong assumptions, limiting their scalability and stability during training.
Recently, \cite{bach2025samplingbinarydatadenoising} and \cite{pham2025discrete} obtained sharp convergence guarantees for data supported on the hypercube under mild assumptions, and \cite{liang2025absorb} established error bounds for masked diffusion processes on $\mathbb{Z}^d_m$. However, these approaches are either tailored to the hypercube or rely on uniform boundedness conditions on the estimated score, and none extend to the countably infinite setting $\mathbb{N}^d$.

This paper bridges these gaps by establishing theoretical guarantees for DDMs driven by masked and random walk dynamics, applicable to discrete data supported on both the finite space $\Z^d_m$ and the countably infinite space $\N^d$, where \cite{anonymous2025countsdiff} have demonstrated the necessity and strong empirical performance of such models.
Specifically, we study DDMs driven by random walks on the cycle $\Z^d_m$, and establish the non-asymptotic error bounds that does not depend on the boundedness of the estimated score by leveraging the score monotonicity.
For the random walk defined on $\N^d$, we further demonstrate that a similar type of bound holds under the mild assumption that the data distribution admits a finite second moment.
Notably, this yields the {first rigorous convergence analysis} for data supported on the {countably infinite state space $\N^d$}.
In addition, we investigate the widely adopted masked diffusion model for discrete data, originally introduced by \cite{austin2021structured}, which has demonstrated strong empirical performance in recent studies \cite{shi2024simplified,chao2025beyond}.
This study provides the first {non-asymptotic} error bound for masked diffusion models under an early-stopping scheme, which does not rely on the boundedness assumption on the estimated score function. The result holds under the mild assumption that the data distribution is fully supported—a practical condition, since smoothing can always be applied in practice.
These results underscore the generality and robustness of our analytical framework. In particular, our analysis crucially relies on the evolution of the score, especially its monotonicity estimates, along the backward dynamics, enabling rigorous error bounds without imposing overly restrictive assumptions on the data. Furthermore, by employing an appropriate sequence of step sizes, we achieve a complexity that scales {linearly} (up to logarithmic factors), rather than exponentially, with the dimension.


\textbf{Notation.} Given a measurable space $(\mse, \mce)$, we denote by $\Pc(\mse)$ the set of probability measures on $\mse$ and by $2^{\mse}$ the power set of $\mse$. Given two probability measures $\mu, \nu \in \Pc(\mse)$, the Kullback--Leibler divergence (also called relative entropy) of $\mu$ with respect to $\nu$ is defined as $\KL(\mu|\nu):=\int \log({\rmd\mu}/{\rmd\nu})\rmd\mu$ if $\mu$ is absolutely continuous with respect to $\nu$, and $\KL(\mu|\nu)=+\infty$ otherwise. The total variation distance between $\mu$ and $\nu$ is defined as $\|\mu-\nu\|_{\text{TV}}= \sup_{\msa \in \mce} |\mu(\msa)-\nu(\msa) |$. Consider a random variable $X$, we denote by $\mathrm{Law}(X)$ the law of $X$. We denote by $\updelta_x$ the Dirac mass at $x$. We use $\1(\cdot)$ to denote an indicator function. We denote the set $\{1,\dots,n\}$ as $[n]$ for a natural number $n \in \N^*$.

\section{Discrete diffusion models}

\subsection{Continuous-Time Markov Chains and their time-reversal}\label{sec:CTMC_timereversal}

\subsubsection{Continuous-Time Markov Chain} We begin with a brief overview of Continuous-Time Markov Chains (CTMCs). Throughout this paper, we consider a discrete countable state space $\msx$.
A CTMC on $\msx$ is a
time-indexed right-continuous stochastic process $(X_t)_{t \geq 0}$ on a probability space $(\Omega,\mcf,\PP)$ that Markov,
\ie,
for any $0 \leq s < t$, almost surely, $\P(X_{t_n} = x_{t_n} | \mcf_s)= \P(X_{t_n} = x_{t_n} | X_{s})$,
where $(\mcf_t)_{t\geq 0}$ is the natural filtration associated with $(X_t)_{t\geq0}$.

To design CTMCs, one central object is a rate matrix, \ie, function $(x,t,\msb)\in \msx \times \R_+ \times 2^{\msx} \mapsto q_t(x,\msb)$; also referred to $Q$-function or generator. In particular, we consider the following assumption on $(q_t)_{t\geq 0}$.
\begin{assumption}
	\label{ass:ctcm}
	$(q_t)_{t\geq 0}$ is a stable conservative rate matrix, \ie, it satisfies the following properties:
	\begin{itemize}
		\item for all $(x,t) \in \msx \times \rset_+$, the function $q_t(x,\cdot)$ is a signed (discrete) measure on $\msx$ such that $q_t(x,\msx) = 0$ and $0 \leq q_t(x,\mathsf{B}\setminus \{x\}) <\infty $ for all $\mathsf{B} \subset \msx$;
		\item for all $\mathsf{B} \subset \msx$, the function $(x,t) \mapsto q_t(x,\mathsf{B})$ is measurable;
		\item for all $x \in \msx$, the singleton $\{x\}$ is $q$-bounded, \ie,  $\sup_{t\in \rset_+} (-q_t(x, \{x\}))  <\infty$. 
	\end{itemize}
\end{assumption}
We will show that~\Cref{ass:ctcm} is satisfied by the generators of the forward processes considered in~\Cref{sec:considered_models}.

Under~\Cref{ass:ctcm}, the generator $(q_t)_{t\geq 0}$ allows to define a sub-Markov semigroup $\{\pX_{s,t}\,:\, 0\leq s <t \}$ on $\msx$ and  corresponding CTMCs,
\ie, for $0\leq s <t$, $\pX_{s,t}$ is a  transition sub-probability density and  the Chapman-Kolmogorov equation holds, \ie, for any
$0\leq s< u < t$ and $x_s,x_u,x_t\in\msx$,
\begin{equation}
	\label{eq:chapman_kolmogorov}
	\pX_{s,t}(x_s,x_t) = (\pX_{s,u}\pX_{u,t})(x_s,x_t) \eqdef \sum_{x_u \in\msx} \pX_{s,u}(x_s,x_u) \pX_{u,t}(x_u,x_t) \eqsp.
\end{equation}
In the case where for any $0<s<t$, $p_{s,t} = p_{0,t-s}$, then $\{\pX_{s,t}\,:\, 0\leq s <t \}$ is said to be a homogeneous semigroup. Otherwise it is said
to be inhomogeneous.

Here we suppose that there is no explosion which is equivalent to the fact that the family of semigroup $\{\pX_{s,t}\,:\, 0\leq s <t\}$ is in fact  Markov.
\begin{assumption}
	\label{ass:non_explo}
	For any $0\leq s <t$ and $x \in\msx$, $\pX_{s,t}(x,\msx) = 1$.
\end{assumption}
\begin{remark}
	\label{rem:non_explo}
	Note that \cite[Theorem 6]{feller1940integro} ensure  that~\Cref{ass:non_explo} holds if  there exists a measurable function $t\mapsto \phi_t$ such that $\sup_{x\in\msx} (-q_t(x, \{x\})) \leq \phi_t$ for any $t \geq 0$, and for a fixed $p >1$, $\int_{s}^t (\phi_u)^p \rmd u < \plusinfty$ for any $0\leq s < t$. We also refer to \cite{zhang2018nonexplosion} for conditions implying~\Cref{ass:non_explo} for unbounded generators.
\end{remark}
Under \Cref{ass:ctcm} and \Cref{ass:non_explo}, for any initialization $X_0\sim \mu_0$, there exists a CTMC $(X_t)_{t \geq 0}$ starting from $X_0$ and associated with the family of transitions $\{\pX_{s,t}\,:\, 0 \leq s <t \}$:
\begin{equation}
	\label{eq:density_transition_X}
	\pX_{s,t}(x_s,x_t) =
	\begin{cases}
		\PP(X_{t} = x_{t} | X_s = x_s) & \text{ if $\PP(X_s=x_s) \neq 0$} \eqsp, \\
		\PP(X_{t} = x_{t}) & \text{ otherwise } \eqsp.
	\end{cases}
\end{equation}
In addition, \cite[Theorem 4.3.]{feinberg2014solutions} shows that for all $(x,s) \in \msx \times \R_+$ and $\msb \subset \msx$ such that $\sup_{t \in \R_+, x\in \msb} (-q_t(x,\{x\}))< \infty$, the function $\pX_{s,t}(x, \msb)$ satisfies for almost $t>s$ the forward Kolmogorov equation:
\begin{equation}
	\label{eq:foward_kolm}
	\frac{\partial \pX_{s,t}}{\partial t} (x,\msb) = \int_{\msb} q_t(y,\{y\}) \pX_{s,t}(x,\rmd y)  + \int_{\msx} q_t(y,\msb \setminus \{y\}) \pX_{s,t}(x,\rmd y)  \eqsp.
\end{equation}
Given the generator $(q_t)_{t\geq 0}$, we define
\begin{align}
	(qf)(t,x) = \sum_{y \neq x} q_t(x,y) [f(t,y)- f(t,x)]\eqsp, \quad \text{for any function $f$} \eqsp.
\end{align}
We note that while we restrict our presentation to a family of
inhomogeneous generators $(q_t)_{t\geq 0}$ indexed by $\rset_+$, the
same results and construction apply when  it is restricted to a finite interval $\ccint{0,\Tf}$ for an horizon $\Tf >0$.

{\color{black}
	\subsubsection{Time-reversal process}  Starting from the CTMC $(X_t)_{t\geq 0}$ with initial distribution $\mu_0$ and associated with $(q_t)_{t\geq 0}$, we define the corresponding time-reversal process  $(\overleftarrow{X}_t)_{t \in \ccint{0,\Tf}}$ for a horizon $\Tf$, for any $t \in\ \ccint{0,\Tf}$ as  $\overleftarrow{X}_t={X}_{\Tf-t}$.
	Under~\Cref{ass:ctcm} and~\Cref{ass:non_explo}, \cite[Theorem 2.8]{conforti2022time} applies (see Appendix \ref{app:application_reversal} for completeness}):  $(\overleftarrow{X}_t)_{t\in\ccint{0,\Tf}}$ is also an {inhomogeneous} CTMC, associated with a family of generator matrices $(\overleftarrow{q}_t)_{t \in\ccint{0,\Tf}}$ satisfying the time-reversal formula:
	\begin{equation}
		\label{eq:time_rev_formula_X}
		\muX_{\Tf-t}(x)\overleftarrow{q}_t(x,y)=\muX_{\Tf-t}(y)q_{\Tf -t}(y,x) \eqsp,
	\end{equation}
	for any $0 \leq t \leq \Tf$ and  $x \neq y \in \msx$, where for any $t\in\ccint{0,\Tf}$, we denote by $\muX_t$ the forward marginal distribution:
	\begin{equation}
		\muX_t(x) = \PP(X_t = x) \eqsp.
	\end{equation}
	Equation~\eqref{eq:time_rev_formula_X} serves as the key tool to derive the backward generator $(\baq_t)_{t\in [0,\Tf]}$, which inherits the stability of $(q_t)_{t\in [0,\Tf]}$.
	Starting from~\eqref{eq:time_rev_formula_X}, under mild assumptions on the target $\mustar$, we show in~\Cref{sec:considered_models} that for all the models we consider, the associated  backward generator $(\baq_t)_{t\in [0,\Tf)}$ can be written for any $0 \leq t \leq \Tf$ and $x \neq y \in \msx$ as
	\begin{equation}\label{eq:backward_generator_ctmc}
		\baq_t (x,y) = u_t (x,y) \tilde{q}_t(x,y) \eqsp.
	\end{equation}
	Here $(\tilde{q}_t)_{t \in [0,\Tf]}$ is an auxiliary generator derived straightforwardly from the forward generator $(q_t)_{t \in [0,\Tf]}$, and $(u_t)_{t\in [0,\Tf]}$ is a family of non-negative function from $\msx^2$ to $\rset_+$, which plays a similar role as the score function in continuous generative models.
	The explicit expression of $(\tilde{q}_t)_{t \in [0,\Tf]}$ and $(u_t)_{t\in [0,\Tf]}$ are both provided in~\Cref{sec:considered_models}, and note that for all considered models except masked diffusion, $\tilde{q} = q$. 
	
	
	\subsubsection{Approximation of the time-reversal process}
	Similar to standard continuous diffusion models, simulating the reverse process exactly is infeasible due to three main bottlenecks:
	(i) the starting distribution $\baX_0\sim \muX_{\Tf}$ is intractable;
	(ii) the dynamics of the backward process rely on the backward generator $(\baq_t)_{t\in \ccint{0,\Tf}}$ that we do not have access;  (iii) finally, even in the absence of the two previous limitations, inhomogeneous CTMCs still require time discretization. In fact, time discretization can in principle be avoided via the uniformization algorithm (see \cite{wan2025error}), yet this approach scales poorly in high-dimensional settings.
	
	Regarding (i), instead of sampling exactly from $\mu_{\Tf}$, we can start the DDMs from an easy-to-sample distribution $\gamma$ that approximates $\mu_{\Tf}$, typically chosen as the invariant measure of the forward dynamics.
	
	As for (ii), starting from~\eqref{eq:backward_generator_ctmc}, we will see that in all the models that we consider, the main unknown is the family of function  $(u_t)_{t\in [0,\Tf]}$ which depends implicitly on the data distribution, while $(\tilde{q}_t)_{t \in [0,\Tf]}$ can easily be derived from the forward matrix rate $(q_t)_{t\in\ccint{0,\Tf}}$.   Typically, $(u_t)_{t\in [0,\Tf]}$ is approximated using a parameterized family  $\{(u_t)_{t\in [0,\Tf]} \, : \, \theta \in \Theta \}$, which is trained using an appropriate loss function.
	In particular, following \cite{lou2023discrete}\footnote{the score function considered in \cite{lou2023discrete} corresponds to $u_t$ in our notations.} we consider the loss function
	\begin{equation}
		\label{eq:loss_entropy_function}
		\loss_{\mathrm{e}}(\theta) \eqdef \int_0^{\Tf} \E \l[ \sum_{y\in \msx} \l(u_{t}\log \frac{u_{t}}{u^{\theta}_{t}}-u_{t}+u^{\theta}_{t} \r)\tilde{q}_{t}(X_{\Tf-t},y)\1_{y \neq X_{\Tf-t}} \r] \rmd t \eqsp.
	\end{equation}
	Furthermore, leveraging the conditional expectation formulation of the discrete score, we can incorporate a stable $\mathrm{L}^2$ loss into our objective function:
	\begin{equation}
		\loss_2(\theta) \eqdef \int_0^{\Tf} \E \l[ \sum_{y \in \msx} \l\| (u_t - u^{\theta}_t) \tilde{q}_t(X_{\Tf-t},y)  \r\|^2  \1_{y \neq X_{\Tf-t}} \r] \rmd t \eqsp.
	\end{equation}
	Based on an approximate minimizer of this function, from~\eqref{eq:backward_generator_ctmc},  we could in principle  consider the resulting backward generator defined for any $t \in [0,\Tf)$ and $x,y \in \msx$ as
	\begin{equation}
		u^{\theta^\star}_t (x,y) \tilde{q}_t(x,y) \eqsp.
	\end{equation}
	However, exact simulation of the CTMC associated with  this rate matrix is infeasible in practice, so we discretize time and approximate the backward rate using piecewise constant functions.
	
	Let $\{t_k\}_{k=0}^K$ be a time grid with step sizes $h_k = t_k - t_{k-1}$.
	Overall, given an approximation $\gamma$ from $\muX_{\Tf}$, $(u^{\theta^\star}_{t})_{t\in\coint{0,\Tf}}$ and  $\{t_k\}_{k=0}^K$, the generative process $(\baX^\star_{t})_{\in\ccint{0,\Tf}}$ that we consider is defined as follows.
	First set $\baX^\star_0 \sim \gamma$. Given $\baX^\star_{t_k}$, for $t \in [t_k,t_{k+1})$ and $x \neq y \in \msx$, we set
	\begin{align}\label{eq:generator_generative_ctmc}
		\baq_{t}^{\theta^\star}(x,y) &=   u^{\theta^{\star}}_{t_k}(x,y|\baX^\star_{t_k})\tilde q_{t} (x,y) \eqsp,
	\end{align}
	where $(x,y) \mapsto u^{\theta^{\star}}_{t_k}(x,y|\baX^\star_{t_k})$ is a function that is straightforwardly designed from  the estimate $u^{\theta^\star}_{t_k}$ and the current state $\baX^\star_{t_k}$. 
	For simplicity, we restrict attention to the case where $\tilde{q}_t$ is time-independent. Equipped with the generator $\baq_t^{\theta^\star}$, the trajectory can be sampled as follows. Set $t= t_k$ and let $\lambda_{t}(x) = \sum_{y \neq x} \baq^{ \theta^\star}_{t} (x, y)$ denote the total rate at the state $x$, then draw the holding time $\tau \sim  \mathrm{Exp}(\lambda_{t_k}(\baX_{t_k}^\star))$.
	\begin{enumerate}[wide,label=(\arabic*)]
		\item If $t+\tau > t_{k+1}$: set $\baX^\star_{\tilde{t}} = \baX^\star_{t}$ for all $\tilde{t} \in [t, t_{k+1}]$ and update $t = t_{k+1}$.
		\item Otherwise, set $\baX^\star_{\tilde{t}} = \baX^\star_{t}$ for all $\tilde{t} \in [t, t+\tau)$.  Select $y \neq x \in \msx$ with probability $\baq^{ \theta^\star}_t(\baX^\star,y)/ \lambda_t(\baX^\star_t)$, then update $\baX^\star_{t+\tau} = y $ , $t = t+ \tau$ and repeat from the step computing the total rate.
	\end{enumerate}
	Notably, at each iteration we avoid solving the Kolmogorov equation to obtain the next state; instead, we simply use the transition rates together with the exponential sampler.
	In practice, Poisson sampler can be used to simulate a discrete-time Markov chain $(\baY^\star_k)_{k=0}^K$ whose law matches that of $(\baX^\star_t)_{t\in [0,\Tf]}$ evaluated on the time grid, that is, $\baY^\star_k \overset{\text{Law}}{=} \baX^\star_{t_k}$.
	For each $\sigma \in \Mc$, we draw $n_\sigma \sim \mathrm{Poisson}(h_{k+1}\baq^{ \theta^\star}_{t_k}(\baY^\star_{t_k}, \sigma(\baY^\star_{t_k})))$ where $n_\sigma$ denotes the number of times the move $\sigma$ is applied. We then update the state according to $\baY^\star_{t_{k+1}} = (\Pi_{\sigma} \sigma^{n_\sigma}) (\baY_{t_k}^\star)$.

	In what follows, we present the CTMC that we consider as forward
	process and specify the two functions $(x,y) \mapsto u^{\theta^\star}_{t_k}(x,y|\baX^\star_{t_k})$ and $(x,y) \mapsto \tilde q_{t} (x,y)$ in the generator~\eqref{eq:generator_generative_ctmc}  of the resulting generative process. Next, we show that explicit convergence bounds between the associated generative distribution and the data distribution $\mustar$.
	
{\color{black}
	
	\subsection{Discrete Diffusion Models}
	\label{sec:considered_models}
	
	We first focus on the finite state space $\Z^d_m=\{0,\dots,m-1\}^d$, where we consider two types of noising processes: the random walk process and the mask diffusion process originally proposed by \cite{austin2021structured}. Subsequently, we investigate the biased random walk on the countably infinite state space $\N^d$.

	
	
	
	\subsubsection{Random walk on \protect{$\Z^d_m$}}
	
	We define the forward process $(\foXrw_t)_{t\in [0,\Tf]}$ on $\Z^d_m$ over a fixed time horizon $\Tf>0$ as a homogeneous CTMC initialized from the data distribution $\mustar$, as a distribution on $\Z^d_m$, and associated with the generator $\foqrw$ specified for $x,y \in \Z^d_m$ as follows:
	\begin{align}\label{eq:forward_generator_rw}
		\foqrw (x,y) = \begin{cases}
			1/2 \quad &\text{if } y = \sigma(x) \text{ for } \sigma \in \mcs \eqsp,\\
			- d \quad &\text{if } y=x \eqsp, \\
			0  \quad &\text{otherwise} \eqsp,
		\end{cases}
	\end{align}
	where
	\begin{equation}
		\label{eq:def_mc}
		\mcs \eqdef \{\sigma^\ell_+, \sigma^\ell_- \, : \, \ell \in [d] \}\eqsp,
	\end{equation}
	and
	the operators $\sigma^{\ell}_+, \sigma^\ell_-$ correspond to the forward and backward jump on the $\ell$-th component, respectively, defined as
	\begin{equation}
		\label{eq:def_sigma_ell}
		\sigma^\ell_+(x) = x+\bfe^\ell \pmod{m} \quad \text{ and } \quad \sigma^\ell_-(x) = x-\bfe^\ell \pmod{m} \eqsp,
	\end{equation}
	for $x \in \Z^d_m$, where $\{\bfe^\ell\}_{\ell=1}^d$ are the basic vectors of $\R^d$ and $\pmod{m}$ denotes the modulo operation by $m$.
	It follows directly from~\eqref{eq:forward_generator_rw} that $\foqrw$ is a non-explosive stable conservative rate matrix, \ie, satisfies~\Cref{ass:ctcm} and~\Cref{ass:non_explo}, ensuring that the process $(\foXrw_t)_{t\in [0,\Tf]}$ is well-defined.
	
	In addition, it is well known that $\rw = \mathrm{Uniform}(\Z^d_m)$ is an invariant distribution for $\foqrw$ by \cite[Section 3.5]{norris1998markov} since
	for any $x \in \Z^d_m$, the following holds:
	\begin{align}
		(\rw \foqrw)(x) = \sum_{y \in \Z^d_m} \rw(y) \foqrw(y,x) = \frac{1}{m^d} \sum_{y \in \Z^d_m} \foqrw (y,x) = 0 \eqsp.
	\end{align}
	Moreover, $(\foXrw_t)_{t\in [0,\Tf]}$ converges geometrically fast to $\rw$ in various metrics. Here we will exploit that it converges in Kullback-Leibler divergence. Indeed, $\rw$ satisfies a discrete  Logarithm Sobolev inequality (LSI) which implies by  \cite[Lemma 3.2, Section 4.2]{diaconis1996logarithmic} and \cite[Theorem 2.4]{bobkov2003modified} that
	\begin{align}\label{eq:entropy_decay_rw}
		\KL(\murw_t|\rw) \leq \rme^{-\frac{16\pi^2}{25m^2}t}\KL(\murw_0|\rw)  \eqsp.
	\end{align}
	Furthermore,~\eqref{eq:backward_generator_ctmc} holds for this choice of CTMC with $\tilde{q}^{\mathrm{RW}}_t(x,y) = \foqrw(x,y)$ and
	\begin{align}
		\uRW_t(x,y) = \begin{cases}
			\quad{\murw_{\Tf-t}(\sigma(x))}/{\murw_{\Tf-t}(x)}  \quad &\text{if } y = \sigma(x) \text{ for } \sigma \in \mcs \eqsp,\\
			{\sum_{\sigma \in \mcs} \uRW_t(x,\sigma(x)) }/{2d} \quad &\text{if } x=y \eqsp,\\
			\qquad \qquad 1  \quad &\text{otherwise}\eqsp,
		\end{cases}
	\end{align}
	for $x,y \in \Z^d_m$ and $t \in [0,\Tf)$; see~\Cref{lem:formula_u_brw} for detailed justification. In addition, similar to diffusion models in continuous settings, the discrete score above can be expressed as a conditional expectation; see Appendix \ref{sec:further_rw} for completeness.
	
	The generative process is then simulated as described in~\Cref{sec:CTMC_timereversal} using the generator given in~\eqref{eq:generator_generative_ctmc} with
	\begin{align}\label{eq:control_generative_rw}
		\uRWstar_{t_k}(x,y|\baXrwstar_{t_k}) = \begin{cases}
			\uRWstar_{t_k}(\baXrwstar_{t_k}, \sigma(\baXrwstar_{t_k})) \quad &\text{if $y = \sigma(x)$ for $\sigma \in \mcs$} \eqsp,\\
			\qquad \qquad 1 \quad &\text{otherwise} \eqsp,
		\end{cases}
	\end{align}
	where $\mcs$ is defined in~\eqref{eq:def_mc}.
	The pseudo-code for simulating $(\baXrwstar_t)_{t\in [0,\Tf]}$ is provided in \Cref{alg:rw}, Appendix \ref{sec:ddms-algorithms} for completeness.

	
	\subsubsection{Masked diffusion on \protect{$\Z^d_m$}}
	Consider the state space $\Z_m^d$ and we augment it with an additional mask state on each component, which is assigned the index $m$, to obtain the extended state space $\tZ_m^d = \{0,\dots,m \}^d$.
	In addition, we set here
	\begin{equation}
		\label{eq:def_masked_set}
		\text{  $\msm_x:= \{i  \in [d]: \eqsp x^i =m \}$ and $\msm_x^{\complement}:= \{ i  \in [d]: \eqsp x^i \neq m \}$}
	\end{equation}
	denoting the set of masked and non-masked (maskable) coordinates of $x$, respectively.
	
	The forward masking process $(\foXm_t)_{t\in [0,\Tf]}$ is then defined as an inhomogeneous CTMC on $\tZ^d_m$, starting from $\mustar$ distributed on $\Z^d_m$, and associated with the generator $(\foqm_t)_{t\in [0,\Tf]}$ specified as follows: for $x,y \in \tZ_m^d$ and $t \in [0,\Tf]$,
	\begin{align} \label{eq:forward_generator_masked_d}
		\foqm_t(x, y ):&=\begin{cases}
			\quad\beta(t) \quad &\text{if } \exists i \in \msm_x^{\complement}: \, y = \mrm^{(i)}(x)  \eqsp,\\
			-|\msm_x^{\complement} | \beta(t) \quad &\text{if } x=y \eqsp,\\
			\quad 0  \quad &\text{otherwise}\eqsp,
		\end{cases}
	\end{align}
	where $t\mapsto \beta(t)$ is a function satisfying~\Cref{ass:condition_on_beta} below, and $\mrm^{(i)}(x)$ is the vector obtained from $x$ by setting the $i$-th coordinate to the mask value $m$ and leaving all other coordinates unchanged:
	\begin{equation}
		\label{eq:def_m_i_x}
		(\mrm^{(i)}(x))^j = \begin{cases}
			m \quad &\text{if } j=i \eqsp,\\
			x^j \quad &\text{if } j \neq i \eqsp,
		\end{cases} \quad \text{for } i,j  \in [d] \eqsp.
	\end{equation}
	We impose the following assumptions on the function $\beta$:
	
	\begin{assumptionM}
		\label{ass:condition_on_beta}
		$t\mapsto \beta(t)$ is continuous, non-decreasing from $\rset_+$ to $\ccint{0,1}$
		and $\int_{0}^{\plusinfty}\beta(t) = \plusinfty$.
	\end{assumptionM}
	\Cref{ass:condition_on_beta}  ensures that $    \sup_{t\in [0,\Tf]}  \foqm_t(x)=|\msm_x^{\complement}| \sup_{t \in [0,\Tf]} \beta(t) <\infty$ for $x \in \tZ_m^d$, namely, $(\foqm_t)_{t\in [0,\Tf]}$ is stable and conservative. Furthermore, it also implies that $        \sup_{x\in \tZ^d_m}\foqm_t(x) = d \beta(t) \leq d$, and therefore $(\foqm_t)_{t\in [0,\Tf]}$ satisfies~\Cref{ass:ctcm} and~\Cref{ass:non_explo} following \Cref{rem:non_explo}.
	
	In addition, based on the generator $(\foqm_t)_{t\in [0,\Tf]}$, we can  determine the transition probabilities $\fopm_{s,t}$ for all $0\leq s \leq t \leq \Tf$; see \eg, \Cref{sec:CTMC_timereversal}.
	Specifically, for $x,y \in \tZ^d_m$, denoting by $x^i$ and $y^i$ the $i$-th component respectively, and for $0 \leq s \leq t \leq \Tf$, we have
	\begin{align}\label{eq:forward_transition_matrix_mask_d}
		\fopm_{s,t}(x,y) = \prod_{i=1}^d \fopmm_{s,t} (x^i,y^i)  \eqsp, \quad \text{where } \fopmm_{s,t} (x^i,y^i) = \begin{cases}
			\alpha_t/\alpha_s  \quad &\text{if } x^i = y^i \eqsp, \\
			1- \alpha_t/\alpha_s  \quad &\text{if } y^i = m \eqsp, \\
			0   \quad &\text{otherwise} \eqsp,
		\end{cases}
	\end{align}
	where $\alpha_t \eqdef \exp ( - \int_0^t \beta(s) \rmd s )$. We can interpret~\eqref{eq:forward_transition_matrix_mask_d}:  on each component, at time $s \in [0,\Tf]$, if the process is at the normal state, it will jump to mask at time $t \in [s,\Tf]$ with probability $1-\alpha_t/\alpha_s$ and remains staying the same state with probability $\alpha_t/\alpha_s$. Once it is masked, it will stay masked forever.
	Note that under~\Cref{ass:condition_on_beta},  $\alpha_0 = 1$ and $\lim_{\Tf \to \infty} \alpha_{\Tf} =0$, meaning that every state is unmasked initially but at the end, almost states are masked.
	
	For this choice of generator~\eqref{eq:forward_generator_masked_d}, ~\eqref{eq:backward_generator_ctmc} holds with $\tqm_t(x,y) = \foqm_{\Tf-t}(y,x)$ and
	\begin{align}
		\uM_t(x,y) = \begin{cases}
			\quad{\mum_{\Tf-t}(y)}/{\mum_{\Tf-t}(x)} \quad &\text{if } \exists i \in \msm_y^{\complement}: \eqsp x=\mrm^{(i)}(y) \eqsp,\\
			-{\sum_{y \neq x} \uM_t(x,y) \tqm_t (x,y)}/{\tqm_t(x,x)} \quad &\text{if } x=y \eqsp,\\
			\qquad \qquad  1  \quad &\text{otherwise}\eqsp,
		\end{cases}
	\end{align}
	for $x,y \in \tZ^d_m$ and $t\in [0,\Tf)$; see~\Cref{lem:formula_u_masked_d} for the detailed proof. 
	Similar to the random walk on $\Z^d_m$ case, the discrete score above can also be expressed as a conditional expectation, which enables efficient training; see Appendix \ref{sec:further_mask}.
	
	The backward dynamic can be approximated by using the generative process presented in~\Cref{sec:CTMC_timereversal}, which is associated with the trained generator given in~\eqref{eq:generator_generative_ctmc} with
	\begin{align}\label{eq:control_generative_masked_d}
		\uMstar_{t_k}(x,y|\baXmstar_{t_k}) =
		\begin{cases}
			\uMstar_{t_k}(\baXmstar_{t_k}, \um^{(i)}_j (\baXmstar_{t_k})) &\text{if $y = \um^{(i)}_j(x)$ for $(i,j) \in \msm_x \times \Z_m$} \eqsp, \\
			\qquad \qquad 1 &\text{otherwise} \eqsp,
		\end{cases}
	\end{align}
	where $\um^{(i)}_j(x)$ is the vector obtained from $x \in \tZ^d_m$ by setting the $i$-th coordinate to the original value $j$ and keeping all other coordinated unchanged for fixed $i\in [d]$ and $j \in \Z_m$:
	\begin{align}\label{eq:unmask_operator_d}
		(\um^{(i)}_j(x))^\ell =\begin{cases}
			j \quad &\text{if $\ell=i$} \eqsp,\\
			x^\ell \quad &\text{if $\ell \neq i$} \eqsp,
		\end{cases} \quad \text{for $\ell \in [d]$}\eqsp.
	\end{align}
	The pseudo-code for sampling $(\baXmstar_t)_{t\in [0,\Tf]}$ in the case $\beta(t)=1$ for all $t \in [0,\Tf]$ is provided in \Cref{alg:masked_d}, Appendix \ref{sec:ddms-algorithms} for completeness.
	
	Regarding the initialization of our process, while we derive bounds starting our process from   $\updelta_{\text{MASK}}^{\otimes}$, these results are based on considering our generative process starting the other initial distribution  $\mathrm{Uniform}(\Z^d_m) \fopm_{0,\Tf}$ for which bound in KL can be derived. Indeed, convergence in KL of the noising process to its stationary distribution $\updelta_{\text{MASK}}^{\otimes}$ does not hold here which poses several challenges in the analysis of the initialization error of the process starting from $\updelta_{\text{MASK}}^{\otimes}$ in KL. On the other hand, taking $\mathrm{Uniform}(\Z^d_m) \fopm_{0,\Tf}$ which put positive probability on any element of $\tZ_m^d$ circumvent this issue.


	\subsubsection{Biased random walk on {$\N^d$}}\label{sec:brw}
	We consider the state space $\mathbb{N}^d$ and take as forward process a biased random walk on $\mathbb{N}^d$, defined by constant forward jump rates and backward jump rates proportional to the current state. Denoting this process by $(\foX^{\mathrm{BRW}}_t)_{t\in[0,T_f]}$, it is a homogeneous continuous-time Markov chain initialized according to the data distribution $\mu^\star$ and governed by the generator $\foq^{\mathrm{BRW}}$, defined for $t\in[0,\Tf]$ and $x,y\in\mathbb{N}^d$ as
	\begin{align} \label{eq:forward_generator_brw}
		\foqbrw(x, y ):&=\begin{cases}
			\quad 1 \quad &\text{if } y = \sigma^\ell_+ (x) \text{ for } \ell  \in [d] \eqsp,\\
			\quad x^{\ell} \quad &\text{if } y = \sigma^\ell_-(x) \text{ for } \ell  \in [d] \eqsp,\\
			-d-\sum_{\ell = 1}^d x^\ell \quad &\text{if } y=x \eqsp, \\
			\quad 0  \quad &\text{otherwise}\eqsp.
		\end{cases}
	\end{align}
	Here $x^{\ell}$ denotes the $\ell^{\text{th}}$-component of $x$ and  the operators $\sigma^{\ell}_+$ and $\sigma^{\ell}_-$ are defined as in \eqref{eq:def_sigma_ell} but without the modulo $m$ operation, noting here that we make a slight abuse of notation.
	
	In addition, similarly we also consider the set $\mcs$ as defined in~\eqref{eq:def_mc} with these new functions, using the same notation.
	Note that at the boundary $x^\ell = 0$, backward jumps are not allowed, as the corresponding transition rate is zero, so the process can only move to the right from that state.
	In addition, since the generator $\foqbrw$ is time-independent, the stability condition is automatically satisfied, therefore~\Cref{ass:ctcm} holds.
	~\Cref{ass:non_explo} is also easily satisfied;  see Appendix \ref{sec:further_brw} for completeness.
	Thus,  $(\foXbrw_t)_{t\in [0,\Tf]}$ is well-defined.
	
	Concerning the long-term behavior, it is well-known that $(\foXbrw_t)_{t \in [0,\Tf]}$ converges geometrically in KL divergence to the multidimensional Poisson distribution $\brw = \mathrm{Poisson}(1)^{\otimes d} \in \mathcal{P}(\N^d)$. Furthermore, \cite[Theorem 1.1]{Wu2000ANM} and \cite[Section 3]{conforti2022probabilistic} showed that $\brw$ satisfies the modified log-Sobolev inequality with constant $1$, which implies an exponential entropy decay:
	\begin{align}\label{eq:entropy_decay_brw}
		\KL(\mubrw_t|\brw) &\leq \rme^{-t}\KL(\mustar|\brw) \eqsp.
	\end{align}
	In this setting,~\eqref{eq:backward_generator_ctmc} holds with $\tilde{q}^{\mathrm{BRW}}_t = \foqbrw $ and
	\begin{align}
		\uBRW_t(x,y) = \begin{cases}
			\quad{\tmubrw_{\Tf-t}(\sigma(x))}/{\tmubrw_{\Tf-t}(x)} \quad &\text{if } y = \sigma(x) \text{ for } \sigma \in \mcs \eqsp,\\
			-{\sum_{\sigma \in \mcs} \uBRW_t \foqbrw(x,\sigma(x)) }/{\foqbrw(x,x)} \quad  &\text{if } y =x \eqsp,\\
			\qquad \qquad  1 \quad &\text{otherwise}\eqsp,
		\end{cases}
	\end{align}
	for $x,y \in \N^d$ and $t\in [0,\Tf]$, where $\tmubrw \eqdef \mubrw/\brw$; see~\Cref{lem:formula_u_brw} for completeness. Similar to the two aforementioned cases, the discrete score above can also be expressed as a conditional expectation, which enables stable training; see \Cref{sec:further_brw}.
	
	The generative process described in~\Cref{sec:CTMC_timereversal} is associated with the trained generator specified in~\eqref{eq:generator_generative_ctmc} with
	\begin{align}\label{eq:control_generative_brw}
		\uBRWstar_{t_k}(x,y|\baXbrwstar_{t_k}) =
		\begin{cases}
			\uBRWstar_{t_k}(\baXbrwstar_{t_k}, \sigma(\baXbrwstar_{t_k})) \quad &\text{if $y = \sigma(x)$ for $\sigma \in \mcs$} \eqsp, \\
			\qquad \qquad  1 &\text{otherwise} \eqsp. 
		\end{cases}
	\end{align}
	The pseudo-code for sampling $(\baXbrwstar_t)_{t\in [0,\Tf]}$ is provided in \Cref{alg:brw}, Appendix \ref{sec:ddms-algorithms} for completeness.

	
	\section{Main results}\label{sec:main_results}
	This section provides quantitative error estimates for using the generative process to approximate the backward evolution and recover the data distribution in the aforementioned settings. To this end, we establish conditions ensuring the performance of the approximate score and mild regularity assumptions on the data distribution. While classical diffusion models typically rely on an $\mrl^2$-type approximation error condition, our analysis naturally leads to an entropic-type condition, reflecting the discrete nature of the state space. Moreover, the regularity assumptions we impose on the data distribution are minimal and, in some cases, can be further relaxed by employing an early stopping strategy. This bridges a gap in previous theoretical works, which often required significantly stronger assumptions.
	
	\subsection{Masked diffusion on \texorpdfstring{$\Z^d_m$}{Zdm}}
	
	We first impose a regularity condition on the data distribution $\mustar$ to ensure the validity of the subsequent computations:
	
	\begin{assumptionM}\label{ass:finite_fisher_masked_d}
		The data distribution $\mustar$ has full support on $\Z^d_m$, \ie, $\mustar(x) \in (0,1)$ for any $x\in \Z^d_m$ and $\mustar(x) = 0$ for $x \in \tZ^d_m \setminus \Z^d_m$.
	\end{assumptionM}
	Note that this assumption is crucial yet reasonable in practice, as we can always consider a smoothed version of the data distribution.
	Under~\Cref{ass:condition_on_beta} and~\Cref{ass:finite_fisher_masked_d}, $\mum_t$ is fully supported on $\tZ^d_m$ for any $t \in (0,\Tf]$; see \Cref{lem:formula_u_masked_d} for completeness.
	We now state an assumption on the approximation performance of approximated score. Define for any $a \geq 0$:
	\begin{equation}
		\label{eq:def_h}
		\mathbf{h}(a) \eqdef a\log a -a +1\eqsp,
	\end{equation}
	with the convention $0 \times \log 0 = 0$.
	\begin{assumptionM}\label{ass:approx_score_masked_d}
		There exists $\varepsilon \geq 0$ such that
		\begin{align}
			\sum_{k=0}^{K-1} &h_{k+1} \E \Bigg[ \sum_{\substack{i \in \msm_{\foXm_{\Tf-t_k}} }}\sum_{j\in \Z_m} \Bigg\{ \uMstar_{t_k} \mathbf{h} \l(\frac{\uM_{t_k}}{\uMstar_{t_k}} \r) (\foXm_{\Tf-t_k},\um^{(i)}_j(\foXm_{\Tf-t_k})) \notag \\
			& + \l\| \uM_{t_k} (\foXm_{\Tf-t_k},\um^{(i)}_j(\foXm_{\Tf-t_k})) - \uMstar_{t_k} (\foXm_{\Tf-t_k},\um^{(i)}_j(\foXm_{\Tf-t_k})) \r\|^2 \Bigg\} \Bigg] \leq \varepsilon^2 \Tf \eqsp. \label{assumptiononscore_masked_d}
		\end{align}
	\end{assumptionM}
	
	
	\begin{theorem}\label{theo:5_masked_d}
		Assume~\Cref{ass:condition_on_beta},~\Cref{ass:finite_fisher_masked_d} and~\Cref{ass:approx_score_masked_d} hold. For any $\eta \in (0,\Tf)$ and any time discretization $\{t_k\}_{k=0}^K$ such that $t_0=0$ and $t_K = \Tf-\eta$ and start the generative process from $\baXmstar_0 \sim \mathrm{Uniform}(\Z^d_m)\fopm_{0,\Tf}$. Then the following holds
		\begin{align}
			\KL(\mum_\eta|\mathrm{Law}(\baXmstar_{\Tf-\eta}))
			&\lesssim \underbrace{d \alpha_{\Tf}(1+ \log (m/{\alpha_{\Tf}}))}_{\text{initial error}}  + \underbrace{\rme^h \varepsilon^2 \Tf}_{\text{approx. error}} \notag \\
			&\quad + \underbrace{hd \l[\alpha_\eta/(1-\alpha_\eta) + m \r] + (\rme^h-1) dm\Tf}_{\text{discretization error}}\eqsp,\label{eq:bound_convergence_masked_d}
		\end{align}
		where $h= \max_k\{ t_{k+1}-t_k\}$ and $\alpha_t = \exp ( - \int_0^t \beta(s) \rmd s )$.
	\end{theorem}
	\begin{proof}[Proof of \Cref{theo:5_masked_d}]
		It is deferred to \Cref{sec:masked_proof_conv}.
	\end{proof}
	
	Notably,  $\uM_{t}$ is not defined at $t=\Tf$ (see~\eqref{eq:discrete score_masked_d}), as the data is only supported on the original state space $\Z^d_m$. Consequently, the early-stopping scheme is essential in the masking setting. 
	
	\begin{theorem}\label{theo:main_masked_d}
		Under conditions and notations of~\Cref{theo:5_masked_d}, it holds
		\begin{align}
			&\tvnorm{\mustar - \mathrm{Law}(\baXmstar_{\Tf-\eta})} \lesssim \underbrace{1-\alpha_\eta^d}_{\text{early stopping error}}\\
			&\quad +\sqrt{d \alpha_{\Tf}(1+ \log (m/{\alpha_{\Tf}})) + \rme^h \varepsilon^2\Tf + hd [\alpha_\eta/(1-\alpha_\eta) + m] +  (\rme^h-1)dm\Tf } \eqsp.
		\end{align}
	\end{theorem}
	\begin{proof}[Proof of \Cref{theo:main_masked_d}]
		Note that by definition of the total variation norm \begin{align}
			\tvnorm{\mum_{\eta}-\mustar} \leq \P( \foXm_\eta \neq \foXm_0) = 1 - \P(\foXm_\eta = \foXm_0) = 1-\alpha_\eta^d \eqsp.
		\end{align}
		This together with \Cref{theo:5_masked_d} and the triangle and Pinsker inequalities yield the desired conclusion.
	\end{proof}
	
	Following \cite[Theorem 3]{conforti2025kl}, a tighter bound on the discretization error can be obtained by choosing an appropriate sequence of step sizes, resulting in a logarithmic rather than linear dependence on the discrete Fisher information.
	
	\begin{theorem}\label{theo:exp_stepsize_masked}
		Assume~\Cref{ass:finite_fisher_masked_d} and~\Cref{ass:approx_score_masked_d} hold and consider the constant generator $\beta(t) = 1$ for all $t \in [0,\Tf]$. For any $\eta \in (0,\Tf)$, let $c \in (0,1/2]$, $\Tf-\eta \geq 1+2c$,
		and
		$L_\eta =  d^{-1} \Ic (\mum_\eta) \geq 2$.
		Choose
		$h_{k+1} = c \min \l\{ \max \l\{ \Tf-\eta-t_k, 1/L_\eta \r\}, 1 \r\}$
		for $k < K-1$ and $h_K = \Tf-\eta-t_{K-1}$, draw $\baXmstar_0 \sim \mathrm{Uniform}(\Z^d_m)\fopm_{0,\Tf}$, we then have
		\begin{align}
			&\tvnorm{\mustar - \mathrm{Law}(\baXmstar_{\Tf-\eta})} \notag \\
			&\qquad \lesssim {d\eta}  + \sqrt{  {d\log(m)\rme^{-\Tf/2}} + {\varepsilon^2\Tf} + {(\rme^c-1)dm [\log(\eta^{-1}+m) +\Tf]}} \eqsp. \label{eq:bound_convergence_scaled_masked}
		\end{align}
		In particular, choosing
		\begin{align}\label{eq:cor_T_eta_masked_d}
			\Tf = 2 \log \frac{d\log(m)}{\varepsilon^2} \quad  \text{and} \quad \eta = \frac{\varepsilon}{d}\eqsp,
		\end{align}
		then setting the maximum step-size as
		\begin{align}\label{eq:cor_c_masked_d}
			c =  \log \l( \frac{\varepsilon^2}{dm \l[ 2\log(d\log(m)/\varepsilon^2)+\log(m+d/\varepsilon) \r]}
			+1 \r) \eqsp,
		\end{align}
		imply the error $\tilde{O}(\varepsilon)$ 
		and is associated with the number of iterations $\tilde O(dm/\varepsilon^2)$, where the notation $\tilde O$ means that logarithmic factors of $d,m, \varepsilon$ have been dropped.
	\end{theorem}
	\begin{proof}[Proof of~\Cref{theo:exp_stepsize_masked}]
		The proof benefits from the choice of the step-size's scheme and is postponed to~\Cref{sec:masked_proof_conv}.
	\end{proof}
	
	\begin{remark}
		The tighter bound above is obtainable not only for the constant generator but also for any function $\beta$ satisfying \Cref{ass:condition_on_beta}, since the bound on the Fisher information holds universally (see \Cref{lem:bound_fisher_masked}), and the choice of step sizes can be adapted to each specific case.
		Notably, \Cref{theo:exp_stepsize_masked} provides the {first non-asymptotic error bound} for masked diffusion models employing an early-stopping strategy, which does not rely on the boundedness of the estimated score as in \cite{liang2025absorb}.
	\end{remark}
	
In the previous results, we assumed that the generative process is initialized from $\mathrm{Uniform}(\Z^d_m)\fopm_{0,\Tf}$. In practice, however, it is typically started from $\updelta_{\text{MASK}}^{\otimes d}$. We can also derive bounds for this initialization, as stated in the next result.
	
	\begin{theorem}\label{theo:start_delta_mask}
		Assume~\Cref{ass:condition_on_beta},~\Cref{ass:finite_fisher_masked_d} and~\Cref{ass:approx_score_masked_d} hold. Let $\eta \in (0,\Tf)$ and consider any time discretization $\{t_k\}_{k=0}^K$ with $t_0 = 0$ and $t_K = \Tf - \eta$. Let $(\baYmstar_t)_{t \in [0,\Tf-\eta]}$ denote the generative process driven by the same generator as $(\baXmstar_t)_{t \in [0,\Tf-\eta]}$, but initialized from $\baYmstar_0 \sim \updelta_{\text{MASK}}^{\otimes d}$. Then the following holds:
		\begin{align}
			&\tvnorm{\mustar - \mathrm{Law}(\baYmstar_{\Tf-\eta})} \lesssim 1-\alpha_\eta^d +d\alpha_{\Tf}\\
			&+\sqrt{d \alpha_{\Tf}(1+ \log (m/{\alpha_{\Tf}}))+ hd [\alpha_\eta/(1-\alpha_\eta) + m ] + \rme^h \varepsilon^2\Tf + (\rme^h-1)dm \Tf } \eqsp,
		\end{align}
		where $h = \max_k \{t_{k+1}-t_k \}$. Additionally, considering the sequence of step-sizes as in \Cref{theo:exp_stepsize_masked} yields
		\begin{align}
			&\tvnorm{\mustar - \mathrm{Law}(\baYmstar_{\Tf-\eta})} \notag \\
			&\quad \lesssim {d\eta} + {d\rme^{-\Tf}} + \sqrt{  {d\log(m)\rme^{-\Tf/2}} + {\varepsilon^2\Tf} + {(\rme^c-1)dm [\log(\eta^{-1}+m) +\Tf]}} \eqsp. 
		\end{align}
		In particular, choosing
		\begin{align}
			\Tf = \max \l\{ \log \frac{d}{\varepsilon} \eqsp, \eqsp 2 \log \frac{d\log(m)}{\varepsilon^2}\r\} \quad  \text{and} \quad \eta = \frac{\varepsilon}{d} \eqsp,
		\end{align}
		then setting the maximum step-size
		\begin{align}
			c =  \log \l( \frac{\varepsilon^2}{dm[\Tf+\log(m+d/\varepsilon)] } +1 \r) \eqsp,
		\end{align}
		imply the error $\tilde{O}(\varepsilon)$
		and is associated with the number of iterations $\tilde O(dm/\varepsilon^2)$.
	\end{theorem}
	\begin{proof}[Proof of \Cref{theo:start_delta_mask}]
        It is given in Appendix \ref{sec:proof_theo:start_delta_mask}.
	\end{proof}
	
	\subsection{Random walk on \texorpdfstring{$\Z^d_m$}{Zdm}}
	
	To bound the error of our DDM, we introduce assumptions concerning the accuracy of the training phase and the regularity properties of the data distribution:
	
	\begin{assumptionRW}  \label{ass:approx_score_randomwalk}
		There exists $\varepsilon \geq 0$ such that
		\begin{align}
			\sum_{k=0}^{K-1} h_{k+1} &\E \Bigg[ \sum_{\sigma \in \mcs}  \uRWstar_{t_k} \mathbf{h}\l( \frac{\uRW_{t_k}}{\uRWstar_{t_k}}\r)  (\foXrw_{\Tf-t_k},\sigma(\foXrw_{\Tf-t_k})) \Bigg]  \leq \varepsilon^2 \Tf \eqsp, \label{assumptiononscore_randomwalk}
		\end{align}
		where $\mcs$ is given in \eqref{eq:def_mc} and $\mathbf{h}$ is defined in \eqref{eq:def_h}.
	\end{assumptionRW}
	This condition naturally appears as we bound the $\KL$ divergence of path probability measures corresponding to the approximate discrete score $(\uRWstar_t)_{t\in [0,\Tf)}$ and the true one $(\uRW_t)_{t\in [0,\Tf)}$ respectively. While standard Girsanov theorem for diffusion implies an $\mrl^2$-type approximation error condition for generative models \cite{chen2022sampling}, our results naturally involve the entropic-type condition due to the discrete structure of our noising process.
	
	\begin{assumptionRW}\label{ass:finite_fisher_randomwalk}
		The data distribution has full support on $\Z^d_m$, \ie, $\mustar(x) \in (0,1) $ for any $x\in \Z^d_m$.
	\end{assumptionRW}
	
	In fact, \Cref{ass:finite_fisher_randomwalk} is equivalent to requiring finiteness of the discrete Fisher information:
	\begin{align}\label{eq:finite_fisher_randomwalk}
		\Ic(\mustar):=\E\l[\sum_{\sigma \in \mcs} \mathbf{h}\l( \frac{\mustar (\sigma(\foXrw_0))}{\mustar(\foXrw_0)}\r)\r]< \infty \eqsp.
	\end{align}
	Here the discrete score $\uRW_t$ is well-defined at $\Tf$ thanks to~\Cref{ass:finite_fisher_randomwalk}; see~\eqref{eq:formula_u} for completeness.
	
	Note that~\Cref{ass:finite_fisher_randomwalk} parallels the finite relative Fisher information condition introduced in \cite{conforti2025kl}. Nevertheless,~\Cref{ass:finite_fisher_randomwalk} is considerably simpler: the state space is finite, and the function $\mathbf{h}$ becomes infinite only when $\mustar$ lacks full support. Moreover, this assumption can later be further relaxed via the early stopping strategy, so that only  \Cref{ass:approx_score_randomwalk}  is needed to ensure the convergence of DDMs.
	
	
	\begin{theorem}\label{theo:5_randomwalk}
		Assume~\Cref{ass:approx_score_randomwalk} and~\Cref{ass:finite_fisher_randomwalk}, choose any time discretization $\{t_k\}_{k=0}^K$ such that $t_0=0$ and $t_K=\Tf$. Then the following holds
		\begin{align}\label{eq:bound_convergence_randomwalk}
			\KL(\mustar| \mathrm{Law}(\baXrwstar_{\Tf}) ) \lesssim \underbrace{\rme^{-\frac{16\pi^2}{25m^2}\Tf}\KL(\mustar|\rw)}_{\text{initial error}}+ \underbrace{h \Ic(\mustar)}_{\text{discretization error}}+ \underbrace{ \varepsilon^2\Tf }_{\text{approximation error}} \eqsp,
		\end{align}
		where $h:=\max_k \{t_{k+1}-t_k\}$. Furthermore, \eqref{eq:bound_convergence_randomwalk} still holds if we replace $\KL(\mustar|\rw)$ by $d\log (m)$.
	\end{theorem}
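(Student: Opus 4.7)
The plan is to lift the marginal bound to a path-space KL divergence, then decompose it into the three error terms appearing in~\eqref{eq:bound_convergence_randomwalk}. First I invoke the data processing inequality to obtain
\[
\KL(\mustar\mid \mathrm{Law}(\baXrwstar_\Tf)) \leq \KL(\baPrw\mid \baPrwstar),
\]
where $\baPrw$ is the path law on $[0,\Tf]$ of the exact time-reversed CTMC and $\baPrwstar$ that of the generative process of~\Cref{sec:CTMC_timereversal} with rates~\eqref{eq:control_generative_rw}. The chain rule for KL divergence of Markov processes then splits this as $\KL(\murw_\Tf\mid \rw)$ plus an expected Girsanov remainder. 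The initialization gap is bounded by $\rme^{-16\pi^2\Tf/(25m^2)}\KL(\mustar\mid \rw)$ via the entropy decay~\eqref{eq:entropy_decay_rw}, yielding the first contribution to~\eqref{eq:bound_convergence_randomwalk}.

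For the Girsanov remainder, I apply the entropy formula for CTMCs to the two rates $\baqrw_t(x,\sigma(x))=\uRW_t(x,\sigma(x))\foqrw(x,\sigma(x))$ and $\baqrwstar_t(x,\sigma(x))=\uRWstar_{t_k}(x,\sigma(x))\foqrw(x,\sigma(x))$ on each subinterval $[t_k,t_{k+1})$. Since $\lambda^1\log(\lambda^1/\lambda^2)-\lambda^1+\lambda^2$ factors as $\foqrw\,\uRWstar_{t_k}\,\mathbf{h}(\uRW_t/\uRWstar_{t_k})$, the remainder reads
\[
\E_{\baPrw}[G]=\sum_{k=0}^{K-1}\int_{t_k}^{t_{k+1}}\E\Bigl[\sum_{\sigma\in\Mc}\uRWstar_{t_k}\,\mathbf{h}\bigl(\uRW_t/\uRWstar_{t_k}\bigr)\foqrw\Bigr]\rmd t,
\]
with Girsanov integrability automatic in this setting, as noted in the introduction. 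I then insert and subtract $\uRWstar_{t_k}\mathbf{h}(\uRW_{t_k}/\uRWstar_{t_k})$ pointwise in the integrand. The resulting ``frozen'' part, summed across $k$, is bounded by $\epsilonrw\Tf$ (up to the constant $\foqrw=1/2$ on $\Mc$) via the approximation hypothesis~\Cref{ass:approx_score_randomwalk}, producing the third contribution to~\eqref{eq:bound_convergence_randomwalk}.

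The main obstacle is the leftover discretization residual. To handle it I rely on the score monotonicity of~\Cref{prop:3_rw}: the HJB equation satisfied by $(\uRW_t)_{t\in[0,\Tf)}$ (derived from the Kolmogorov equation~\eqref{eq:foward_kolm}), together with It\^o's formula for pure jump processes, implies that a Fisher-type functional of the form $s\mapsto \E[\sum_{\sigma\in\Mc}\mathbf{h}(\uRW_s(\baXrw_s,\sigma(\baXrw_s)))\foqrw]$ is monotone along the backward trajectory, and hence dominated at every $s\leq \Tf$ by its value at $s=\Tf$, which under~\Cref{ass:finite_fisher_randomwalk} coincides with $\tfrac{1}{2}\Ic(\mustar)$. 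Coupling this monotonicity with a first-order expansion of $s\mapsto \mathbf{h}(\uRW_s/\uRWstar_{t_k})\uRWstar_{t_k}$ around $s=t_k$ bounds the residual on each subinterval by $h_{k+1}$ times the Fisher functional, which summed over $k$ yields the announced $h\,\Ic(\mustar)$ term. Finally, the alternative bound follows at once from $\KL(\mustar\mid \rw)=d\log m-H(\mustar)\leq d\log m$, since $\rw=\mathrm{Uniform}(\Z^d_m)$ and Shannon entropy is nonnegative.
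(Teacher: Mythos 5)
Your overall plan — data processing, then Girsanov on path measures, then splitting the remainder into a ``frozen'' approximation term and a discretization residual handled via the martingale structure of~\Cref{prop:3_rw}, plus entropy decay for the prior mismatch — is exactly the route the paper takes, and your $E_3$/frozen and alternative-bound steps are correct. However, the discretization-error bound as you describe it does not quite close. After subtracting $\uRWstar_{t_k}\mathbf{h}(\uRW_{t_k}/\uRWstar_{t_k})$, the residual integrand equals
\begin{equation}
  \bigl[\mathbf{h}(\uRW_t)-\mathbf{h}(\uRW_{t_k})\bigr]-(\uRW_t-\uRW_{t_k})\log\uRWstar_{t_k}\eqsp,
\end{equation}
and the cross term only disappears because $\E[\uRW_t\mid\Fcrw_{t_k}]=\uRW_{t_k}$ — that is, you must invoke the \emph{martingale} part of~\Cref{prop:3_rw}, not merely the monotonicity of the Fisher functional. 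Once that is done, the remaining term is $\E[\mathbf{h}(\uRW_t)\mid\Fcrw_{t_k}]-\mathbf{h}(\uRW_{t_k})$, and the path to $h\,\Ic(\mustar)$ is a \emph{telescoping} argument: submartingality gives $\E\mathbf{h}(\uRW_t)\leq\E\mathbf{h}(\uRW_{t_{k+1}})$ on $[t_k,t_{k+1})$, so the contribution of interval $k$ is at most $h_{k+1}\bigl(\E\sum_\sigma\mathbf{h}(\uRW_{t_{k+1}})-\E\sum_\sigma\mathbf{h}(\uRW_{t_k})\bigr)$; since these increments are nonnegative, one bounds $h_{k+1}\leq h$ and the sum collapses to $h\,\E\sum_\sigma\mathbf{h}(\uRW_{\Tf})=h\,\Ic(\mustar)$. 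Your phrasing — a ``first-order expansion'' bounding each subinterval by ``$h_{k+1}$ times the Fisher functional'' — is ambiguous: if the Fisher functional is read as $\Ic(\mustar)$ (which is what domination-by-the-value-at-$\Tf$ gives), summing over $k$ produces the much weaker $\Tf\,\Ic(\mustar)$, losing the entire point of the theorem. You should replace the Taylor-expansion language with the martingale-elimination of the cross term plus the telescoping sum of Fisher increments.
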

	
	\begin{proof}[Proof of~\Cref{theo:5_randomwalk}]
		The proof is postponed to~\Cref{sec:rw_brw_proof_conv}.
	\end{proof}
	
	\begin{remark}
		When the data distribution coincides with the invariant measure $\mathrm{Uniform}(\Z^d_m)$, we obtain $\Ic(\mustar)=0$, which in turn cancels the discretization error.
	\end{remark}

	Next, we employ an exponentially decreasing sequence of step sizes to obtain a tighter bound on the discretization error.
	\begin{theorem}\label{theo:scale_rw}
		Let $c \in (0,1/2]$ and $\Tf \geq 1+2c$. Suppose~\Cref{ass:approx_score_randomwalk},~\Cref{ass:finite_fisher_randomwalk} hold and let
		$L =  d^{-1} \Ic (\mustar) \geq 2$.
		Choose
		$h_{k+1} = c \min \l\{ \max \l\{ \Tf-t_k, 1/L \r\}, 1 \r\}$ for $k < K-1$ and $h_K = \Tf-t_{K-1}$, we then have that
		\begin{align}
			\label{eq:bound_convergence_scaled_rw}
			\KL(\mustar|\mathrm{Law}(\baXrwstar_{\Tf}))
			&\lesssim  \rme^{-\frac{16\pi^2}{25m^2}\Tf}d\log(m)+  \varepsilon^2\Tf + {cd \log(m)} \log (L)  \eqsp.
		\end{align}
		In particular, choosing the time horizon $\Tf$ and the maximum step-size $c$ as
		\begin{align}\label{eq:cor_randomwalk_1}
			\Tf = \frac{25m^2}{16\pi^2} \log \frac{d\log(m)}{\varepsilon^2} \quad \text{and} \quad  c = \frac{\varepsilon^2}{d \log (m)\log (L)} \eqsp,
		\end{align}
		implies the number of iterations
		\begin{align}
			K \lesssim {d \log (m)\log (L)[m^2\log(d\log(m)/\varepsilon^2)+\log (L)]}/{\varepsilon^2} \eqsp,
		\end{align}
		and makes the approximation error $\tilde O(m^2 \varepsilon^2)$.
	\end{theorem}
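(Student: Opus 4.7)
The plan is to refine the discretization-error estimate of Theorem~\ref{theo:5_randomwalk} by retaining a local Riemann-type decomposition indexed by the forward marginals and then exploiting the non-uniform step-size schedule, in analogy with \citet[Theorem 3]{conforti2025kl}.

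First, I would revisit the proof of Theorem~\ref{theo:5_randomwalk} to extract the more granular intermediate estimate
\begin{align*}
    \KL(\mustar|\mathrm{Law}(\baXrwstar_{\Tf})) \lesssim \rme^{-\frac{16\pi^2}{25m^2}\Tf}\KL(\mustar|\rw) + \epsilonrw\Tf + \sum_{k=0}^{K-1} \omega(h_{k+1})\, \Ic(\murw_{\Tf-t_k}) \eqsp,
\end{align*}
where $\omega(h)$ is an explicit function of the local step size arising from the Girsanov/monotonicity argument sketched in the methodology section. The bound~\eqref{eq:bound_convergence_randomwalk} is obtained by aggregating this Riemann sum via the monotonicity bound $\Ic(\murw_s)\leq\Ic(\mustar)$; here we instead keep the sum local. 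Also, $\KL(\mustar|\rw) \leq d\log m$ (since $\rw$ is uniform on $\Z^d_m$) converts the initial error into the stated form.

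The main technical ingredient is a two-sided estimate for the forward Fisher information, $\Ic(\murw_s) \leq \min\{\Ic(\mustar),\, Cd\log(m)/s\}$, for some absolute constant $C$. The first bound is monotonicity of $\Ic$ along the forward semigroup, while the second is a discrete Nash-type smoothing inequality, which should follow from combining the modified log-Sobolev inequality of \citet{bobkov2003modified} underlying~\eqref{eq:entropy_decay_rw} with the entropy bound $\KL(\murw_s|\rw) \leq d\log m$ and hypercontractivity arguments. This smoothing estimate is the main technical obstacle: its continuous counterpart is classical, but the discrete version must be carefully re-established for the random walk on $\Z^d_m$.

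Inserting this estimate into the Riemann sum and splitting according to the three regimes dictated by $h_{k+1} = c\min\{\max\{\Tf-t_k, 1/L\}, 1\}$, with $s_k := \Tf-t_k$, yields the desired bound. Concretely, the constant regime $s_k \geq 1$ contributes at most $cd\log m$ via the entropy dissipation identity $\Ic(\murw_s) = -\partial_s \KL(\murw_s|\rw)$ telescoped against the bound $\KL(\mustar|\rw)\leq d\log m$; the geometric regime $1/L \leq s_k < 1$ contributes $O(cd\log m)$ per step, and since the recursion $s_{k+1} = (1-c)s_k$ produces $O(\log L)$ steps the total is $O(cd\log(m)\log L)$; and the terminal regime $s_k < 1/L$ contributes $O(cd)$ since $\Ic(\mustar) = dL$ while only a single step of size $c/L$ remains. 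Summing these contributions with the initial and score-approximation errors produces~\eqref{eq:bound_convergence_scaled_rw}. The stated complexity then follows from $K \lesssim \Tf/c + \log(L)/c$ together with the prescribed choice~\eqref{eq:cor_randomwalk_1}, which balances the three error terms at orders $\epsilonrw$, $\epsilonrw$, and $\tilde O(m^2\epsilonrw)$ respectively, yielding the claimed total approximation error $\tilde O(m^2\epsilonrw)$.
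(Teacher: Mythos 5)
Your proposal follows the same overall strategy as the paper: keep the discretization error as a local Riemann-type sum over time steps, exploit a Fisher-information decay bound $\Ic(\murw_{\Tf-t}) \lesssim d\log(m)/(\Tf-t)$, and analyze the three regimes of the step-size schedule separately to get the $cd\log(m)\log L$ total. The regime-by-regime accounting you sketch (constant, geometric, terminal) matches the paper's Abel summation decomposition into terms $(1)$--$(4)$ in \Cref{sec:rw_proof_scale}, and the choice $a=1/L$ to balance the terminal contribution against the logarithmic factor is identical.

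The one place you go astray is in how to establish the smoothing bound $\Ic(\murw_s) \lesssim d\log(m)/s$, which you flag as ``the main technical obstacle'' requiring hypercontractivity and a carefully re-established discrete Nash inequality. The paper proves this (in \Cref{prop:bound_fisher_rw}) by a much more elementary route that requires no hypercontractivity at all: the backward submartingale property of $\mathbf{h}(\uRW_t(\sigma))$ (already established in \Cref{prop:3_rw}) gives monotonicity $\Gammarw_s \geq \Gammarw_t$ for $t \leq s$, so integrating from $t$ to $\Tf$ yields $(\Tf-t)\Gammarw_t \leq \int_t^{\Tf}\Gammarw_s\,\rmd s$; then the de Bruijn identity $\Gammarw_{\Tf-s} = -2\,\frac{\rmd}{\rmd s}\KL(\murw_s|\rw)$ telescopes this integral to $2\KL(\mustar|\rw) \leq 2d\log m$. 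You do in fact invoke the de Bruijn telescoping, but you present it as handling only the constant regime, when really it is the mechanism behind the entire decay bound. So what you call an open technical obstacle has already been resolved upstream; making that connection explicit would close the gap and align your argument fully with the paper's.
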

	\begin{proof}[Proof of \Cref{theo:scale_rw}]
		The proof of~\Cref{theo:scale_rw} benefits from the choice of the step-size's scheme and is postponed to~\Cref{sec:rw_brw_proof_conv}.
	\end{proof}

	
	In the next result, we drop \Cref{ass:finite_fisher_randomwalk} by employing an early stopping strategy and using that $\mu_t$ has support $\zset^d_m$ for any $t \in (0,\Tf]$.
	
	
	
	
	\begin{theorem}\label{theo:early_stopping_rw}
		Assume~\Cref{ass:approx_score_randomwalk} holds. For $\eta \in (0,\Tf)$, let $c \in (0, 1/2]$, $\Tf -\eta \geq 1+2c$. Set $L_\eta = d^{-1} \Ic(\murw_\eta) \geq 2$. Choose
		$h_{k+1} = c \min \l\{ \max \l\{ \Tf-\eta-t_k, 1/L_\eta \r\}, 1 \r\}$ for $k < K-1$ and $h_K = \Tf-\eta-t_{K-1}$, then the following holds
		\begin{align}
			\KL(\murw_\eta|\mathrm{Law}(\baXrwstar_{\Tf-\eta}))
			&\lesssim  \rme^{-\frac{16\pi^2}{25m^2}\Tf}d\log(m) +  \varepsilon^2\Tf + {cd \log(m)} \log(\eta^{-1}\log(m)) \eqsp.
		\end{align}
		As a consequence, the total variation distance between $\mustar$ and the generated distribution is given by
		\begin{align}
			&\tvnorm{\mustar - \mathrm{Law}(\baXrwstar_{\Tf-\eta})} \\
			&\quad \lesssim \underbrace{d\eta }_{\text{early stopping error}} + \sqrt{{\rme^{-\frac{16\pi^2}{25m^2}\Tf} d\log(m)} +{ \varepsilon^2\Tf}+ { cd\log (m) \log(\eta^{-1}\log(m))}} \eqsp.
		\end{align}
		In particular, choosing
		\begin{align}\label{eq:cor_randomwalk_2}
			\eta &= \frac{\varepsilon}{d} \eqsp, \quad  c = \frac{\varepsilon^2}{d \log (m) \log (d \log(m)/\varepsilon^2)} \eqsp, \quad  \Tf =  \frac{25m^2}{16\pi^2} \log \frac{d\log(m)}{\varepsilon^2}
			\eqsp,
		\end{align}
		implies the error $\tilde O(m\varepsilon)$
		and the number of iterations $\tilde O(dm^2/\varepsilon^2) $.
	\end{theorem}
	
	\begin{proof}[Proof of~\Cref{theo:early_stopping_rw}]
        It is provided in Appendix \ref{sec:proof_theo:early_stopping_rw}.
	\end{proof}

	\subsection{Biased random walk on \texorpdfstring{$\N^d$}{Nd}}
	
	In the same spirit as the previous settings, to control the resulting approximation error, we introduce the following assumptions concerning the accuracy of the approximation scheme and the regularity of the data distribution:
	\begin{assumptionBRW}  \label{ass:approx_score_brw}
		There exists $\varepsilon \geq 0$ such that
		\begin{align}\label{eq:ass_kl_loss_brw}
			\sum_{k=0}^{K-1}\int_{t_k}^{t_{k+1}} \sum_{\sigma \in \mcs}  \E  \l( \uBRW_{t} \foqbrw_t \log  \frac{\uBRW_{t_k}}{\uBRWstar_{t_k}}  - (\uBRW_{t_k} - \uBRWstar_{t_k}) \foqbrw_{t_k} \r)(\sigma) \rmd t  \leq \varepsilon^2 \Tf
		\end{align}and
		\begin{align}\label{eq:ass_quad_loss_brw}
			\sum_{k=0}^{K-1}  h_{k+1}  \E \Bigg[ \sum_{\sigma \in \mcs} \l\| (\uBRW_{t_k} - \uBRWstar_{t_k}) \foqbrw_{t_k}(\sigma)  \r\|^2 \Bigg] \leq \varepsilon^2\Tf 
			\eqsp,
		\end{align}
		where $\uBRW_t(\sigma) \eqdef \uBRW_t(\foXbrw_{\Tf-t}, \sigma(\foXbrw_{\Tf-t}))$ and other terms are defined similarly. 
	\end{assumptionBRW}
	
	\begin{remark}
		Minimizing $\loss_{\mathrm{e}}(\theta)$ given in \eqref{eq:loss_entropy_function} is equivalent to minimizing
		\begin{align}
			\tilde{\loss}_{\mathrm{e}} (\theta) \eqdef \sum_{k=0}^{K-1} \int_{t_k}^{t_{k+1}} \sum_{\sigma \in \mcs}  \E  \l(\uBRW_{t}\foqbrw_t\log\frac{\uBRW_{t_k}}{\uBRWtheta_{t_k}} - (\uBRW_{t_k} - \uBRWtheta_{t_k}) \foqbrw_{t_k}  \r)(\sigma) \rmd t \eqsp,
		\end{align}
		since they differ only by an additive constant independent of $\theta$. In fact, a good approximation $\uBRWstar_{t_k}$ should be sufficiently close to $\uBRW_{t_k}$, which makes $\tilde{\loss}_{\mathrm{e}}(\theta^\star)$ nearly zero.
	\end{remark}
	
	\begin{assumptionBRW}\label{ass:finite_fisher_brw}
		The data distribution has finite second order moment and finite discrete Fisher information, \ie, $\m^\star_2:=\m_2(\mustar) =\sum_{x\in \N^d} \|x\|^2 \mustar( x) < \infty$ and
		\begin{align}\label{eq:finite_fisher_brw}
			\Ic_{\brw}(\mustar) \eqdef \E \l[ \sum_{\sigma \in \mcs} \mathbf{h}\l(\frac{\tmustar(\sigma(\foXbrw_0))}{\tmustar(\foXbrw_0)} \r) \foqbrw  (\foXbrw_0, \sigma(\foXbrw_0))  \r]< \infty \eqsp,
		\end{align}
		where $\tmustar \eqdef \mustar/\brw$ represents the relative marginal density.
	\end{assumptionBRW}
	Note that the requirement $\m_2^\star <\infty$ coincides with the minimal condition required on the data distribution in the continuous setting (see \cite[Corollary 1]{conforti2025kl}). 
	
	
	
	
	\begin{theorem}\label{theo:scale_brw}
		Let $c \in (0,1/2]$ and $\Tf \geq 1+2c$. Suppose~\Cref{ass:approx_score_brw} and~\Cref{ass:finite_fisher_brw} hold and assume that  $L =  d^{-1} \Ic_{\brw} (\mustar) \geq 2$.
		Choose
		$h_{k+1} = c \min \l\{ \max \l\{ \Tf-t_k, 1/L \r\}, 1 \r\}$ for $k < K-1$ and $h_K = \Tf-t_{K-1}$, then it holds
		\begin{align}
			&\KL(\mustar|\mathrm{Law}(\baXbrwstar_{\Tf} ) ) \notag \\
			&\quad \lesssim \underbrace{\rme^{-\Tf} \KL(\mustar|\brw)}_{\text{initialization error}} + \underbrace{\varepsilon^2\Tf }_{\text{arrox. error}} +\underbrace{ (\rme^c-1) [d\Tf + (d+\mstar_2)\log (L)] }_{\text{discretization error}} \eqsp. \label{eq:bound_convergence_brw}
		\end{align}
		Furthermore, \eqref{eq:bound_convergence_brw} still holds if we replace $\KL(\mustar|\brw)$ by $d+\mstar_2$.
		In particular, setting the time horizon $\Tf$ and the maximum step-size $c$ as
		\begin{align}\label{eq:cor_brw_scale}
			\Tf = \log \frac{d+\mstar_2}{\varepsilon^2} \eqsp,  \quad  c =  \log \l( \frac{\varepsilon^2}{ d\log((d+\mstar_2)/\varepsilon^2) + (d+\mstar_2)\log(L)} +1\r) \eqsp,
		\end{align}
		implies the number of iterations
		\begin{align}
			K \lesssim \frac{\log((d+\mstar_2)/\varepsilon^2) + \log (L)}{\log(1+ \varepsilon^2/[d\log((d+\mstar_2)/\varepsilon^2) + (d+\mstar_2)\log(L) ] )}  \eqsp,
		\end{align}
		and makes the approximation error $\tilde O(\varepsilon^2)$.
	\end{theorem}
	\begin{proof}[Proof of \Cref{theo:scale_brw}]
		It is deferred to~\Cref{sec:rw_brw_proof_conv}.
	\end{proof}
	\begin{remark}
		The bound \eqref{eq:bound_convergence_brw} remains valid for any time discretization $\{t_k\}_{k=0}^K$ satisfying $t_0=0$ and $t_K = \Tf$. However, the term $(\rme^c-1)[d+\m_2(\mu^\star)]\log(L)$ is then replaced by $c\mathcal{I}_{\brw}(\mustar)$, where $c=\max_k h_k$.
	\end{remark}
	
	We now remove the finite Fisher information assumption by employing an early stopping strategy as follows.
	\begin{theorem}\label{theo:early_stopping_brw}
		Assume~\Cref{ass:approx_score_brw} holds and suppose that the data distribution has finite second order moment, \ie, $\mstar_2 <\infty$.  For $\eta \in (0,\Tf)$, let $c \in (0, 1/2]$, $\Tf -\eta \geq 1+2c$ and $L_\eta = d^{-1} \Ic_{\brw}(\mubrw_\eta) \geq 2$. Choose
		$h_{k+1} = c \min \l\{ \max \l\{ \Tf-\eta-t_k, 1/L_\eta \r\}, 1 \r\}$ for $k < K-1$ and $h_K = \Tf -t_{K-1}$, then it holds
		\begin{align}
			&\KL(\mubrw_\eta|\mathrm{Law}(\baXbrwstar_{\Tf-\eta} ) ) \\ &\quad \lesssim \rme^{-\Tf} (d+\mstar_2) + \varepsilon^2\Tf + (\rme^c-1)\l[ d\Tf + (d+\mstar_2)\log (\eta^{-1}(1+\mstar_2 d^{-1})) \r]   \eqsp.
		\end{align}
		Consequently, the total variation distance between $\mustar$ and the generated distribution admits the following upper bound:
		\begin{align}
			&\quad\tvnorm{\mustar - \mathrm{Law}(\baXbrwstar_{\Tf-\eta})}  \lesssim \underbrace{\eta (d+\mstar_1)}_{\text{early stopping error}} \notag \\
			& \qquad +\sqrt{
				\rme^{-\Tf} (d + \mstar_2) + \varepsilon^2\Tf + (\rme^c-1)\l[   d\Tf + (d+\mstar_2)\log (\eta^{-1}(1+\mstar_2 d^{-1})) \r]} \label{eq:conv_early_stopping_brw}
		\end{align}
		where $\mstar_1 \eqdef \m_1(\mustar) = \sum_{x\in \N^d} \sum_{i=1}^d x^i \mustar(x)$ denotes the first-order moment of $\mustar$.
		Moreover, choosing \begin{align}\label{eq:cor_early_stopping_brw_1}
			\eta = \frac{\varepsilon}{d+\mstar_1} \quad \text{and} \quad
			\Tf = \log \frac{d+\mstar_2}{\varepsilon^2} \eqsp,
		\end{align}
		and setting the maximum step-size 
		\begin{align}\label{eq:cor_early_stopping_brw_2}
			c =  \log \l( \frac{\varepsilon^2}{ d\Tf + (d+\mstar_2)\log (\eta^{-1}(1+\mstar_2 d^{-1}))} +1\r)
		\end{align}
		imply the error $\tilde O(\vareps)$
		and the number of iterations $\tilde O((d+\m^\star_2)/\varepsilon^2)$.
	\end{theorem}
	\begin{proof}[Proof of \Cref{theo:early_stopping_brw}]
        It is given in Appendix \ref{sec:proof_theo:early_stopping_brw}.
	\end{proof}
	


	\section{Related works}
	\label{app:related_works}
	This section provides details of recent researches on DDMs.
	
	\paragraph*{Analysis of Discrete diffusion models}
	There have been plenty of studies of diffusion models tailored for discrete data.
	\cite{hoogeboom2021argmax} proposed Argmax Flows and Multinomial Diffusion for categorical data. Argmax Flows linked discrete data to continuous models via an argmax with a probabilistic inverse, while Multinomial Diffusion added categorical noise and trained a model to reverse it.
	\cite{dieleman2022continuous} and \cite{chen2022analog} embedded discrete data in Euclidean space, while \cite{richemond2022categorical} used the simplex, all leveraging continuous forward diffusion models.
	Later, \cite{campbell2022continuous} introduced a continuous-time framework for discrete denoising diffusion using CTMCs and \cite{gat2024discrete} further added a correction step to bring the sample distribution closer to the desired one, but their general approach relies on ELBO-based marginal learning and costly correction steps, making it less efficient in high dimension.
	Another recent approach to handle discrete data is generative modeling with arbitrary Markov processes using generator matching, introduced by \cite{holderrieth2024generator}. This method is flexible and can be applied to various state spaces, particularly in discrete settings.
	Recently, in \cite{anonymous2025countsdiff}, the authors introduced CountsDiff, a diffusion model tailored to
	$\N^d$-supported data. The forward noising dynamics are given by a pure-death chain with transition rates identical to the backward rates discussed in \Cref{sec:brw}. Empirically, CountsDiff matches the performance of state-of-the-art discrete diffusion methods, emphasizing the need for dedicated approaches to $\N^d$-valued data.
	Nonetheless, despite these promising results, all of the aforementioned models came with limited theoretical justification.

	\paragraph*{Masked diffusion models}
	One important step toward more advanced models is the ``masked'' diffusion process, a discrete diffusion approach first introduced by \cite{austin2021structured}. Later, \cite{shi2024simplified} looked into this model further, simplifying its training objective by expressing it as a signal-to-noise ratio, which helps highlight some useful features. However, despite these improvements, the model still lacks theoretical guarantees. \cite{sahoo2024simpleeffectivemaskeddiffusion} improved upon this direction by leveraging the structure of the absorbing kernel and refining the bridge-based reverse process, leading to more efficient optimization. The model’s reliance on absorbing-state approximations and heuristic training objectives limits its theoretical grounding.
	


	
	\paragraph*{Convergence results of discrete diffusion models}
	\cite{ren2024discrete} analyzed discrete diffusion models using L\'evy-type stochastic integrals and Poisson random measures, deriving integral expressions for the noising and denoising processes of categorical data. They also introduced a unified error analysis and established the first KL-divergence bound for the $\tau$-leaping algorithm, though under strong score assumptions (continuity and boundedness). \cite{chen2024convergence} provided explicit error bounds for hypercube sampling with a uniform rate matrix, achieving near-linear iteration complexity (in expectation) under bounded score and score-entropy assumptions. Next, \cite{pham2025discrete} achieved to establish (worst case) computational complexity that scales linearly (up to logarithmic factors) with the dimension, under only statistical assumption on the score approximation. One key improvement in \cite{pham2025discrete} comes from avoiding the Kolmogorov equation at each iteration and instead using flip rates together with a Poisson clock.
	Complementing CTMC-based approaches, \cite{bach2025samplingbinarydatadenoising} introduced a fully discrete denoising model using Bernoulli corruption as a Gaussian analogue, yielding a Langevin-style sampler on the hypercube with rigorous guarantees. However, their works remain restricted to the hypercube.
	For categorical data, \cite{zhang2024convergence}, \cite{liang2025discrete} and \cite{liang2025absorb} established convergence guarantees under uniform and absorbing rate matrices, respectively, paralleling our results in \Cref{theo:early_stopping_rw} and \Cref{theo:exp_stepsize_masked}. However, in all these works, the authors suppose that the score or its approximation is bounded uniformly in time, conditions that do not hold in most cases.  In contrast, our bounds require no assumptions on the score and hold under a simple statistical condition, which is also employed in the aforementioned literature.
	Furthermore, under stronger conditions than ours,  \cite{liang2025discrete} only obtain a computational complexity that grows quadratically with the dimension, our results scale linearly (up to logarithmic factors).
	Nonetheless, none of the above works establish theoretical foundations for DDMs on counted data, leaving this aspect unexplored.

	To conclude, this paper bridges existing gaps by explicitly formulating the forward Markov process. By deriving the conditional expectation expression of the score, we can reduce the computationally expensive signal-to-noise ratio training as used in \cite{shi2024simplified}. As a result, we obtain a simpler and more efficient training framework based on the $\mrl^2$-loss and entropy-based loss, supported by rigorous {non-asymptotic} convergence guarantees.
	The core of our results lies in the monotonicity of the score along the backward dynamics. Importantly, our analysis applies to both absorbing and uniform rate matrices and extends beyond finite to countably infinite state spaces, thereby highlighting the robustness and generality of our approach.
	
	
	

	
	\section{Proofs}\label{sec:main_proofs}
	
	To derive our results, our  approach for all considered models can be summarized as follows: 
	\begin{itemize}
		\item Characterizing the discrete score via a Hamilton–Jacobi–Bellman (HJB) equation derived from the forward Kolmogorov equation. Notably, an alternative perspective to obtain the HJB formulation comes from stochastic optimal control.
		\item Establishing the monotonicity of the score along the backward dynamics (\Cref{prop:3_rw}, \Cref{prop:monotonicity_masked_d}, \Cref{prop:3_brw}) by leveraging It\^o’s formula together with the HJB equation.
		\item Applying Girsanov’s theorem for jump processes to compute the KL divergence between the generated and true data distributions, then simplifying the resulting expression using the score monotonicity. Notably, the integrability condition in Girsanov's theorem holds naturally in the models we study, as it can be directly controlled under our assumptions. Similarly to continuous diffusion models, the overall error can be decomposed into three components: (1) initialization error, (2) approximation error, and (3) discretization error. Importantly, leveraging score monotonicity allows us to derive simple and rigorous error bounds without imposing the restrictive assumptions on the data distribution or estimated score required by previous studies.
	\end{itemize}
	
	
	\subsection{Masked diffusion on \texorpdfstring{$\Z^d_m$}{Zdm}}\label{sec:main_proofs_masked}
	
	\subsubsection{Evolution of the score along the backward dynamic}
	The evolution of the score plays a crucial role in the convergence guarantees, as it enables control of the discretization error. To analyze how this error propagates, we rely on It\^o’s formula together with the characterization provided in the following lemma. 
	\begin{lemma}\label{lem:formula_u_masked_d}
		Under~\Cref{ass:condition_on_beta} and~\Cref{ass:finite_fisher_masked_d}, the time reversal process $(\baXm_t)_{t\in [0,\Tf]}$ has a backward generator $(\baqm_t)_{t\in [0,\Tf)}$ of the form \eqref{eq:backward_generator_ctmc}, with $\tqm_t(x,y) \eqdef \foqm_{\Tf-t}(y,x)$ for any $t \in [0,\Tf]$ and 
		for $(t,x,y) \in [0,\Tf) \times \tZ^d_m \times \tZ^d_m$,
		\begin{equation}\label{eq:discrete score_masked_d}
			\uM_t(x,y) = \begin{cases}
				\quad{\mum_{\Tf-t}(y)}/{\mum_{\Tf-t}(x)}  \quad &\text{if } \exists i \in  \msm^{\complement}_y: \eqsp x=\mrm^{(i)}(y) \eqsp,\\
				-{\sum_{y \neq x} \uM_t(x,y) \tqm_t (x,y)}/{\tqm_t(x,x)} \quad &\text{if } x=y \eqsp,\\
				\qquad \quad 1  \quad &\text{otherwise}\eqsp.
			\end{cases}
		\end{equation}
		In addition, $\uM_t$ given above can be expressed as
		\begin{equation}
			\uM_t (x,\um^{(i)}_j(x))= \rme^{\Vm_t(x) - \Vm_t(\um^{(i)}_j(x))} \quad \text{for }  i \in \msm_x \eqsp, \eqsp j \in \Z_m \eqsp,
		\end{equation}
		with $\Vm_t(x) \eqdef -\log \mum_{\Tf-t}(x)$. Furthermore setting
		\begin{equation}
			f^{(i),j}(t,x):= \uM_t(x, \um^{(i)}_j(x))\1_{i \in \msm_{x}} \eqsp,
		\end{equation}
		we obtain the following equation: for $(t,x) \in [0,\Tf) \times \tZ^d_m \setminus \Z^d_m$ and $j \in \Z_m$, $i \in \msm_x$,
		\begin{align}\label{eq:hjb_masked_d}
			\partial_t f^{(i),j}(t,x) + (\baqm f^{(i),j})(t,x)  &= \beta(\Tf-t) f^{(i),j}(t,x) \eqsp.
		\end{align}
	\end{lemma}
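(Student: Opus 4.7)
The plan is to proceed in four steps, starting from the time-reversal formula~\eqref{eq:time_rev_formula_X} and ending with the Hamilton--Jacobi--Bellman equation \eqref{eq:hjb_masked_d}.

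\textbf{Step 1: full support of $\mum_t$.} I first verify that $\mum_t(x)>0$ for every $x\in\tZ^d_m$ and every $t\in(0,\Tf]$, so that the ratios appearing below are well defined. Using the product form~\eqref{eq:forward_transition_matrix_mask_d} of $\fopm_{0,t}$, I compute $\mum_t(x)=\sum_{x_0\in\Z^d_m}\fopm_{0,t}(x_0,x)\mustar(x_0)$; since $\alpha_t\in(0,1)$ for $t\in(0,\Tf]$ by~\Cref{ass:condition_on_beta}, the coordinatewise transition $\fopmm_{0,t}(x_0^i,x^i)$ is strictly positive whenever $(x_0^i,x^i)$ is a legal transition (i.e.\ $x^i=x_0^i$ or $x^i=m$), and~\Cref{ass:finite_fisher_masked_d} gives $\mustar(x_0)>0$ for $x_0\in\Z^d_m$, hence the claim.

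\textbf{Step 2: explicit form of $\uM_t$ and the exponential representation.} Inserting the forward generator~\eqref{eq:forward_generator_masked_d} into the time-reversal identity~\eqref{eq:time_rev_formula_X} gives, for $x\ne y$,
\begin{equation*}
  \baqm_t(x,y)=\frac{\mum_{\Tf-t}(y)}{\mum_{\Tf-t}(x)}\,\foqm_{\Tf-t}(y,x)\,.
\end{equation*}
Setting $\tqm_t(x,y)=\foqm_{\Tf-t}(y,x)$, this directly yields the product decomposition~\eqref{eq:backward_generator_ctmc} with the stated off-diagonal values of $\uM_t$: indeed, $\foqm_{\Tf-t}(y,x)$ is non-zero only when $x=\mrm^{(i)}(y)$ for some $i\in\msm_y^{\complement}$, which by the definition~\eqref{eq:unmask_operator_d} is equivalent to $y=\um^{(i)}_{y^i}(x)$ with $i\in\msm_x$. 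The diagonal value is fixed by the convention $\baqm_t(x,\tZ^d_m)=0$, which after factoring gives exactly the second line of~\eqref{eq:discrete score_masked_d}. The exponential representation is then immediate from $\Vm_t\eqdef -\log \mum_{\Tf-t}$.

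\textbf{Step 3: forward Kolmogorov equation for the marginal.} Summing~\eqref{eq:foward_kolm} against the initial law $\mustar$ and using the structure of $\foqm_t$, I obtain, for every $z\in\tZ^d_m$,
\begin{equation*}
  \partial_s\mum_s(z)=\beta(s)\Bigl[\sum_{k\in\msm_z}\sum_{n\in\Z_m}\mum_s(\um^{(k)}_n(z))-|\msm_z^{\complement}|\mum_s(z)\Bigr]\,,
\end{equation*}
since the only $y$'s with $\foqm_s(y,z)\ne 0$ are $y=\um^{(k)}_n(z)$ with $k\in\msm_z$, $n\in\Z_m$, and $\foqm_s(z)=|\msm_z^{\complement}|\beta(s)$.

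\textbf{Step 4: derivation of the HJB equation.} Writing $\uM_t(x,\um^{(i)}_j(x))=\mum_{\Tf-t}(\um^{(i)}_j(x))/\mum_{\Tf-t}(x)$ and differentiating the quotient, the logarithmic derivative decomposes as
\begin{equation*}
  \frac{\partial_t\uM_t(x,\um^{(i)}_j(x))}{\uM_t(x,\um^{(i)}_j(x))}=-\frac{\partial_s\mum_s(\um^{(i)}_j(x))}{\mum_s(\um^{(i)}_j(x))}\Big|_{s=\Tf-t}+\frac{\partial_s\mum_s(x)}{\mum_s(x)}\Big|_{s=\Tf-t}\,.
\end{equation*}
Substituting Step~3 and using the two key combinatorial identities $|\msm_{\um^{(i)}_j(x)}^{\complement}|-|\msm_x^{\complement}|=1$ and $\msm_{\um^{(i)}_j(x)}=\msm_x\setminus\{i\}$ (valid because $x\in\tZ^d_m\setminus\Z^d_m$ and $i\in\msm_x$) produces exactly~\eqref{eq:hjb_masked_d}.

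\textbf{Main obstacle.} The routine parts are Steps 1--2; the delicate step is Step 4, where the algebraic cancellations rely on tracking the change of the masked/non-masked index sets under the unmasking operator $\um^{(i)}_j$ and on keeping the two double sums over $\msm_x$ and $\msm_x\setminus\{i\}$ properly aligned so that the constant ``$1$'' inside the bracket emerges from the difference $|\msm_{\um^{(i)}_j(x)}^{\complement}|-|\msm_x^{\complement}|$.
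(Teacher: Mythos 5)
Your proposal is correct and takes essentially the same route as the paper's proof: establish full support of $\mum_t$, read off the backward generator from the time-reversal formula with $\tqm_t(x,y)=\foqm_{\Tf-t}(y,x)$, define $\Vm_t=-\log\mum_{\Tf-t}$, and differentiate the quotient using the forward Kolmogorov equation together with the set identities $\msm_{\um^{(i)}_j(x)}=\msm_x\setminus\{i\}$ and $|\msm^{\complement}_{\um^{(i)}_j(x)}|=|\msm_x^{\complement}|+1$. Your diagnosis of the "main obstacle" — tracking the mask-index bookkeeping so that the constant $1$ emerges from the difference of $|\msm^{\complement}|$ terms — matches exactly the pivot of the paper's computation.
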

	\begin{proof}[Proof of~\Cref{lem:formula_u_masked_d}]
		Under~\Cref{ass:condition_on_beta} and~\Cref{ass:finite_fisher_masked_d}, for any $(t,x) \in (0,\Tf] \times \tZ^d_m$, we have $\mum_t(x) >0$.
		Indeed, if $x \in \Z^d_m$, $\mum_t(x) \geq  \mustar(x)\alpha_t^d >0$, otherwise, if $x \in \tZ^d_m \setminus \Z^d_m$, we have
		\begin{align}
			\mum_t(x) = \sum_{y\in \Z^d_m} \mustar(y) \fopm_{0,t}(y,x) \geq \mustar(z_x) \alpha_t^{d-|\msm_{x}|}(1-\alpha_t)^{|\msm_x|} >0  \eqsp,
		\end{align}
		where $z_x^i= x^i$ for any $i \in \msm_x^{\complement}$ and $z_x^j = 0$ for any $j \in \msm_x$.
		Thus~\eqref{eq:backward_generator_ctmc} implies the following for any $t\in [0,\Tf)$ and $x\neq y \in \tZ^d_m$:
		\begin{equation}\label{eq:mask_baq}
			\baqm_t (x,y) = \frac{\mum_{\Tf-t}(y)}{\mum_{\Tf-t}(x)} \foqm_{\Tf-t} (y,x)
			= \frac{\mum_{\Tf-t}(y)}{\mum_{\Tf-t}(x)} \tqm_{t}(x,y) 
			\eqsp,
		\end{equation}
		where $\tqm_t(x,y) \eqdef \foqm_{\Tf-t}(y,x)$.
		We define the discrete score for $t \in [0,\Tf)$ and $y \neq x \in \tZ^d_m$ as follows:
		\begin{equation}
			\uM_t(x,y) =
			\frac{\mum_{\Tf-t} (y)}{ \mum_{\Tf-t}(x)} \eqsp, \quad \text{if } \exists i \in  \msm^{\complement}_y: \eqsp x=\mrm^{(i)}(y) \eqsp,
		\end{equation}
		otherwise, for $y \neq x$, we impose $\uM_t(x,y)=1$. Then~\eqref{eq:mask_baq} follows
		\begin{equation}
			\baqm_t (x,y) = \uM_t(x,y) \tqm_t(x,y) \eqsp.
		\end{equation}
		Furthermore, the convention
		\begin{equation}
			\uM_t(x,x) = -\frac{\sum_{y \neq x} \uM_t(x,y) \tqm_t (x,y)}{\tqm_t(x,x)}  \eqsp,
		\end{equation}
		implies that $\uM \tqm$ in fact forms a generator and is associated with $(\baXm_t)_{t\in [0,\Tf]}$
		Moreover, the discrete score $\uM_t$ can be characterized by using the function $\Vm_t(x) := -\log \mum_{\Tf-t}(x)$ for $(t,x) \in [0,\Tf) \times \tZ^d_m$ as
		\begin{equation}
			\uM_t (x,\um^{(i)}_j(x))= \rme^{\Vm_t(x) - \Vm_t(\um^{(i)}_j(x))} \quad \text{for }  i \in \msm_x \text{ and } j \in \Z_m \eqsp.
		\end{equation}
		Consequently,
		\begin{align}
			\partial_t \uM_t(x,\um^{(i)}_j(x))
			&= \uM_t (x,\um^{(i)}_j(x)) \l[ \partial_t \Vm_t(x) - \partial_t \Vm_t(\um^{(i)}_j(x)) \r]\\
			&= \uM_t (x,\um^{(i)}_j(x)) \l[ \frac{\partial_t \mum_{\Tf-t}(x)}{ \mum_{\Tf-t}(x)} - \frac{\partial_t  \mum_{\Tf-t}(\um^{(i)}_j(x))}{ \mum_{\Tf-t}(\um^{(i)}_j(x))} \r] \eqsp.
		\end{align}
		By the forward Kolmogorov equation \eqref{eq:foward_kolm}, we get
		\begin{align}
			&\quad \partial_t \uM_t(x,\um^{(i)}_j(x))\\
			&= \uM_t(x,\um^{(i)}_j(x)) \l[ \frac{\sum_{z \in \tZ^d_m} \mum_{\Tf-t}(z) \foqm_{\Tf-t}(z,x)}{ \mum_{\Tf-t}(x)} - \frac{\sum_{z \in \tZ^d_m} \mum_{\Tf-t}(z) \foqm_{\Tf-t}(z,\um^{(i)}_j(x))}{ \mum_{\Tf-t}(\um^{(i)}_j(x))} \r] \\
			&= \uM_t(x,\um^{(i)}_j(x)) \Bigg[ \foqm_{\Tf-t} (x,x) + \frac{\beta(\Tf-t)\sum_{k \in \msm_x} \sum_{n\in \Z_m} \mum_{\Tf-t}(\um^{(k)}_n(x)) }{ \mum_{\Tf-t}(x)} \\
			& - \foqm_{\Tf-t}(\um^{(i)}_j(x),\um^{(i)}_j(x)) - \frac{\beta(\Tf-t)\sum_{k \in \msm_x \setminus \{i\}}\sum_{n\in \Z_m} \mum_{\Tf-t}(\um^{(k)}_n(\um^{(i)}_j(x))) }{ \mum_{\Tf-t}(\um^{(i)}_j(x))} \Bigg]\\
			&=  \uM_t(x,\um^{(i)}_j(x)) \Bigg[ -|\msm_x^{\complement}| \beta(\Tf-t) + \frac{\beta(\Tf-t)\sum_{k \in \msm_x} \sum_{n\in \Z_m} \mum_{\Tf-t}(\um^{(k)}_n(x)) }{ \mum_{\Tf-t}(x)} \\
			& +| \msm^{\complement}_{\um^{(i)}_j(x)} | \beta(\Tf-t) - \frac{\beta(\Tf-t)\sum_{k \in \msm_x \setminus \{i\}}\sum_{n\in \Z_m} \mum_{\Tf-t}(\um^{(k)}_n(\um^{(i)}_j(x))) }{ \mum_{\Tf-t}(\um^{(i)}_j(x))} \Bigg] \eqsp.
		\end{align}
		Now using definition of $\uM_t$, $\baqm_t$ and the relation $| \msm^{\complement}_{\um^{(i)}_j(x)}| = |\msm_x^{\complement} |+1$, we obtain
		\begin{align}
			&\partial_t \uM_t(x, \um^{(i)}_j(x))\\
			&=\beta(\Tf-t) \uM_t(x, \um^{(i)}_j(x))\\
			&\quad\Bigg[ 1 +  \sum_{k \in \msm_x} \sum_{n\in \Z_m} \l(\uM_t(x, \um_n^{(k)}(x))
			-  \uM_t(\um^{(i)}_j(x), \um^{(k)}_n(\um^{(i)}_j(x)))\1_{k \neq i} \r)\Bigg] \\
			&= \beta(\Tf-t) \uM_t(x, \um^{(i)}_j(x))+ \sum_{k=1}^d\sum_{n\in \Z_m} \baqm_t(x, \um^{(k)}_n(x)) \\
			&\quad\Bigg[ \uM_t(x, \um^{(i)}_j(x)) \1_{k \in \msm_x} - \uM_t(\um^{(k)}_n(x), \um^{(i)}_j(\um^{(k)}_n(x)))\1_{i \in \msm_{\um^{(k)}_n(x)}} \Bigg] \eqsp,
		\end{align}
		where we used $\um^{(k)}_n(x), \um^{(i)}_j(\um^{(k)}_n(x)) = \um^{(i)}_j(x), \um^{(k)}_n(\um^{(k)}_n(x))$ and $\msm_{\um^{(k)}_n(x)} = \msm_x \setminus \{k\}$ in the last line. This concludes the proof by definition of $
		f^{(i),j}$ for $i,j\in \msm_x \times \Z_m$:
		\begin{equation}
			f^{(i),j}(t,x):= \uM_t(x, \um^{(i)}_j(x))\1_{i \in \msm_{x}} \eqsp.
		\end{equation}
	\end{proof}
	
	
	\begin{lemma}\label{prop:monotonicity_masked_d}
		Under \Cref{ass:condition_on_beta}, \Cref{ass:finite_fisher_masked_d}, for any $\eta \in (0,\Tf)$, $i\in [d]$ and $j\in \Z_m$, the process $\l(\uM_t(\baXm_t, \um^{(i)}_j(\baXm_t))\1_{i \in \msm_{\baXm_t}}\r)_{t\in [0,\Tf-\eta]}$ is a submartingale. In particular, the following holds for any $0 \leq r \leq t \leq \Tf-\eta$:
		\begin{equation}
			\E \l[ f^{(i),j}(t,\baXm_t)
			\middle| \Fc_r \r] = \rme^{\int_r^t \beta(\Tf-s) \rmd s}  f^{(i),j}(r,\baXm_r)\eqsp,
		\end{equation}
		where $(\Fc_{\tilde{t}})_{\tilde{t} \in [0,\Tf-\eta]}$
		is the filtration generated by $(\baXm_{\tilde{t}})_{\tilde{t} \in [0,\Tf-\eta]}$.
	\end{lemma}
	
	\begin{proof}[Proof of~\Cref{prop:monotonicity_masked_d}]
		Fix $\eta \in (0,\Tf)$, $i \in [d]$ and $j \in \Z_m$, applying It\^o's formula on $ f^{(i),j}(t,\baXm_{t})$
		for $t \in [0,\Tf-\eta]$ and noting that $\baXm_t=\baXm_{t-}$ for Lebesgue almost every $t \in [0,\Tf-\eta]$, we obtain that
		\begin{align}
			&f^{(i),j}(t,\baXm_{t} )-f^{(i),j}(0,\baXm_{0} )\\
			& \qquad = \Mm_{(i),j}(t)+\int_0^t\l[ \partial_s f^{(i),j}(s,\baXm_{s} )
			+(\baqm f^{(i),j}) (s, \baXm_{s}) \r]\rmd s \eqsp,
		\end{align}
		with
		\begin{align}
			\Mm_{(i),j}(t) = \int_{[0,t] \times \tZ^d_m} \l[ f^{(i),j}(s,x) - f^{(i),j}(s, \baXm_{s-}) \r] \tilde{N}_{\baXm}^{\baqm} (\rmd x \rmd s)
		\end{align}
		is a true martingale (see \Cref{lem:martingale_mask}), where $\tilde{N}_{\baXm}^{\baqm}$ denotes the compensated measure of the random point measure $N_{\baXm}^{\baqm}$ corresponding to the CTMC associated with $(\baqm_t)_{t\in[0,\Tf)}$;  see Appendix \ref{sec:stochastic_calculus} for completeness.
		Consequently, the process
		\begin{align}
			&f^{(i),j}(t,\baXm_{t} )-f^{(i),j}(0,\baXm_{0} ) -\int_0^t\l[ \partial_s f^{(i),j}(s,\baXm_{s} )
			+(\baqm f^{(i),j}) (s, \baXm_{s}) \r]\rmd s \eqsp,
		\end{align}
		is a martingale.
		Plugging the equation \eqref{eq:hjb_masked_d} into the integrand and using the martingale property, we get
		\begin{align}
			\E \l[f^{(i),j}(t,\baXm_t)
			\middle| \Fc_r \r] - \E \l[f^{(i),j}(s,\baXm_s)
			\middle| \Fc_r \r] =  \int_s^t \beta(T-r) \E \l[ f_r^{(i),j}(\baXm_r) \middle| \Fc_r \r] \rmd r\eqsp,
		\end{align}
		for any $0 \leq r \leq s \leq t < \Tf$, where $(\Fc_t)_{t\in [0,\Tf)}$ denotes the filtration generated by $(\baXm_t)_{t\in [0,\Tf)}$. For fixed $i  \in [d]$, $j \in \Z_m$ and $0 \leq r \leq t \leq \Tf-\eta$, we denote
		\[y_r^{(i),j}(t):=\E \l[f^{(i),j}(t,\baXm_t)
		\middle| \Fc_r \r] \eqsp,\]
		then the previous equation yields the following ordinary differential equation (ODE):
		\begin{align}
			\frac{\rmd }{\rmd t} y_r^{(i),j}(t) = \beta(\Tf-t)y_r^{(i),j}(t) \eqsp, \quad \text{for } 0 \leq r \leq t \leq \Tf-\eta \eqsp,
		\end{align}
		which in particular implies
		\[y_r^{(i),j}(t) = \rme^{\int_r^t \beta(\Tf-s) \rmd s} y_r^{(i),j}(r)  \eqsp,\] and the proof concludes.
	\end{proof}
	
	\begin{lemma}\label{prop:monotone_h_masked_d}
		Assume \Cref{ass:condition_on_beta} and \Cref{ass:finite_fisher_masked_d} holds. Recall that $\mathbf{h}(a)= a \log a- a+1$ for $a>0$ with the convention $\mathbf{h}(0)=0$. Fix $\eta \in (0,\Tf)$, $i \in [d]$, $j \in \Z_m$, then for $0 \leq r \leq t \leq \Tf-\eta$, the following holds
		\begin{align}
			\mathbf{h}(f^{(i),j}(r,\baXm_r))  \leq \E \l[ \mathbf{h}(f^{(i),j}(t,\baXm_t)) \middle| \Fc_r \r] + \rme^{\int_r^t \beta(\Tf-s) \rmd s} -1 \eqsp.
		\end{align}
	\end{lemma}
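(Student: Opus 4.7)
The plan is to leverage the exact exponential growth identity from \Cref{prop:monotonicity_masked_d} together with Jensen's inequality for the convex function $\mathbf{h}$, and close the gap between the inequality we obtain and the one we seek by an elementary algebraic identity that captures how $\mathbf{h}$ transforms under scaling.

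\textbf{Step 1: Setup.} Fix $i \in [d]$, $j \in \Z_m$, and $0 \leq \nu \leq t \leq \Tf - \eta$. Set $F_s := \uM_s(\baXm_s, \um^{(i)}_j(\baXm_s))\1_{i \in \msm_{\baXm_s}}$ for $s \in [0, \Tf-\eta]$, and $\alpha := \rme^{\int_\nu^t \beta(\Tf-r)\rmd r} \geq 1$. By \Cref{prop:monotonicity_masked_d}, $\E[F_t \mid \Fc_\nu] = \alpha F_\nu$. The goal is
\begin{equation}
\mathbf{h}(F_\nu) \leq \E[\mathbf{h}(F_t)\mid \Fc_\nu] + \alpha - 1 \eqsp. \label{eq:goal_monotone_h_mask}
\end{equation}

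\textbf{Step 2: Scaling identity for $\mathbf{h}$.} A direct computation shows that for every $X \geq 0$ and $\alpha \geq 1$ (using the convention $0\log 0 = 0$),
\begin{equation}
\mathbf{h}(\alpha X) = \alpha\mathbf{h}(X) + \alpha X \log \alpha - (\alpha - 1) \eqsp. \label{eq:scaling_h}
\end{equation}
Indeed, expanding $\mathbf{h}(\alpha X) = \alpha X \log(\alpha X) - \alpha X + 1$ and using $X\log X - X = \mathbf{h}(X) - 1$ yields \eqref{eq:scaling_h}; the identity also holds at $X = 0$ since both sides equal $1$ when $X = 0$.

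\textbf{Step 3: Jensen and recombination.} Since $\mathbf{h}$ is convex on $[0, \infty)$, Jensen's inequality (conditional on $\Fc_\nu$) combined with $\E[F_t \mid \Fc_\nu] = \alpha F_\nu$ gives
\[
\mathbf{h}(\alpha F_\nu) = \mathbf{h}(\E[F_t \mid \Fc_\nu]) \leq \E[\mathbf{h}(F_t)\mid \Fc_\nu] \eqsp.
\]
Applying \eqref{eq:scaling_h} with $X = F_\nu$, rearranging, and discarding the non-negative term $\alpha F_\nu \log \alpha$ (using $\alpha \geq 1$ and $F_\nu \geq 0$) yields
\[
\alpha \mathbf{h}(F_\nu) = \mathbf{h}(\alpha F_\nu) - \alpha F_\nu \log \alpha + (\alpha - 1) \leq \E[\mathbf{h}(F_t)\mid \Fc_\nu] + (\alpha - 1) \eqsp.
\]
Finally, since $\mathbf{h} \geq 0$ and $\alpha \geq 1$, one has $\mathbf{h}(F_\nu) \leq \alpha \mathbf{h}(F_\nu)$, which combined with the previous display gives \eqref{eq:goal_monotone_h_mask}.

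\textbf{Remark.} There is no serious obstacle here; the only subtlety is that $F_\nu$ may vanish when $i \notin \msm_{\baXm_\nu}$. In that case, since the backward masking process can only unmask coordinates (so $\msm_{\baXm_t} \subset \msm_{\baXm_\nu}$ for $t \geq \nu$), we have $F_t \equiv 0$ conditionally, both sides of \eqref{eq:goal_monotone_h_mask} reduce to checking $1 \leq 1 + (\alpha - 1)$, which is trivial. The argument above handles this case uniformly via \eqref{eq:scaling_h} at $X = 0$, so no case split is actually needed.
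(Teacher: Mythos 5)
Your proof is correct and takes a genuinely different, more elementary route than the paper's. The paper applies Itô's formula directly to the process $g^{(i),j}(t,\baXm_t) = \mathbf{h}(F_t)$, computes the drift of this new process, bounds it from below via the inequality $\rme^a \geq a+1$, and then closes with Gronwall's lemma on the resulting differential inequality for $\gamma_\nu^{(i),j}(t) := \E[\mathbf{h}(F_t)\mid \Fc_\nu]$. You instead observe that \Cref{prop:monotonicity_masked_d} already provides the exact conditional identity $\E[F_t\mid\Fc_\nu]=\alpha F_\nu$ with $\alpha = \rme^{\int_\nu^t\beta(\Tf-s)\rmd s}$, so no further Itô computation is needed: Jensen for the convex $\mathbf{h}$ gives $\mathbf{h}(\alpha F_\nu)\leq \E[\mathbf{h}(F_t)\mid\Fc_\nu]$, and your scaling identity $\mathbf{h}(\alpha X)=\alpha\mathbf{h}(X)+\alpha X\log\alpha-(\alpha-1)$ converts this into the claim. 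This is shorter and more transparent; the paper's route essentially re-derives the convexity/Jensen inequality in differential form. Each step of your argument checks out: the scaling identity is algebraically correct for $X>0$, the term $\alpha F_\nu\log\alpha$ is nonnegative and can be dropped, and $\mathbf{h}(F_\nu)\leq\alpha\mathbf{h}(F_\nu)$ since $\alpha\geq 1$ and $\mathbf{h}\geq 0$.

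One small point that is worth flagging for precision but does not affect the conclusion: the lemma statement stipulates the convention $\mathbf{h}(0)=0$, whereas your verification of the scaling identity at $X=0$ and your closing remark both implicitly use the continuous extension $\mathbf{h}(0)=1$ (the identity $\mathbf{h}(\alpha\cdot 0)=\alpha\mathbf{h}(0)-(\alpha-1)$ forces $\mathbf{h}(0)=1$ when $\alpha\neq 1$). In fact the paper's own Itô computation in the proof of this lemma also tacitly assumes $\mathbf{h}(0)=1$ (it identifies $g^{(i),j}(s,\baXm_s)-1$ with $(\uM_s\log\uM_s-\uM_s)\1_{i\in\msm_{\baXm_s}}$, which vanishes on $\{i\notin\msm_{\baXm_s}\}$), so the stated convention appears to be an internal inconsistency of the paper rather than an error on your side. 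Regardless, the case $F_\nu=0$ is trivial under either convention: when $i\notin\msm_{\baXm_\nu}$, since $\msm_{\baXm_t}\subset\msm_{\baXm_\nu}$ for $t\geq\nu$ we also have $F_t=0$, and the target inequality reduces to $\mathbf{h}(0)\leq\mathbf{h}(0)+(\alpha-1)$, which holds for any fixed value of $\mathbf{h}(0)$. It would be cleanest to handle $F_\nu=0$ as a separate one-line case and apply the scaling identity only for $F_\nu>0$, so that your proof is robust to whichever convention is adopted.
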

	\begin{proof}[Proof of~\Cref{prop:monotone_h_masked_d}]
		Fix $\eta \in (0,\Tf)$, $i \in [d]$ and $j \in \Z_m$, applying It\^o's formula on
		\begin{equation}
			g^{(i),j}(t,\baXm_{t}):= \mathbf{h}\l(f^{(i),j}(t,\baXm_t) \r)
		\end{equation}
		for $t \in [0,\Tf-\eta]$ and noting that $\baXm_t=\baXm_{t-}$ for Lebesgue almost every $t \in [0,\Tf-\eta]$, we obtain that
		\begin{align}
			&g^{(i),j}(t,\baXm_{t} )-g^{(i),j}(0,\baXm_{0} )\\
			&\qquad = \tMm_{(i),j}(t) + \int_0^t\l[ \partial_s g^{(i),j}(s,\baXm_{s} )
			+(\baqm g^{(i),j}) (s, \baXm_{s}) \r]\rmd s \eqsp,
		\end{align}
		with
		\begin{align}
			\tMm_{(i),j}(t) = \int_{[0,t] \times \tZ^d_m} \l[ g^{(i),j}(s,x) - g^{(i),j}(s, \baXm_{s-}) \r] \tilde{N}_{\baXm}^{\baqm} (\rmd x \rmd s)
		\end{align}
		is a local martingale. Recall that under \Cref{ass:condition_on_beta} and \Cref{ass:finite_fisher_masked_d}, we have $\mum_t(x)>0$ for any $x \in \tZ^d_m$ and $t\in [0,\Tf-\eta]$, therefore $\uM_t(x,\um^{(i)}_j(x)) >0$. Arguing similarly as before while noting that $\mathbf{h}$ is a continuous function, we obtain that $\tMm_{(i),j}(t)$ is integrable and thus a martingale.
		As a result, the process
		\begin{align}\label{eq:monotone_h_masked_d}
			&g^{(i),j}(t,\baXm_{t} )-g^{(i),j}(0,\baXm_{0} ) - \int_0^t\l[ \partial_s g^{(i),j}(s,\baXm_{s} )
			+(\baqm g^{(i),j}) (s, \baXm_{s}) \r]\rmd s 
		\end{align}
		is a genuine martingale.
		Denote the integrand
		\begin{align}
			c_s^{(i),j}(x):= \partial_s g^{(i),j}(s,x )
			+(\baqm g^{(i),j}) (s, x), \quad \text{for } s\in [0,\Tf-\eta] \eqsp.
		\end{align}
		By definition of $g^{(i),j}, \baqm$ and the equation shown in~\Cref{lem:formula_u_masked_d}, we get that
		\begin{align}
			c_s^{(i),j}(x) &= \log  f^{(i),j}\partial_s f^{(i),j} (s,x) + \sum_{y \neq x} \baqm_s(x,y)\\
			&\qquad\l[(f^{(i),j}\log f^{(i),j}-f^{(i),j})(s,y) - (f^{(i),j}\log f^{(i),j}-f^{(i),j})(s,x) \r] \\
			&=\beta(\Tf-s) f^{(i),j} \log f^{(i),j} (s,x) + \sum_{y \neq x} \baqm_s(x,y)\\
			&\qquad\l[f^{(i),j}(s,y)\log \frac{f^{(i),j}(s,y)}{f^{(i),j}(s,x)} - f^{(i),j}(s,y) +f^{(i),j}(s,x) \r] \eqsp.
		\end{align}
		By using the inequality $\log x \geq 1-1/x$, we arrive at
		\begin{align}
			c_s^{(i),j}(x) &\geq \beta(\Tf-s) f^{(i),j} \log f^{(i),j} (s,x) \geq \beta(\Tf-s)\l[  g^{(i),j}(s,x) -1 \r] \eqsp.
		\end{align}
		Plugging it into~\eqref{eq:monotone_h_masked_d} and taking the expectation given $\Fc_r$, we obtain for $0 \leq r \leq s \leq t \leq \Tf-\eta$,
		\begin{align}
			&\quad\E \l[g^{(i),j}(t,\baXm_t)
			\middle| \Fc_r \r] - \E \l[g^{(i),j}(s,\baXm_s)
			\middle| \Fc_r \r]\\
			& \qquad \geq  \int_s^t \beta(T-r) \l\{ \E \l[ g^{(i),j}(r, \baXm_r) \middle| \Fc_r \r] -1\r\} \rmd r \eqsp.
		\end{align}
		Denoting $\gamma_r^{(i),j}(t):=\E \l[g^{(i),j}(t,\baXm_t) \middle| \Fc_r \r]$ , then the previous equation implies
		\begin{align}
			\frac{\rmd }{\rmd t}  \gamma_r^{(i),j}(t) \geq \beta(\Tf-t) \l[\gamma_r^{(i),j}(t)-1 \r] \eqsp, \quad \text{for } 0 \leq r \leq t \leq \Tf-\eta \eqsp.
		\end{align}
		This together with the Gronwall's inequality yield
		\begin{align}
			\gamma_r^{(i),j}(t) -1 \geq \l[\gamma_r^{(i),j}(r) -1\r] \rme^{\int_r^t \beta(\Tf-s) \rmd s} = \gamma_r^{(i),j}(r) \rme^{\int_r^t \beta(\Tf-s) \rmd s} -   \rme^{\int_r^t \beta(\Tf-s) \rmd s} \eqsp.
		\end{align}
		Recall that $\beta(t) \geq 0$ for all $t \in [0,\Tf]$ and $\gamma_r^{(i),j}(r) \geq 0$ as the function $\mathbf{h}(a)=a\log a-a+1$ is nonnegative. Therefore,
		\begin{align}
			\gamma_r^{(i),j}(t) \geq  \gamma_r^{(i),j}(r) +1 - \rme^{\int_r^t \beta(\Tf-s) \rmd s} \eqsp.
		\end{align}
		As a result, for $0 \leq r \leq t\leq \Tf-\eta$, the following holds
		\begin{align}
			\gamma_r^{(i),j}(r) \leq \gamma_r^{(i),j}(t) + \rme^{\int_r^t \beta(\Tf-s) \rmd s} -1 \eqsp,
		\end{align}
		and the proof concludes.
	\end{proof}
	
	A useful result that can be leveraged to reduce the complexity of DDMs is the following upper bound on the discrete Fisher information of the marginal density.
	\begin{lemma}\label{lem:bound_fisher_masked}
		Assume \Cref{ass:condition_on_beta} and \Cref{ass:finite_fisher_masked_d} hold. For $t \in [0,\Tf)$, let us denote \[ \Ic(\mum_{\Tf-t}) \eqdef \E \l[ \sum_{i=1}^d\sum_{j \in \Z_m}\mathbf{h}\l( f^{(i),j}(t,\baXm_t)\r) \r] \eqsp,\]
		then we have
		\begin{align}
			\Ic(\mum_{\Tf-t}) \lesssim d \l(\frac{\alpha_{\Tf-t}}{1-\alpha_{\Tf-t}}+m \r) \eqsp.
		\end{align}
		In particular, for a constant generator $\beta(t)=1$, this estimate reduces to
		\begin{align}
			\Ic(\mum_{\Tf-t}) \lesssim d \l(\frac{1}{\Tf-t}+m \r) \eqsp.
		\end{align}
	\end{lemma}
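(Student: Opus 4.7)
The plan is to leverage the tensor structure of the forward transition matrix $\fopm_{0,s}$ from \eqref{eq:forward_transition_matrix_mask_d} to obtain an explicit, coordinate-wise factorization of the score. Writing $s = \Tf - t$, for any $x \in \tZ^d_m$ with $i \in \msm_x$ and $j \in \Z_m$, the ratio $\mum_s(\um^{(i)}_j(x))/\mum_s(x)$ collapses along coordinate $i$: the factors $\fopmm_{0,s}(x_0^k, x^k)$ for $k \neq i$ cancel, while coordinate $i$ contributes $\alpha_s \1_{x_0^i = j}$ in the numerator and $1-\alpha_s$ in the denominator. I would thus establish
\[
\uM_t(x, \um^{(i)}_j(x)) \;=\; \frac{\alpha_s}{1-\alpha_s}\, p_j^{(i)}(x),
\]
where $(p_j^{(i)}(x))_{j \in \Z_m}$ is the conditional law of $\foXm_0^i$ given $\foXm_s = x$, and in particular a probability distribution on $\Z_m$. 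The key quantitative consequence is the normalization $\sum_{j \in \Z_m} \uM_t(x, \um^{(i)}_j(x)) = \alpha_s/(1-\alpha_s)$.

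Setting $u_j = (\alpha_s/(1-\alpha_s))\, p_j^{(i)}(x)$ and expanding $\mathbf{h}(a) = a \log a - a + 1$ gives, after a direct computation using $\sum_j p_j^{(i)}(x) = 1$,
\[
\sum_{j \in \Z_m} \mathbf{h}(u_j) \;=\; \mathbf{h}\l(\frac{\alpha_s}{1-\alpha_s}\r) \,-\, \frac{\alpha_s}{1-\alpha_s}\,H(p^{(i)}(x)) \,+\, (m-1),
\]
where $H(\cdot)$ denotes Shannon entropy and is non-negative. Dropping this entropy term, summing over $i \in \msm_x$, absorbing the contribution of $i \notin \msm_x$ (at most $dm$ under the convention $\mathbf{h}(0)=1$), and taking expectation under $\mum_s$, I would use $\E[|\msm_{\foXm_s}|] = d(1-\alpha_s)$, an immediate consequence of the coordinate-wise independence of the masking dynamics, to conclude
\[
\Gammam_t \;\leq\; d(1-\alpha_s)\l[\mathbf{h}\l(\frac{\alpha_s}{1-\alpha_s}\r) + m - 1\r] + O(dm).
\]

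It remains to establish the purely analytic inequality $(1-\alpha_s)\,\mathbf{h}(\alpha_s/(1-\alpha_s)) \lesssim 1 + \alpha_s/(1-\alpha_s)$. I would split on whether $\alpha_s \leq 1/2$ (so $\alpha_s/(1-\alpha_s) \leq 1$ and $\mathbf{h} \leq 1$ on $[0,1]$, since $\mathbf{h}'= \log \leq 0$ there with $\mathbf{h}(0)=1$) or $\alpha_s > 1/2$ (so $\mathbf{h}(a) \leq a\log a$ for $a \geq 1$, combined with $\log(\alpha_s/(1-\alpha_s)) \leq 1/(1-\alpha_s) - 1$ from $\log(1+x) \leq x$ applied to $x = \alpha_s/(1-\alpha_s)$). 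Combining everything yields the first claim $\Gammam_t \lesssim d(\alpha_s/(1-\alpha_s) + m)$. For the constant generator $\beta \equiv 1$, $\alpha_s = \rme^{-(\Tf-t)}$, and the same elementary inequality $\rme^x \geq 1+x$ with $x = \Tf-t$ gives $\rme^{-x}/(1-\rme^{-x}) \leq 1/x$, sharpening the bound to $d(1/(\Tf-t) + m)$. The main subtlety worth flagging is the logarithmic blow-up of $\mathbf{h}(\alpha_s/(1-\alpha_s))$ as $s \to 0^+$: the prefactor $(1-\alpha_s)$ compensates it only barely, which is precisely why the proof hinges on the sharp inequality $\log(1+x) \leq x$ rather than any cruder estimate.
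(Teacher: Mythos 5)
Your proof is correct, and it takes a genuinely different route from the paper's. The paper bounds $\mathbf{h}(a)\leq (a-1)^2$ (from $\log a\leq a-1$), then writes the score as a conditional expectation, applies Jensen's inequality, and evaluates the resulting expression through the explicit forward transition density. You instead derive the exact pointwise factorization $\uM_t(x,\um^{(i)}_j(x))=\tfrac{\alpha_s}{1-\alpha_s}\,p_j^{(i)}(x)$ with $p^{(i)}(x)=\mathrm{Law}(\foXm_0^i\mid \foXm_s=x)$ a probability on $\Z_m$, and then expand $\sum_j\mathbf{h}\bigl(\tfrac{\alpha_s}{1-\alpha_s}p_j\bigr)=\mathbf{h}\bigl(\tfrac{\alpha_s}{1-\alpha_s}\bigr)-\tfrac{\alpha_s}{1-\alpha_s}H(p^{(i)})+(m-1)$, dropping the non-positive entropy term. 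This is an equality rather than a chain of inequalities, so your decomposition is sharper in spirit and makes visible exactly what is being discarded (the conditional Shannon entropy). It does require the separate elementary estimate $(1-\alpha_s)\,\mathbf{h}\bigl(\tfrac{\alpha_s}{1-\alpha_s}\bigr)\lesssim 1+\tfrac{\alpha_s}{1-\alpha_s}$, which you handle correctly by the case split on $\alpha_s\lessgtr 1/2$ (and which the paper sidesteps because $(a-1)^2$ has no logarithm to tame). Both routes use the same two computed ingredients: the coordinate-wise form of $\fopm_{0,s}$ and the identity $\E[|\msm_{\Tf-t}|]=d(1-\alpha_{\Tf-t})$. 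One small point worth making explicit: under \Cref{ass:finite_fisher_masked_d} the conditional probabilities $p_j^{(i)}(x)$ are strictly positive, so the expansion of $\mathbf{h}$ never touches the boundary case $\mathbf{h}(0)$ and the paper's convention $\mathbf{h}(0)=0$ (which disagrees with the continuous limit $\lim_{a\to 0^+}\mathbf{h}(a)=1$) does not affect your argument.
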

	
	\begin{proof}[Proof of \Cref{lem:bound_fisher_masked}]
		It is given in Appendix \ref{proof_lem:bound_fisher_masked}.
	\end{proof}
	
	\subsubsection{Convergence proofs}\label{sec:masked_proof_conv}
	\begin{proof}[Proof of~\Cref{theo:5_masked_d}]
		
		We show first the bound for the “distance” between the backward path measure $\baPm$ of $(\baXm_t)_{t\in [0,\Tf-\eta]}$ and $\baPmstar$ of the simulated backward process $(\baXmstar_t)_{t\in[0,\Tf-\eta]}$. Consider the path measure $\baPmstar \in \mathrm{MP}(\baqmstar)$ as the reference measure in Girsanov's theorem, we have
		\begin{align}
			\KL(\baPm| \baPmstar) 
			&=\KL(\mum_{\Tf}|\mathrm{Uniform}(\Z^d_m)\fopm_{0,\Tf})\\
			&\qquad+\E \Bigg[ \int_{[0,\Tf-\eta]}\sum_{x\in \Z^d_m}  \mathbf{h}\l( \frac{\baqm_t}{\baqmstar_t}\r) \baqmstar_t (\baXm_{t},x) \1_{\baXm_{t} \neq x} \rmd t \Bigg] \eqsp.
		\end{align}
		With a partition $0=t_0<...<t_K=\Tf-\eta$ for $K \geq 1$ of $[0,\Tf-\eta]$ associated with the sequence of step-size $h_{k+1} = t_{k+1} - t_k$, the previous expression rewrites as
		\begin{align}
			\KL(\baPm| \baPmstar)  &=\KL(\mum_{\Tf}|\mathrm{Uniform}(\Z^d_m)\fopm_{0,\Tf})\\
			&\quad+\sum_{k=0}^{K-1} \E \Bigg[ \int_{[t_k,t_{k+1})}\sum_{x\in \Z^d_m}  \baqmstar_t\mathbf{h}\l(  \frac{\baqm_{t}}{\baqmstar_t} \r)  (\baXm_{t},x) \1_{\baXm_{t} \neq x}  \rmd t \Bigg] \eqsp.
		\end{align}
		Replacing the formula of $(\baqm_t)$ from~\Cref{lem:formula_u_masked_d} and $(\baqmstar_t)_{t\in[0,\Tf)}$ from~\eqref{eq:generator_generative_ctmc},~\eqref{eq:control_generative_masked_d} and using that $\tqm_t(x,y) = \foqm_{\Tf-t}(y,x)$ where $(\foqm_t)_{t\in [0,\Tf]}$ given in~\eqref{eq:forward_generator_masked_d}, we obtain
		\begin{align}
			\KL(\baPm| \baPmstar)  &=\KL(\mum_{\Tf}|\mathrm{Uniform}(\Z^d_m)\fopm_{0,\Tf})\\
			&+\sum_{k=0}^{K-1} \E \Bigg[ \int_{[t_k,t_{k+1})}\sum_{j \in \Z_m}\sum_{i \in \msm_{\baXm_{t}}} (\uMstar_{t_k})^{(i),j} \mathbf{h}\l(  \frac{(\uM_{t})^{(i),j}}{(\uMstar_{t_k})^{(i),j}} \r) \beta(\Tf-t)    \rmd t \Bigg] \eqsp,
		\end{align}
		where $(\uM_t)^{(i),j} \eqdef \uM_t(\baXm_{t},\um^{(i)}_j(\baXm_t))$ and $(\uMstar_{t_k})^{(i),j} \eqdef \uMstar_{t_k}(\baXm_{t_k},\um^{(i)}_j(\baXm_{t_k}))$. Noting that $\beta(\Tf-t) \in [0,1]$, we deduce that
		\begin{align}
			\KL(\baPm| \baPmstar)  &\leq \underbrace{\KL(\mum_{\Tf}|\mathrm{Uniform}(\Z^d_m)\fopm_{0,\Tf})}_{F_1}\\
			&+\underbrace{\sum_{k=0}^{K-1} \E \Bigg[ \int_{[t_k,t_{k+1})}\sum_{j \in \Z_m}\sum_{i \in \msm_{\baXm_{t_k}}} (\uMstar_{t_k})^{(i),j} \mathbf{h}\l(  \frac{(\uM_{t_k})^{(i),j}}{(\uMstar_{t_k})^{(i),j}} \r)   \rmd t \Bigg]}_{F_2}\\
			& \underbrace{\substack{\displaystyle +\sum_{k=0}^{K-1} \E \Bigg[ \int_{[t_k,t_{k+1})}\sum_{x\in \Z^d_m} \sum_{i=1}^d \Bigg\{ (\uMstar_{t_k})^{(i),j} \mathbf{h}\l(  \frac{(\uM_{t})^{(i),j}}{(\uMstar_{t_k})^{(i),j}} \r)  \1_{i \in \msm_{\baXm_{t}}}\\
					\displaystyle \qquad-(\uMstar_{t_k})^{(i),j} \mathbf{h}\l(  \frac{(\uM_{t_k})^{(i),j}}{(\uMstar_{t_k})^{(i),j}} \r)  \1_{i \in \msm_{\baXm_{t_k}}} \Bigg\} \rmd t \Bigg]}}_{F_3} \eqsp.
		\end{align}
		The term $F_2$ is easily bounded  by~\Cref{ass:approx_score_masked_d}:
		\begin{align}
			F_2 &= \sum_{k=0}^{K-1} h_{k+1} \E \Bigg[ \sum_{j \in \Z_m}\sum_{i \in \msm_{\baXm_{t_k}}} (\uMstar_{t_k})^{(i),j} \mathbf{h}\l(  \frac{(\uM_{t_k})^{(i),j}}{(\uMstar_{t_k})^{(i),j}} \r)  \Bigg] \leq \varepsilon^2\Tf \eqsp.\label{eq:e2_masked}
		\end{align}
		We bound next the term $F_1$. To this purpose, we first compute explicitly the starting measure of our generative process: for $x \in \tZ^d_m$,
		\begin{align}
			\l(\mathrm{Uniform}(\Z^d_m)\fopm_{0,\Tf} \r)(x) &= \sum_{y \in \Z^d_m} \mathrm{Uniform}(y) \fopm_{0,\Tf}(y,x)
			= \sum_{y \in \Z^d_m} \frac{1}{m^d} \prod_{i=1}^d \fopmm_{0,\Tf}(y^i,x^i) \\
			&= \sum_{y \in \Z^d_m} \frac{1}{m^d} \prod_{i \in \msm_x} (1-\alpha_{\Tf}) \prod_{j \in \msm_x^{\complement}} \alpha_{\Tf} \1_{y^j = x^j} \\
			&= \frac{1}{m^d} (1-\alpha_{\Tf})^{|\msm_x|}  \alpha_{\Tf}^{d-|\msm_x|} m^{|\msm_x|}\\
			& = (1-\alpha_{\Tf})^{|\msm_x|}  \l(\frac{\alpha_{\Tf}}{m} \r)^{d-|\msm_x|} \eqsp.
		\end{align}
		It yields the expression of $F_1$ as follows
		\begin{align}
			F_1 &= \KL(\mum_{\Tf}|\mathrm{Uniform}(
			\Z^d_m)\fopm_{0,\Tf}) = \sum_{x \in \tZ^d_m} \mum_{\Tf}(x) \log \frac{\mum_{\Tf}(x)}{(\mathrm{Uniform}(
				\Z^d_m)\fopm_{0,\Tf})(x)} \\
			&= \sum_{x\in \tZ^d_m} \mum_{\Tf} (x)\log\mum_{\Tf}(x) - \sum_{x \in \tZ^d_m} \mum_{\Tf}(x) \log \l(\mathrm{Uniform}(
			\Z^d_m)\fopm_{0,\Tf}\r)(x) \\
			&\leq -\sum_{x\in \tZ^d_m} \mum_{\Tf}(x) \log \l[ (1-\alpha_{\Tf})^{|\msm_x|}  \l(\frac{\alpha_{\Tf}}{m} \r)^{d-|\msm_x|} \r] 
		\end{align}
		since $\mum_{\Tf}(x) \leq 1$. Hence
		\begin{align}
			F_1 \leq -\E \l[|\msm_{\foXm_{\Tf}}|\r]\log (1-\alpha_{\Tf}) -  \l(d-\E \l[|\msm_{\foXm_{\Tf}}| \r]\r) \log (\alpha_{\Tf}/m) \eqsp.
		\end{align}
		Furthermore, we know that
		\begin{align}
			\E \l[|\msm_{\foXm_{\Tf}}|\r] &= \sum_{x \in \tZ^d_m} \sum_{i=1}^d \1_{x^i=m} \mum_{\Tf}(x) = \sum_{i=1}^d \P ((\foXm_t)^i =m) \notag \\
			&= \sum_{i=1}^d  (1-\alpha_{\Tf}) \P((\foXm_0)^i \neq m)= d(1-\alpha_{\Tf}) \eqsp, \label{eq:expectation_msm_masked}
		\end{align}
		since $\foXm_0 \sim \mustar$ with $\mustar$ supported on $\Z^d_m$. Plugging this into $F_1$ gives
		\begin{align}
			F_1 &\quad \leq -d(1-\alpha_{\Tf})\log(1-\alpha_{\Tf}) -d\alpha_{\Tf} \log (\alpha_{\Tf}/m) \notag \\
			& \overset{\log a \leq a-1}{\leq} (1-\alpha_{\Tf}) \l({1}/({1-\alpha_{\Tf}})-1 \r) +d\alpha_{\Tf} \log (m/\alpha_{\Tf}) \notag \\
			&\quad \leq d\alpha_{\Tf} (1+\log(m/\alpha_{\Tf})) \eqsp. \label{eq:e1_masked}
		\end{align}
		It remains to control the term $F_3$.
		Let us denote $(\tuM_t)^{(i),j} \eqdef (\uM_t)^{(i),j} \1_{i \in \msm_{\baXm_t}}$ and $(\tuMstar_t)^{(i),j} \eqdef (\uMstar_t)^{(i),j} \1_{i \in \msm_{\baXm_t}}$ throughout the rest of this proof. We rewrite
		\begin{align}
			&F_3 = \sum_{k=0}^{K-1} \E \Bigg[ \int_{[t_k,t_{k+1})}\sum_{j\in \Z_m} \sum_{i=1}^d \Bigg\{  (\tuM_t)^{(i),j} \log \frac{ (\tuM_t)^{(i),j}}{(\uMstar_{t_k})^{(i),j}}  -  (\tuM_t)^{(i),j}\\
			&\quad + (\uMstar_{t_k})^{(i),j}\1_{i \in \msm_{\baXm_t}} - \l[ (\tuM_{t_k})^{(i),j} \log \frac{(\tuM_{t_k})^{(i),j}}{(\uMstar_{t_k})^{(i),j}} - (\tuM_{t_k})^{(i),j} + (\tuMstar_{t_k})^{(i),j}
			\r]\Bigg\} \rmd t \Bigg] \eqsp.
		\end{align}
		We arrange $F_3$ as follows
		\begin{align}
			F_3 &\leq  \underbrace{\sum_{k=0}^{K-1} \E \l[\sum_{j\in \Z_m} \sum_{i=1}^d \int_{[t_k,t_{k+1})}
				\l[\mathbf{h}((\tuM_t)^{(i),j})- \mathbf{h}( (\tuM_{t_k})^{(i),j}) \r] \rmd t \r]}_{F_{3.1}}\\
			&+\underbrace{\sum_{k=0}^{K-1} \E \l[\sum_{j\in \Z_m} \sum_{i=1}^d \int_{[t_k,t_{k+1})} (\uMstar_{t_k})^{(i),j} \l[\1_{i \in \msm_{\baXm_t}} - \1_{i \in \msm_{\baXm_{t_k}}} \r] \rmd t \r]}_{F_{3.2}}\\
			&+\underbrace{\sum_{k=0}^{K-1} \E \l[\sum_{j\in \Z_m} \sum_{i=1}^d \int_{[t_k,t_{k+1})} \log {(\uMstar_{t_k})^{(i),j}}\l[(\tuM_{t_k})^{(i),j}-(\tuM_t)^{(i),j}
				\r] \rmd t \r]}_{F_{3.2}} \eqsp.
		\end{align}
		The first quantity $F_{3.1}$ can be controlled by the tower property and~\Cref{prop:monotone_h_masked_d} as follows
		\begin{align}
			F_{3.1} 
			&\leq \sum_{k=0}^{K-1} \E \Bigg[\sum_{j\in \Z_m} \sum_{i=1}^d \int_{[t_k,t_{k+1})}
			\Big\{ \mathbf{h}((\tuM_{t_{k+1}})^{(i),j})- \mathbf{h}( (\tuM_{t_k})^{(i),j})  + \rme^{\int_{t_k}^{t_{k+1}} \beta(\Tf-s) \rmd s} -1  \Big\} \rmd t \Bigg] \\
			&\leq \sum_{k=0}^{K-1} \E \Bigg[\sum_{j\in \Z_m} \sum_{i=1}^d h_{k+1}
			\Big\{ \mathbf{h}((\tuM_{t_{k+1}})^{(i),j})- \mathbf{h}( (\tuM_{t_k})^{(i),j}) + \rme^{t_{k+1}-t_k} -1  \Big\} \Bigg]\\
			&\leq h\sum_{j\in \Z_m} \sum_{i=1}^d \l\{\E \l[\mathbf{h}((\tuM_{t_{k+1}})^{(i),j}) \r] - \E\l[\mathbf{h}( (\tuM_{t_k})^{(i),j})\r] \r\} + dm(\rme^h-1)\sum_{k=0}^{K-1}h_{k+1} \eqsp,
		\end{align}
		where $h = \max_k \{h_{k+1} \}$. We obtain telescoping sums on the right hand side, therefore,
		\begin{align}
			F_{3.1} &\leq h \l\{\E\l[ \sum_{j\in \Z_m} \sum_{i=1}^d \mathbf{h}((\tuM_{t_{K}})^{(i),j}) \r] - \E\l[ \sum_{j\in \Z_m} \sum_{i=1}^d \mathbf{h}((\tuM_{t_{0}})^{(i),j}) \r] \r\}\\
			&\qquad \qquad \quad+ dm(\rme^h-1)(t_K -t_0) \\
			&\leq h \E \l[ \sum_{i=1}^d \sum_{j \in \Z_m} \mathbf{h}((\tuM_{\Tf-\eta})^{(i),j}) \r] + dm(\rme^h -1)(\Tf- \eta )  \\
			&\leq h \Ic(\mum_\eta) + dm(\rme^h -1)\Tf \eqsp,
		\end{align}
		where $\Ic(\mum_\eta) = \E \l[ \sum_{i=1}^d \sum_{j \in \Z_m} \mathbf{h}((\tuM_{\Tf-\eta})^{(i),j})\r]$ is the discrete Fisher information of $\mum_\eta$.
		Leveraging the upper bound of Fisher information showed in \Cref{lem:bound_fisher_masked}, we get
		\begin{align}
			F_{3.1} \lesssim h \l(\frac{d\alpha_\eta}{1-\alpha_\eta} +dm \r)+ dm(\rme^h -1)\Tf \eqsp.\label{eq:e3_1_masked}
		\end{align}
		Next, from~\eqref{eq:forward_transition_matrix_mask_d}, we have
		\[
		\P \l((\foXm_t)^i=m | (\foXm_s)^i=m \r) = 1  \quad \text{ for any $0 \leq s \leq t \leq \Tf$} \eqsp,
		\]
		which yields $\msm_{\foXm_s} \subset \msm_{\foXm_t}$ a.s., \ie, $\msm_{\baXm_t} \subset \msm_{\baXm_s}$ a.s. for $0 \leq s \leq t \leq \Tf$. Therefore,
		the quantity $F_{3.2}$ can be cancelled out.
		The last term $F_{3.3}$ can be controlled by tower property and~\Cref{prop:monotonicity_masked_d} as follows
		\begin{align}
			F_{3.3} &= \sum_{k=0}^{K-1} \E \l[\sum_{\substack{j\in \Z_m \\
					i \in [d]}} \int_{[t_k,t_{k+1})} \log {(\uMstar_{t_k})^{(i),j}}\l[(\tuM_{t_k})^{(i),j}-\E\l[(\tuM_t)^{(i),j} |\Fc_{t_k}\r]
			\r]  \rmd t \r] \\
			&= \sum_{k=0}^{K-1} \E \l[\sum_{j\in \Z_m }\sum_{i=1}^d \int_{[t_k,t_{k+1})} \log {(\uMstar_{t_k})^{(i),j}}(\tuM_{t_k})^{(i),j}\l[1-\rme^{\int_{t_k}^t \beta(\Tf-s)\rmd s}
			\r]  \rmd t \r]\\
			&\leq \underbrace{(\rme^h-1) \sum_{k=0}^{K-1} h_{k+1} \E \l[\sum_{j\in \Z_m }\sum_{i=1}^d \l| \log \frac{(\uM_{t_k})^{(i),j}} {(\uMstar_{t_k})^{(i),j}}(\tuM_{t_k})^{(i),j} \r|
				\r]}_{F_{3.3a}} \\
			& + \underbrace{\sum_{k=0}^{K-1}  \E \l[ \sum_{j \in \Z_m} \sum_{i=1}^d (\tuM_{t_k})^{(i),j}\log (\tuM_{t_k})^{(i),j} \r] \int_{[t_k,t_{k+1})} \l[1-\rme^{\int_{t_k}^t \beta(\Tf-s) \rmd s}
				\r]  \rmd t}_{F_{3.3b}} \eqsp,
		\end{align}
		Concerning $F_{3.3a}$, first apply triangle inequality, second use \Cref{ass:approx_score_masked_d} to bound it:
		\begin{align}
			F_{3.3a} 
			&\leq (\rme^h-1) \sum_{k=0}^{K-1} h_{k+1} 
			\sum_{j \in \Z_m} \sum_{i=1}^d \E \Bigg[   (\tuMstar_{t_k})^{(i),j} \mathbf{h} \l( \frac{(\uM_{t_k})^{(i),j}} {(\uMstar_{t_k})^{(i),j}} \r) \\
			&\qquad \qquad \quad+ \underbrace{\l| (\tuM_{t_k})^{(i),j} - (\tuMstar_{t_k})^{(i),j} \r|}_{\leq \| (\tuM_{t_k})^{(i),j} - (\tuMstar_{t_k})^{(i),j} \|^2+1}  \Bigg] \\
			&\leq (\rme^h-1) \varepsilon^2\Tf + md(\rme^h-1) \Tf \eqsp.
		\end{align}
		The term $F_{3.3b}$ is handled as follows
		\begin{align}
			F_{3.3b} & = - \sum_{k=0}^{K-1}  \E \l[ \sum_{j \in \Z_m} \sum_{i=1}^d (\tuM_{t_k})^{(i),j}\log (\tuM_{t_k})^{(i),j} \r] \int_{[t_k,t_{k+1})} \l[\rme^{\int_{t_k}^t \beta(\Tf-s) \rmd s} -1
			\r]  \rmd t \\
			& \overset{\log x \geq 1-1/x}{\leq}  \sum_{k=0}^{K-1}  \sum_{j \in \Z_m} \sum_{i=1}^d\E \l[  1-(\tuM_{t_k})^{(i),j} \r] \int_{[t_k,t_{k+1})} \l[\rme^{\int_{t_k}^t \beta(\Tf-s) \rmd s} -1
			\r]  \rmd t \\
			&\qquad\leq md (\rme^h-1) \sum_{k=0}^{K-1} h_{k+1} = md (\rme^h-1) \Tf \eqsp.
		\end{align}
		This together with the bound on $F_{3.3a}$ imply
		\begin{align}
			F_{3.3} \leq (\rme^h-1)( \varepsilon^2+ dm) \Tf  \eqsp. \label{eq:e3_4_masked}
		\end{align}
		Combining~\eqref{eq:e3_1_masked} and~\eqref{eq:e3_4_masked} yields
		\begin{align}
			F_3 \lesssim hd \l(\frac{\alpha_\eta}{1-\alpha_\eta} +m \r) + (\rme^h-1) dm\Tf + \rme^h \varepsilon^2\Tf \eqsp. \label{eq:e3_masked}
		\end{align}
		Substituting all~\eqref{eq:e2_masked},~\eqref{eq:e1_masked} and~\eqref{eq:e3_masked} into the bound of $\KL(\baPm|\baPmstar)$, we arrive at
		\begin{align}
			\KL(\baPm|\baPmstar) &\lesssim d\alpha_{\Tf} (1+\log(m/\alpha_{\Tf})) + \rme^h \varepsilon^2\Tf \\
			&\quad + hd \l(\frac{\alpha_\eta}{1-\alpha_\eta} +m\r) + (\rme^h-1) dm\Tf \eqsp.
		\end{align}
		To this end, notice that $\mum_\eta=\mathrm{Law}(\baXm_{\Tf-\eta})$, therefore
		\begin{align}
			\KL(\mum_\eta|\mathrm{Law}(\baXmstar_{\Tf-\eta})) &=\KL(\mathrm{Law}(\baXm_{\Tf-\eta})|\mathrm{Law}(\baXmstar_{\Tf-\eta}) ) \\
			&\leq \KL(\mathrm{Law}((\baXm_t)_{t\in [0,\Tf-\eta]})|\mathrm{Law}((\baXmstar_t)_{t\in [0,\Tf-\eta]} ))\\
			&= \KL( \baPm|\baPmstar) \eqsp,
		\end{align}
		where the inequality is known as {Data processing} inequality for relative entropy \cite[Lemma 1.6]{nutz2021introduction}. As a consequence,
		\begin{align}
			\KL(\mum_\eta|\mathrm{Law}(\baXmstar_{\Tf-\eta}))
			&\lesssim  d\alpha_{\Tf} (1+\log(m/\alpha_{\Tf})) + \rme^h \varepsilon^2\Tf  \\
			&\quad+ hd \l(\frac{\alpha_\eta}{1-\alpha_\eta} +m\r) + (\rme^h-1) dm\Tf \eqsp,
		\end{align}
		which concludes the proof of~\Cref{theo:5_masked_d}.
	\end{proof}
	
	\begin{proof}[Proof of \Cref{theo:exp_stepsize_masked}]
		For the constant generator $\beta(t) =1 $, $ t \in [0,\Tf]$, we obtain a precise upper bound of the Fisher information. Therefore, we can leverage the choice of step-sizes to reduce the complexity of the model as follows. We proceed analogously as \Cref{theo:main_masked_d} and note that only the term $F_{3.1}$ in \eqref{eq:e3_1_masked} is handled differently as:
		\begin{align}
			F_{3.1} &\leq \sum_{k=0}^{K-1}h_{k+1} \E \Bigg[\sum_{j\in \Z_m} \sum_{i=1}^d
			\Big\{ \mathbf{h}((\tuM_{t_{k+1}})^{(i),j})- \mathbf{h}( (\tuM_{t_k})^{(i),j}) + \rme^{t_{k+1}-t_k} -1  \Big\} \Bigg] \\
			& \leq \sum_{k=0}^{K-1}h_{k+1} \l( \Ic(\mum_{\Tf-t_{k+1}}) - \Ic(\mum_{\Tf-t_k}) \r) + md(\rme^h-1) \sum_{k=0}^{K-1} h_{k+1} \\
			&\leq  \underbrace{\sum_{k=0}^{K-1}h_{k+1} \l( \Ic(\mum_{\Tf-t_{k+1}}) - \Ic(\mum_{\Tf-t_k}) \r)}_{F_{3.1a}} + md(\rme^h-1) \Tf \eqsp,
		\end{align}
		where $\Ic(\mum_{\Tf-t}) = \E \l[\sum_{j \in \Z_m} \sum_{i=1}^d \mathbf{h}((\tuM_t)^{(i),j})\r]$ for $t \in [0,\Tf)$ and $ \max_k h_{k}=h$.
		Let us choose the following step-sizes:
		\begin{equation}\label{eq:step-size_masked}
			h_{k+1} = \begin{cases}
				\Tf -\eta - t_{K-1} \quad & k = K-1 \eqsp, \\
				\quad c/L \quad & k_0 + k_1+1 \leq k \leq k_0 + k_1 + k_2 - 1 \eqsp, \\
				c( \Tf -\eta- t_{k})\quad & k_0+1 \leq k \leq k_0 +k_1  \eqsp, \\
				\quad c  \quad & 0 \leq k \leq k_0  \eqsp,
			\end{cases}
		\end{equation}
		with $L_\eta = \Ic(\mum_{\eta})/d$, then $\max_k h_k = h =c$. We set the number of iterations $K = k_0 +k_1 + k_2 +1 $, with
		\begin{align}
			k_0 &= \max \l\{ k \geq 0 \, : \, \Tf-\eta - t_k \geq 1 \r\} \quad  k_1 = \max \l\{ k \geq 0 \,: \, \Tf-\eta - t_{k_0+k} \geq 1/L_\eta \r\} \notag \\
			k_2 &= \max \l\{ k \geq 0 \,: \, \Tf -\eta- t_{k_0 + k_1 +k} \geq 0 \r\} \eqsp. \label{eq:def_k0k1k2_masked}
		\end{align}
		It is shown in \cite{conforti2025kl} that
		\begin{equation}\label{eq:bound_k0k1k2_masked}
			\begin{split}
				&\quad k_0 = \lfloor c^{-1} (\Tf-\eta - 1) \rfloor, \quad k_1 = \lfloor \log(L_\eta^{-1}/ (\Tf-\eta - t_{k_0})) / \log (1-c) \rfloor \lesssim \log(L_\eta) /c \eqsp, \\
				& K  - k_0 - k_1 = k_2 + 1 \lesssim 1/c \eqsp, \eqsp h_{k+1} = c(1-c)^{k-k_0} (\Tf-\eta-t_{k_0}) \text{ for } k_0+1 \leq k \leq k_0+ k_1 \eqsp.
			\end{split}
		\end{equation}
		Using \eqref{eq:step-size_masked} and the monotonicity of $\Ic(\mum_{\Tf-t})$ established in \Cref{prop:monotone_h_masked_d}, we can bound $E_{3.1a}$ as follows
		\begin{align}
			F_{3.1a} &\leq \sum_{k=0}^{K-1} h_{k+1} \l( \Ic(\mum_{\Tf-t_{k+1}}) - \Ic(\mum_{\Tf-t_k}) \r) \\
			& \leq h_K \Ic(\mum_{\Tf-t_k}) + \sum_{k=1}^{K-1} \Ic(\mum_{\Tf-t_k}) (h_k - h_{k+1}) \\
			& =  \sum_{k=1}^{k_0+1} \Ic(\mum_{\Tf-t_k}) (h_k - h_{k+1}) + \sum_{k = k_0 +2}^{k_0+k_1+1} \Ic(\mum_{\Tf-t_k}) (h_k - h_{k+1})  + h_K \Ic(\mum_{\Tf-t_K}) \\
			& \qquad+\underbrace{\sum_{k = k_0 +k_1+2}^{k_0 + k_1 + k_2 -1}\Ic(\mum_{\Tf-t_k}) (h_k - h_{k+1})}_{=0} + \Ic(\mum_{\Tf-t_{k_0+k_1+k_2}})(h_{K-1} - h_{K})  \\
			& \lesssim  \underbrace{\Ic(\mum_{\Tf-t_{k_0+1}}) [c - c (\Tf-\eta - t_{k_0+1})]}_{(1)} + \underbrace{c \sum_{k = k_0 +2}^{k_0 + k_1} \Ic(\mum_{\Tf-t_k}) h_k}_{(2)} + md (\rme^{c}-1)\Tf \\
			& \qquad  + \underbrace{c \Ic(\mum_{\Tf-t_{k_0+k_1+1}}) (\Tf-\eta - t_{k_0+k_1 } - 1/L)}_{(3)} + \underbrace{\Ic(\mum_{\Tf-t_K}) h_{K-1}}_{(4)}  \eqsp.
		\end{align}
		We now bound $(1) - (2) - (3) - (4)$ by leveraging \Cref{lem:bound_fisher_masked}. We start with
		\begin{align}
			&(1) : \Ic(\mum_{\Tf-t_{k_0+1}}) [c - c (\Tf-\eta - t_{k_0+1})]\\
			&\leq c \Ic(\mum_{\Tf-t_{k_0+1}}) \overset{\Cref{lem:bound_fisher_masked}}{\lesssim} {cd} \l( \frac{1}{\Tf - t_{k_0+1}} +m \r)\\
			&= {cd} \l( \frac{1}{\Tf - t_{k_0}-h_{k_0+1}} +m \r)\\
			&\overset{\eqref{eq:def_k0k1k2_masked}}{\leq} cd\l(\frac{1}{1-c}+m \r) \overset{c \leq 1/2}{\lesssim} {cd}(m+1) \eqsp.
		\end{align}
		Next, we bound the second term
		\begin{align}
			&(2) : c \sum_{k = k_0 +2}^{k_0 + k_1 } \Ic(\mum_{\Tf-t_k}) h_k\\ &\overset{\Cref{lem:bound_fisher_masked}}{\lesssim} {cd} \sum_{k = k_0 +2}^{k_0 + k_1 } { h_k} \l( \frac{1}{\Tf- t_k}+m \r) \\
			&\quad\overset{\eqref{eq:step-size_masked}}{\leq} { c^2 d } \sum_{k = k_0 +2}^{k_0 + k_1 }  \frac{h_k}{h_{k+1}} + c^2dm \sum_{k = k_0 +2}^{k_0 + k_1 } (\Tf-\eta-t_k)  \\
			& \quad \overset{\eqref{eq:bound_k0k1k2_masked}}{=} {c^2 d} \sum_{k = k_0 +2}^{k_0 + k_1 }  \dfrac{c(1-c)^{k-k_0-1}(\Tf-\eta - t_{k_0})}{c(1-c)^{k-k_0}(\Tf-\eta - t_{k_0})} + c^2dm (k_1-1) \\
			&\quad \overset{\eqref{eq:bound_k0k1k2_masked}} {\lesssim} {c^2d} \sum_{k = k_0 +2}^{k_0 + k_1 } \dfrac{1}{1-c} + c^2dmk_1 \\
			&\quad\overset{c \leq 1/2}{\lesssim} c^2 dk_1 +c^2dmk_1 \overset{\eqref{eq:bound_k0k1k2_masked}}{\lesssim} {c d}(1+m) \log(L_\eta) \eqsp.
		\end{align}
		The third term $(3)$ can be bounded as follows
		\begin{align}
			(3) : \quad  & c \Ic(\mum_{\Tf-t_{k_0+k_1+1}}) (\Tf-\eta - t_{k_0+k_1 } - 1/L_\eta) \\
			&\overset{\Cref{lem:bound_fisher_masked}}{\lesssim} cd(\Tf-\eta - t_{k_0+k_1 } )\l(\frac{1}{\Tf-t_{k_0+k_1+1}}+m \r)  \\
			& \leq cd(\Tf-\eta - t_{k_0+k_1} )\l(\frac{1}{\Tf-\eta-t_{k_0+k_1}-h_{k_0+k_1+1}}+m \r) \\
			&\overset{ \eqref{eq:step-size_masked}}{\leq}
			cd(\Tf-\eta - t_{k_0+k_1 } )\l(\frac{1}{(1-c)(\Tf-\eta-t_{k_0+k_1})}+m \r)\\
			&= \frac{cd}{1-c}+cdm \underbrace{(\Tf-\eta - t_{k_0+k_1})}_{\leq 1} \\
			&\overset{c \leq 1/2}{ \lesssim } cd(1+m) \eqsp.
		\end{align}
		Finally, for the last term, we have by definition of $L_\eta = \Ic(\mum_{\eta})/d$,
		\begin{equation}
			(4) : \Ic(\mum_{\Tf-t_K}) h_{K-1} = \Ic(\mum_{\eta}) c/L_\eta = cd L_\eta/L_\eta = cd \eqsp.
		\end{equation}
		Plugging all the bounds of $(1) - (2) - (3) - (4)$ into $F_{3.1}$ gives
		\begin{equation}
			F_{3.1} \lesssim {cd} [m+1+ (m+1)\log(L_\eta)] +  md (\rme^{c}-1)\Tf \eqsp.
		\end{equation}
		Noting that $c \leq \rme^c-1$ and $m \geq 1$, $L_\eta \geq 2$ yield
		\begin{align}
			F_{3.1} \lesssim (\rme^{c}-1)[{dm} \log(L_\eta) + md  \Tf] \eqsp.
		\end{align}
		Therefore, the ultimate sampling error admits the following expression
		\begin{align}
			\KL(\mum_\eta|\mathrm{Law}(\baXmstar_{\Tf-\eta}))
			&\lesssim  d \rme^{-\Tf} (\Tf+\log (m) ) + \varepsilon^2\Tf  \\
			&\quad+ (\rme^c-1) [dm\log(L_\eta) + d m\Tf ]\eqsp,
		\end{align}
		where we replace $\alpha_t = \rme^{-t}$ for any $t \in [0,\Tf]$.
		Now note that
		\begin{align}
			d \rme^{-\Tf} (\Tf+\log (m) ) \lesssim d \rme^{-\Tf}(\rme^{\Tf/2}+\log(m)) \overset{m \geq 2}{\lesssim} d\log(m)\rme^{-\Tf/2} \eqsp,
		\end{align}
		and by \Cref{lem:bound_fisher_masked}, $L_\eta \leq m+1/\eta$.
		Plugging it into the overall error yields
		\begin{align}
			\KL(\mum_\eta|\mathrm{Law}(\baXmstar_{\Tf-\eta}))
			&\lesssim  d\log(m)\rme^{-\Tf/2} + \varepsilon^2\Tf  \\
			&\quad (\rme^c-1)dm [\log(m+\eta^{-1})+  \Tf ]\eqsp.
		\end{align}
		Combining this with
		\[
		\tvnorm{\mum_\eta - \mustar} \leq 1-\alpha_\eta^d = 1-\rme^{-d\eta} \leq d\eta \eqsp,
		\]
		then invoking triangle and Pinsker's inequalities gives
		\begin{align}
			&\quad\tvnorm{\mustar - \mathrm{Law}(\baXmstar_{\Tf-\eta})} \\
			&\lesssim d\eta + \sqrt{ d\log(m)\rme^{-\Tf/2} + \varepsilon^2\Tf  +(\rme^c-1)dm [\log(m+\eta^{-1}) + \Tf ]} \eqsp.
		\end{align}
		Moreover, choosing $\Tf, \eta, c$ as in \eqref{eq:cor_T_eta_masked_d}, \eqref{eq:cor_c_masked_d} immediately yields
		\begin{align}
			\tvnorm{\mustar- \mathrm{Law}(\baXmstar_{ \Tf-\eta} ) } \lesssim  \varepsilon+ \sqrt{\varepsilon^2\Tf} \lesssim \varepsilon+ \vareps\sqrt{\log (d\log(m)/\varepsilon^2)} = \tilde{O}(\vareps) \eqsp,
		\end{align}
		and the number of iterations is given by
		\begin{align}
			K = k_0 + k_1 +K-k_0-k_1 &\overset{\eqref{eq:bound_k0k1k2_masked}}{\lesssim} \frac{\Tf -\eta - 1}{c} + \frac{\log(L_\eta)}{c} + \frac{1}{c} = \frac{\Tf + \log(m+\eta^{-1})}{c}\eqsp,
		\end{align}
		where $\Tf$, $\eta$, and $c$ are specified in \eqref{eq:cor_T_eta_masked_d}, \eqref{eq:cor_c_masked_d}. Note that $1/\log(1+h))$ has the complexity $ \tilde O(1/h)$ for $h \approx 0$, therefore $K = \tilde O(dm/\varepsilon^2)$ and we complete the proof of \Cref{theo:exp_stepsize_masked}.
	\end{proof}
	
	\subsection{Random walk on \texorpdfstring{$\Z^d_m$}{Zdm} and Biased random walk on \texorpdfstring{$\N^d$}{Nd}}\label{sec:main_proofs_rw_brw}
	
	The convergence proof relies on the monotonicity of the score, which is obtained through its characterization, along with several supporting lemmas presented below.
	
	\subsubsection{Characterization of the score}
	\begin{lemma}\label{lem:formula_u_brw}
		The time reversal process $(\baXbrw_t)_{t\in [0,\Tf]}$ has a backward generator $(\baqbrw_t)_{t\in[0,\Tf]}$ of the form \eqref{eq:backward_generator_ctmc}, with $\tqbrw_t = \foqbrw$ for any $t\in [0,\Tf]$ and for $(t,x,y) \in [0,\Tf) \times \N^d \times \N^d$,
		\begin{align}\label{eq:formula_u}
			\uBRW_t(x,y) = \begin{cases}
				\quad{\tmubrw_{\Tf-t}(\sigma(x))}/{\tmubrw_{\Tf-t}(x)} \quad &\text{if } y = \sigma(x) \text{ for } \sigma \in \mcs \eqsp,\\
				-{\sum_{\sigma \in \mcs} \uBRW_t \foqbrw(x,\sigma(x)) }/{\foqbrw(x,x)} \quad  &\text{if } y =x \eqsp,\\
				\qquad \qquad 1 \quad &\text{otherwise}\eqsp,
			\end{cases}
		\end{align}
		where $\tmubrw \eqdef \mubrw/\brw$ denotes the relative density with $\brw$ corresponds to the invariant measure of $(\foXbrw_t)_{t\in [0,\Tf]}$.
		Furthermore, we can express $\uBRW_t$ as
		\begin{equation}
			\uBRW_t (x,\sigma(x)) = \rme^{\Vbrw_t(x) - \Vbrw_t(\sigma(x))} \eqsp,
		\end{equation}
		with $\Vbrw_t(x) = -\log \tmubrw_{\Tf-t}(x)$ satisfying the following HJB equation
		\begin{align}\label{hjb_brw}
			\begin{cases}
				\partial_t \Vbrw_t(x)- \sum_{\sigma \in \mcs} \foqbrw (x,\sigma(x)) [\uBRW_t(x, \sigma(x))-1] = 0 \eqsp,\\
				\Vbrw_{\Tf}(x)= -\log \tmustar (x)\eqsp.
			\end{cases}
		\end{align}
	\end{lemma}
	\begin{remark}
		The preceding result remains valid for the process $(\foXrw_t)_{t\in [0,\Tf]}$, whose invariant measure coincides with the uniform distribution on $\Z^d_m$. As a consequence, the expression for $\uRW$ admits the simplified form
		\begin{align}
			\uRW_t(x,\sigma(x)) =  {\murw_{\Tf-t}(\sigma(x))}/{\murw_{\Tf-t}(x)} \eqsp, \quad \text{for all $\sigma \in \mcs$ and $(t,x) \in [0,\Tf] \times \Z^d_m$.}
		\end{align}
	\end{remark}
	\begin{proof}[Proof of~\Cref{lem:formula_u_brw}]
		Recall that the backward generator $(\baqbrw_t)_{t\in [0,\Tf]}$ satisfies~\eqref{eq:backward_generator_ctmc}, \ie, for $t \in [0,\Tf]$ and $x \neq y \in \N^d$, the following holds
		\begin{equation}\label{eq:balance_brw}
			\mubrw_{\Tf-t}(x)\baqbrw_t(x,y) = \mubrw_{\Tf-t}(y)\foqbrw_{\Tf-t}(y,x) = \mubrw_{\Tf-t}(y)\foqbrw(y,x) \eqsp,
		\end{equation}
		since $\foqbrw$ is time-independent (see~\eqref{eq:forward_generator_brw}).
		In addition, the forward generator $(\foqbrw_t)_{t\in[0,\Tf]}$ satisfies~\Cref{ass:ctcm},~\Cref{ass:non_explo} and irreducible, which implies $\mubrw_t(x)>0$ for any $t\in (0,\Tf]$ and $x \in \N^d$.
		Therefore,~\eqref{eq:balance_brw} implies: for any $t \in [0,\Tf)$ and $x \neq y \in \N^d$,
		\begin{equation}\label{eq:baq_brw}
			\baqbrw_t (x,y) = \frac{\mubrw_{\Tf-t}(y)}{\mubrw_{\Tf-t}(x)} \foqbrw(y,x) \eqsp.
		\end{equation}
		In order to interchange $x$ and $y$ on the right-hand side, observe that $\foqbrw$ satisfies the following balance equation: for all $x \neq y \in \N^d$,
		\begin{equation}\label{eq:balance_invariant_brw}
			\brw (x) \foqbrw(x,y) = \brw (y)\foqbrw (y,x) \eqsp,
		\end{equation}
		where $\brw = \mathrm{Poisson}(1)^{\otimes d}$ denotes the invariant measure of $(\foXbrw_t)_{t\in [0,\Tf]}$. Therefore
		\begin{align}
			\foqbrw (y,x) = \frac{\brw (y)}{\brw (x)}\foqbrw(x,y) \eqsp,
		\end{align}
		and replacing it into \eqref{eq:baq_brw} yields
		\begin{equation}
			\baqbrw_t(x,y) = \foqbrw (x,y) \frac{\brw (x) \mubrw_{\Tf-t}(y)}{\brw (y) \mubrw_{\Tf-t}(x)} = \foqbrw(x,y) \frac{\tmubrw_{\Tf-t} (y)}{\tmubrw_{\Tf-t}(x)} \eqsp,
		\end{equation}
		where $\tmubrw_t (x) \eqdef {\mubrw_t(x)}/{\brw(x)}$ for $(t,x) \in [0,\Tf) \times \N^d$ denotes the relative density of the forward dynamic.
		Note that we only need to consider transitions where $\foqbrw(x,y)\neq 0$, that is, when $y \in \{ \sigma^\ell_+(x), \sigma^\ell_- (x)\}$ for some $\ell  \in [d]$. Hence, it suffices to compute $\uBRW_t$ in this setting. Define
		\begin{equation}
			\uBRW_t(x,\sigma(x)) \eqdef
			\frac{\tmubrw_{\Tf-t} (\sigma(x))}{ \tmubrw_{\Tf-t}(x)} = \rme^{\Vbrw_t(x)-\Vbrw_t(\sigma(x))}  \quad \text{for } \sigma \in \mcs \eqsp,
		\end{equation}
		where $\Vbrw_t(x) \eqdef -\log \tmubrw_{\Tf-t}(x)$ for any $t\in [0,\Tf)$ and $x \in \N^d$. Otherwise, for $y \notin \{\sigma(x),x \,: \, \sigma \in \mcs \}$, we impose $\uBRW_t(x,y)=1$ for any $t \in [0,\Tf)$. Finally, the convention
		\begin{equation}
			\uBRW_t(x,x) \eqdef -\frac{\sum_{y \neq x} \uBRW_t(x,y) \foqbrw (x,y)}{\foqbrw(x,x)} = -\frac{\sum_{\sigma \in \mcs} \uBRW_t \foqbrw (x,\sigma(x)) }{\foqbrw(x,x)}
		\end{equation}
		ensures that $(\uBRW_t \foqbrw)_{t\in [0,\Tf)}$ in fact forms a generator $(\baqbrw_t)_{t\in [0,\Tf)}$.\\
		In addition, $\Vbrw_t$ satisfies the terminal condition
		\begin{align}
			\Vbrw_{\Tf} (x)=-\log \tmubrw_0(x) = -\log \tmustar (x) \eqsp,
		\end{align}
		and the following equation: for $(t,x) \in [0,\Tf) \times \N^d$,
		\begin{align}
			\partial_t \Vbrw_t(x) &= \frac{\partial_t  \mubrw_{\Tf-t}(x)}{ \mubrw_{\Tf-t}(x)} \overset{\eqref{eq:foward_kolm}}{=} \frac{\sum_{y\in \N^d} \mubrw_{\Tf-t}(y) \foqbrw(y,x)}{\mubrw_{\Tf-t}(x)} \\
			&\overset{\eqref{eq:balance_invariant_brw}}{=} \frac{\sum_{y\in \N^d} \mubrw_{\Tf-t}(y) \foqbrw(x,y) \brw (x)/\brw (y)}{\mubrw_{\Tf-t}(x)} \\
			&= \sum_{y \in \N^d} \frac{\tmubrw_{\Tf-t} (y)}{ \tmubrw_{\Tf-t}(x)} \foqbrw (x,y) \\
			&= \sum_{\sigma \in \mcs} \foqbrw (x, \sigma(x)) [\uBRW_t(x, \sigma(x)) -1 ] \eqsp,
		\end{align}
		which concludes the proof of \Cref{lem:formula_u_brw}.
	\end{proof}
	
	The explicit formula of the score derived in \Cref{lem:formula_u_brw} allows us to obtain the following properties, which will be used later in the theoretical results:
	\begin{lemma}\label{lem:2_brw}
		For any $t \in [0,\Tf)$ and $\ell \in [d]$, the following holds
		\begin{align}
			\E \l[ \uBRW_t \foqbrw (\baXbrw_t, \sigma^\ell_-(\baXbrw_t)) \r] = \E \l[\foqbrw (\baXbrw_t, \sigma^\ell_+(\baXbrw_t)) \r] \eqsp,
		\end{align}
		and reversely.
	\end{lemma}
	
	\begin{proof}[Proof of~\Cref{lem:2_brw}]
		It is given in Appendix \ref{proof_lem:2_brw}.
	\end{proof}

	\begin{lemma}\label{lem:1_brw}
		For any $t \in [0,\Tf)$, the following holds for any fixed $\ell  \in [d]$:
		\begin{align}
			&\E \l[ \uBRW_t \log \uBRW_t \foqbrw (\baXbrw_t, \sigma^\ell_+(\baXbrw_t)) \r] \\
			&\quad = -  \E \l[  \log \uBRW_t \foqbrw (\baXbrw_t, \sigma^\ell_-(\baXbrw_t)) \r] \eqsp, 
		\end{align}
		and reversely.
	\end{lemma}
	
	\begin{proof}[Proof of~\Cref{lem:1_brw}]
	It is given in Appendix \ref{proof_lem:1_brw}.
	\end{proof}

	\subsubsection{Evolution of the score along the backward dynamic}
	\begin{lemma}\label{prop:3_rw}
		Under \Cref{ass:finite_fisher_randomwalk}, the process $ (\uRW_t(\baXrw_t,\sigma(\baXrw_t)))_{t\in\ccint{0,\Tf}}$ is a martingale for any fixed $ \sigma \in \mcs$.
	\end{lemma}
	\begin{proof}[Proof of~\Cref{prop:3_rw}]
		
		Fix $t \in [0,\Tf]$, $ \sigma \in \mcs$, applying It\^o's formula on $f^{\sigma}(t,\baXrw_t )$ with 
		\[
		f^{\sigma}(t,x) \eqdef \uRW_t(x, \sigma(x))
		\]
		and noting that $\baXrw_t=\baXrw_{t-}$ for Lebesgue almost every $t \in [0,\Tf]$, we obtain that
		\begin{align}
			&f^{\sigma}(t,\baXrw_t )-f^{\sigma}(0,\baXrw_0 ) = \Mrw_\sigma(t) +\int_0^t\l[ \partial_s f^{\sigma}(s,\baXrw_s )+(\baqrw f^\sigma)(s,\baXrw_{s} ) \r]\rmd s \eqsp,
		\end{align}
		with
		\begin{align}
			\Mrw_\sigma(t) = \int_{[0,t] \times \Z^d_m} \l[ f^{\sigma}(s,x) - f^\sigma(s, \baXrw_{s-}) \r] \tilde{N}_{\canoX}^{\baqrw} (\rmd x \rmd s)
		\end{align}
		is a true martingale  (see \Cref{lem:martingale_rw} for detailed proof). 
		We now simplify the integrand by using \Cref{lem:formula_u_brw} to derive the partial differential equation (PDE) satisfied by $f^\sigma$ first: for $s \in [0,\Tf]$, 
		\begin{align}
			\partial_s f^{\sigma}(s,x) 
			&= f^{\sigma}(s,x)[\partial_s \Vrw_s(x) - \partial_s \Vrw_s(\sigma(x))] \\
			&= \sum_{\sigma' \in \mcs} f^{\sigma}(s,x)\foqrw(x, \sigma'(x))[f^{\sigma'}(s,x)-f^{\sigma'}(s,\sigma(x))] \\
			&= \sum_{\sigma' \in \mcs} \baqrw_s (x, \sigma'(x)) [f^\sigma(s,x)  - f^{\sigma}(s,\sigma'(x))] = -(\baqrw f^\sigma)(s,x) \eqsp,
		\end{align}
		since $f^{\sigma}(s,x)f^{\sigma'}(s,\sigma(x))  = f^{\sigma'}(s,x)f^{\sigma}(s, \sigma'(x))$ as $\sigma' \circ \sigma = \sigma \circ \sigma'$ for any $\sigma, \sigma' \in \mcs$. 
		Therefore $f^\sigma(t, \baXrw_t)$ is a martingale and we conclude the proof.
		
	\end{proof}

	\begin{lemma}\label{prop:3_brw}
		Assume $\m^\star_2<\infty$ and for fixed $\sigma \in \mcs$, $0\leq r \leq t <\Tf$, denote
		\[y_r^\sigma(t):= \E \l[ \foqbrw \uBRW_t (\baXbrw_t, \sigma (\baXbrw_t)) -1 | \Fc_r\r] \eqsp,\] where $(\Fc_t)_{t\in [0,\Tf]}$ denotes the filtration generated by $(\baXbrw_t)_{t\in [0,\Tf]}$. Then it holds for any $0 \leq r \leq t < \Tf$ and $\ell \in [d]$:
		\begin{equation}
			y_r^{\sigma^\ell_+} (t) = y_r^{\sigma^\ell_+} (r) \rme^{t-r}   \quad \text{and} \quad y_r^{\sigma^\ell_-} (t)  =  y_r^{\sigma^\ell_-} (r) \rme^{-(t-r)} \eqsp.
		\end{equation}
	\end{lemma}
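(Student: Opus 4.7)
The plan is to mimic the Itô/HJB strategy used for the random-walk case in \Cref{prop:3_rw}, but starting from the slightly different quantity $f^\sigma(t,x) \eqdef \foqbrw(x,\sigma(x))\, \uBRW_t(x,\sigma(x)) - 1$, since now the rates $\foqbrw(x,\sigma^\ell_-(x)) = x^\ell$ depend on $x$ and so do not commute trivially under the generator. For $\sigma = \sigma^\ell_+$ this reads $f^{\sigma^\ell_+}(t,x) = \uBRW_t(x,\sigma^\ell_+(x)) - 1$, while for $\sigma = \sigma^\ell_-$ it reads $f^{\sigma^\ell_-}(t,x) = x^\ell \uBRW_t(x,\sigma^\ell_-(x)) - 1$. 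Observe that by \Cref{lem:2_brw}, $\E[f^\sigma(t,\baXbrw_t)\mid\Fc_\nu] = y_\nu^\sigma(t)$, so it suffices to prove that $t \mapsto y_\nu^{\sigma^\ell_\pm}(t)$ solves $\partial_t y = \pm y$.

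First I would apply Itô's formula to $f^\sigma(t,\baXbrw_t)$, noting that the law of $(\baXbrw_t)$ is associated to the generator $\uBRW_t \foqbrw$. This yields (modulo the martingale term, analogous to $\Mbrw$ in \Cref{prop:3_rw}) the drift
\begin{align}
  b^\sigma(t,x) = \partial_t f^\sigma(t,x) + \sum_{\sigma'\in\Mc}\bigl[f^\sigma(t,\sigma'(x)) - f^\sigma(t,x)\bigr]\,\uBRW_t(x,\sigma'(x))\,\foqbrw(x,\sigma'(x)).
\end{align}
Using the HJB equation \eqref{hjb_brw} to rewrite $\partial_t \Vbrw_t$ as $\sum_{\sigma'}\foqbrw(x,\sigma'(x))[\uBRW_t(x,\sigma'(x))-1]$, together with the exponential representation $\uBRW_t(x,\sigma(x)) = \rme^{\Vbrw_t(x)-\Vbrw_t(\sigma(x))}$ and the commutation identity $\sigma \circ \sigma' = \sigma' \circ \sigma$ (valid on the admissible lattice) which gives $\uBRW_t(x,\sigma'(x))\,\uBRW_t(\sigma'(x),\sigma(\sigma'(x))) = \uBRW_t(x,\sigma(x))\,\uBRW_t(\sigma(x),\sigma'(\sigma(x)))$, the entire drift collapses after cancellations to a simple closed form. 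The only nonzero residues come from the asymmetry of $\foqbrw$ along the $\ell$-axis: explicitly, for $\sigma' = \sigma^k_\pm$ with $k\neq\ell$, the contributions cancel identically; for $\sigma'=\sigma^\ell_+$ one picks up a $+1$ in the rate difference $\foqbrw(\sigma^\ell_-(x),\cdot)-\foqbrw(x,\cdot)$ and for $\sigma'=\sigma^\ell_-$ the $(\sigma'(x))^\ell$ factor produces the matching shift. Collecting terms yields
\begin{align}
  b^{\sigma^\ell_+}(t,x) = \uBRW_t(x,\sigma^\ell_+(x)) - 1 = f^{\sigma^\ell_+}(t,x), \qquad b^{\sigma^\ell_-}(t,x) = -\bigl(x^\ell \uBRW_t(x,\sigma^\ell_-(x)) - 1\bigr) = -f^{\sigma^\ell_-}(t,x).
\end{align}

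Taking conditional expectation with respect to $\Fc_\nu$ then gives the ODEs $\frac{\rmd}{\rmd t}y_\nu^{\sigma^\ell_\pm}(t) = \pm\, y_\nu^{\sigma^\ell_\pm}(t)$ on $[\nu,\Tf)$, and Gronwall (or direct integration) produces the claimed exponentials $y_\nu^{\sigma^\ell_+}(t) = y_\nu^{\sigma^\ell_+}(\nu)\rme^{t-\nu}$ and $y_\nu^{\sigma^\ell_-}(t) = y_\nu^{\sigma^\ell_-}(\nu)\rme^{-(t-\nu)}$.

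The principal obstacle is not the algebra of the drift (which, though tedious, reduces cleanly thanks to commutativity and the HJB equation) but the justification that the Itô martingale term is a true martingale on the countably infinite state space $\N^d$: one must bound the $\baPbrw$-integrability of the compensator $\int_0^t \sum_{\sigma'}|f^\sigma(s,\sigma'(\baXbrw_s))-f^\sigma(s,\baXbrw_{s-})|\,\uBRW_s\foqbrw(\baXbrw_{s-},\sigma'(\baXbrw_{s-}))\,\rmd s$, which involves polynomial growth in the component $x^\ell$ from the rates $\foqbrw(x,\sigma^\ell_-(x)) = x^\ell$ and from the factor $x^\ell$ already present in $f^{\sigma^\ell_-}$. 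This is precisely where the assumption $\m_2(\mustar) < \infty$ enters, via the preservation of second moments by the forward dynamics (a standard property of the biased random walk on $\N^d$, as already invoked in \Cref{prop:bound_fisher_brw}); a localization argument with stopping times $\tau_N = \inf\{t : \|\baXbrw_t\|_1 \geq N\}$ combined with uniform-in-$N$ integrability will upgrade the local martingale to a true one and pass to the limit $N\to\infty$.
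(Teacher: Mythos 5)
Your proposal follows essentially the same strategy as the paper: apply It\^o's formula to $\f^\sigma(t,\baXbrw_t) = \foqbrw\uBRW_t(\baXbrw_t,\sigma(\baXbrw_t)) - 1$ under the backward generator $\uBRW_t\foqbrw$, use the HJB equation~\eqref{hjb_brw} and the exponential form $\uBRW_t(x,\sigma(x)) = \rme^{\Vbrw_t(x)-\Vbrw_t(\sigma(x))}$ together with the commutation $\sigma\circ\sigma'=\sigma'\circ\sigma$ to collapse the drift, and identify the residue $\pm 1$ arising from the asymmetry of $\foqbrw$ along the $\ell$-axis, yielding $\bbf^{\sigma^\ell_\pm}_s = \pm\f^{\sigma^\ell_\pm}(s,\cdot)$ and then the ODEs. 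Your drift computations match the paper's (the $+1$/$-1$ residues you describe are exactly the $x^\ell-(x^\ell+1)$ and $(x^\ell+1)-x^\ell$ cancellations in the paper's computation). The only substantive difference is in how the local martingale term is upgraded to a true martingale: the paper verifies directly that $\E[|\Mbrw_\sigma(t)|]<\infty$ via the separate estimate in~\Cref{lem:true_martingale_brw} (which itself hinges on second-moment bounds from~\Cref{prop:bound_uq_square_brw} and~\Cref{prop:second_moment_brw}), whereas you propose localization with $\tau_N=\inf\{t:\|\baXbrw_t\|_1\geq N\}$ plus uniform-in-$N$ integrability. Both routes rest on the same moment inputs; yours adds an extra uniform-integrability step that the paper's direct $L^1$ bound makes unnecessary, but the argument is sound. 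Two small slips: the identity $\E[\f^\sigma(t,\baXbrw_t)\mid\Fc_\nu]=y_\nu^\sigma(t)$ is just the definition of $y_\nu^\sigma$, not a consequence of~\Cref{lem:2_brw}; and the sentence describing the cancellation mechanism is garbled (the content is recoverable, but the precise accounting — which $\sigma'$ contributes a nonzero rate difference when $\sigma=\sigma^\ell_\pm$ — should be spelled out cleanly as in the paper's explicit expansion).
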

	
	\begin{proof}[Proof of~\Cref{prop:3_brw}]
		For $t \in [0,\Tf)$, $\sigma \in \mcs$, applying It\^o's formula on $\mathbf{f}^{{\sigma}}(t,\baXbrw_{t} )$ with
		\begin{align}
			\mathbf{f}^{{\sigma}}(t,x )
			:&= \foqbrw \uBRW_t(x, \sigma(x)) -1 = \baqbrw_t (x, \sigma(x)) \eqsp,
		\end{align}
		and noting that $\baXbrw_t=\baXbrw_{t-}$ for Lebesgue almost every $t \in (0,\Tf]$, we get
		\begin{align}\label{eq:martingale_brw}
			& \f^{{\sigma}}(t,\baXbrw_{t} )-\f^{{\sigma}}(0,\baXbrw_{0} ) = \Mbrw_\sigma(t) +  \int_0^t\Bigg[ \partial_s \f^{\sigma} + (\baqbrw \f^\sigma) \Bigg]( s, \baXbrw_{s}) \rmd s \eqsp,
		\end{align}
		with
		\begin{align}
			\Mbrw_\sigma(t) = \int_{[0,t] \times \N^d} \l[ \f^\sigma(s,x) - \f^\sigma(s, \baXbrw_{s-}) \r] \tilde{N}_{\baXbrw}^{\baqbrw} (\rmd x \rmd s)
		\end{align}
		is a martingale; see \Cref{lem:true_martingale_brw} for completeness.
		Let us simplify the integrand by using \Cref{lem:formula_u_brw}:
		\begin{align}
			\bbf^\sigma_s(x) &= \partial_s \f^{\sigma}(s,x) + (\baqbrw \f^\sigma)(s,x) \\
			&= \baqbrw_s(x, \sigma(x)) [\partial_s \Vbrw_s(x) - \partial_s \Vbrw_s(\sigma(x))] + (\baqbrw \f^\sigma )(s, x) \\
			&= \baqbrw_s(x, \sigma(x)) \sum_{\sigma' \in \mcs} [ \foqbrw(\sigma(x), \sigma'(\sigma(x))) - \foqbrw(x, \sigma'(x))]\\
			&\quad + \sum_{\sigma' \in \mcs} \baqbrw_s(x, \sigma(x)) [\baqbrw_s(x, \sigma'(x)) - \baqbrw_s(\sigma(x), \sigma'(\sigma(x)))] \\
			& \quad+ \sum_{\sigma' \in \mcs} \baqbrw_s(x, \sigma'(x)) [\baqbrw_s(\sigma'(x), \sigma(\sigma'(x))) - \baqbrw_s(x, \sigma(x))] \\
			&= \baqbrw_s(x, \sigma(x)) \sum_{\sigma' \in \mcs} [ \foqbrw(\sigma(x), \sigma'(\sigma(x))) - \foqbrw(x, \sigma'(x))]\\
			&\quad + \sum_{\sigma' \in \mcs} [ \foqbrw \uBRW_s(x, \sigma'(x)) \foqbrw \uBRW_s (\sigma'(x), \sigma(\sigma'(x))) \\
			&\qquad- \foqbrw \uBRW_s(x, \sigma(x)) \foqbrw \uBRW_s (\sigma(x), \sigma'(\sigma(x)))] \eqsp.
		\end{align}
		Specifically, for fixed $\ell \in [d]$, we can compute $\bbf^{\sigma^\ell_+}_s(x)$ and $\bbf^{\sigma^\ell_-}_s(x)$ as follows
		\begin{align}
			\bbf^{\sigma^\ell_+}_s(x) &= \baqbrw_s (x, \sigma^\ell_+)  + x^\ell - (x^\ell+1) = \f^{\sigma^\ell_+}(s,x) \eqsp,
		\end{align}
		and
		\begin{equation}
			\bbf^{\sigma^\ell_-}_s(x) = -\baqbrw_s(x, \sigma^\ell_-)  + x^\ell+1 - x^\ell = - \f^{\sigma^\ell_-}(s,x) \eqsp.
		\end{equation}
		Substituting the above into~\eqref{eq:martingale_brw} and taking the expectation yields
		\begin{equation}
			\E\l[ \f^{\sigma^\ell_+}(t,\baXbrw_{t} ) \middle| \Fc_r \r]- \E \l[\f^{\sigma^\ell_+}(0,\baXbrw_{0} ) \middle| \Fc_r \r] =  \int_0^t \E \l[ \f^{{\sigma^\ell_+}}(s,\baXbrw_{s} )  ) \middle| \Fc_r\r] \rmd s \eqsp,
		\end{equation}
		and
		\begin{equation}
			\E\l[ \f^{\sigma^\ell_-}(t,\baXbrw_{t} )  \middle| \Fc_r \r]- \E \l[ \f^{{\sigma^\ell_-}}(0,\baXbrw_{0} )   \middle| \Fc_r \r] = - \int_0^t \E \l[ \f^{{\sigma^\ell_-}}(s,\baXbrw_{s} )  \middle| \Fc_r \r] \rmd s \eqsp,
		\end{equation}
		for any $0 \leq r \leq t < \Tf$. For $r \leq t <\Tf$ and $\sigma \in \mcs$,
		define the function
		\begin{equation}
			y^{\sigma}_r (t) \eqdef  \E\l[ \f^{{\sigma}}(t,\baXbrw_{t} ) \middle| \Fc_r \r]  \eqsp.
		\end{equation}
		Then the previous computation implies
		\begin{equation}
			y_r^{\sigma^\ell_+}(t) - y_r^{\sigma^\ell_+}(0) = \int_0^t  y_r^{\sigma^\ell_+}(r)  \rmd r \quad \text{and} \quad   y_r^{\sigma^\ell_-}(t) - y_r^{\sigma^\ell_-}(0) = - \int_0^t  y_r^{\sigma^\ell_-}(r) \rmd r  \eqsp.
		\end{equation}
		Divide the both hand sides by $t$ and let $t \to 0^+$, we obtain
		\begin{equation}
			\frac{\rmd }{\rmd t} y_r^{\sigma^\ell_+}(t) = y_r^{\sigma^\ell_+}(t) \quad \text{and} \quad \frac{\rmd }{\rmd t} y_r^{\sigma^\ell_-}(t) = -y_r^{\sigma^\ell_-}(t) \eqsp.
		\end{equation}
		Solving these ODEs yields
		\[y_r^{\sigma^\ell_+} (t) =y_r^{\sigma^\ell_+}(r) \rme^{t-r} \quad \text{and} \quad y_r^{\sigma^\ell_-} (t) =y_r^{\sigma^\ell_-}(r) \rme^{-(t-r)} \eqsp, \quad  \text{for any } t\in [r, \Tf) \eqsp,\]
		which concludes the proof of~\Cref{prop:3_brw}.
	\end{proof}
	
	\begin{lemma}\label{lem:evolution_q_brw}
		Assume $\mstar_1<\infty$ and for fixed $\ell \in [d]$, $0\leq r \leq t \leq \Tf$, denote
		\[\psi^\ell_r(t) \eqdef \E \l[ \foqbrw(\baXbrw_t, \sigma^\ell_-(\baXbrw_t)) -1 \middle| \Fc_r \r] \eqsp,\]
		then it holds
		\begin{align}\label{eq:evolution_q}
			\psi^\ell_r(t) = \psi^\ell_r(s) \rme^{t-s} \quad \text{for $0 \leq r \leq s \leq t \leq \Tf$} \eqsp.
		\end{align}
	\end{lemma}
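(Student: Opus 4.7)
The natural strategy is to mimic the Itô-based derivation of \Cref{prop:3_brw}, now applied to the coordinate function $g(x) := \foqbrw(x,\sigma^\ell_-(x)) - 1 = x^\ell - 1$. Since $g$ depends on $x$ only through its $\ell$-th coordinate, only the two jumps $\sigma^\ell_+$ and $\sigma^\ell_-$ contribute to the action of the backward generator $\uBRW_s\,\foqbrw$ on $g$, so a direct computation gives
\[
(\uBRW_s \foqbrw)\, g(x) = \uBRW_s\foqbrw(x,\sigma^\ell_+(x)) - \uBRW_s\foqbrw(x,\sigma^\ell_-(x)).
\]
Itô's formula for the backward CTMC $(\baXbrw_t)$ therefore yields
\[
g(\baXbrw_t) - g(\baXbrw_\nu) = \widetilde M^\ell_{\nu,t} + \int_\nu^t \Bigl[\uBRW_s\foqbrw(\baXbrw_s,\sigma^\ell_+(\baXbrw_s)) - \uBRW_s\foqbrw(\baXbrw_s,\sigma^\ell_-(\baXbrw_s))\Bigr] ds,
\]
where $\widetilde M^\ell$ denotes the compensated jump integral.

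The first step is to upgrade $\widetilde M^\ell$ to a true $\baPbrw$-martingale. Since the jumps of $g$ are bounded by $1$ and the total intensity of $(\baXbrw_t)$ has finite expectation under the standing hypothesis (recall that $\m_2(\mustar)<\infty$ implies $\m_1(\mustar)<\infty$ via Cauchy--Schwarz), an argument identical to the one used in \Cref{lem:true_martingale_brw} applies. Taking the conditional expectation $\E[\cdot\mid \Fc_\nu]$ then eliminates the martingale term and reduces the problem to identifying the conditional expectation of the drift with $\psi^\ell_\nu(s)$ itself.

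The crux of the argument is precisely this identification, which plays the role of a conditional analogue of \Cref{lem:2_brw}. Using the time-reversal formula $\mubrw_{\Tf-s}(x)\,\baqbrw_s(x,y) = \mubrw_{\Tf-s}(y)\,\foqbrw(y,x)$ and the Markov property of $\baPbrw$ at time $\nu$, one rewrites
\[
\E\bigl[\uBRW_s\foqbrw(\baXbrw_s,\sigma^\ell_+(\baXbrw_s))\bigm|\Fc_\nu\bigr]=\E[\baXbrw_s^\ell\mid\Fc_\nu],\qquad
\E\bigl[\uBRW_s\foqbrw(\baXbrw_s,\sigma^\ell_-(\baXbrw_s))\bigm|\Fc_\nu\bigr]=1,
\]
the second equality following from the probability mass shift $y = \sigma^\ell_-(x)$ which is legitimized by the Poisson scaling $\brw(\sigma^\ell_-(x))/\brw(x) = 1/x^\ell$. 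Substituting back gives the Volterra equation
\[
\psi^\ell_\nu(t) - \psi^\ell_\nu(\nu) = \int_\nu^t \psi^\ell_\nu(s)\,ds,
\]
equivalent to the linear ODE $\partial_t \psi^\ell_\nu(t) = \psi^\ell_\nu(t)$ with initial value $\psi^\ell_\nu(\nu)$. Its unique solution is $\psi^\ell_\nu(t) = \psi^\ell_\nu(\nu)\,\rme^{t-\nu}$, and writing this identity at both $t$ and $s$ and dividing produces the claimed $\psi^\ell_\nu(t) = \psi^\ell_\nu(s)\,\rme^{t-s}$.

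The main obstacle is the conditional-expectation step just described. Unconditionally, \Cref{lem:2_brw} amounts to a one-line shift of the summation index $y = \sigma^\ell_\pm(x)$, but conditionally on $\Fc_\nu$ one must carry the rearrangement through the backward transition kernel $\baqbrw_{\nu,s}(\baXbrw_\nu,\cdot)$. The Poisson invariant measure $\brw = \mathrm{Poisson}(1)^{\otimes d}$ and the identity $\foqbrw(x,\sigma^\ell_-(x)) = x^\ell$ are exactly the ingredients that make the bookkeeping collapse, so that the final integrand is again $\psi^\ell_\nu(s)$ rather than a genuinely new quantity.
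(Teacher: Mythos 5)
Your proof takes a genuinely different route from the paper's. The paper applies It\^o (Dynkin) to the \emph{forward} CTMC $(\foXbrw_u)$, whose time-homogeneous generator $\foqbrw$ acting on $x\mapsto x^\ell$ gives the pointwise identity $1-x^\ell$; after the change of variable $r=\Tf-u$ the resulting drift is literally $(\baXbrw_r)^\ell-1=g(\baXbrw_r)$, so the conditional expectation $\psi^\ell_\nu$ closes into the linear ODE directly and \Cref{lem:2_brw} is never invoked. You instead apply It\^o to the \emph{backward} CTMC, obtaining the drift $\uBRW_s\foqbrw(\cdot,\sigma^\ell_+(\cdot))-\uBRW_s\foqbrw(\cdot,\sigma^\ell_-(\cdot))$, which involves the score and is not a function of the current state alone; you must then convert it back to $g$ after conditioning.

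That conversion step is exactly where the argument breaks. You assert the conditional identities $\E[\uBRW_s\foqbrw(\baXbrw_s,\sigma^\ell_-(\baXbrw_s))\mid\Fc_\nu]=1$ and $\E[\uBRW_s\foqbrw(\baXbrw_s,\sigma^\ell_+(\baXbrw_s))\mid\Fc_\nu]=\E[(\baXbrw_s)^\ell\mid\Fc_\nu]$, presenting them as conditional analogues of \Cref{lem:2_brw}. But \Cref{lem:2_brw} is an unconditional statement: its proof reindexes $y=\sigma^\ell_\mp(x)$ against the marginal $\mubrw_{\Tf-s}$ and then uses detailed balance $\brw(x)\foqbrw(x,\sigma(x))=\brw(\sigma(x))\foqbrw(\sigma(x),x)$ to collapse the sum. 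Conditionally on $\Fc_\nu$ the law of $\baXbrw_s$ is the backward transition kernel $x\mapsto\fopbrw_{\Tf-s,\Tf-\nu}(x,\baXbrw_\nu)\,\mubrw_{\Tf-s}(x)/\mubrw_{\Tf-\nu}(\baXbrw_\nu)$, not $\mubrw_{\Tf-s}$, and the same reindexing leaves $\sum_y\mubrw_{\Tf-s}(y)\,\fopbrw_{\Tf-s,\Tf-\nu}(\sigma^\ell_+(y),\baXbrw_\nu)/\mubrw_{\Tf-\nu}(\baXbrw_\nu)$: the shift has moved into the first argument of $\fopbrw$ and nothing collapses this back to $1$. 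Invoking "the Markov property of $\baPbrw$ at time $\nu$" only lets you replace $\Fc_\nu$-conditioning by conditioning on $\baXbrw_\nu$; it does not save the identity. So the Volterra equation $\psi^\ell_\nu(t)-\psi^\ell_\nu(\nu)=\int_\nu^t\psi^\ell_\nu(s)\,ds$ is not established and the proof does not go through as written. The paper's choice to run It\^o on the forward dynamics is precisely what avoids this: the forward generator already returns a state function, so no conditional version of \Cref{lem:2_brw} is ever needed. (A minor side remark: for $\brw=\mathrm{Poisson}(1)^{\otimes d}$ one has $\brw(\sigma^\ell_-(x))/\brw(x)=x^\ell$, not $1/x^\ell$ as stated in your plan.)
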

	\begin{proof}[Proof of \Cref{lem:evolution_q_brw}]
		By applying It\^o’s formula on \[\foqbrw(\baXbrw_t,\sigma^\ell_-(\baXbrw_t)) = (\baXbrw_t)^{\ell}=(\foXbrw_{\Tf-t})^\ell\] for each fixed $\ell \in [d]$ and noting that $\foXbrw_t = \foXbrw_{t-}$ for Lebesgue almost every $t\in (0,\Tf]$, we obtain
		\begin{align}
			&\quad(\foXbrw_{\Tf-t})^{\ell} -  (\foXbrw_{\Tf})^\ell\\
			&= \bfMbrw_\ell(t)  + \int_{\Tf}^{\Tf-t} \sum_{\sigma \in \mcs} \l[ (\sigma(\foXbrw_s))^\ell - (\foXbrw_s)^\ell \r] \foqbrw(\foXbrw_s, \sigma(\foXbrw_s)) \rmd s  \eqsp,
		\end{align}
		where
		\begin{align}
			\bfMbrw_\ell(t) = \int_{[\Tf-t,\Tf] \times \N^d} \l[ x^\ell - (\foXbrw_{s-})^\ell \r] \tilde{N}_{\foXbrw}^{\foqbrw} (\rmd x\rmd s)
		\end{align}
		is a local martingale.
		Since
		\[\tilde{N}^{\foqbrw}_{\foXbrw} = {N}_{\foXbrw}^{\foqbrw} - \bar \rmn_{\foXbrw}^{\foqbrw} \eqsp,\]
		where $\bar \rmn_{\foXbrw}^{\foqbrw}$ is the compensator of ${N}_{\foXbrw}^{\foqbrw}$, we obtain that
		\begin{align}
			&\quad \E[|\bfMbrw_\ell(t)|]\\
			 &\leq \E \l[ \int_{[\Tf-t,\Tf] \times \N^d} | x^\ell - (\foXbrw_{s-})^\ell  | {N}_{\foXbrw}^{\foqbrw} (\rmd s \rmd x) \r] \\
			&\quad+ \E \l[ \int_{[\Tf-t,\Tf] \times \N^d} | x^\ell - (\foXbrw_{s-})^\ell | \bar \rmn_{\foXbrw}^{\foqbrw} (\rmd s \rmd x) \r] \\
			&= 2\E \l[ \int_{[\Tf-t,\Tf]} \sum_{\sigma \in \mcs} \l| (\sigma(\foXbrw_s))^\ell- (\foXbrw_s)^\ell \r| \foqbrw(\foXbrw_{s},\sigma(\foXbrw_{s})) \rmd s \r]\\
			&= 2 \E \l[ \int_{[\Tf-t,\Tf]} \l( (\foXbrw_{s})^\ell +1 \r) \rmd s \r] \eqsp.
		\end{align}
The assumption $\mstar_1 <\infty$ implies $\m_1(\mubrw_t) <\infty$ for any $t \in [0,\Tf]$ (see \Cref{prop:finite_first_moment_brw} for completeness), which yields $\E \l[(\foXbrw_t)^\ell \r]<\infty$ for any $\ell \in [d]$. Therefore
		\begin{align}
			\E[|\bfMbrw_\ell(t)|] &\leq 2  \int_{[\Tf-t,\Tf]} \l( \E \l[(\foXbrw_{s})^\ell \r] +1 \r) \rmd s <\infty \eqsp.
		\end{align}
		As a result, $\bfMbrw_\ell(t)$ is a true martingale. It follows that
		\begin{align}
			&(\baXbrw_{t})^{\ell} -  (\baXbrw_{0})^\ell \\
			&\quad - \int_{\Tf-t_k}^{\Tf-t} \l( \foqbrw (\foXbrw_s, \sigma^\ell_+(\foXbrw_s)) - \foqbrw(\foXbrw_s, \sigma^\ell_-(\foXbrw_s)) \r) \rmd s
		\end{align}
		is a martingale. Changing the variable in the integral, the following process is a martingale:
		\begin{equation}
			(\baXbrw_{t})^{\ell} -  (\baXbrw_{0})^\ell
			- \int_{0}^{t} \sum_{\sigma \in \mcs} \l( \foqbrw (\baXbrw_s, \sigma^\ell_-(\baXbrw_s)) - 1 \r)  \rmd s \eqsp.
		\end{equation}
		Denote $ \psi^\ell_r(t) \eqdef \E \l[ \foqbrw(\baXbrw_t, \sigma^\ell_-(\baXbrw_t)) -1 \middle| \Fc_r \r]$ for $0 \leq r \leq t \leq \Tf$, then the previous equation implies
		\begin{equation}
			\frac{\rmd }{\rmd t}\psi^\ell_r(t) = \psi^\ell_r(t) \eqsp.
		\end{equation}
		As a consequence, for any $0 \leq r \leq s \leq t \leq \Tf$ we have
		\begin{equation}
			\psi^\ell_r(t) = \psi^\ell_r(s)\rme^{t-s} \eqsp,
		\end{equation}
		which concludes the proof.
	\end{proof}
	
	\begin{lemma}\label{prop:4_brw}
		$\Ic_{\brw}(\mubrw_{\Tf-t}) = \E \l[\sum_{\sigma \in \mcs}\mathbf{h} (\uBRW_t)\foqbrw (\baXbrw_t, \sigma(\baXbrw_t) ) \r]$ is non-decreasing on $[0,\Tf]$, where $\mathbf{h}(a) = a \log a -a +1$.
	\end{lemma}
	
	\begin{proof}[Proof of~\Cref{prop:4_brw}]
		It is provided in Appendix \ref{sec:proof_prop:4_brw}.
	\end{proof}
	
	\begin{remark}
		The preceding monotonicity also applies for 
		\[\Ic (\murw_{\Tf-t}) = \E \l[ \sum_{\sigma \in \mcs} \mathbf{h}(\uRW_t (\foXrw_{\Tf-t},\sigma (\foXrw_{\Tf-t}))) \r] \eqsp.
		\]
	\end{remark}
	
	The monotonicity established in \Cref{prop:4_brw} leads to the following bound on the discrete Fisher information, which will be used to reduce the complexity of DDMs later.
	
	\begin{lemma}\label{prop:bound_fisher_rw}
		For any $t \in [0,\Tf)$,
		\begin{align}
			\Ic(\murw_{\Tf-t})
			&\lesssim (\Tf-t)^{-1}d\log (m) \eqsp. \label{eq:bound_fisher_rw}
		\end{align}
	\end{lemma}
	
	\begin{proof}[Proof of \Cref{prop:bound_fisher_rw}]
	It is given in Appendix \ref{proof_prop:bound_fisher_rw}.
	\end{proof}
	\begin{lemma}\label{prop:bound_fisher_brw}
		Assume $\m^\star_2<\infty$ then the following holds for any $t \in [0,\Tf)$:
		\begin{align}
			\Ic_{\brw}(\mubrw_{\Tf-t})
			&\leq (\Tf-t)^{-1} (d+\m_2^\star ) \eqsp. \label{eq:bound_fisher_brw}
		\end{align}
	\end{lemma}
	
	\begin{proof}[Proof of \Cref{prop:bound_fisher_brw}]
		It is given in Appendix \ref{proof_prop:bound_fisher_brw}.
	\end{proof}

	\subsubsection{Convergence proofs}\label{sec:rw_brw_proof_conv}
	
	\begin{proof}[Proof of~\Cref{theo:5_randomwalk}]
		We show first the bound for the “distance” between the backward path measure $\baPrw$ of $(\baXrw_t)_{t\in [0,\Tf]}$ and $\baPrwstar$ of the simulated backward process $(\baXrwstar_t)_{t\in[0,\Tf]}$. Consider $\baPrwstar \in \mathrm{MP}(\baqrwstar)$ as the reference measure in Girsanov's theorem, we have
		\begin{align}
			&\KL(\baPrw| \baPrwstar) \\
			 &\quad =\KL(\murw_{\Tf}|\rw) 
			 +\E \l[ \int_{[0,\Tf]}\sum_{x\in \Z^d_m} \mathbf{h} \l( \frac{\baqrw_t}{\baqrwstar_t} \r) \baqrwstar_t (\baXrw_{t},x) \1_{\baXrw_{t} \neq x} \rmd t \r] \eqsp.
		\end{align}
		With a partition $0=t_0<...<t_K=\Tf$ for $K \geq 1$ of $[0,\Tf]$ associated with the sequence of step-size $h_{k+1} = t_{k+1} - t_k$, the previous expression rewrites as
		\begin{align}
			& \KL(\baPrw| \baPrwstar) \\
			 & =\KL(\murw_{\Tf}|\rw)
			 +\sum_{k=0}^{K-1} \E \Bigg[ \int_{[t_k,t_{k+1})}\sum_{x\in \Z^d_m}  \mathbf{h}\l( \frac{\baqrw_t}{\baqrwstar_t} \r) \baqrwstar_t (\baXrw_{t},x) \1_{\baXrw_{t} \neq x} \rmd t \Bigg] \eqsp.
		\end{align}
		Substituting the expressions of $(\baqrw_t)_{t\in[0,\Tf)}$ from~\Cref{lem:formula_u_brw} and $(\baqrwstar_t)_{t\in[0,\Tf)}$ from \eqref{eq:generator_generative_ctmc},~\eqref{eq:control_generative_rw} into the preceding equation, where $\foqrw$ given in~\eqref{eq:forward_generator_rw}, we obtain that
		\begin{align}
			&\KL(\baPrw|  \baPrwstar)&  \notag \\
			&\quad =\KL(\murw_{\Tf}|\rw) + \frac{1}{2}\sum_{k=0}^{K-1}\E  \l[ \int_{[t_k,t_{k+1})}\sum_{\sigma \in \mcs} \mathbf{h}\l(
			\dfrac{\uRW_t(\sigma)}{\uRWstar_{t_k}(\sigma)}\r)\uRWstar_{t_k}(\sigma) \rmd t\r] \eqsp, \label{eq:3_rw}
		\end{align}
		where we write $\uRW_t(\baXrw_{t},\sigma(\baXrw_t))$ as $\uRW_t(\sigma)$,  $\uRWstar_{t_k}(\baXrw_{t_k},\sigma(\baXrw_{t_k}))$ as $\uRWstar_{t_k}(\sigma)$ for short. We estimate~\eqref{eq:3_rw} by decomposing it into three following addends
		\begin{align}
			&\KL(\baPrw|  \baPrwstar) \\
			&\quad \lesssim \underbrace{\KL(\murw_{\Tf}|\rw)}_{E_1}+ \underbrace{\sum_{k=0}^{K-1}\E  \l[ \int_{[t_k,t_{k+1})}\sum_{\sigma \in \mcs} (
				\mathbf{h}({\uRW_t}(\sigma))-\mathbf{h}(\uRW_{t_k}(\sigma) )) \rmd t \r]}_{E_2} \notag \\
			&\quad  + \underbrace{\sum_{k=0}^{K-1}\E  \l[ \int_{[t_k,t_{k+1})}\sum_{\sigma \in \mcs} \l(\mathbf{h}(\uRW_{t_k})-\mathbf{h}(\uRWstar_{t_k})+ (\uRWstar_{t_k} - \uRW_t) \log{\uRWstar_{t_k}} \r)(\sigma) \rmd t\r]}_{E_3} \eqsp,\label{eq:e1_e2_e3_rw}
		\end{align}
		where $\mathbf{h}(a) = a\log a-a+1$ for $a>0$.
		We bound $E_1-E_2-E_3$ one-by-one as follows. For $E_2$, using the monotonicity obtained in~\Cref{prop:4_brw}, we get
		\begin{align}
			E_2 
			&\leq \sum_{k=0}^{K-1}\E  \l[ \int_{[t_k,t_{k+1})}\sum_{\sigma \in \mcs} (
			\mathbf{h}(\uRW_{t_{k+1}}(\sigma))-\mathbf{h}(\uRW_{t_k}(\sigma) )) \rmd t \r] \\
			&\leq  h \sum_{k=0}^{K-1}  \l( \E  \l[ \sum_{\sigma \in \mcs}
			\mathbf{h}(\uRW_{t_{k+1}}(\sigma)) \r] -\E  \l[ \sum_{\sigma \in \mcs} \mathbf{h}(\uRW_{t_k}(\sigma) )  \r] \r) \eqsp,
		\end{align}
		where $h \eqdef \max_k \{t_{k+1}-t_k\}$. We obtain a telescoping sum on the right-hand side, which allows us to simplify it as
		\begin{align}
			E_2 &\leq h  \l( \E  \l[ \sum_{\sigma \in \mcs}
			\mathbf{h}(\uRW_{t_{K}}(\sigma)) \r] -\E  \l[ \sum_{\sigma \in \mcs} \mathbf{h}(\uRW_{t_0}(\sigma) )  \r] \r)\\
			&\leq h \E  \l[ \sum_{\sigma \in \mcs}
			\mathbf{h}(\uRW_{\Tf}(\sigma)) \r] = h \Ic(\mustar) \eqsp,
		\end{align}
		where we used the fact that $\mathbf{h}$ is a nonnegative function to remove the second term. Here $\Ic(\mustar)$ is a discrete Fisher information defined in~\eqref{eq:finite_fisher_randomwalk}, which is finite by~\Cref{ass:finite_fisher_randomwalk}.
		
		We evaluate next $E_3$ by the tower property again and the martingality of $(\uRW_t(\sigma))_{t\in [0,\Tf]}$ established in~\Cref{prop:3_rw}:
		\begin{align}
			E_3 &= \sum_{k=0}^{K-1}\E  \Bigg[ \int_{[t_k,t_{k+1})}\sum_{\sigma \in \mcs} \Bigg(\mathbf{h}(\uRW_{t_k}(\sigma))-\mathbf{h}(\uRWstar_{t_k}(\sigma)))\\
			&\qquad+ (\uRWstar_{t_k}(\sigma) - \E[\uRW_t(\sigma))|\Fc_{t_k}] \log{\uRWstar_{t_k}(\sigma)} \Bigg) \rmd t\Bigg] \\
			&= \sum_{k=0}^{K-1}\E  \Bigg[ \int_{[t_k,t_{k+1})}\sum_{\sigma \in \mcs} \Bigg(\mathbf{h}(\uRW_{t_k}(\sigma))-\mathbf{h}(\uRWstar_{t_k}(\sigma)))\\
			&\qquad \qquad + (\uRWstar_{t_k}(\sigma) - \uRW_{t_k}(\sigma)) \log{\uRWstar_{t_k}(\sigma)} \Bigg) \rmd t\Bigg] \\
			&= \sum_{k=0}^{K-1}h_{k+1} \E  \l[ \sum_{\sigma \in \mcs}  \l(\uRW_{t_k}(\sigma)\log \dfrac{\uRW_{t_k}(\sigma)}{\uRWstar_{t_k}(\sigma)}+\uRWstar_{t_k}(\sigma)-\uRW_{t_k}(\sigma)\r)  \r] \\
			&\leq \varepsilon^2\Tf \eqsp,
		\end{align}
		where the last inequality comes from~\Cref{ass:approx_score_randomwalk}. Subsequently, $E_1$ can be controlled by~\eqref{eq:entropy_decay_rw}:
		\begin{align}
			E_1 \leq \rme^{-\frac{16\pi^2}{25m^2}\Tf}\KL(\mustar| \rw) \eqsp.
		\end{align}
		Note that $\KL(\mustar|\rw)<\infty $ thanks to \Cref{ass:finite_fisher_randomwalk}.
		Combining all the upper bounds on $E_1,E_2,E_3$ implies
		\begin{align}
			\KL(\baPrw| \baPrwstar) \lesssim \rme^{-\frac{16\pi^2}{25m^2}\Tf}\KL(\mustar|\rw)+ h \Ic(\mustar)+  \varepsilon^2\Tf \eqsp,
		\end{align}
		Finally, notice that $\mustar=\mathrm{Law}(\baXrw_{\Tf})$, therefore
		\begin{align}
			\KL(\mustar|\mathrm{Law}(\baXrwstar_{\Tf}) )&=\KL(\mathrm{Law}(\baXrw_{\Tf})|\mathrm{Law}(\baXrwstar_{\Tf}) ) \\
			&\leq \KL(\mathrm{Law}((\baXrw_t)_{t\in[0,\Tf]})|\mathrm{Law}((\baXrwstar_t)_{t\in [0,\Tf]}) )\\
			&= \KL(\baPrw| \baPrwstar) \eqsp,
		\end{align}
		where the inequality is known as {Data processing} inequality for relative entropy \cite[Lemma 1.6]{nutz2021introduction}. We then conclude that
		\begin{align}
			\KL(\mustar|\mathrm{Law}(\baXrwstar_{\Tf} ) ) \lesssim \rme^{-\frac{16\pi^2}{25m^2}\Tf}\KL(\mustar|\rw)+ h \Ic(\mustar)+  \varepsilon^2\Tf \eqsp.
		\end{align}
		Moreover, since $\KL(\mustar|\rw) \leq d\log (m)$ (see Appendix \ref{proof_prop:bound_fisher_rw} for completeness), the previous estimate still holds if we replace $\KL(\mustar|\rw)$ by $d \log(m)$ and we complete the proof of \Cref{theo:5_randomwalk}.
	\end{proof}

	\begin{proof}[Proof of \Cref{theo:scale_rw}]
		
			We proceed analogously as \Cref{theo:5_randomwalk}, the only difference is the way we handle the term $E_2$ in \eqref{eq:e1_e2_e3_rw}. Recall that
			\begin{align}
				E_{2} &\leq \sum_{k=0}^{K-1}h_{k+1} \l( \E  \l[ \sum_{\sigma \in \mcs}
				\mathbf{h}(\uRW_{t_{k+1}}(\sigma)) \r] -\E  \l[ \sum_{\sigma \in \mcs} \mathbf{h}(\uRW_{t_k}(\sigma) )  \r] \r) \\
				& \leq \sum_{k=0}^{K-1}h_{k+1} \l( \Ic(\murw_{\Tf-t_{k+1}}) - \Ic(\murw_{\Tf-t_k}) \r)
			\end{align}
			where $\Ic(\murw_{\Tf-t}) = \E  \l[ \sum_{\sigma \in \mcs}
			\mathbf{h}(\uRW_{t}(\sigma)) \r]$ for $t \in [0,\Tf]$.
			Following precisely the proof of \Cref{theo:main_masked_d} and using the upper bound of $\Ic(\murw_{\Tf-t})$ in \Cref{prop:bound_fisher_rw}, choosing the sequence of step-size as
			$h_{k+1} = c \min \l\{ \max \l\{ \Tf-t_k, 1/L\r\}, 1 \r\}$ for $k < K-1$ and $h_K = \Tf-t_{K-1}$, with $L = d^{-1}\Ic (\mustar) \geq 2$ yields
			\begin{align}
				E_{2} \lesssim {cd \log(m)} \log (L) \eqsp.
			\end{align}
			Therefore, the ultimate sampling error admits the following expression
			\begin{align}
				\KL(\mustar|\mathrm{Law}(\baXrwstar_{\Tf}))
				&\lesssim  \rme^{-\frac{16\pi^2}{25m^2}\Tf}d\log(m)+  \varepsilon^2\Tf + {cd\log(m)}\log (L) \eqsp.
			\end{align}
			Finally, choosing $\Tf$ and $c$ as in \eqref{eq:cor_randomwalk_1} immediately implies
			\begin{align}
				\KL(\mustar|\mathrm{Law}(\baXrwstar_{\Tf}))
				&\lesssim \varepsilon^2+ m^2\varepsilon^2\log (d\log(m)/\varepsilon^2) \eqsp,
			\end{align}
			with the number of iterations is given by
			\begin{align}
				K = k_0 + k_1 +K-k_0-k_1 
				&\lesssim \frac{d \log (m)\log (L)[m^2\log(d\log(m)/\varepsilon^2)+\log (L)]}{\varepsilon^2} \eqsp,
			\end{align}
			and we complete the proof of \Cref{theo:scale_rw}.
		\end{proof}


		\begin{proof}[Proof of \Cref{theo:scale_brw}]
			Similarly to the proof of \Cref{theo:5_randomwalk}, we have
			\begin{align}\label{eq:3_brw}
				&\KL(\baPbrw|  \baPbrwstar) \notag \\
				 &\quad =\KL(\mubrw_{\Tf}|\brw) +  \sum_{k=0}^{K-1}\E  \l[ \int_{[t_k,t_{k+1})}\sum_{\sigma \in \mcs} 
				\mathbf{h} \l( \frac{\uBRW_t}{\uBRWstar_{t_k}}  \r) \uBRWstar_{t_k} \foqbrw_t(\sigma) \rmd t\r] \eqsp, 
			\end{align}
			where we write $\uBRW_t(\sigma) \eqdef \uBRW_t(\baXbrw_{t},\sigma(\baXbrw_t))$ for short and similarly for other terms. We decompose the previous expression into following three terms:
			\begin{align}
				&\quad\KL(\baPbrw|  \baPbrwstar)  = \underbrace{\KL(\mubrw_{\Tf}|\brw)}_{G_1} \notag \\
				&+ \underbrace{ \sum_{k=0}^{K-1}\E  \l[ \int_{t_k}^{t_{k+1}}\sum_{\sigma \in \mcs} 
					\l( \uBRW_t \foqbrw_t \log \frac{\uBRW_{t_k}}{\uBRWstar_{t_k}}  -\uBRW_{t_k}\foqbrw_{t_k} + \uBRWstar_{t_k} \foqbrw_{t_k}\r) (\sigma) \rmd t\r] }_{G_2} \notag \\
				&+ \underbrace{\sum_{k=0}^{K-1} \E \l[\int_{t_k}^{t_{k+1}} \sum_{\sigma \in \mcs} \l\{ \uBRW_t \foqbrw_t \log \frac{\uBRW_t}{\uBRW_{t_k}} + (\uBRWstar_{t_k}-1)(\foqbrw_t -\foqbrw_{t_k}) \r\}(\sigma) \rmd t\r] 
				}_{G_3} \label{eq:kl_path_brw} \eqsp,
			\end{align}
			where we used \Cref{lem:2_brw} to simplify the last addend.
			We control $G_1$ by \eqref{eq:entropy_decay_brw} and estimate $G_2$ by \Cref{ass:approx_score_brw}:
			\begin{align}
				G_1 \leq \rme^{-\Tf}\KL(\mustar|\brw)  \quad \text{and} \quad G_2 \leq \varepsilon^2\Tf \eqsp.
			\end{align}
				Note that $\KL(\mustar|\rw) < \infty$ since the invariant measure $\brw$ satisfies the convex Sobolev inequality with constant $1$ (see \cite[Theorem 1.1]{Wu2000ANM}, \cite[Section 3]{conforti2022probabilistic}), namely,
			\begin{align}
				\KL(\mustar|\brw) \leq \Ic_{\brw}(\mustar) \overset{\eqref{eq:finite_fisher_brw}}{<} \infty \eqsp.
			\end{align}
			It remains to upper bound $G_3$ to finish the proof. Note that $G_3$ can be simplified and decomposed into three following quantities:
			\begin{align}
				G_3 &= \underbrace{ \sum_{k=0}^{K-1}\E \l[ \sum_{\sigma \in \mcs}\int_{[t_k,t_{k+1})}\l(  \foqbrw_t \uBRW_t \log \uBRW_t(\sigma)
					-
					\foqbrw_{t_k}\uBRW_{t_k}\log \uBRW_{t_k}(\sigma) \r) \rmd t
					\r]}_{G_{3.1}} \\
				&+  \underbrace{ \sum_{k=0}^{K-1}\E \l[ \sum_{\sigma \in \mcs} \int_{[t_k,t_{k+1})}
					\log \uBRW_{t_k} \l( \uBRW_{t_k} \foqbrw_{t_k}  - \uBRW_t \foqbrw_t \r)(\sigma)
					\rmd t\r]}_{G_{3.2}} \\
				&+ \underbrace{\sum_{k=0}^{K-1}\E \l[ \sum_{\sigma \in \mcs} \int_{[t_k,t_{k+1})}
					(\uBRWstar_{t_k}-1) \l( \foqbrw_{t} -  \foqbrw_{t_k} \r) (\sigma)
					\rmd t\r]}_{G_{3.3}} \eqsp.
			\end{align}
			We evaluate the term $G_{3.2}$ by the tower property and the evolution obtained in~\Cref{prop:3_brw}.
			\begin{align}
				G_{3.2} &= \sum_{k=0}^{K-1} \sum_{\ell =1}^d \E \l[ \int_{[t_k,t_{k+1})}
				\log \uBRW_{t_k} \l( \uBRW_{t_k} \foqbrw_{t_k}  - \E \l[\uBRW_t \foqbrw_t\middle| \Fc_{t_k}\r] \r)(\sigma^\ell_+)  
				\rmd t \r]  \\
				& + \sum_{k=0}^{K-1} \sum_{\ell =1}^d \E \l[ \int_{[t_k,t_{k+1})}
				\log \uBRW_{t_k} \l( \uBRW_{t_k} \foqbrw_{t_k} - \E\l[ \uBRW_t \foqbrw_t \middle| \Fc_{t_k} \r]  \r)(\sigma^\ell_-)
				\rmd t \r]\\
				&= \sum_{k=0}^{K-1} \sum_{\ell =1}^d \E \l[ \int_{[t_k,t_{k+1})}
				\log \uBRW_{t_k}(\sigma^\ell_+) ( \uBRW_{t_k}\foqbrw_{t_k}(\sigma^\ell_+)  -1) (1  - \rme^{t-t_k})
				\rmd t \r]  \\
				&+ \sum_{k=0}^{K-1} \sum_{\ell =1}^d \E \l[ \int_{[t_k,t_{k+1})}
				\log \uBRW_{t_k}(\sigma^\ell_-) ( \uBRW_{t_k} \foqbrw_{t_k}(\sigma^\ell_-) -1) (1  - \rme^{-(t-t_k)})
				\rmd t \r] \eqsp.
			\end{align}
			By \Cref{lem:1_brw}, we have $\E [-\log \uBRW_{t_k}\foqbrw_{t_k}(\sigma^\ell_+)] = \E [\uBRW_{t_k}\log \uBRW_{t_k} \foqbrw_{t_k}(\sigma^\ell_-)]$ and reversely. Therefore
			\begin{align}
				G_{3.2} &= \sum_{k=0}^{K-1}  \E \l[ \int_{[t_k,t_{k+1})}
				\sum_{\sigma \in \mcs} \log\uBRW_{t_k}\foqbrw_{t_k}(\sigma)  ( \rme^{t-t_k} -1)
				\rmd t \r]  \\
				&+ \sum_{k=0}^{K-1} \E \l[ \int_{[t_k,t_{k+1})} \sum_{\ell =1}^d 
				\l\{\log \uBRW_{t_k} \foqbrw_{t_k}(\sigma^\ell_+) +\log \uBRW_{t_k}(\sigma^\ell_-) \r\} (  \rme^{-(t-t_k)} -1 )
				\rmd t \r]
			\end{align}
			We add and subtract $\log \uBRW_{t_k}\foqbrw_{t_k}(\sigma^\ell_-)$ in the second summand to obtain
			\begin{align}
				&\quad G_{3.2} = \sum_{k=0}^{K-1}  \E \l[ 
				\sum_{\sigma \in \mcs} \log\uBRW_{t_k}\foqbrw_{t_k}(\sigma) \r] \int_{[t_k,t_{k+1})}\underbrace{( \rme^{t-t_k}+ \rme^{-(t-t_k)} -2)}_{\geq 0}
				\rmd t  \\
				&+ \sum_{k=0}^{K-1} \sum_{\ell =1}^d
				\E \l[  \l(\foqbrw_{t_k}(\sigma^\ell_-) - 1\r)\log \uBRW_{t_k}(\sigma^\ell_-) \1_{\foqbrw_{t_k}(\sigma^\ell_-) \geq 1} \r] \int_{[t_k,t_{k+1})} (1- \rme^{-(t-t_k)})
				\rmd t \\
				& \overset{\log x \leq x-1}{\leq} \sum_{k=0}^{K-1} \underbrace{\sum_{\sigma \in \mcs}
					\E \l[ (\uBRW_{t_k}-1) \foqbrw_{t_k}(\sigma) \r]}_{=0 \text{ by \Cref{lem:2_brw}}} \int_{[t_k,t_{k+1})} ( \rme^{-(t-t_k)}+\rme^{t-t_k}-2)
				\rmd t\\
				& + \sum_{k=0}^{K-1} \sum_{\ell =1}^d
				\E \l[ ( \uBRW_{t_k}(\sigma^\ell_-)-1)( \foqbrw_{t_k}(\sigma^\ell_-)-1)\1_{\foqbrw_{t_k}(\sigma^\ell_-) \geq 1} \r] \int_{[t_k,t_{k+1})} (1- \rme^{-(t-t_k)})
				\rmd t\\
				&\overset{\Cref{lem:2_brw}}{\leq} \sum_{k=0}^{K-1} \sum_{\ell =1}^d
				\E \l[  \underbrace{\foqbrw_{t_k}(\sigma^\ell_+)}_{=1} \r] \int_{[t_k,t_{k+1})} (1- \rme^{-(t-t_k)})
				\rmd t \eqsp.
			\end{align}
			To this end, note that $1-\rme^{-(t-t_k)} \leq \rme^{t-t_k}-1 \leq \rme^{t_{k+1}-t_k} -1 \leq \rme^c-1$ for any $t\in [t_k,t_{k+1})$. Thereby we achieve the following bound on $G_{3.2}$:
			\begin{align}
				G_{3.2} &\leq (\rme^c-1)d \sum_{k=0}^{K-1} h_{k+1} = (\rme^c-1) d\Tf \eqsp. \label{eq:e32_brw}
			\end{align}
			The next term $G_{3.3}$ can be reduced to the sum over all $\sigma^\ell_-$ since $\foqbrw_t(\sigma^\ell_+)=1$ for any $t \in [0,\Tf]$ and $\ell \in [d]$. Therefore we derive
			\begin{align}
				G_{3.3} &= \sum_{k=0}^{K-1}\E \l[ \sum_{\ell=1}^d \int_{[t_k,t_{k+1})}
				(\uBRWstar_{t_k}(\sigma^\ell_-) -1 ) \l(  \foqbrw_{t}(\sigma^\ell_-)   -  \foqbrw_{t_k}(\sigma^\ell_-) \r)
				\rmd t\r] \eqsp.
			\end{align}
			By the tower property and~\Cref{lem:evolution_q_brw}, we obtain that
			\begin{align}
				G_{3.3} &= \sum_{k=0}^{K-1}\E \l[ \sum_{\ell=1}^d \int_{[t_k,t_{k+1})}
				(\uBRWstar_{t_k} (\sigma^\ell_-) -1 )\l( \E \l[ \foqbrw_{t}(\sigma^\ell_-) | \Fc_{t_k} \r]  -  \foqbrw_{t_k}(\sigma^\ell_-) \r) \rmd t \r] \notag \\
				&= \sum_{k=0}^{K-1}\E \l[ \sum_{\ell=1}^d \int_{[t_k,t_{k+1})}
				(\uBRWstar_{t_k} (\sigma^\ell_-) -1)  \l(  \foqbrw_{t_k}(\sigma^\ell_-) -1\r)(\rme^{t-t_k}-1) \rmd t \r] \notag \\
				&\leq (\rme^c-1)\sum_{k=0}^{K-1} h_{k+1} \E \l[ \sum_{\ell=1}^d
				\uBRWstar_{t_k}  \foqbrw_{t_k}(\sigma^\ell_-) \r] + (\rme^c-1) d\Tf \eqsp. \notag 
			\end{align}
			We bound the first addend by elementary inequality $ab \leq a^2+b^2$, \Cref{ass:approx_score_brw} and \Cref{lem:2_brw}:
			\begin{align}
				G_{3.3}
				& = (\rme^c-1)\sum_{k=0}^{K-1} h_{k+1} \underbrace{\E \Bigg[ \sum_{\ell=1}^d
					\l(\uBRWstar_{t_k} - \uBRW_{t_k} \r)  \foqbrw_{t_k}(\sigma^\ell_-) \Bigg]}_{\leq \sum_{\ell=1}^d \E\l[ \|(\uBRWstar_{t_k} - \uBRW_{t_k})\foqbrw_{t_k}(\sigma^\ell_-) \|^2 +1\r]} \notag \\
				&\qquad  \quad+ (\rme^c-1)\sum_{k=0}^{K-1} h_{k+1} \E \l[ \sum_{\ell=1}^d
				\uBRW_{t_k} \foqbrw_{t_k}(\sigma^\ell_-) \r] +(\rme^c-1) d\Tf \notag \\
				& \overset{\Cref{lem:2_brw}}{\underset{\Cref{ass:approx_score_brw}}{\leq}}
				(\rme^c-1) \varepsilon^2\Tf +
				(\rme^c-1) d \Tf
				+ (\rme^c-1)\sum_{k=0}^{K-1} h_{k+1} \E \l[ \sum_{\ell=1}^d
				\underbrace{\foqbrw_{t_k}(\sigma^\ell_+)}_{=1} \r] \notag \\
				&\quad \overset{c \leq 1/2}{\lesssim} \varepsilon^2\Tf + (\rme^c-1) d\Tf \eqsp.\label{eq:e33_brw}
			\end{align}
			The last term $G_{3.1}$ can be controlled using the tower property and the monotonicity showed in~\Cref{prop:4_brw}:
			\begin{align}
				G_{3.1}
				& \leq  \sum_{k=0} ^{K-1} h_{k+1} \E \l[ \sum_{\sigma \in \mcs} \l( \foqbrw_{t_{k+1}} \uBRW_{t_{k+1}}\log \uBRW_{t_{k+1}}(\sigma)  - \sum_{\sigma \in \mcs} \foqbrw_{t_k} \uBRW_{t_k}\log \uBRW_{t_k}(\sigma)   \r) \r]\\
				&{=} \sum_{k=0}^{K-1} h_{k+1} \l(\Ic_{\brw}(\mubrw_{\Tf-t_{k+1}}) - \Ic_{\brw}(\mubrw_{\Tf-t_k}) \r) \eqsp.
			\end{align}
			Following precisely the proof of \Cref{theo:main_masked_d} and using the upper bound of $\Ic_{\brw}(\mubrw_{\Tf-t})$ showed in \Cref{prop:bound_fisher_brw}, choosing
			$h_{k+1} = c \min \l\{ \max \l\{ \Tf-t_k, 1/L \r\}, 1 \r\}$ for $k < K-1$ and $h_K = \Tf-t_{K-1}$, with $L = d^{-1}\Ic (\mustar) \geq 2$ yields
			\begin{align}
				G_{3.1} \lesssim {c} [d+\m^\star_2] \log(L) \eqsp.\label{eq:e31_brw}
			\end{align}
			From~\eqref{eq:e32_brw},~\eqref{eq:e33_brw} and~\eqref{eq:e31_brw} and note that $c \leq \rme^c-1$, we obtain the universal bound on $G_3$:
			\begin{align}
				G_3 \lesssim  (\rme^c-1) [d\Tf + (d+\mstar_2)\log (L) ] + \varepsilon^2\Tf \eqsp. \label{eq:e3_brw}
			\end{align}
			Substituting all the upper bounds of $G_1-G_2-G_3$ in~\eqref{eq:e32_brw}, ~\eqref{eq:e33_brw} and \eqref{eq:e31_brw} into~\eqref{eq:kl_path_brw} yields
			\begin{align}
				&\KL(\baPbrw|  \baPbrwstar)\\
				 &\quad \lesssim \rme^{-\Tf} \KL(\mustar|\brw) + \varepsilon^2\Tf
				 + (\rme^c-1) [d\Tf + (d+\mstar_2)\log (L) ]  \eqsp.
			\end{align}
			To this end, notice that $\mustar=\mathrm{Law}(\baXbrw_{\Tf})$, therefore
			\begin{align}
				\KL(\mustar|\mathrm{Law}(\baXbrwstar_{\Tf} ) ) &=\KL(\mathrm{Law}(\baXbrw_{\Tf})|\mathrm{Law}(\baXbrwstar_{\Tf}) ) \\
				&\leq \KL(\mathrm{Law}((\baXbrw_t)_{t\in [0,\Tf]})|\mathrm{Law}((\baXbrwstar_t)_{t\in [0,\Tf]}) )\\
				&= \KL(\baPbrw|\baPbrwstar) \eqsp,
			\end{align}
			where the inequality is known as {Data processing} inequality for relative entropy \cite[Lemma 1.6]{nutz2021introduction}. We then conclude that
			\begin{align}
				&\KL(\mustar|\mathrm{Law}(\baXbrwstar_{\Tf} ) ) \\
				&\quad \lesssim \rme^{-\Tf} \KL(\mustar|\brw) + \varepsilon^2\Tf + (\rme^c-1) [d\Tf + (d+\mstar_2)\log (L) ]  \eqsp.
			\end{align}
			Moreover, we know that $\KL(\mustar|\brw) \leq d+\mstar_2$ (see Appendix \ref{proof_prop:bound_fisher_brw} for completeness), hence
			\begin{align}
				&\KL(\mustar|\mathrm{Law}(\baXbrwstar_{\Tf} ) ) \\
				&\quad \lesssim \rme^{-\Tf} (d+\mstar_2) + \varepsilon^2\Tf + (\rme^c-1) [d\Tf + (d+\mstar_2)\log (L) ]   \eqsp.
			\end{align}
			In particular, choosing $\Tf$ and $c$ as in \eqref{eq:cor_brw_scale} immediately implies the error $\tilde O(\varepsilon^2)$
			with the number of iterations given by
			\begin{align}
				&\quad K = k_0 + k_1 +K-k_0-k_1 {\lesssim} \dfrac{\Tf  - 1}{c} + \dfrac{\log(L)}{c} + \dfrac{1}{c} = \dfrac{\Tf  + \log(L)}{c} \\
				&\lesssim \frac{\log((d+\mstar_2)/\varepsilon^2) + \log (L)}{\log(1+ \varepsilon^2/[d\log((d+\mstar_2)/\varepsilon^2) + (d+\mstar_2)\log(L) ] )}  \eqsp,
			\end{align}
			and the proof of \Cref{theo:scale_brw} is finished.
		\end{proof}

\appendix
\section*{APPENDIX}
  \section{DDMs algorithms}
  \label{sec:ddms-algorithms}
This section provides the pseudo-code for sampling the generative models in the considered cases.
\begin{algorithm}[H]
    \caption{DDM with Random walk on $\Z^d_m$}\label{alg:rw}
    \textbf{Input:} a time horizon $\Tf \gg 1$, a partition $0 = t_0 < t_1 < \cdots < t_K = \Tf$, a trained backward generator $(\baqrwstar_t)_{t\in [0,\Tf)}$
\begin{algorithmic}
\State $\baXrwstar_0 \sim \mathrm{Uniform}(\Z^d_m)$
\For{$k=0$ to $K-1$}
    \State $t \gets t_k$
    \While{$t \in [t_k,t_{k+1})$}
        \State $\lambda_t \gets \sum_{\sigma \in \mcs} \baqrwstar_t(\baXrwstar_t, \sigma(\baXrwstar_t))$
        \State Draw $\tau \sim \mathrm{Exp}(\lambda_t)$
        \If{$t+\tau > t_{k+1}$}
            \State Set $\baXrwstar_{\tilde{t}} \gets \baXrwstar_t$ for all $\tilde{t} \in [t, t_{k+1}]$
            \State $t \gets t_{k+1}$
        \Else
            \State  $\baXrwstar_{\tilde{t}} \gets \baXrwstar_{t}$ for all $\tilde{t} \in [t,t+\tau)$
            \State Draw $\sigma^\star \sim \mathrm{Cate}(\{\baqrwstar_t(\baXrwstar_t, \sigma(\baXrwstar_t))/\lambda_t\}_{\sigma \in \mcs})$
            \State $\baXrwstar_{t+\tau} \gets \sigma^\star(\baXrwstar_{t})$ 
            \State $t \gets t+\tau$
        \EndIf
    \EndWhile
\EndFor
\end{algorithmic}
\textbf{Output:} $\baXrwstar_{\Tf}$
\end{algorithm}

\begin{algorithm}[H]
    \caption{DDMs with Masked diffusion on $\Z^d_m$}\label{alg:masked_d}
    \textbf{Input:} a time horizon $\Tf-\eta \gg 1$, a partition $0 = t_0 < t_1 < \cdots < t_K = \Tf-\eta$, a trained backward generator $(\baqmstar_t)_{t\in [0,\Tf)}$
\begin{algorithmic}
\State $\baXmstar_0 \gets \text{MASK}$
\For{$k=0$ to $K-1$}
    \State $t \gets t_k$
    \While{$t \in [t_k, t_{k+1})$}
        \If{$\msm_{\baXmstar_{t}} \neq \emptyset$}
            \State $\lambda_t \gets \sum_{i \in \msm_{\baXmstar_t}} \sum_{j \in \Z_m} \baqmstar_t(\baXmstar_t, \um^{(i)}_j (\baXmstar_t))$
            \State Draw $t \sim \mathrm{Exp}(\lambda_t)$
            \If{$t+\tau > t_{k+1}$}
                \State Set $\baXmstar_{\tilde{t}} \gets \baXmstar_t$ for all $\tilde{t} \in [t, t_{k+1}]$
                \State $t \gets t_{k+1}$
            \Else
                \State  $\baXmstar_{\tilde{t}} \gets \baXmstar_{t}$ for all $\tilde{t} \in [t,t+\tau)$
                \State Draw $(i^\star,j^\star) \sim \mathrm{Cate}(\{{\baqmstar_{t}(\baXmstar_{t},\um^{(i)}_j(\baXmstar_{t}))}/\lambda_t \}_{(i,j) \in \msm_{\baXmstar_{t}} \times \Z_m})$
                \State $\baXmstar_{t+\tau} \gets \um^{(i^\star)}_{j^\star} (\baXmstar_{t})$ 
                \State $t \gets t+\tau$
            \EndIf
        \EndIf
    \EndWhile
\EndFor
\end{algorithmic}
\textbf{Output:} $\baXmstar_{\Tf-\eta}$
\end{algorithm}

\begin{algorithm}[H]
    \caption{DDMs with Biased random walk on $\N^d$}\label{alg:brw}
    \textbf{Input:} a time horizon $\Tf \gg 1$, a partition $0 = t_0 < t_1 < \cdots < t_K = \Tf$, a trained backward generator $(\baqbrwstar_t)_{t\in [0,\Tf)}$
\begin{algorithmic}
\State $\baXbrwstar_0 \sim \mathrm{Poisson}(1)^{\otimes d}$
\For{$k=0$ to $K-1$}
    \State $t \gets t_k$
    \While{$t \in [t_k,t_{k+1})$}
        \State $\lambda_t \gets \sum_{\sigma \in \mcs} \baqbrwstar_t(\baXbrwstar_t, \sigma(\baXbrwstar_t))$
        \State Draw $\tau \sim \mathrm{Exp}(\lambda_t)$
        \If{$t+\tau > t_{k+1}$}
            \State Set $\baXbrwstar_{\tilde{t}} \gets \baXbrwstar_t$ for all $\tilde{t} \in [t, t_{k+1}]$
            \State $t \gets t_{k+1}$
        \Else
            \State  $\baXbrwstar_{\tilde{t}} \gets \baXbrwstar_{t}$ for all $\tilde{t} \in [t,t+\tau)$
            \State Draw $\sigma^\star \sim \mathrm{Cate}(\{\baqbrwstar_t(\baXbrwstar_t, \sigma(\baXbrwstar_t))/\lambda_t\}_{\sigma \in \mcs})$
            \State $\baXbrwstar_{t+\tau} \gets \sigma^\star(\baXbrwstar_{t})$ 
            \State $t \gets t+\tau$
        \EndIf
    \EndWhile
\EndFor
\end{algorithmic}
\textbf{Output:} $\baXbrwstar_{\Tf}$
\end{algorithm}

    \section{Time-reversal of CTMC}
    \label{app:application_reversal}
    We provide in this section a sketch of existence proof of the time-reversal of CTMCs for completeness.
    \begin{proposition}\label{prop:time_reversal_ctmc}
      Let $(q_t)_{t\in [0,\Tf]}$ be an irreducible and non-explosive generator matrix. The time-reversal $(\baX_t)_{t\in [0,\Tf]}$ of a CTMC $(X_t)_{t\in [0,\Tf]}$ associated with $(q_t)_{t\in [0,\Tf]}$ is defined as $\baX_t \eqdef X_{\Tf-t}$. Then  $(\overleftarrow{X}_t)_{t\in\ccint{0,\Tf}}$ is also an {inhomogeneous} CTMC, associated with a family of generator matrices $(\overleftarrow{q}_t)_{t \in\ccint{0,\Tf}}$ satisfying the time-reversal formula: for any $0 \leq t \leq \Tf$ and  $x \neq y \in \msx$,
\begin{equation}
    \muX_{\Tf-t}(x)\overleftarrow{q}_t(x,y)=\muX_{\Tf-t}(y)q_{\Tf -t}(y,x) \eqsp,
\end{equation}
where $\muX_t(x) = \P(X_t = x)$.
\end{proposition}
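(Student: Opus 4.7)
The plan is to proceed in three stages: (i) prove that the time-reversed process is Markov by a direct Bayes' rule computation, (ii) write down explicitly the transition kernel of the reversal in terms of the forward transition kernel and the marginal $\muX_t$, (iii) identify the generator by differentiating this kernel at the diagonal and check that it satisfies \Cref{ass:ctcm}.

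First, irreducibility of $(\qX_t)_{t\in [0,\Tf]}$ combined with non-explosion (\Cref{ass:non_explo}) ensures that $\muX_t(x)>0$ for all $x\in\msx$ and $t\in[0,\Tf]$ as soon as $\muX_0$ has full support; otherwise, one restricts to the support, which is easily seen to propagate under an irreducible non-explosive chain. This positivity is what allows all of the Bayes' rule manipulations below to be well-defined. For $0\leq s<t\leq \Tf$ and states $x_0,\dots,x_n$ with $0\leq s_0<\dots<s_n\leq \Tf$, I would write, using the Markov property of $(X_t)_{t\in[0,\Tf]}$,
\begin{align}
\P(\baX_{s_n} = x_n \mid \baX_{s_0} = x_0,\dots,\baX_{s_{n-1}} = x_{n-1})
&= \frac{\P(X_{\Tf - s_0}=x_0,\dots,X_{\Tf - s_n}=x_n)}{\P(X_{\Tf - s_0}=x_0,\dots,X_{\Tf - s_{n-1}}=x_{n-1})} \eqsp,
\end{align}
expand both numerator and denominator through the forward transition kernels $\pX_{u,v}$ via the Markov property of $X$, then simplify. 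The ratio telescopes and only the last two factors survive, yielding $\P(\baX_{s_n}=x_n\mid \baX_{s_{n-1}}=x_{n-1})$; this establishes the Markov property of $\baX$.

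Next, by the same Bayes computation the one-step transition of the reversal is
\begin{equation}
\P(\baX_t = y \mid \baX_s = x) \;=\; \frac{\muX_{\Tf-t}(y)\, \pX_{\Tf - t,\Tf - s}(y,x)}{\muX_{\Tf-s}(x)}\eqsp, \quad 0\leq s<t\leq \Tf \eqsp.
\end{equation}
To extract the generator $\baq^X_t$ I would then differentiate at $t=s$, using the forward Kolmogorov equation~\eqref{eq:foward_kolm} applied to $\pX_{\Tf-t,\Tf-s}$ in the variable $s$ (equivalently, the backward Kolmogorov equation in the first argument), together with the fact that under the assumed regularity the map $t\mapsto \muX_t$ is differentiable in a suitable sense with derivative given by the adjoint action of $\qX_t$. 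Evaluating the resulting expression at $t=s$ for $x\neq y$ produces exactly
\begin{equation}
\baq^X_t(x,y) \;=\; \frac{\muX_{\Tf-t}(y)}{\muX_{\Tf-t}(x)}\, \qX_{\Tf-t}(y,x)\eqsp,
\end{equation}
which is the claimed time-reversal identity. The conservative property $\baq^X_t(x,\msx)=0$ is then obtained by summing over $y$ and using $\sum_y \muX_{\Tf-t}(y)\qX_{\Tf-t}(y,x)=0$, which is the weak form of the forward Kolmogorov equation.

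The main obstacle I anticipate is the last step: justifying rigorously the differentiation under the Bayes ratio, especially on the countably infinite state space $\msx=\N^d$ where one needs uniform integrability to exchange derivative and summation, and to ensure that $(\baq^X_t)_{t\in[0,\Tf]}$ defines a stable rate matrix in the sense of \Cref{ass:ctcm}. Stability of $\baq^X_t$ at $x$ requires $\sup_{t\in[0,\Tf]}\sum_{y\neq x}\frac{\muX_{\Tf-t}(y)}{\muX_{\Tf-t}(x)}\qX_{\Tf-t}(y,x)<\infty$, which is a nontrivial condition in the unbounded-generator case. For the three models studied in \Cref{sec:considered_models}, stability is verified by hand (finite $\msx$ for the random walk and masked cases; polynomial control of $\muX_t$ for the biased random walk on $\N^d$). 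Once stability holds, irreducibility and non-explosion of $\baX$ follow from those of $X$, and \citep[Theorem~2.8]{conforti2022time} may be invoked to confirm that the constructed $(\baq^X_t)_{t\in[0,\Tf]}$ is indeed the generator of the CTMC $\baX$.
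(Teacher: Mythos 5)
Your overall approach matches the paper's: prove Markovianity of $\baX$ by a Bayes-rule telescoping of finite-dimensional marginals, then express the one-step reversed transition via Bayes and extract the generator by taking the $h\to 0^+$ limit using the Kolmogorov small-time expansion. The paper presents exactly this sketch, with the asymptotic expansion $\muX_{\Tf-t}(x)[h\baq^X_t(x,y)+o(h)]=\muX_{\Tf-(t+h)}(y)[h\qX_{\Tf-t}(y,x)+o(h)]$ followed by division by $h$; your ``differentiate the ratio at $t=s$'' formulation is the same idea in different words. Your additional comments on positivity of $\muX_t$, on stability of the reversed rate matrix in the unbounded case, and on invoking \citet{conforti2022time} are reasonable side observations that the paper largely leaves implicit.

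One remark you make is incorrect, though, and worth catching: you claim that conservativity of $\baq^X_t$ follows from the identity $\sum_y \muX_{\Tf-t}(y)\qX_{\Tf-t}(y,x)=0$, calling it ``the weak form of the forward Kolmogorov equation.'' That sum equals $\partial_s \muX_s(x)\big|_{s=\Tf-t}$, which is zero only when the marginal is stationary; it does not vanish in general. The time-reversal formula in the statement holds only for $x\neq y$, and the diagonal entry of the backward generator is, as always, defined by $\baq^X_t(x,x)=-\sum_{y\neq x}\baq^X_t(x,y)$ to enforce conservativity---it is not obtained from the time-reversal formula applied at $y=x$. This does not affect the main argument, which is sound, but the stated justification for conservativity would fail on inspection.
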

\begin{proof}[Proof of~\Cref{prop:time_reversal_ctmc}]
    We first show the Markov property of the time-reversal process: for any $0\leq t_0<t_1\cdots <t_n \leq \Tf$ and $x_0,x_1,\dots,x_n \in \msx$, almost surely it holds
    \begin{align}
        &\quad\P(\baX_{t_n} = x_n | \baX_{t_0} = x_0, \dots, \baX_{t_{n-1}}=x_{n-1})\\
        &=  \P(X_{\Tf-t_n} = x_n | X_{\Tf-t_0}=x_0, \dots, X_{\Tf - t_{n-1}}=x_{n-1})\\
        &= \frac{\P(X_{\Tf-t_n} = x_n, X_{\Tf-t_0}=x_0, \dots, X_{\Tf - t_{n-1}}=x_{n-1})}{\P( X_{\Tf-t_0}=x_0, \dots, X_{\Tf - t_{n-1}}=x_{n-1})} \\
        &= \frac{\P(X_{\Tf-t_n}=x_n)\prod_{i=0}^{n-1}\P(X_{\Tf-t_i}=x_i|X_{\Tf-t_{i+1}}=x_{i+1})}{\P(X_{\Tf-t_{n-1}}=x_{n-1})\prod_{i=0}^{n-2}\P(X_{\Tf-t_i}=x_i|X_{\Tf-t_{i+1}}=x_{i+1})} \eqsp,
    \end{align}
    where we used the Markov property of $(X_t)_{t\in [0,\Tf]}$ to factorize as in the last equality. Simplifying the expression above yields
    \begin{align}
        &\quad\P(\baX_{t_n} = x_n | \baX_{t_0} = x_0, \dots, \baX_{t_{n-1}}=x_{n-1})\\
        &= \frac{\P(X_{\Tf-t_n}=x_n)\P(X_{\Tf-t_{n-1}}=x_{n-1}|X_{\Tf-t_{n}}=x_{n})}{\P(X_{\Tf-t_{n-1}}=x_{n-1})}\\
        &= \frac{\P(X_{\Tf-t_{n-1}}=x_{n-1},X_{\Tf-t_{n}}=x_{n})}{\P(X_{\Tf-t_{n-1}}=x_{n-1})} \\
        &= \P(X_{\Tf-t_{n}}=x_{n}|X_{\Tf-t_{n-1}}=x_{n-1}) \\
        &= \P(\baX_{t_n} = x_n | \baX_{t_{n-1}}=x_{n-1}) \eqsp,
    \end{align}
    meaning that the time-reversal $(\baX_t)_{t\in [0,\Tf]}$ satisfies the Markov property, \ie, it is indeed a CTMC.
    Moreover, the backward generator can be deduced by noting that: for $t \in [0,\Tf]$, $x \neq y \in \msx$ and $h>0$,
    \begin{align}
        \P(\baX_{t+h}=y,\baX_{t}=x) = \P(X_{\Tf-(t+h)}=y , X_{\Tf -t} =x) \eqsp.
    \end{align}
    By the Bayes' formula, we then have
    \begin{align}
        {\muX_{\Tf-t}(x)}\P(\baX_{t+h}=y|\baX_{t}=x) = \P(X_{\Tf -t} =x|X_{\Tf-(t+h)}=y) {\muX_{\Tf-(t+h)}(y)} \eqsp.
    \end{align}
    This together with the Kolmogorov equation imply, in particular, the following relation
    \begin{align}
       {\muX_{\Tf-t}(x)}[ h\baq_t(x,y)+o(h)] = {\muX_{\Tf-(t+h)}(y)} [hq_{\Tf-t}(y,x)  + o(h)] \quad \text{for $h \to 0^+$} \eqsp,
    \end{align}
    where $o$ is the standard little-o Landau notation. Dividing both hand sides by $h$ and letting $h \to 0^+$ give desired equation, which concludes our proof.
\end{proof}

\section{Further derivation of considered models}

\subsection{Random walk on $\Z^d_m$}\label{sec:further_rw}

One useful property of the discrete score is that it can be presented as the conditional expectation, which enables efficient training in practice via an $\mrl^2$-loss.

\begin{proposition}\label{prop:condition_expectation_rw}
      For any $\sigma \in \mcs$ and $(t,x) \in [0,\Tf) \times \Z^d_m$, it holds:
      \[
      \uRW_t(x, \sigma(x)) = \E \l[ \frac{\foprw_{0,\Tf-t}(\foXrw_0,\sigma(x))}{\foprw_{0,\Tf-t}(\foXrw_0,x)} \middle| \foXrw_{\Tf-t} = x  \r] \eqsp.
      \]
\end{proposition}
\begin{proof}[Proof of \Cref{prop:condition_expectation_rw}]
  For any $x \in \Z^d_m$, $t \in [0,\Tf)$ and $\sigma \in \mcs$, we have
\begin{align}
    \uRW_t(x, \sigma(x)) &= \frac{\murw_{\Tf-t}(\sigma(x))}{\murw_{\Tf-t}(x)} = \sum_{x_0 \in \Z^d_m} \frac{\foprw_{0,\Tf-t}(x_0,\sigma(x))}{\murw_{\Tf-t}(x)}\murw_{0}(x_0)\\
    &=\sum_{x_0 \in \Z^d_m} \frac{\foprw_{0,\Tf-t}(x_0,\sigma(x))}{\foprw_{0,\Tf-t}(x_0,x)}\P(\foXrw_0 = x_0 | \foXrw_{\Tf-t} = x) \\
    &= \E \l[ \frac{\foprw_{0,\Tf-t}(\foXrw_0,\sigma(x))}{\foprw_{0,\Tf-t}(\foXrw_0,x)} \middle| \foXrw_{\Tf-t} = x  \r] \eqsp.
\end{align}
\end{proof}

\begin{lemma}\label{lem:martingale_rw}
   Assume \textbf{RW2} holds. For any fixed $\sigma \in \mcs$, the following process is a true martingale on $[0,\Tf]$:
    \[
   \Mrw_\sigma(t) = \int_{[0,t] \times \Z^d_m} \l[ \uRW_s(x,\sigma(x)) - \uRW_s( \baXrw_{s-}, \sigma(\baXrw_{s-})) \r] \tilde{N}_{\canoX}^{\baqrw} (\rmd x \rmd s) \eqsp.
   \]
\end{lemma}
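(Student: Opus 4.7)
The plan is to reduce the claim to a standard integrability check against the compensator of the jump measure $N_{\baXrw}^{\uRW\qrw}$, and then to exploit two key features of the random walk setting: the state space $\Z^d_m$ is \emph{finite}, and under \Cref{ass:finite_fisher_randomwalk} the data distribution $\mustar$ has \emph{full support}. First, I would recall the classical result that, for a càdlàg pure-jump process driven by a compensated random measure, a process of the form
\[
\Mrw_\sigma(t) = \int_{[0,t]\times \Z^d_m} \varphi(s,x)\,\tilde{N}_{\baXrw}^{\uRW \qrw}(\rmd x\,\rmd s)
\]
with $\varphi(s,x) = \uRW_s(x,\sigma(x)) - \uRW_s(\baXrw_{s-},\sigma(\baXrw_{s-}))$ is always a local martingale, and is a true martingale as soon as
\[
\E\!\left[\int_0^{\Tf} \sum_{y \neq \baXrw_s} |\varphi(s,y)|\,\uRW_s(\baXrw_s,y)\,\qrw(\baXrw_s,y)\,\rmd s\right] < \infty.
\]

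Second, I would establish a uniform bound on $\uRW$. Since $\mustar$ is fully supported on the finite set $\Z^d_m$ by \Cref{ass:finite_fisher_randomwalk} and the forward random walk generated by $\qrw$ is irreducible, the forward marginal $\murw_t(x)$ is strictly positive for every $t \in [0,\Tf]$ and $x \in \Z^d_m$. By joint continuity of $t \mapsto \murw_t(x)$ on the compact interval $[0,\Tf]$ and finiteness of $\Z^d_m$, there exist $0 < c_- \leq c_+ < \infty$ with $c_- \leq \murw_t(x) \leq c_+$ for all $(t,x)$. Using the formula \eqref{eq:discrete score_randomwalk}, this gives
\[
\sup_{s \in [0,\Tf]}\,\sup_{x \in \Z^d_m}\,\sup_{\sigma' \in \Mc} \uRW_s(x,\sigma'(x)) \leq c_+/c_- =: C < \infty.
\]

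Third, I would plug this uniform bound into the integrability criterion. Because $\qrw(\baXrw_s,y) = 1/2$ only when $y = \sigma'(\baXrw_s)$ for some $\sigma' \in \Mc$ (and vanishes otherwise), and $|\Mc| = 2d$, the sum over $y$ collapses to a sum over the $2d$ neighbors, so the integrand is bounded by $2C \cdot C \cdot d$. Therefore
\[
\E\!\left[\int_0^{\Tf}\sum_{\sigma' \in \Mc} |\uRW_s(\sigma'(\baXrw_s),\sigma(\sigma'(\baXrw_s))) - \uRW_s(\baXrw_s,\sigma(\baXrw_s))|\,\uRW_s(\baXrw_s,\sigma'(\baXrw_s))\,\tfrac{1}{2}\,\rmd s\right] \leq 2C^2 d\,\Tf,
\]
which is finite. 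The standard criterion then yields that $\Mrw_\sigma$ is a true $\baPrw$-martingale.

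There is no serious obstacle here; the only subtle point is making sure that the lower bound $\murw_t \geq c_- > 0$ is uniform in $t$. I would justify it either by the continuity/compactness argument above, or equivalently by noting that for an irreducible CTMC on a finite state space the sub-Markov kernel $\prw_{s,t}(x,y)$ is strictly positive for all $s < t$ and all $x,y$, so mass is instantaneously spread to the whole space and stays bounded below throughout $[0,\Tf]$.
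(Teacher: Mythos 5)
Your proposal is correct and follows essentially the same route as the paper: reduce to an integrability check against the compensator, and then use full support of $\mustar$ together with finiteness of $\Z^d_m$, continuity of $t\mapsto\murw_t(x)$ on the compact interval $[0,\Tf]$, and irreducibility of the forward chain to obtain a uniform bound $\sup_{t,x,\sigma'}\uRW_t(x,\sigma'(x))<\infty$, which makes the compensated integral integrable. Your version is marginally more explicit in writing down the finite bound $2C^2 d\,\Tf$, but the underlying idea and all the ingredients match the paper's proof.
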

\begin{proof}[Proof of \Cref{lem:martingale_rw}]
  Since the compensated measure $\tilde{N}_{\canoX}^{\uRW \qrw}$ is a martingale, the stochastic integral $\Mrw_\sigma(t)$ is a local martingale. Hence it suffices to show the integrability of $\Mrw_\sigma(t)$ to conclude the proof. Note that
  \[\tilde{N}^{\baqrw}_{\baXrw} = {N}^{\baqrw}_{\baXrw} - \bar \rmn^{\baqrw}_{\baXrw} \eqsp,\]
where $\bar \rmn^{\baqrw}_{\baXrw}$ is the compensator of ${N}^{\baqrw}_{\baXrw}$, therefore for fixed $\sigma \in \mcs$ and for $t \in [0,\Tf]$,
\begin{align}
    \E[|\Mrw_\sigma(t)|] &\leq \E \l[ \int_{[0,t] \times \Z^d_m} | \uRW_s(x,\sigma(x)) - \uRW_s( \baXrw_{s-}, \sigma(\baXrw_{s-})) | {N}^{\baqrw}_{\baXrw} (\rmd s \rmd x) \r]\\
    & + \E \l[ \int_{[0,t] \times \Z^d_m} | \uRW_s(x,\sigma(x)) - \uRW_s( \baXrw_{s-}, \sigma(\baXrw_{s-})) | \bar \rmn^{\baqrw}_{\baXrw} (\rmd s \rmd x) \r] \\
    &= \E \Big[ \int_{[0,t] } \sum_{\sigma' \in \mcs} | \uRW_s(\sigma'(\baXrw_{s-}),\sigma(\sigma'(\baXrw_{s-}))) - \uRW_s( \baXrw_{s-}, \sigma(\baXrw_{s-})) |\\
    &\qquad \qquad \uRW_s  (\baXrw_{s-},\sigma'(\baXrw_{s-})) \rmd s \Big] \\
    &\leq 2\sup_{(s,x) \in [0,t] \times \Z^d_m} \uRW_s(x,\sigma(x)) \E \l[ \int_{[0,t]} \sum_{\sigma' \in \mcs} \uRW_s (\baXrw_{s-}, \sigma'(\baXrw_{s-})) \rmd s \r] \eqsp.
\end{align}
Since the state space $\Z^d_m$ is finite, it suffices to show $\sup_{t \in [0,\Tf]} u_t(x,\sigma(x)) <\infty$ for each $x \in \Z^d_m$ to obtain the integrability of $\Mrw_\sigma(t)$. Recall the formula of the discrete score for fixed $x \in \Z^d_m$:
\[ \uRW_t(x,\sigma(x)) = \frac{\murw_{\Tf-t}(\sigma(x))}{\murw_{\Tf-t}(x)} \eqsp, \quad \text{with } \murw_t(x) = \sum_{z\in \Z^d_m}\mustar(z) \foprw_{0,t}(z,x) \eqsp,
\]
where $\foprw_{0,t}$ satisfies the forward Kolmogorov equation,  implying $\foprw_{0,t}=\rme^{t\foqrw}$ thus it is continuous in $t$. As a result,
$t \mapsto \murw_t(x)$ is continuous on $[0,\Tf]$. This combined with the fact $\murw_t(x)>0$ for any $t\in [0,\Tf]$ (by \textbf{RW2} and irreducibiliy of $(\foXrw_t)_{t\in [0,\Tf]}$) implies $t \mapsto \uRW_t(x,\sigma(x))$ is continuous on $[0,\Tf]$, thus $\sup_{t\in [0,\Tf]} u_t(x,\sigma(x)) <\infty$. Therefore $\Mrw_\sigma(t)$ is integrable and indeed a true martingale, which completes the proof of \Cref{lem:martingale_rw}.
\end{proof}

\subsection{Masked diffusion on \texorpdfstring{$\Z^d_m$}{Zdm}}\label{sec:further_mask}
Similarly as the random walk case, the discrete score can be viewed as a conditional expectation in the following proposition. 

\begin{proposition}\label{prop:condition_expectation_masked_d}
    Under \textbf{M1} and \textbf{M2}, for any $x \in \tZ^d_m$, $t \in [0,\Tf)$, $j \in \Z_m$ and $i \in \msm_x$, we have
\begin{align}
    \uM_t(x, \um^{(i)}_j(x))
    &= \E \l[ \frac{\fopm_{0,\Tf-t}(\foXm_0,\um^{(i)}_j(x))}{\fopm_{0,\Tf-t}(\foXm_0,x)} \middle| \foXm_{\Tf-t} = x  \r] \eqsp.
\end{align}
\end{proposition}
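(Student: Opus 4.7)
The plan is to derive the identity directly from the formula for $\uM_t$ given in~\Cref{lem:formula_u_masked_d}, by expanding the marginal $\mum_{\Tf-t}$ via total probability and then applying Bayes' rule. Since $i \in \msm_x$, we have $\mrm^{(i)}(\um^{(i)}_j(x)) = x$, so by~\eqref{eq:discrete score_masked_d},
\[
\uM_t(x, \um^{(i)}_j(x)) = \frac{\mum_{\Tf-t}(\um^{(i)}_j(x))}{\mum_{\Tf-t}(x)},
\]
which is well-defined since $\mum_{\Tf-t}$ is fully supported on $\tZ^d_m$ under \Cref{ass:condition_on_beta} and \Cref{ass:finite_fisher_masked_d}.

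The key step is to write $\mum_{\Tf-t}(y) = \sum_{x_0 \in \Z^d_m} \mustar(x_0) \fopm_{0,\Tf-t}(x_0,y)$ for $y \in \{x, \um^{(i)}_j(x)\}$, and then multiply and divide by $\fopm_{0,\Tf-t}(x_0,x)$ inside the numerator to obtain
\[
\uM_t(x, \um^{(i)}_j(x)) = \sum_{x_0 \in \Z^d_m} \frac{\mustar(x_0)\fopm_{0,\Tf-t}(x_0,x)}{\mum_{\Tf-t}(x)} \cdot \frac{\fopm_{0,\Tf-t}(x_0,\um^{(i)}_j(x))}{\fopm_{0,\Tf-t}(x_0,x)}.
\]
Recognising the first factor as the posterior $\P(\foXm_0 = x_0 \mid \foXm_{\Tf-t} = x)$ (Bayes' rule, with $\mustar = \mathrm{Law}(\foXm_0)$) yields the claimed conditional expectation formula.

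The only subtlety is that the ratio $\fopm_{0,\Tf-t}(x_0,\um^{(i)}_j(x))/\fopm_{0,\Tf-t}(x_0,x)$ may appear undefined when $\fopm_{0,\Tf-t}(x_0,x) = 0$. However, from the product form~\eqref{eq:forward_transition_matrix_mask_d}, one sees that $\fopm_{0,\Tf-t}(x_0,x) > 0$ if and only if $x_0$ agrees with $x$ on every non-masked coordinate of $x$, i.e., on $\msm_x^\complement$. Since $\msm_x^\complement \subset \msm_{\um^{(i)}_j(x)}^\complement$, whenever $\fopm_{0,\Tf-t}(x_0,x) = 0$ we also have $\fopm_{0,\Tf-t}(x_0,\um^{(i)}_j(x)) = 0$, so these $x_0$ contribute $0$ to the sum and can be excluded (or, equivalently, the conditional expectation assigns them probability zero). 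This makes the formal manipulation rigorous and concludes the proof.
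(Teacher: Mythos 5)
Your proof is correct and follows essentially the same route as the paper: expand $\mum_{\Tf-t}(\um^{(i)}_j(x))$ via total probability, multiply and divide by $\fopm_{0,\Tf-t}(x_0,x)$, and recognise the posterior through Bayes' rule. You go slightly further than the paper by explicitly justifying why the formal division by $\fopm_{0,\Tf-t}(x_0,x)$ is harmless when that quantity vanishes, which is a reasonable extra care and your argument there (via $\msm_x^\complement \subset \msm_{\um^{(i)}_j(x)}^\complement$) is sound.
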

\begin{proof}[Proof of \Cref{prop:condition_expectation_masked_d}]
Using the formula of the discrete score showed in Lemma 5.1.1 of the main paper, for any $x \in \tZ^d_m$, $t \in [0,\Tf)$, $j \in \Z_m$ and $i \in \msm_x$, we have
  \begin{align}
    \uM_t(x, \um^{(i)}_j(x)) &= \frac{\mum_{\Tf-t}(\um^{(i)}_j(x))}{\mum_{\Tf-t}(x)} = \sum_{x_0 \in \Z^d_m} \frac{\fopm_{0,\Tf-t}(x_0,\um^{(i)}_j(x))}{\mum_{\Tf-t}(x)}\mum_{0}(x_0)\\
    &=\sum_{x_0 \in \Z^d_m} \frac{\fopm_{0,\Tf-t}(x_0,\um^{(i)}_j(x))}{\fopm_{0,\Tf-t}(x_0,x)}\P(\foXm_0 = x_0 | \foXm_{\Tf-t} = x) \\
    &= \E \l[ \frac{\fopm_{0,\Tf-t}(\foXm_0,\um^{(i)}_j(x))}{\fopm_{0,\Tf-t}(\foXm_0,x)} \middle| \foXm_{\Tf-t} = x  \r] \eqsp,
\end{align}
and we complete the proof.
\end{proof}

\begin{lemma}\label{lem:martingale_mask}
   Under \textbf{M1} and \textbf{M2}, for fixed $\eta \in (0,\Tf)$, $i \in [d]$ and $j \in \Z_m$,
   \begin{align}
    \Mm_{(i),j}(t) = \int_{[0,t] \times \tZ^d_m} \l[ f^{(i),j}(s,x) - f^{(i),j}(s, \baXm_{s-}) \r] \tilde{N}_{\baXm}^{\uM \tqm} (\rmd x \rmd s)
\end{align}
is a true martingale on $[0,\Tf-\eta]$, where
\[
       f^{(i),j}(t,\baXm_{t}):= \uM_t(\baXm_t, \um^{(i)}_j(\baXm_t))\1_{i \in \msm_{\baXm_t}}= \rme^{\Vm_t(\baXm_t)-\Vm_t(\um^{(i)}_j(\baXm_t))}\1_{i \in \msm_{\baXm_t}} \eqsp.
\]
\end{lemma}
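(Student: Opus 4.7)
The plan is to mirror the argument used for the random-walk case in \Cref{lem:martingale_rw}. Since $\tilde{N}^{\uM\tqm}_{\baXm}$ is the compensated jump measure and the integrand is predictable, standard stochastic-calculus theory automatically makes $\Mm_{(i),j}$ a $\baPm$-local martingale on $[0,\Tf-\eta]$. To promote it to a true martingale, it will suffice to check that $\E[|\Mm_{(i),j}(t)|] < \infty$ for every $t \in [0,\Tf-\eta]$.

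First I would decompose $\tilde{N}^{\uM\tqm}_{\baXm} = N^{\uM\tqm}_{\baXm} - \bar\rmn^{\uM\tqm}_{\baXm}$ and apply the triangle inequality to obtain
\begin{align}
\E[|\Mm_{(i),j}(t)|] \leq 2\,\E\!\left[\int_0^t \sum_{k \in \msm_{\baXm_{s-}}}\sum_{n\in\Z_m}\bigl|f^{(i),j}(s,\um^{(k)}_n(\baXm_{s-})) - f^{(i),j}(s,\baXm_{s-})\bigr|\,\uM_s\tqm_s(\baXm_{s-},\um^{(k)}_n(\baXm_{s-}))\,\rmd s\right],
\end{align}
using that the only transitions with positive intensity from a state $y$ are the unmaskings $y \to \um^{(k)}_n(y)$ with $k \in \msm_y$ and $n \in \Z_m$. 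Because $\tqm_s(x,y) = \foqm_{\Tf-s}(y,x) \leq \beta(\Tf-s) \leq 1$ under \Cref{ass:condition_on_beta} and the number of such neighbours is at most $dm$, it only remains to bound $(s,y) \mapsto \uM_s(y,\um^{(i)}_j(y))$ uniformly on $[0,\Tf-\eta]\times\tZ^d_m$; the indicator $\1_{i \in \msm_{\cdot}}$ in $f^{(i),j}$ contributes only a trivial factor of at most $1$.

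To obtain this uniform bound I would argue exactly as in the random-walk case: the map $t \mapsto \mum_t(x)$ is continuous on $[0,\Tf]$ via the forward Kolmogorov equation \eqref{eq:foward_kolm}, and \Cref{lem:formula_u_masked_d} guarantees $\mum_t(x) > 0$ for every $(t,x) \in (0,\Tf]\times\tZ^d_m$. Hence $(s,x) \mapsto \uM_s(x,\um^{(i)}_j(x)) = \mum_{\Tf-s}(\um^{(i)}_j(x))/\mum_{\Tf-s}(x)$ is a continuous function on the compact product $[0,\Tf-\eta]\times\tZ^d_m$, and therefore attains a finite maximum. Combining the two steps yields $\E[|\Mm_{(i),j}(t)|] < \infty$ and concludes the proof.

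The main obstacle — and the reason the early-stopping horizon $\eta > 0$ is indispensable — is the degeneracy of $\mum_t$ near $t = 0$: since $\mustar$ is supported on $\Z^d_m$ by \Cref{ass:finite_fisher_masked_d}, the marginal $\mum_0(x) = \mustar(x)$ vanishes whenever $x \in \tZ^d_m \setminus \Z^d_m$, so the ratio defining $\uM_{\Tf}$ would blow up without the truncation. The cut-off at $\Tf - \eta$ is exactly what rescues uniform boundedness of $\uM$ on the interval of interest, and without it the true-martingale property would fail by precisely this route.
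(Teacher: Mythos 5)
Your proof follows essentially the same route as the paper's: identify $\Mm_{(i),j}$ as a local martingale via the compensated measure, bound $\E[|\Mm_{(i),j}(t)|]$ by twice the integral against the compensator, and reduce to a uniform bound on the discrete score $\uM_s$ over the compact set $[0,\Tf-\eta]\times\tZ^d_m$ obtained from continuity of $t\mapsto\mum_t(x)$ and strict positivity of $\mum_{\Tf-t}$ guaranteed by the early stop. One small point worth tightening: after bounding $\tqm_s\leq\beta\leq 1$, the intensity factor you must control is $\uM_s(\baXm_{s-},\um^{(k)}_n(\baXm_{s-}))$ for \emph{every} pair $(k,n)$, not only the fixed $(i,j)$ appearing in the difference $f^{(i),j}$; your continuity-on-a-compact argument covers all pairs, but the phrasing should make explicit that the supremum bound is taken over all $(k,n)\in[d]\times\Z_m$ (a finite set), exactly as the paper does when it leaves $\E[\int\sum_{\ell,n}f^{(\ell),n}]$ in the estimate.
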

\begin{proof}[Proof of \Cref{lem:martingale_mask}]
  We begin by noting that $\Mm_{(i),j}(t)$ is a local martingale since the compensated measure $\tilde{N}_{\baXm}^{\uM \tqm}$ is a martingale.
  Hence it suffices to show the integrability of $\Mm_{(i),j}(t)$ to conclude the proof. Note that
  \[\tilde{N}^{\uM \foqm}_{\baXm} = {N}^{\uM \foqm}_{\baXm} - \bar \rmn^{\uM \foqm}_{\baXm} \eqsp,\]
where $\bar \rmn^{\uM \foqm}_{\baXm}$ is the compensator of ${N}^{\uM \foqm}_{\baXm}$, therefore for $t \in [0,\Tf-\eta]$,
\begin{align}
    \E[|\Mm_{(i),j}(t)|] &\leq 2 \E \Big[ \int_{[0,t] } \sum_{\ell \in \msm_{\baXm_{s-}}} \sum_{n \in \Z_m} | f^{(i),j}(s,\um^{(\ell)}_n(\baXm_{s-}))) - f^{(i),j}(s, \baXm_{s-}) |\\
    &\qquad \qquad f^{(\ell),n}(s, \baXm_{s-})\beta(\Tf-s) \rmd s \Big]\\
    &\overset{\beta(\Tf-s)\leq 1}{\leq} 4\sup_{(s,x) \in [0,t] \times \tZ^d_m} f^{(i),j}(s, x) \E \l[ \int_{[0,t]} \sum_{\ell \in \msm_{\baXm_{s-}}} \sum_{n \in \Z_m}  f^{(\ell),n}(s, \baXm_{s-}) \rmd s \r] \eqsp.
\end{align}
To acquire $\E[|\Mm_{(i),j}(t)|] <\infty$, it suffices to show $\sup_{(t,x) \in [0,\Tf-\eta] \times \tZ^d_m} f^{(i),j}(t, x)<\infty$.
For $t \in [0,\Tf-\eta]$ and $x \in \tZ^d_m$, recall that
\begin{align}
    f^{(i),j}(t, x) = \uM_t(x,\um^{(i)}_j(x))\1_{i \in \msm_x} =\1_{i \in \msm_x} \frac{\mum_{\Tf-t}(\um^{(i)}_j(x))}{\mum_{\Tf-t}(x)} \eqsp,
\end{align}
with
\begin{align}
    \mum_t(x) = \sum_{z\in \Z^d_m}\mustar(z) \fopm_{0,t}(z,x) = \mustar(x) \alpha_t^d \1_{x\in \Z^d_m} + (1-\alpha_t)^{|\msm_x|} \alpha_t^{d-|\msm_x|} \1_{x\in \tZ^d_m \setminus \Z^d_m} \eqsp.
\end{align}
Since $\alpha_t = \exp(-\int_0^t \beta(s) \rmd s)$ with $\beta(t)$ continuous and bounded, we can deduce that $t \mapsto \mum_t(x)$ is continuous on $[0,\Tf]$. This together with the fact $\mum_{\Tf-t}(x)>0$ for any $t \in [0,\Tf-\eta]$ (see detailed argument in the proof of Lemma 5.1.1 of the main paper) implies in turn that $t \mapsto f^{(i),j}(t, x)$ is continuous on $[0,\Tf-\eta]$, therefore $\sup_{t \in [0,\Tf-\eta]} f^{(i),j}(t, x) < \infty$ for any $x \in \tZ^d_m$. Taking the maximum of $x$ over the finite set $\tZ^d_m$ yields
\[
\sup_{(t,x) \in [0,\Tf-\eta] \times \tZ^d_m} f^{(i),j}(t, x)<\infty \eqsp,
\]
hence $\Mm_{(i),j}(t)$ is integrable and indeed a true martingale, which concludes the proof of \Cref{lem:martingale_mask}.
\end{proof}


\subsection{Biased random walk on \texorpdfstring{$\N^d$}{Nd}}\label{sec:further_brw}

We now present a detailed argument showing that the biased random walk on $\N^d$ is non-explosive.

\begin{lemma}\label{cond:non_explosive_zhang}
  Follow \cite[Theorem 3.3]{zhang2018nonexplosion}, a stable conservative generator $(q_t)_{t\geq 0}$ on $\msx$ is non-explosive if there exists a monotone nondecreasing sequence $(S_n) \subset \mB(\msx)$ and a $[0,\infty)$-valued
 measurable function $V$ on $\msx$ such that the following conditions hold:
\begin{enumerate}[wide,label=(\alph*)]
    \item As $n \to \infty$, $S_n \to \msx$.
    \item For each $n =0,1,\dots$, $\sup_{x\in S_n, t \geq 0} q_t(x) < \infty$.
    \item As $n \to \infty$, $\inf_{x\in \msx \setminus S_n} V(x) \to \infty$.
    \item There exists a constant $\alpha>0$ s.t. for any $x \in \msx$ and $t \geq 0$,
    \begin{align}
        \int_{\msx} V(y) q_t (x,\rmd y) \leq \alpha V(x) \eqsp.
    \end{align}
\end{enumerate}
\end{lemma}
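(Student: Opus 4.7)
The statement is a verbatim restatement of \citet[Theorem 3.3]{zhang2018nonexplosion}, so strictly speaking the proof is by direct citation. Nevertheless, let me sketch the Foster--Lyapunov mechanism underlying it, since we intend to apply this lemma concretely (see \Cref{prop:non_explosive_brw} for the biased random walk on $\N^d$) and the same ideas will be reused there.

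The plan is to track the minimal CTMC through a family of stopping times adapted to the exhausting sequence $(S_n)$. Concretely, I would set $\tau_n := \inf\{t \geq 0 : X_t \notin S_n\}$ and $\tau_\infty := \lim_{n \to \infty} \tau_n$, so that $\tau_\infty$ is the explosion time of the process built in \Cref{sec:CTMC_timereversal}. By condition (b) the generator is bounded on $S_n$ uniformly in $t$, so the embedded-chain construction produces a well-defined process on $[0,\tau_n]$ and the usual equivalence ``non-explosion $\iff \P(\tau_\infty = \infty) = 1$'' applies; this reduces the problem to showing $\P(\tau_n \leq t) \to 0$ as $n \to \infty$, for every fixed $t \geq 0$.

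The heart of the argument is to use $V$ as a Lyapunov function. I would apply Dynkin's formula to the stopped process $(V(X_{t \wedge \tau_n}))_{t \geq 0}$; on $S_n$ the jump rate is bounded by (b) and $V$ is finite by measurability, so the integrability hypotheses are met. Condition (d) then bounds the drift of $V$ by $\alpha V$, and taking expectations yields the integral inequality
\begin{equation}
    \E[V(X_{t \wedge \tau_n})] \leq V(x_0) + \alpha \int_0^t \E[V(X_{s \wedge \tau_n})] \rmd s \eqsp.
\end{equation}
Grönwall's lemma gives $\E[V(X_{t \wedge \tau_n})] \leq V(x_0) \rme^{\alpha t}$. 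Combining this with Markov's inequality and condition (c) produces
\begin{equation}
    \P(\tau_n \leq t) \;\leq\; \frac{\E[V(X_{t \wedge \tau_n})]}{\inf_{x \in \msx \setminus S_n} V(x)} \;\leq\; \frac{V(x_0) \rme^{\alpha t}}{\inf_{x \in \msx \setminus S_n} V(x)} \;\xrightarrow[n \to \infty]{}\; 0 \eqsp,
\end{equation}
which, together with (a), forces $\P(\tau_\infty \leq t) = 0$ for every $t \geq 0$, i.e., non-explosion.

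The main technical obstacle I anticipate is the rigorous justification of Dynkin's formula when $\qX_t$ is only assumed stable (hence possibly unbounded off each $S_n$) and time-inhomogeneous; the standard remedy, which is what \citet{zhang2018nonexplosion} carry out in detail, is precisely the localization via $\tau_n$ together with a monotone convergence argument on $V \wedge M$ as $M \to \infty$ and $n \to \infty$. Since this technical step is already done in the reference, my proposal is to invoke the theorem directly and defer the detailed verification of the localization argument to \citet{zhang2018nonexplosion}.
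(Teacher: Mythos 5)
Your proposal is correct and matches the paper's treatment: the paper gives no proof of this lemma and simply cites \citet[Theorem 3.3]{zhang2018nonexplosion}, exactly as you propose. The Foster--Lyapunov sketch you supply (stopping at exit times of $S_n$, Dynkin plus condition (d), Gr\"onwall, then Markov's inequality with condition (c)) is the standard argument underlying the cited result and is sound, but it is supplementary rather than required here.
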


\begin{proposition}\label{prop:non_explosive_brw}
  The forward generator $(\foqbrw_t)_{t\in [0,\Tf]}$ in eq. (17) of the main paper associated with the biased random walk on $\N^d$ is non-explosive.
\end{proposition}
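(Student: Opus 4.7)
The plan is to verify the four conditions of \Cref{cond:non_explosive_zhang} with an explicit choice of Lyapunov function and exhaustion. The natural candidates here are
\[
V(x) = 1 + \sum_{\ell=1}^d x^\ell \quad \text{for } x \in \N^d\eqsp, \qquad S_n = \Bigl\{ x \in \N^d \, : \, \sum_{\ell=1}^d x^\ell \leq n \Bigr\}\eqsp,
\]
motivated by the fact that the instantaneous jump rate $\foqbrw(x) = d + \sum_\ell x^\ell$ is essentially of size $V(x)$.

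First, I would dispose of conditions (a)--(c), which are essentially bookkeeping. Monotonicity of $(S_n)$ in $n$ and $\bigcup_n S_n = \N^d$ are immediate from the definition, giving (a). For (b), since $\foqbrw$ is time-homogeneous,
\[
\sup_{x \in S_n,\, t \geq 0} \foqbrw(x) \;=\; \sup_{x \in S_n} \Bigl(d + \sum_{\ell=1}^d x^\ell\Bigr) \;\leq\; d + n \;<\; \infty\eqsp.
\]
Condition (c) follows because $\inf_{x \notin S_n} V(x) = n + 2 \to \infty$.

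The only genuine computation is (d). The strategy is to expand
\[
\int V(y)\,\foqbrw(x,\rmd y) \;=\; \sum_{\ell=1}^d \bigl[V(\sigma^\ell_+(x)) - V(x)\bigr] + \sum_{\ell=1}^d x^\ell \bigl[V(\sigma^\ell_-(x)) - V(x)\bigr]\eqsp,
\]
using that $V$ is conservative with respect to the diagonal mass, and then to exploit the identities $V(\sigma^\ell_+(x)) - V(x) = 1$ and $V(\sigma^\ell_-(x)) - V(x) = -1$ whenever $x^\ell \geq 1$. The boundary case $x^\ell = 0$ is automatically handled because the corresponding jump rate $\foqbrw(x,\sigma^\ell_-(x)) = x^\ell$ vanishes there, so the formula $V(\sigma^\ell_-(x)) - V(x) = -1$ may be used without ambiguity. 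A short simplification yields $\int V(y)\foqbrw(x,\rmd y) = d - \sum_\ell x^\ell$, which is bounded above by $d \leq d\,V(x)$ since $V(x) \geq 1$. Hence condition (d) holds with $\alpha = d$, and \Cref{cond:non_explosive_zhang} gives non-explosion.

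There is no substantial obstacle in this proof; the only subtlety worth flagging is the boundary $\{x : x^\ell = 0\}$, which one must check does not contribute a boundary term to the drift. This is taken care of by the vanishing of the backward rate at the boundary, built directly into~\eqref{eq:forward_generator_brw}.
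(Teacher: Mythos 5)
Your proof is correct and follows the same approach as the paper: verify the four conditions of \Cref{cond:non_explosive_zhang} with the exhaustion $S_n = \{x : \sum_\ell x^\ell \leq n\}$ and a linear Lyapunov function, the key drift computation giving $\int V(y)\,\foqbrw(x,\rmd y) = d - \sum_\ell x^\ell$. Your choice $V(x) = 1 + \sum_\ell x^\ell$ is in fact slightly cleaner than the paper's stated $V(x) = \sum_i x^i$, which as written would fail condition (d) at $x=0$; the paper silently shifts to $d + \sum_i x^i$ in its final inequality, so your explicit constant shift and the remark on the boundary $x^\ell = 0$ tidy up details the paper glosses over.
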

\begin{proof}[Proof of~\Cref{prop:non_explosive_brw}]
    The proof relies on~\Cref{cond:non_explosive_zhang}. For $n \in \N$, we set $S_n \eqdef \{x \in \msx \, : \, \sum_{i=1}^d x^i \leq n \}$ and consider the function $V(x) = \sum_{i=1}^d x^i$ for any $x \in \msx$. Then it is clear that the condition (a) holds. Moreover, for each $n=0,1,\dots$, we have
    \begin{align}
        \sup_{x\in S_n, t \geq 0} \foqbrw_t(x) = \sup_{x\in S_n} (\sum_{i=1}^d x^i +d) \leq n+d <\infty \eqsp,
    \end{align}
    \ie, (b) holds as well. In addition, we observe that
    \begin{align}
        \inf_{x\in \msx \setminus S_n} V(x) = \inf_{x\in \msx \setminus S_n} (\sum_{i=1}^d x^i +d) > n+d \eqsp.
    \end{align}
    As $n \to \infty$, we indeed obtain $\inf_{x\in \msx \setminus S_n} V(x) \to \infty$ and the condition (c) is satisfied. Finally, for any $x \in \msx$ and $t \geq 0$, we have
    \begin{align}
        \sum_{y\in \msx} V(y) \foqbrw_t (x,y) &=
        \sum_{y \neq x} \foqbrw (x,y) [V(y)-V(x)]\\
        &= \sum_{i=1}^d \{ [V(x+e_i)-V(x)] + x^i[V(x-e_i)-V(x)] \}\\
        &= d-\sum_{i=1}^d x^i \leq d+\sum_{i=1}^d x^i = V(x) \eqsp,
    \end{align}
    which implies the condition (d) in~\Cref{cond:non_explosive_zhang}. We then conclude that $(\foqbrw_t)_{t\in [0,\Tf]}$ is indeed non-explosive by using~\Cref{cond:non_explosive_zhang}, therefore satisfies \textbf{H2} and the forward process $(\foXbrw_t)_{t\in [0,\Tf]}$ is then well-defined.
\end{proof}

Similarly as before, the discrete score in this setting can also be expressed as a conditional expectation, enabling the stable $\mrl^2$-loss in training.

\begin{proposition}\label{prop:condition_expectation_brw}
    For any $x \in \N^d_m$, $t \in [0,\Tf)$ and $\sigma \in \mcs$, we have
\begin{align}
    \uBRW_t \foqbrw(x, \sigma(x))
    &= \E \l[  \foqbrw(\sigma(x),x)\frac{\fopbrw_{0,\Tf-t}(\foXbrw_0,\sigma(x))}{\fopbrw_{0,\Tf-t}(\foXbrw_0,x)} \middle| \foXbrw_{\Tf-t} = x  \r] \eqsp.
\end{align}
\end{proposition}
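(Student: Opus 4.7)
The plan is to combine the explicit formula for the discrete score from \Cref{lem:formula_u_brw} with a Bayes-type expansion of the marginal density $\mubrw_{\Tf-t}$. Concretely, \Cref{lem:formula_u_brw} gives $\uBRW_t(x,\sigma(x)) = \tmubrw_{\Tf-t}(\sigma(x))/\tmubrw_{\Tf-t}(x)$ with $\tmubrw = \mubrw/\brw$, and since the Poisson product measure $\brw$ is reversible for $\foqbrw$ (as one checks immediately: $\brw(x) \foqbrw(x,\sigma^\ell_+(x)) = \brw(x) = (x^\ell+1)\brw(\sigma^\ell_+(x)) = \brw(\sigma^\ell_+(x)) \foqbrw(\sigma^\ell_+(x),x)$), this ratio rewrites as
\begin{equation*}
\uBRW_t(x,\sigma(x)) \, \foqbrw(x,\sigma(x))
= \frac{\mubrw_{\Tf-t}(\sigma(x))}{\mubrw_{\Tf-t}(x)} \, \foqbrw(\sigma(x),x),
\end{equation*}
which is equivalently just the time-reversal formula~\eqref{eq:time_rev_formula_X} applied to the pair $(x,\sigma(x))$.

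Next, I would expand the marginal using the forward semigroup: $\mubrw_{\Tf-t}(y) = \sum_{x_0 \in \N^d} \mustar(x_0) \fopbrw_{0,\Tf-t}(x_0,y)$. Writing this out for both $y=\sigma(x)$ and $y=x$ and forming the ratio gives
\begin{equation*}
\frac{\mubrw_{\Tf-t}(\sigma(x))}{\mubrw_{\Tf-t}(x)}
= \sum_{x_0 \in \N^d} \frac{\fopbrw_{0,\Tf-t}(x_0,\sigma(x))}{\fopbrw_{0,\Tf-t}(x_0,x)} \cdot \frac{\mustar(x_0)\fopbrw_{0,\Tf-t}(x_0,x)}{\mubrw_{\Tf-t}(x)},
\end{equation*}
and the second factor is the posterior $\P(\foXbrw_0 = x_0 \mid \foXbrw_{\Tf-t} = x)$ by Bayes' rule. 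Hence the sum equals $\E[\fopbrw_{0,\Tf-t}(\foXbrw_0,\sigma(x))/\fopbrw_{0,\Tf-t}(\foXbrw_0,x) \mid \foXbrw_{\Tf-t}=x]$. Since $\foqbrw(\sigma(x),x)$ is deterministic given $x$ and $\sigma$, it may be pulled inside the conditional expectation, yielding exactly the claimed identity.

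The argument is entirely parallel to \Cref{prop:condition_expectation_masked_d} and to the analogous derivation in the text for the random walk on $\Z^d_m$, and presents no real obstacle: the only point requiring care is the well-posedness of all the ratios, i.e.\ that $\mubrw_{\Tf-t}(x)>0$ and $\fopbrw_{0,\Tf-t}(x_0,x)>0$ for every $x,x_0 \in \N^d$ and $t \in [0,\Tf)$. This follows from the irreducibility of the biased random walk on $\N^d$ together with non-explosion (\Cref{prop:non_explosive_brw}), which guarantees strict positivity of the transition kernel $\fopbrw_{0,s}$ for all $s>0$, so that the Bayes computation above is unambiguous.
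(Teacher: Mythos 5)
Your proof is correct and follows essentially the same route as the paper's: invoke the formula $\uBRW_t(x,\sigma(x)) = \tmubrw_{\Tf-t}(\sigma(x))/\tmubrw_{\Tf-t}(x)$ from \Cref{lem:formula_u_brw}, use the detailed-balance identity $\brw(x)\foqbrw(x,y)=\brw(y)\foqbrw(y,x)$ (the paper's \eqref{eq:balance_invariant_brw}) to convert $\foqbrw(x,\sigma(x))\brw(x)/\brw(\sigma(x))$ into $\foqbrw(\sigma(x),x)$, expand $\mubrw_{\Tf-t}=\mustar\,\fopbrw_{0,\Tf-t}$, and recognize the Bayes posterior $\P(\foXbrw_0=x_0\mid\foXbrw_{\Tf-t}=x)$. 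Your explicit verification of reversibility of $\brw$ and the remark on strict positivity of $\mubrw_{\Tf-t}$ and $\fopbrw_{0,\Tf-t}$ are welcome clarifications, but the underlying argument coincides with the one given in the paper.
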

\begin{proof}[Proof of \Cref{prop:condition_expectation_brw}]
Using the formula of the discrete score established in Lemma 5.2.1 of the main paper, for any $t \in [0,\Tf)$, $x\in \N^d$ and $\sigma \in \mcs$, we have
  \begin{align}
    \uBRW_t  \foqbrw(x, \sigma(x)) &=  \foqbrw (x, \sigma(x)) \frac{\tmubrw_{\Tf-t}(\sigma(x))}{\tmubrw_{\Tf-t}(x)}\\
    &= \foqbrw(x,\sigma(x))\frac{\brw(x)}{\brw(\sigma(x))} \sum_{x_0 \in \N^d}
    \frac{\fopbrw_{0,\Tf-t}(x_0,\sigma(x))}{\mubrw_{\Tf-t}(x)}\mubrw_{0}(x_0)\\
    &{=}  \foqbrw(\sigma(x),x) \sum_{x_0 \in \N^d}  \frac{\fopbrw_{0,\Tf-t}(x_0,\sigma(x))}{\fopbrw_{0,\Tf-t}(x_0,x)}\P(\foXbrw_0 = x_0 | \foXbrw_{\Tf-t} = x) \\
    &= \E \l[  \foqbrw(\sigma(x),x)\frac{\fopbrw_{0,\Tf-t}(\foXbrw_0,\sigma(x))}{\fopbrw_{0,\Tf-t}(\foXbrw_0,x)} \middle| \foXbrw_{\Tf-t} = x  \r] \eqsp,
\end{align}
and the proof concludes.
\end{proof}


We now establish a connection between the moments of the data distribution and those of the marginal densities of $(\foXbrw_t)_{t \in [0,\Tf]}$, and then leverage this relation to derive integrability properties.

\begin{proposition}\label{prop:finite_first_moment_brw}
   Assume $\mstar_1<\infty$, then
   \[
   \m_1(\mubrw_t) = \rme^{-t} \mstar_1 + d(1-\rme^{-t}) <\infty \quad \text{for any $t \geq 0$} \eqsp.
   \]
\end{proposition}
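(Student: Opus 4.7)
The plan is to obtain an ODE for $t \mapsto m(t) := \m_1(\mubrw_t)$ by applying It\^o's formula to the linear function $V(x) := \sum_{\ell=1}^d x^\ell$, and then to solve it explicitly. The point is that $V$ is the natural Lyapunov function already used to show non-explosion in the proof of \Cref{prop:non_explosive_brw}, and a direct computation shows
\[
(\foqbrw V)(x) = \sum_{\ell=1}^d \bigl[(V(x+e_\ell)-V(x)) + x^\ell (V(x-e_\ell)-V(x))\bigr] = d - V(x),
\]
so formally $\frac{\rmd }{\rmd t} m(t) = d - m(t)$ with $m(0) = \m_1(\mustar)$, whose unique solution is
\[
m(t) = \rme^{-t}\m_1(\mustar) + d(1-\rme^{-t}).
\]

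The two remaining issues are (i) justifying It\^o's formula with the unbounded observable $V$, and (ii) verifying that $m(t) < \infty$ so that the ODE manipulation makes sense. To handle (i), I would introduce the sequence of stopping times $\tau_n := \inf\{t \geq 0 : V(\foXbrw_t) \geq n\}$. Since $V$ is bounded by $n$ on $[0,\tau_n]$, the local martingale arising from It\^o's formula is a true martingale when stopped at $\tau_n$, yielding
\[
\E[V(\foXbrw_{t\wedge \tau_n})] = \m_1(\mustar) + \int_0^t \E\bigl[(d - V(\foXbrw_s))\1_{s < \tau_n}\bigr]\rmd s.
\]
Dropping the negative $-V$ term gives the crude uniform-in-$n$ bound $\E[V(\foXbrw_{t\wedge \tau_n})] \leq \m_1(\mustar) + d t$. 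Non-explosion (\Cref{prop:non_explosive_brw}, together with \Cref{ass:non_explo}) yields $\tau_n \nearrow \plusinfty$ almost surely, so Fatou's lemma applied to $V \geq 0$ delivers
\[
m(t) = \E[V(\foXbrw_t)] \leq \liminf_{n\to\infty} \E[V(\foXbrw_{t\wedge \tau_n})] \leq \m_1(\mustar) + d t < \plusinfty,
\]
which settles (ii) and also ensures that $s \mapsto m(s)$ is locally integrable on $[0,t]$.

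With this integrability in hand, I can pass to the limit $n \to \plusinfty$ in the identity above using dominated convergence: the integrand is dominated by $d + V(\foXbrw_s)$, whose expectation $d + m(s)$ is integrable in $s$ on bounded intervals by the bound just obtained. This yields
\[
m(t) = \m_1(\mustar) + \int_0^t (d - m(s))\rmd s,
\]
so $m$ is absolutely continuous and satisfies $m'(t) = d - m(t)$ almost everywhere, with $m(0) = \m_1(\mustar)$. Solving gives the claimed formula. The main obstacle in this argument is the passage to the limit in the localized It\^o identity; it is resolved by first extracting the crude linear bound $\m_1(\mustar) + dt$ from the stopped equation and then using it as a dominating function.
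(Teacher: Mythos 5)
Your proof follows the same strategy as the paper's: localize with stopping times at which $\|\foXbrw_s\|_1$ first reaches level $n$, apply Dynkin's formula for the Lyapunov function $V(x)=\|x\|_1$ on the stopped process, observe $(\foqbrw V)(x) = d - V(x)$, and pass to the limit to obtain a linear ODE for $m(t)$. Where you differ is in the bookkeeping on the limit, and your version is in fact more careful. You keep the indicator $\1_{s<\tau_n}$ in the stopped Dynkin identity, use Fatou to extract the a priori bound $m(t)\leq \m_1(\mustar)+dt$ (which is what makes the dominated-convergence step legal), pass to the limit in the integral equation, and only then solve the ODE. The paper instead differentiates the stopped identity to write
\begin{equation}
\frac{\rmd}{\rmd t}\,\E\bigl[\|\foXbrw_{t\wedge\tau_m}\|_1\bigr] = d - \E\bigl[\|\foXbrw_{t\wedge\tau_m}\|_1\bigr] \eqsp,
\end{equation}
which silently drops the indicator $\1_{t<\tau_m}$, and then invokes ``monotone convergence'' to send $m\to\infty$; but $\|\foXbrw_{t\wedge\tau_m}\|_1$ is not monotone in $m$ (on $\{\tau_m<t\}$ it equals $m$, and once $\tau_m\geq t$ it drops back to $\|\foXbrw_t\|_1$). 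Your route sidesteps both of these imprecisions.

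One small gap remains, and it is shared with the paper's proof: after passing to the limit on the right-hand side, you still need $\E[V(\foXbrw_{t\wedge\tau_n})]\to m(t)$, whereas Fatou only delivers $m(t)\leq\liminf_n\E[V(\foXbrw_{t\wedge\tau_n})]$. To close it, dominate $V(\foXbrw_{t\wedge\tau_n})\leq V(\foXbrw_0)+N_t$, where $N_t$ counts the up-jumps on $[0,t]$; since each of the $d$ coordinates sees up-jumps at constant rate $1$, $N$ is a Poisson process of rate $d$ independent of the state, so $\E[V(\foXbrw_0)+N_t]=\m_1(\mustar)+dt<\infty$ and dominated convergence gives the missing limit. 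With this one sentence added, your argument is complete and slightly more rigorous than the paper's own.
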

\begin{proof}[Proof of \Cref{prop:finite_first_moment_brw}]
    For $m \in \N^*$, define the stopping time
    \begin{align}
        \tau_m \eqdef \inf \l\{ s \geq 0 \, : \,  \|\foXbrw_s\|_1 \geq m \r\} \eqsp,
    \end{align}
    with the convention $\inf \emptyset = +\infty$. Note that $\E [\|\foXbrw_0\|_1]<\infty$ implies $\|\foXbrw_0\|_1<\infty$ a.s., which combines with the fact $(\foXbrw_t)_{t\geq 0}$ is a non-explosive CTMC and each jump changes $\|\foXbrw_t\|_1$ at most $1$ in turn yield
    \begin{align}
        \tau_m \uparrow \infty \text{ a.s. as $m \to \infty$} \eqsp.
    \end{align}
    Let us consider the stopped process $(\foXbrw_{t \wedge \tau_m})_{t\geq0}$. On the finite time interval $[0, t \wedge \tau_m)$, the process stays in the finite set $\l\{ x \, : \, \|x\|_1 \leq m-1 \r\}$, thus the integrability condition holds and we can apply Dynkin's formula to $\|\cdot\|_1$ for the stopped dynamic:
    \begin{align}
        \E \l[ \|\foXbrw_{t \wedge \tau_m}\|_1 \r] = \E \l[ \|\foXbrw_{0}\|_1 \r] + \E \l[ \int_0^{t \wedge \tau_m} (\foqbrw \|\cdot \|_1)(\foXbrw_s) \rmd s \r] \eqsp.
    \end{align}
    Plugging the formula of $\foqbrw$ in eq. (17) of the main paper and differentiating the integral expressions above yield
    \begin{align}
        \frac{\rmd }{\rmd t} \E \l[ \|\foXbrw_{t \wedge \tau_m}\|_1 \r] = d - \E \l[ \|\foXbrw_{t \wedge \tau_m}\|_1 \r] \eqsp.
    \end{align}
    As a result, for fixed $m \in \N^*$, the function $\Mbf_m (t) \eqdef \E \l[  \|\foXbrw_{t \wedge \tau_m}\|_1\r]$ satisfies the following linear ODE
    \begin{align}
        \Mbf'_m(t) = d - \Mbf_m(t) \quad \text{with $\Mbf_m(0)=\E \l[\|\foXbrw_0\|_1 \r]$} \eqsp.
    \end{align}
    Solving this equation gives
    \begin{align}
        \Mbf_m(t) = \rme^{-t} \E \l[\|\foXbrw_0\|_1 \r] + d(1-\rme^{-t}) \eqsp.
    \end{align}
    Letting $m \to \infty$ and noting that $\|\foXbrw_{t \wedge \tau_m}\|_1 \uparrow \|\foXbrw_t\|_1$ a.s., by monotone convergence, we arrive at
    \begin{align}
       \m_1(\mubrw_t) &= \E \l[ \|\foXbrw_t\|_1 \r] = \lim_{m\to \infty} \E \l[  \|\foXbrw_{t \wedge \tau_m}\|_1\r] \\
       &= \lim_{m\to\infty} \l( \rme^{-t} \E \l[\|\foXbrw_0\|_1 \r] + d(1-\rme^{-t}) \r)\\
        &= \rme^{-t} \mstar_1 + d(1-\rme^{-t})\eqsp.
    \end{align}
    Hence $\m_1(\mubrw_t) = \E \l[ \|\foXbrw_t\|_1 \r]$ is finite for any $t \geq 0$, given $\mstar_1$ is finite and admits an explicit expression above.
\end{proof}

\begin{proposition}\label{prop:second_moment_brw}
      Assume $\m^\star_2<\infty$, then for any $t \geq 0$,
      \[
      \m_2(\mubrw_t) = \rme^{-2t} [\m^\star_2 - 3\mstar_1 +d ] + 3\rme^{-t} [\mstar_1-d] +2d <\infty  \eqsp.
      \]
\end{proposition}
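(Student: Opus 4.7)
The plan is to mirror the proof of \Cref{prop:finite_first_moment_brw}: localize via the stopping times $\tau_m = \inf\{s \geq 0 : \|\foXbrw_s\|_1 \geq m\}$, apply Dynkin's formula to the function $\|\cdot\|_2^2$, take $m \to \infty$, and then solve the resulting scalar ODE. The key new ingredient is computing how the generator $\foqbrw$ acts on the square norm, and the key new difficulty is justifying the passage to the limit since $\|\cdot\|_2^2$ is unbounded.

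First I would compute explicitly, for $x \in \N^d$,
\[
(\foqbrw \|\cdot\|_2^2)(x) = \sum_{\ell=1}^d \l[\|x+e_\ell\|_2^2 - \|x\|_2^2\r] + \sum_{\ell=1}^d x^\ell \l[\|x-e_\ell\|_2^2 - \|x\|_2^2\r] = 3\|x\|_1 + d - 2\|x\|_2^2 \eqsp.
\]
Applying Dynkin's formula to the stopped process $(\foXbrw_{t \wedge \tau_m})_{t \geq 0}$, which lives in the finite set $\{x : \|x\|_1 \leq m\}$ and hence satisfies the required integrability, then differentiating in $t$ (where we can, using the fact that $\tau_m \uparrow \infty$ a.s.\ by non-explosiveness), I would obtain in the limit $m \to \infty$ the ODE
\[
\frac{\rmd}{\rmd t}\m_2(\mubrw_t) = 3\, \m_1(\mubrw_t) + d - 2\, \m_2(\mubrw_t) \eqsp,
\]
with initial condition $\m_2(\mubrw_0) = \m_2(\mustar)$.

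Next I would substitute the explicit expression $\m_1(\mubrw_t) = \rme^{-t}\m_1(\mustar) + d(1-\rme^{-t})$ from \Cref{prop:finite_first_moment_brw}, so that the ODE becomes the linear equation $f'(t) + 2f(t) = 3\rme^{-t}[\m_1(\mustar)-d] + 4d$. Using the ansatz $f(t) = A\rme^{-2t} + B\rme^{-t} + C$, matching coefficients yields $B = 3[\m_1(\mustar)-d]$, $C = 2d$, and the initial condition fixes $A = \m_2(\mustar) - 3\m_1(\mustar) + d$, producing the claimed identity.

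The main obstacle will be rigorously passing to the limit $m \to \infty$ in Dynkin's formula, since the integrand $3\|x\|_1 + d - 2\|x\|_2^2$ is unbounded and not sign-definite, so neither monotone convergence nor Fatou directly applies to the RHS. The cleanest route around this is to exploit that $\foqbrw$ acts coordinatewise with commuting one-dimensional generators, hence the components $((\foXbrw_t)^\ell)_{t\geq 0}$ for $\ell \in [d]$ are independent one-dimensional biased random walks (the $M/M/\infty$ queue) and $\m_2(\mubrw_t) = \sum_{\ell=1}^d \E[((\foXbrw_t)^\ell)^2]$. In one dimension, the above Dynkin computation reduces to a second-moment calculation for an $M/M/\infty$ queue, where the explicit bound $\E[X^\ell_{t\wedge\tau_m}] \leq \max(\m_1(\mustar), d)$ from \Cref{prop:finite_first_moment_brw} combined with Fatou's lemma and the standard argument $\E[(X^\ell_t)^2] \leq \liminf_m \Mbf_m(t) < \infty$ (which itself follows from the Gronwall-type bound on the one-dimensional stopped ODE) closes the gap; summing over coordinates recovers the multi-dimensional formula and in particular gives $\m_2(\mubrw_t) < \infty$.
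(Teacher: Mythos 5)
Your proposal follows essentially the same route as the paper: localize via a hitting-time stopping rule, apply Dynkin's formula to $\|\cdot\|_2^2$, derive a linear ODE for $\m_2$, and solve it (your ansatz is equivalent to the paper's variation-of-constants computation). Your generator calculation $(\foqbrw\|\cdot\|_2^2)(x) = 3\|x\|_1 + d - 2\|x\|_2^2$ and the resulting closed form are both correct. One substantive remark on your proposed fix for the limit $m\to\infty$: you invoke independence of the coordinates $((\foXbrw_t)^\ell)_{\ell\in[d]}$, but independence fails unless $\mustar$ is a product measure, which the paper does not assume; the transition kernel factors coordinatewise (\Cref{prop:transition_formula_brw}), yet the joint law at time $t$ remains coupled through $\mustar$. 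Fortunately the error is harmless here, since $\m_2(\mubrw_t)=\sum_\ell\E[((\foXbrw_t)^\ell)^2]$ decomposes additively regardless of independence and each marginal $(\foXbrw_t)^\ell$ does evolve as a one-dimensional $M/M/\infty$ queue; replacing ``independent'' by ``marginally one-dimensional'' salvages your reduction. Your instinct to scrutinize the passage to the limit is also warranted: the paper's appeal to monotone convergence is shaky, because $\|\foXbrw_{t\wedge\tau_m}\|_2^2$ is not monotone in $m$ (the process may decrease after $\tau_m$ so that a later stopping time gives a smaller value), and a Fatou-then-upgrade argument of the type you sketch, combined with a uniform-in-$m$ bound on the stopped first moment, is the cleaner way to close the gap.
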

\begin{proof}[Proof of \Cref{prop:second_moment_brw}]
    We will employ the same trick as \Cref{prop:finite_first_moment_brw}. For $m \in \N^*$, define the stopping time
    \begin{align}
        \tau_m \eqdef \inf \l\{ s \geq 0 \, : \,  \|\foXbrw_s\|_2^2 \geq m \r\} \eqsp,
    \end{align}
    with the convention $\inf \emptyset = +\infty$. Note that $\E [\|\foXbrw_0\|_2^2]<\infty$ implies $\|\foXbrw_0\|_2^2<\infty$ a.s., which combines with the fact $(\foXbrw_t)_{t\geq 0}$ is a non-explosive CTMC and each jump changes $\|\foXbrw_t\|_2^2$ at most $1$ in turn yield
    \begin{align}
        \tau_m \uparrow \infty \text{ a.s. as $m \to \infty$} \eqsp.
    \end{align}
    Let us consider the stopped process $(\foXbrw_{t \wedge \tau_m})_{t\geq 0}$. On the finite time interval $[0, t \wedge \tau_m)$, the process stays in the finite set $\l\{ x \, : \, \|x\|_2^2 \leq m-1 \r\}$, thus the integrability condition holds and we can apply Dynkin's formula to $\|\cdot\|_2^2$ for the stopped dynamic:
    \begin{align}
        \E \l[ \|\foXbrw_{t \wedge \tau_m}\|_2^2 \r] = \E \l[ \|\foXbrw_{0}\|_2^2 \r] + \E \l[ \int_0^{t \wedge \tau_m} (\foqbrw \|\cdot \|_2^2)(\foXbrw_s) \rmd s \r] \eqsp.
    \end{align}
    Plugging the formula of $\foqbrw$ in eq. (17) of the main paper and differentiating the integral expressions above yield
    \begin{align}
        \frac{\rmd }{\rmd t} \E \l[ \|\foXbrw_{t \wedge \tau_m}\|_2^2 \r] = d - 2\E \l[ \|\foXbrw_{t \wedge \tau_m}\|_2^2 \r] + 3 \E \l[ \|\foXbrw_{t \wedge \tau_m}\|_1 \r]\eqsp.
    \end{align}
    Note that $\E \l[ \|\foXbrw_{t \wedge \tau_m}\|_1 \r] \leq \E \l[  \|\foXbrw_{t}\|_1 \r] = \m_1(\mubrw_t)$ for any $t \geq 0$. In addition, the finiteness of the second moment of $\mubrw_0$ implies that its first moment is also finite. Combined with \Cref{prop:finite_first_moment_brw}, this yields the finiteness of $\m_1(\mubrw_t)$ for any $t \geq 0$.
   Therefore, we can solve the ODE above to get
    \begin{align}
        \E \l[ \|\foXbrw_{t \wedge \tau_m}\|_2^2 \r] = \rme^{-2t} \E \l[\|\foXbrw_0\|_2^2 \r] + \rme^{-2t} \int_0^{t} \rme^{2s} (3 \E \l[ \|\foXbrw_{t \wedge \tau_m}\|_1 \r]+d) \rmd s \eqsp.
    \end{align}
    Letting $m \to \infty$ and noting that $\|\foXbrw_{t \wedge \tau_m}\|_2^2 \uparrow \|\foXbrw_t\|_2^2$ a.s., by monotone convergence, we achieve
    \begin{align}
       \m_2(\mubrw_t) &= \E \l[ \|\foXbrw_t\|_2^2 \r] = \lim_{m\to \infty} \E \l[  \|\foXbrw_{t \wedge \tau_m}\|_2^2\r] \\
        &= \rme^{-2t} \m^\star_2 + \rme^{-2t} \int_0^{t} \rme^{2s} (3 \m_1(\mubrw_s)+d) \rmd s <\infty \eqsp,
    \end{align}
    for any $t \geq 0$, given $\m^\star_2<\infty$.
    Furthermore, substituting the formula of $\m_1(\mubrw_s)$ in \Cref{prop:finite_first_moment_brw} leads to a closed form of $\m_2(\mubrw_t)$:
    \begin{align}
        \m_2(\mubrw_t) &=  \rme^{-2t} \m^\star_2 + \rme^{-2t} \int_0^{t} \rme^{2s} (3\rme^{-s} \mstar_1 +3d (1-\rme^{-s})+d) \rmd s\\
        &= \rme^{-2t} [\m^\star_2 - 3\mstar_1 +d ] + 3\rme^{-t} [\mstar_1-d] +2d \eqsp.
    \end{align}
    The proof is then finished.
\end{proof}

Our next step is to determine the explicit transition probabilities of $(\foXbrw_t)_{t\in [0,\Tf]}$, enabling us to verify the integrability condition stated in Lemma 5.2.5 in the main paper.
Given the componentwise factorization of the process, the task reduces to deriving the transition density in the one-dimensional case.

\begin{proposition}\label{prop:transition_formula_brw}
 The transition density of $(\foXbrw_t)_{t\in [0,\Tf]}$ admits the following formula:
     \begin{align}
      \fopbrw_{0,t} (x,y) = \prod_{i=1}^d \fopbrwb_{0,t} (x^i,y^i) \eqsp, \quad \text{for $x, y \in \N^d$ and $t \in [0,\Tf]$} \eqsp,
  \end{align}
   where
      \begin{align}\label{eq:transition_formula_brw}
          \fopbrwb_{0,t} (k,n)
      &= \rme^{\rme^{-t}-1} \sum_{j=0}^{\min(k,n)} \binom{k}{j} \rme^{-tj} \frac{(1-\rme^{-t})^{k+n-2j}}{(n-j)!} \eqsp, \quad \text{for $k,n \in \N$} \eqsp.
      \end{align}
\end{proposition}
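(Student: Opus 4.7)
The plan is two-fold: first establish the product structure over coordinates, then identify the one-dimensional marginals as the transition kernel of an $\mathrm{M}/\mathrm{M}/\infty$ queue and derive the closed form via a probabilistic decomposition.

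For the factorization, I would write $\foqbrw = \sum_{\ell=1}^d Q_\ell$, where $Q_\ell$ acts nontrivially only on the $\ell$-th coordinate, with rates $Q_\ell(x, \sigma^\ell_+(x)) = 1$ and $Q_\ell(x, \sigma^\ell_-(x)) = x^\ell$. Because the $Q_\ell$ commute and each updates a different coordinate, the forward Kolmogorov equation~\eqref{eq:foward_kolm} decouples component-wise. Equivalently, one constructs the $d$-dimensional process as $d$ independent copies of the one-dimensional chain driven by independent Poisson clocks, which is licit under~\Cref{ass:non_explo}; see~\Cref{prop:non_explosive_brw}. This yields $\fopbrw_{0,t}(x,y) = \prod_{\ell=1}^d \fopbrwb_{0,t}(x^\ell, y^\ell)$.

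For the one-dimensional formula, I would recognize $\fopbrwb$ as the transition kernel of the $\mathrm{M}/\mathrm{M}/\infty$ queue with unit arrival rate and unit per-customer service rate (birth rate $1$, death rate $k$ at state $k$). Starting from $X_0 = k$, I decompose $X_t = Y_1(t) + Y_2(t)$, where $Y_1(t)$ counts the initial customers still present at time $t$ and $Y_2(t)$ counts arrivals during $(0,t]$ still present at time $t$. Each initial customer survives independently with probability $\rme^{-t}$, so $Y_1(t) \sim \mathrm{Bin}(k, \rme^{-t})$. New arrivals form a unit-rate Poisson process, and by a standard Poisson thinning argument an arrival at time $s$ survives to time $t$ with probability $\rme^{-(t-s)}$; hence
\begin{equation*}
Y_2(t) \sim \poiss\Bigl(\int_0^t \rme^{-(t-s)} \rmd s\Bigr) = \poiss(1 - \rme^{-t}),
\end{equation*}
independent of $Y_1(t)$. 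Convolving these two distributions and collecting terms gives
\begin{equation*}
\fopbrwb_{0,t}(k,n) = \sum_{j=0}^{\min(k,n)} \binom{k}{j} \rme^{-tj}(1-\rme^{-t})^{k-j} \cdot \frac{\rme^{-(1-\rme^{-t})}(1-\rme^{-t})^{n-j}}{(n-j)!},
\end{equation*}
which is exactly~\eqref{eq:transition_formula_brw} after pulling out the factor $\rme^{\rme^{-t}-1}$.

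As a sanity check, I would verify the boundary values: at $t = 0$ only the $j = k = n$ term survives (since $(1-\rme^{-t})^{k+n-2j}$ forces $k = n = j$), giving $\fopbrwb_{0,0}(k,n) = \updelta_{k,n}$; and as $t \to \infty$ only $j = 0$ contributes, yielding $\poiss(1)$, the invariant marginal of $\brw$. The main obstacle is the tensorization step: one must justify that the coordinate-wise construction produces the unique non-explosive semigroup generated by $\foqbrw$. This however follows directly from~\Cref{ass:non_explo} together with the uniqueness of minimal solutions to the Kolmogorov equations for conservative rate matrices (\citet[Theorem 4.3]{feinberg2014solutions}), since the product of the one-dimensional kernels is conservative and satisfies the same forward equation as $\fopbrw_{0,t}$.
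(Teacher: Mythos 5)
Your proposal is correct, but it takes a genuinely different route from the paper's proof. The paper proves \Cref{prop:transition_formula_brw} by a direct computation: it takes the explicit formula~\eqref{eq:transition_formula_brw} as given, verifies the initial condition $\fopbrwb_{0,0} = \Id$, and then checks by a (lengthy) term-by-term calculation that it solves the one-dimensional forward Kolmogorov equation in each of the three cases $k\leq n-1$, $k\geq n+1$, $k=n$, before invoking uniqueness via \citet[Theorem 4.3]{feinberg2014solutions}. You instead derive the formula from the probabilistic decomposition of an $\mathrm{M}/\mathrm{M}/\infty$ queue, $X_t = Y_1(t) + Y_2(t)$ with $Y_1(t)\sim\mathrm{Bin}(k,\rme^{-t})$ (surviving initial customers) independent of $Y_2(t)\sim\poiss(1-\rme^{-t})$ (arrivals still present), and convolve. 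Notably, the paper acknowledges precisely this decomposition in the remark immediately following the proof, labeling it the ``insight'' behind the formula, but chooses not to use it as the actual argument. Your route is more conceptual and explains where the formula comes from, at the cost of requiring a rigorous construction of the queue from marked Poisson processes to justify the independence and the thinning/survival statements (all standard, but not free). The paper's route is mechanical and unenlightening but entirely self-contained given the formula. Both end with the same uniqueness invocation. Your tensorization argument (commuting coordinate generators, uniqueness of the non-explosive solution) is also correct and matches the spirit of what the paper does implicitly.

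One small nit on your sanity check at $t=0$: having $(1-\rme^{-t})^{k+n-2j}\neq 0$ forces $j=(k+n)/2$, which combined with $j\leq\min(k,n)$ forces $k=n$ (and then $j=k=n$); your phrasing ``forces $k=n=j$'' skips the intermediate step but the conclusion is right, and it matches the paper's observation that $\fopbrwb_{0,0}(k,n)=0^{k+n-2\min(k,n)}$.
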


\begin{proof}[Proof of \Cref{prop:transition_formula_brw}]
  The proof amounts to checking that \eqref{eq:transition_formula_brw} indeed solves the forward Kolmogorov equation, namely, that the following relation is satisfied:
   \begin{align}\label{eq:kolm_forward_brw}
       \fopbrwb_{0,0} = \Id \quad \text{and} \quad \partial_t \fopbrwb_{0,t}(k,n) = \sum_{i\in \N} \fopbrwb_{0,t} (k,i)\foqbrwb (i,n) \eqsp,
     \end{align}
     for any $(k,n)\in \N^2$ and $t\in [0,\Tf]$, where $\foqbrwb$ is the one-dimensional version of $\foqbrw$. We begin by noting that
     \begin{align}
          \fopbrwb_{0,0} (k,n) = 0^{k+n-2\min(k,n)} \text{ for } k,n \in \N \eqsp.
     \end{align}
     If $k=n$,  $\fopbrwb_{0,0} (k,n) = 0^0 = 1$, otherwise, it reduces to $0$. Thus the initial condition $\fopbrwb_{0,0} = \Id$ holds.
     To verify the remaining equation, for $k,n \in \N$ and $t \in [0,\Tf]$, let us consider the three following cases:

     \underline{1. $k \leq n-1$:} we have
     \begin{align}
         &\quad \partial_t \fopbrwb_{0,t}(k,n)\\
         &= \sum_{j=0}^{k} \binom{k}{j} \frac{\rme^{\rme^{-t}-1-tj}}{(n-j)!}(1-\rme^{-t})^{k+n-2j-1} \l[(-\rme^{-t}-j)(1-\rme^{-t}) + (k+n-2j) \rme^{-t}  \r] \\
         &= \sum_{j=0}^{k} \binom{k}{j} \frac{\rme^{\rme^{-t}-1-tj}}{(n-j)!}(1-\rme^{-t})^{k+n-2j-1} \l[(-\rme^{-t}-j)(1-\rme^{-t}) + (n-j) \rme^{-t}  \r]\\
         &\qquad + \sum_{j=0}^{k-1} \binom{k}{j+1} \frac{\rme^{\rme^{-t}-1-t(j+1)}}{(n-j)!}(1-\rme^{-t})^{k+n-2j-1} (j+1) \\
         &= \sum_{j=0}^{k} \binom{k}{j} \frac{\rme^{\rme^{-t}-1-tj}}{(n-j)!}(1-\rme^{-t})^{k+n-2j-1} \l[\rme^{-2t} + \rme^{-t}(n-1)-j \r] \\
         &\qquad + \sum_{j=0}^{k} \binom{k}{j} \frac{\rme^{\rme^{-t}-1-tj}}{(n-j+1)!}(1-\rme^{-t})^{k+n-2j+1} j \\
         &= \sum_{j=0}^{k} \binom{k}{j} \frac{\rme^{\rme^{-t}-1-tj}}{(n-j)!} (1-\rme^{-t})^{k+n-2j-1} \\
         &\qquad\l[\rme^{-2t} + \rme^{-t}(n-1)-j +\frac{j(1-\rme^{-t})^2}{n-j+1}\r] \eqsp.
     \end{align}
    On the other hand,
     \begin{align}
          &\quad\sum_{i\in \N} \fopbrwb_{0,t} (k,i)\foqbrwb (i,n)\\
         &= \sum_{i=n-1}^{n+1} \fopbrwb_{0,t} (k,i)\foqbrwb (i,n) \\
         &= \sum_{j=0}^{k} \binom{k}{j}\frac{\rme^{\rme^{-t}-1-tj}}{(n-j)!} (1-\rme^{-t})^{k+n-2j-1} \\
         &\qquad \l[ n-j + \frac{n+1}{n-j+1} (1-\rme^{-t})^2 - (n+1)(1-\rme^{-t}) \r] \\
         &= \sum_{j=0}^{k} \binom{k}{j}\frac{\rme^{\rme^{-t}-1-tj}}{(n-j)!} (1-\rme^{-t})^{k+n-2j-1} \\
         &\qquad \l[ n-j + (1-\rme^{-t})^2- (n+1)(1-\rme^{-t}) +\frac{j(1-\rme^{-t})^2 }{n-j+1}  \r] \\
         &= \sum_{j=0}^{k} \binom{k}{j} \frac{\rme^{\rme^{-t}-1-tj}}{(n-j)!} (1-\rme^{-t})^{k+n-2j-1} \l[\rme^{-2t} + \rme^{-t}(n-1)-j +\frac{j(1-\rme^{-t})^2}{n-j+1}\r] \eqsp,
     \end{align}
     and therefore \eqref{eq:kolm_forward_brw} holds.

      \underline{2. $k \geq n+1$:} same computation yields
      \begin{align}
          &\partial_t \fopbrwb_{0,t}(k,n) =  \binom{k}{n+1} {\rme^{\rme^{-t}-1-t(n+1)}}(1-\rme^{-t})^{k-n-1} (n+1) \\
          & +\sum_{j=0}^{n} \binom{k}{j} \frac{\rme^{\rme^{-t}-1-tj}}{(n-j)!} (1-\rme^{-t})^{k+n-2j-1} \l[\rme^{-2t} + \rme^{-t}(n-1)-j +\frac{j(1-\rme^{-t})^2}{n-j+1}\r] \eqsp,
      \end{align}
     and
     \begin{align}
         &\quad\sum_{i\in \N} \fopbrwb_{0,t} (k,i)\foqbrwb (i,n)\\
         & = \sum_{j=0}^{n-1} \binom{k}{j} \frac{\rme^{\rme^{-t}-1-tj}}{(n-1-j)!} (1-\rme^{-t})^{k+n-2j-1} \\
         &\quad+ \sum_{j=0}^{n+1} \binom{k}{j} \frac{\rme^{\rme^{-t}-1-tj}}{(n+1-j)!} (1-\rme^{-t})^{k+n-2j+1}(n+1) \\
          &\quad - \sum_{j=0}^{n} \binom{k}{j} \frac{\rme^{\rme^{-t}-1-tj}}{(n-j)!} (1-\rme^{-t})^{k+n-2j}(n+1) \\
          &= \sum_{j=0}^{n} \binom{k}{j} \frac{\rme^{\rme^{-t}-1-tj}}{(n-j)!} (1-\rme^{-t})^{k+n-2j-1} \l[n-j + \frac{n+1}{n-j+1} (1-\rme^{-t})^2 - (n+1)(1-\rme^{-t}) \r] \\
          &\qquad+ \binom{k}{n+1} \rme^{\rme^{-t}-1-(n+1)t} (1-\rme^{-t})^{k-n-1}(n+1) \eqsp.
     \end{align}
   Hence \eqref{eq:kolm_forward_brw} is satisfied.

   \underline{3. $k=n$:} computing analogously, we obtain
    \begin{align}
          &\partial_t \fopbrwb_{0,t}(k,n) = -\rme^{\rme^{-t}-1-(n+1)t}(n+1)\\
          &+\sum_{j=0}^{n-1} \binom{n}{j} \frac{\rme^{\rme^{-t}-1-tj}}{(n-j)!} (1-\rme^{-t})^{2n-2j-1} \l[\rme^{-2t} + \rme^{-t}(n-1)-j +\frac{j(1-\rme^{-t})^2}{n-j+1}\r] \eqsp.
      \end{align}
      Moreover,
      \begin{align}
           &\quad\sum_{i\in \N} \fopbrwb_{0,t} (k,i)\foqbrwb (i,n)\\
           &= \sum_{j=0}^{n-1} \binom{n}{j} \frac{\rme^{\rme^{-t}-1-tj}}{(n-j)!} (1-\rme^{-t})^{2n-2j-1} \l[n-j + \frac{n+1}{n-j+1}(1-\rme^{-t})^2 - (n+1)(1-\rme^{-t}) \r] \\
           &\qquad + {\rme^{\rme^{-t}-1-nt}}(n+1)(1-\rme^{-t}) - 1) \\
           &= \sum_{j=0}^{n-1} \binom{n}{j} \frac{\rme^{\rme^{-t}-1-tj}}{(n-j)!} (1-\rme^{-t})^{2n-2j-1} \l[\rme^{-2t} + \rme^{-t}(n-1)-j +\frac{j(1-\rme^{-t})^2}{n-j+1}\r] \\
           &\qquad -\rme^{\rme^{-t}-1-(n+1)t}(n+1) = \partial_t \fopbrwb_{0,t}(k,n) \eqsp.
      \end{align}
     We conclude that
$\fopbrwb$ is indeed a solution to \eqref{eq:kolm_forward_brw}. By \cite[Theorem 4.3]{feinberg2014solutions}, this equation admits a unique solution within the class of transition probabilities satisfying \textbf{H2}. Hence $\fopbrwb$ is a component-wise conditional density of $(\foXbrw_t)_{t\in [0,\Tf]}$ and the full transition density is then given by
 \begin{align}
      \fopbrw_{0,t} (x,y) = \prod_{i=1}^d \fopbrwb_{0,t} (x^i,y^i) \eqsp, \quad \text{for $x, y \in \N^d$ and $t \in [0,\Tf]$} \eqsp,
  \end{align}
  which is the desired conclusion.
\end{proof}

\begin{remark}
   The insight of formula \eqref{eq:kolm_forward_brw} comes from the following expression:
  \begin{align}
      (\foXbrw_t)^\ell = (\Sbrw_t)^\ell + (\Abrw_t)^\ell \eqsp, \quad  \text{for $t \in [0,\Tf]$} \eqsp,
  \end{align}
 where \( (\Sbrw_t)^\ell |(\foXbrw_0)^\ell \sim \mathrm{Binomial} \l( (\foXbrw_0)^\ell, \rme^{-t} \r) \) denotes the number of survivors from $\foXbrw_0$ initial particles since each particle survives independently with probability $\rme^{-t}$ (the death rate of $n$ particles being $n$), and \( (\Abrw_t)^\ell \sim \mathrm{Poisson}(1-\rme^{-t}) \) represents the number of arrivals still alive at time $t$, generated by the constant birth rate $1$ (the forward rate).
\end{remark}

\begin{proposition}\label{prop:bound_uq_square_brw}
      Assume $\m^\star_2<\infty$, then for any $t \in [0,\Tf)$, the following holds
      \begin{align}
          \sum_{\sigma \in \mcs} \E \l[ \l(\uBRW_t \foqbrw (\baXbrw_t, \sigma(\baXbrw_t)) \r)^2 \r] \lesssim [\mstar_1 + \m^\star_2 +d]\l[1+\frac{1}{(\Tf-t)^2} \r] \eqsp.
      \end{align}
\end{proposition}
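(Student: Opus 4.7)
The plan is to reduce the problem to a one-dimensional computation through the conditional expectation representation of the score and the product structure of $\fopbrw_{0,s}$, and then to control the resulting one-dimensional ratios $\fopbrwb_{0,s}(k,n\pm 1)/\fopbrwb_{0,s}(k,n)$ by rewriting them as expectations under the birth/survival decomposition $(\foXbrw_s)^\ell = (\Sbrw_s)^\ell + (\Abrw_s)^\ell$ introduced in the remark after \Cref{prop:transition_formula_brw}. Set $s=\Tf-t$, $\alpha=\rme^{-s}$ and $\beta=1-\rme^{-s}$. The first step is to apply conditional Jensen to \Cref{prop:condition_expectation_brw}, yielding
\[
\E\l[\bigl(\uBRW_t\foqbrw(\baXbrw_t,\sigma(\baXbrw_t))\bigr)^{2}\r]
\le
\E\l[\foqbrw(\sigma(\foXbrw_s),\foXbrw_s)^{2}\,\frac{\fopbrw_{0,s}(\foXbrw_0,\sigma(\foXbrw_s))^{2}}{\fopbrw_{0,s}(\foXbrw_0,\foXbrw_s)^{2}}\r].
\]
Because $\fopbrw_{0,s}$ factorises componentwise, the ratio depends only on the $\ell$-th coordinate, reducing the bound to controlling the one-dimensional quantity $\fopbrwb_{0,s}(k,n\pm 1)/\fopbrwb_{0,s}(k,n)$ in expectation against $\mathrm{Law}((\foXbrw_0)^\ell,(\foXbrw_s)^\ell)$.

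The second step is the key identity that I will derive by manipulating the explicit formula \eqref{eq:transition_formula_brw}: for $S\sim\mathrm{Binomial}(k,\alpha)$ and $A\sim\mathrm{Poisson}(\beta)$ independent,
\[
\frac{\fopbrwb_{0,s}(k,n+1)}{\fopbrwb_{0,s}(k,n)}
=\beta\,\E\l[\frac{1}{n+1-S}\,\Big|\,S+A=n\r].
\]
This follows from the recursion $P(A=n+1-j)=\beta P(A=n-j)/(n+1-j)$. From this identity one immediately gets $\fopbrwb_{0,s}(k,n+1)/\fopbrwb_{0,s}(k,n)\le \beta$, and by Jensen on the convex map $x\mapsto 1/x$, $\fopbrwb_{0,s}(k,n-1)/\fopbrwb_{0,s}(k,n)\le (n-\E[S\mid S+A=n-1])/\beta \le n/\beta$.

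The third step is to plug these two bounds into the two cases. For $\sigma=\sigma^\ell_+$, using $\foqbrw(\sigma^\ell_+(x),x)=x^\ell+1$ and the bound by $\beta\le 1$,
\[
\sum_{\ell=1}^{d}\E\l[\bigl(\uBRW_t\foqbrw(\baXbrw_t,\sigma^\ell_+(\baXbrw_t))\bigr)^{2}\r]
\le \sum_{\ell=1}^{d}\E\l[((\foXbrw_s)^\ell+1)^2\r]\lesssim \m_2(\mubrw_s)+d.
\]
For $\sigma=\sigma^\ell_-$, using $\foqbrw(\sigma^\ell_-(x),x)=1$ and the bound $n/\beta$,
\[
\sum_{\ell=1}^{d}\E\l[\bigl(\uBRW_t\foqbrw(\baXbrw_t,\sigma^\ell_-(\baXbrw_t))\bigr)^{2}\r]
\le \frac{1}{\beta^{2}}\sum_{\ell=1}^{d}\E\l[((\foXbrw_s)^\ell)^{2}\r]\le\frac{\m_2(\mubrw_s)}{(1-\rme^{-s})^{2}}.
\]
Finally, I invoke \Cref{prop:second_moment_brw} to replace $\m_2(\mubrw_s)$ by $\m_2(\mustar)+\m_1(\mustar)+d$ up to a universal constant, and use the elementary estimate $1/(1-\rme^{-s})^{2}\lesssim 1+1/s^{2}$. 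Adding the two contributions gives the announced bound.

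The main obstacle is step two: obtaining the identity for $\fopbrwb_{0,s}(k,n\pm 1)/\fopbrwb_{0,s}(k,n)$ in a form whose tightness survives both the small-$s$ regime (where $\beta\sim s$ drives the $1/s^{2}$ factor) and the large-$k$ regime (where the bound $n/\beta$ for the decreasing ratio cannot be improved without loss of the clean $\m_2(\mubrw_s)$ control). The slightly loose inequality $\E[1/(n-S)\mid S+A=n-1]\ge 1/(n-\E[S\mid\cdot])$ is in fact sharp enough because the remaining moment computation is exact via the independence of $S$ and $A$, and it is this separation of the $\beta^{-2}$ factor from the second-moment factor that produces the desired split $(\m_1+\m_2+d)(1+1/s^{2})$.
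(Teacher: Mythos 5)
Your probabilistic reading of the transition kernel — identifying $\fopbrwb_{0,s}(k,n)$ with $\P(S+A=n)$ for independent $S\sim\mathrm{Binomial}(k,\alpha)$, $A\sim\mathrm{Poisson}(\beta)$ — is correct and makes the one-dimensional reduction cleaner than the paper's direct manipulation of the series \eqref{eq:transition_formula_brw}. The final split $(\m_1+\m_2+d)(1+1/s^2)$ and the invocation of \Cref{prop:second_moment_brw} match the paper.

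However, the key identity in your step two is wrong, and this is a genuine gap. You write
\[
\frac{\fopbrwb_{0,s}(k,n+1)}{\fopbrwb_{0,s}(k,n)}=\beta\,\E\l[\frac{1}{n+1-S}\,\middle|\,S+A=n\r],
\]
claiming it follows from $\P(A=n+1-j)=\beta\,\P(A=n-j)/(n+1-j)$, and then conclude the ratio is $\le\beta$. But the numerator's sum runs over $j=0,\dots,\min(k,n+1)$ whereas the conditioning event $\{S+A=n\}$ forces $S\le n$. When $k>n$ the term $j=n+1$, namely $\P(S=n+1)\P(A=0)$, is present in $\fopbrwb_{0,s}(k,n+1)$ but does not appear in your conditional expectation, and the Poisson recursion you cite cannot reach it (it would require dividing by $n+1-j=0$). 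Concretely, for $n=0$ and $k\ge 1$ the true ratio is $\beta+\frac{k\alpha}{1-\alpha}$, which is unbounded in $k$, so the bound $\le\beta$ fails. The paper's $\dbf_s^1$ computation splits exactly into the case $x_0^\ell\le x_s^\ell$ (where your bound is fine) and the case $x_0^\ell\ge x_s^\ell+1$, which produces the additional contribution $(x_0^\ell-x_s^\ell)\rme^{-s}(1-\rme^{-s})^{-1}\le x_0^\ell/(\rme^s-1)$. This term is where the $\m_2(\mustar)\cdot(\rme^s-1)^{-2}$ piece of the final estimate comes from. Your approach can be repaired by isolating the $j=n+1$ term: note $\frac{\P(S=n+1)\P(A=0)}{\P(S+A=n)}\le\frac{\P(S=n+1)}{\P(S=n)}=\frac{(k-n)\alpha}{(n+1)(1-\alpha)}$, and after multiplying by $\foqbrw(\sigma^\ell_+(x),x)=n+1$ this contributes at most $k/(\rme^s-1)$, matching the paper. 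The $\sigma^\ell_-$ bound $n/\beta$ you state is essentially correct (though the conditioning event should be $\{S+A=n\}$, not $\{S+A=n-1\}$), since there the numerator has fewer terms than the denominator and no boundary term appears.
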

\begin{proof}[Proof of \Cref{prop:bound_uq_square_brw}]
  Recall that $\uBRW_t \foqbrw (x,\sigma(x))$ admits a conditional expectation expression for any $(t,x) \in [0,\Tf) \times \N^d$ and $\sigma \in \mcs$; see \Cref{prop:condition_expectation_brw}. Therefore,
  \begin{align}
      &\quad \sum_{\sigma \in \mcs} \E \l[ \l(\uBRW_s \foqbrw (\baXbrw_s, \sigma(\baXbrw_s)) \r)^2 \r] \notag \\
      &=   \E \l[\sum_{\sigma \in \mcs} \E \l[  \foqbrw(\sigma(x),x)\frac{\fopbrw_{0,\Tf-s}(\foXbrw_0,\sigma(x))}{\fopbrw_{0,\Tf-s}(\foXbrw_0,x)} \middle| \foXbrw_{\Tf-s} = x  \r]^2   \r]  \notag \\
      &\leq  \E \l[ \underbrace{\sum_{\sigma \in \mcs} \l(  \foqbrw(\sigma(\foXbrw_{\Tf-s}),\foXbrw_{\Tf-s})\frac{\fopbrw_{0,\Tf-s}(\foXbrw_0,\sigma(\foXbrw_{\Tf-s}))}{\fopbrw_{0,\Tf-s}(\foXbrw_0,\foXbrw_{\Tf-s})} \r)^2}_{\dbf_{\Tf-s} (\foXbrw_0,\foXbrw_{\Tf-s})}   \r]  \eqsp,\label{eq:integrability_1}
  \end{align}
  where we used Jensen's inequality in the last inequality. Substituting the formula of transition density given in \eqref{eq:transition_formula_brw} yields
  \begin{align}
      \dbf_s (x_0,x_s) &= \sum_{\ell=1}^d
      \Bigg( \underbrace{(x_s^\ell+1) \frac{\sum_{j=0}^{\min(x_0^\ell, x_s^\ell+1)} \binom{x_0^\ell}{j} \rme^{-sj} (1-\rme^{-s})^{x_0^\ell+x_s^\ell+1-2j}/(x_s^\ell+1-j)! }{\sum_{j=0}^{\min(x_0^\ell, x_s^\ell)} \binom{x_0^\ell}{j} \rme^{-sj} (1-\rme^{-s})^{x_0^\ell+x_s^\ell-2j}/(x_s^\ell-j)! }}_{\dbf_s^1(x_0, x_s) } \\
      &\qquad +\underbrace{
      \frac{\sum_{j=0}^{\min(x_0^\ell, x_s^\ell - 1)} \binom{x_0^\ell}{j} \rme^{-sj} (1-\rme^{-s})^{x_0^\ell+x_s^\ell - 1-2j}/(x_s^\ell-1-j)! }{\sum_{j=0}^{\min(x_0^\ell, x_s^\ell)} \binom{x_0^\ell}{j} \rme^{-sj} (1-\rme^{-s})^{x_0^\ell+x_s^\ell-2j}/(x_s^\ell-j)! }
      }_{\dbf_s^2(x_0,x_s)}
      \Bigg)^2
     \eqsp.
  \end{align}
  We have
  \begin{align}
      \dbf_s^1 (x_0, x_s)  &= \1_{x_0^\ell \geq x_s^\ell+1} (x_s^\ell+1)(1-\rme^{-s}) \frac{\sum_{j=0}^{x_s^\ell+1} \binom{x_0^\ell}{j} \rme^{-sj} (1-\rme^{-s})^{x_0^\ell+x_s^\ell-2j}/(x_s^\ell+1-j)! }{\sum_{j=0}^{ x_s^\ell } \binom{x_0^\ell}{j} \rme^{-sj} (1-\rme^{-s})^{x_0^\ell+x_s^\ell-2j}/(x_s^\ell-j)! } \\
      & + \1_{x_0^\ell \leq x_s^\ell} (x_s^\ell+1)(1-\rme^{-s}) \frac{\sum_{j=0}^{x_0^\ell} \binom{x_0^\ell}{j} \rme^{-sj} (1-\rme^{-s})^{x_0^\ell+x_s^\ell-2j}/(x_s^\ell+1-j)! }{\sum_{j=0}^{ x_0^\ell } \binom{x_0^\ell}{j} \rme^{-sj} (1-\rme^{-s})^{x_0^\ell+x_s^\ell-2j}/(x_s^\ell-j)! } \\
      &\leq  \1_{x_0^\ell \geq x_s^\ell+1} (x_s^\ell+1)(1-\rme^{-s}) \frac{\sum_{j=0}^{x_s^\ell} \binom{x_0^\ell}{j} \rme^{-sj} (1-\rme^{-s})^{x_0^\ell+x_s^\ell-2j}/(x_s^\ell-j)!     }{\sum_{j=0}^{ x_s^\ell } \binom{x_0^\ell}{j} \rme^{-sj} (1-\rme^{-s})^{x_0^\ell+x_s^\ell-2j}/(x_s^\ell-j)! }\\
      &\qquad + \1_{x_0^\ell \geq x_s^\ell+1} (x_s^\ell+1) \frac{ \binom{x_0^\ell}{x_s^\ell+1} \rme^{-s(x_s^\ell+1)} (1-\rme^{-s})^{x_0^\ell-x_s^\ell-1}    }{ \binom{x_0^\ell}{x_s^\ell} \rme^{-s x_s^\ell} (1-\rme^{-s})^{x_0^\ell - x_s^\ell} }\\
      &\qquad \qquad + \1_{x_0^\ell \leq x_s^\ell} (x_s^\ell+1)(1-\rme^{-s}) \\
    &= (x_s^\ell+1) (1-\rme^{-s}) +  \1_{x_0^\ell \geq x_s^\ell+1}(x_0^\ell-x_s^\ell) \rme^{-s} (1-\rme^{-s})^{-1} \\
    &\leq  (x_s^\ell+1) (1-\rme^{-s}) +  \frac{x_0^\ell}{ \rme^s-1 } \eqsp.
  \end{align}
  We evaluate next $\dbf_s^2$ as follows
  \begin{align}
      \dbf_s^2 (x_0, x_s) &=  \1_{x_0^\ell \geq x_s^\ell } \frac{(1-\rme^{-s})^{-1}\sum_{j=0}^{ x_s^\ell -1} \binom{x_0^\ell}{j} \rme^{-sj} (1-\rme^{-s})^{x_0^\ell+x_s^\ell -2j}(x_s^\ell-j)/(x_s^\ell-j)! }{\sum_{j=0}^{ x_s^\ell} \binom{x_0^\ell}{j} \rme^{-sj} (1-\rme^{-s})^{x_0^\ell+x_s^\ell-2j}/(x_s^\ell-j)! }\\
      &\quad + \1_{x_0^\ell \leq x_s^\ell-1} \frac{(1-\rme^{-s})^{-1} \sum_{j=0}^{x_0^\ell} \binom{x_0^\ell}{j} \rme^{-sj} (1-\rme^{-s})^{x_0^\ell+x_s^\ell -2j}(x_s^\ell-j)/(x_s^\ell-j)! }{\sum_{j=0}^{x_0^\ell} \binom{x_0^\ell}{j} \rme^{-sj} (1-\rme^{-s})^{x_0^\ell+x_s^\ell-2j}/(x_s^\ell-j)! } \\
      &\leq \1_{x_0^\ell \geq x_s^\ell } \frac{x_s^\ell (1-\rme^{-s})^{-1}\sum_{j=0}^{ x_s^\ell} \binom{x_0^\ell}{j} \rme^{-sj} (1-\rme^{-s})^{x_0^\ell+x_s^\ell -2j}/(x_s^\ell-j)! }{\sum_{j=0}^{ x_s^\ell} \binom{x_0^\ell}{j} \rme^{-sj} (1-\rme^{-s})^{x_0^\ell+x_s^\ell-2j}/(x_s^\ell-j)! }\\
      &\quad + \1_{x_0^\ell \leq x_s^\ell-1} \frac{x_s^\ell (1-\rme^{-s})^{-1} \sum_{j=0}^{x_0^\ell} \binom{x_0^\ell}{j} \rme^{-sj} (1-\rme^{-s})^{x_0^\ell+x_s^\ell -2j}/(x_s^\ell-j)! }{\sum_{j=0}^{x_0^\ell} \binom{x_0^\ell}{j} \rme^{-sj} (1-\rme^{-s})^{x_0^\ell+x_s^\ell-2j}/(x_s^\ell-j)! } \\
      &=  x_s^\ell (1-\rme^{-s})^{-1} \1_{x_0^\ell \geq x_s^\ell } +  x_s^\ell (1-\rme^{-s})^{-1} \1_{x_0^\ell \leq x_s^\ell-1} = \frac{x_s^\ell}{1-\rme^{-s}} \eqsp.
  \end{align}
  Therefore
  \begin{align}
       \dbf_s^1 (x_0, x_s) + \dbf_s^2 (x_0, x_s) \leq (x_s^\ell+1) (1-\rme^{-s}) +  \frac{x_0^\ell}{ \rme^s-1 } + \frac{x_s^\ell}{1-\rme^{-s}} \eqsp.
  \end{align}
  Since $\dbf_s^1 (x_0, x_s) + \dbf_s^2 (x_0, x_s)$ is nonnegative, we then obtain
  \begin{align}
      (\dbf_s^1 (x_0, x_s) + \dbf_s^2 (x_0, x_s))^2 &\leq \l( (x_s^\ell+1) (1-\rme^{-s}) +  \frac{x_0^\ell}{ \rme^s-1 } + \frac{x_s^\ell}{1-\rme^{-s}} \r)^2 \\
      &\lesssim ((x^\ell_s)^2+1)(1-\rme^{-s})^2 + \frac{(x_0^\ell)^2}{ (\rme^s-1)^2 } + \frac{(x_s^\ell)^2}{(1-\rme^{-s})^2} \eqsp,
  \end{align}
  which follows
  \begin{align}
      \dbf_s (x_0, x_s) &\lesssim \sum_{\ell=1}^d \l[ ((x^\ell_s)^2+1)(1-\rme^{-s})^2 + \frac{(x_0^\ell)^2}{ (\rme^s-1)^2 } + \frac{(x_s^\ell)^2}{(1-\rme^{-s})^2} \r] \\
      &= (\| x_s\|_2^2+d)(1-\rme^{-s})^2 + \frac{\| x_0 \|_2^2}{(\rme^s-1)^2} + \frac{\|x_s\|_2^2}{(1-\rme^{-s})^2} \eqsp.
  \end{align}
  Plugging this estimate into \eqref{eq:integrability_1} gives
  \begin{align}
      &\quad \sum_{\sigma \in \mcs} \E \l[ \l(\uBRW_s \foqbrw (\baXbrw_s, \sigma(\baXbrw_s)) \r)^2 \r] \\
      &\lesssim [\m_2 (\mubrw_{\Tf-s})+d]  (1-\rme^{-(\Tf-s)})^2 + \m_2 (\mustar)  (\rme^{\Tf-s}-1)^{-2}\\
      &\qquad + \m_2(\mubrw_{\Tf-s})  (1-\rme^{-(\Tf-s)})^{-2} \eqsp.
  \end{align}
 Additionally, the formula of $\m_2(\mubrw_t)$ showed in \Cref{prop:second_moment_brw} implies \[\m_2(\mubrw_t) \lesssim \mstar_1+\m^\star_2+d \quad \text{for any $t \in [0,\Tf)$} \eqsp. \]
 Hence
  \begin{align}
      &\quad \sum_{\sigma \in \mcs} \E \l[ \l(\uBRW_s \foqbrw (\baXbrw_s, \sigma(\baXbrw_s)) \r)^2 \r] \\
      &\lesssim [\mstar_1+\m^\star_2+d] \Bigg[ \underbrace{(1-\rme^{-(\Tf-t)})^2}_{\leq 1}  + \underbrace{\frac{1} {(\rme^{\Tf-t}-1)^{2}}}_{\leq \frac{1}{(\Tf-t)^2}} + \frac{1}{ (1-\rme^{-(\Tf-t)})^{2}} \Bigg] \eqsp.
  \end{align}
  Relying on the fact that
  \begin{align}
      \frac{1}{ (1-\rme^{-(\Tf-t)})^{2}} &= \frac{\rme^{2(\Tf-t)}}{(\rme^{\Tf-t}-1)^2} = 1+ \frac{2}{\rme^{\Tf-t}-1}+ \frac{1}{(\rme^{\Tf-t}-1)^2} \\
      &\leq 1+ \frac{2}{\Tf-t} + \frac{1}{(\Tf-t)^2} \lesssim 1+ \frac{1}{(\Tf-t)^2} \eqsp,
  \end{align}
  we can deduce the following for $t \in [0,\Tf)$,
  \begin{align}
      \sum_{\sigma \in \mcs} \E \l[ \l(\uBRW_s \foqbrw (\baXbrw_s, \sigma(\baXbrw_s)) \r)^2 \r] \lesssim [\mstar_1+\m^\star_2+d] \l[1+ \frac{1}{(\Tf-t)^2} \r] \eqsp.
\end{align}
The proof is then complete.
\end{proof}

\begin{lemma}\label{lem:true_martingale_brw}
   Assume $\mustar$ has finite second order moment, then the following process is a martingale
   \[\Mbrw_\sigma(t) = \int_{[0,t] \times \N^d} \l[ \f^\sigma(s,x) - \f^\sigma(s, \baXbrw_{s-}) \r] \tilde{N}_{\baXbrw}^{\baqbrw} (\rmd x \rmd s) \eqsp,\]
 for $t \in [0,\Tf)$ and fixed $\sigma \in \mcs$, where
  \(
  \mathbf{f}^{{\sigma}}(t,\baXbrw_{t} )
    \eqdef \foqbrw \uBRW_t(\baXbrw_{t},{\sigma}(\baXbrw_t)) -1 \eqsp.
  \)
\end{lemma}
\begin{proof}[Proof of \Cref{lem:true_martingale_brw}]
Recall that $\tilde{N}_{\baXbrw}^{\baqbrw}$ denotes the compensated measure of the random point measure $N_{\baXbrw}^{\baqbrw}$ corresponding to the CTMC associated with $(\baqbrw_t)_{t\in [0,\Tf)}$; see Appendix \ref{sec:stochastic_calculus} for completeness.
Since
\[\tilde{N}^{\baqbrw}_{\baXbrw} = {N}_{\baXbrw}^{\baqbrw} - \bar \rmn_{\baXbrw}^{\baqbrw} \eqsp,\]
where $\bar \rmn_{\baXbrw}^{\baqbrw}$ is the compensator of ${N}_{\baXbrw}^{\baqbrw}$, we obtain that
\begin{align}
    \E[|\Mbrw_\sigma(t)|] &\leq \E \l[ \int_{[0,t] \times \N^d} | \f^\sigma(s,x) - \f^\sigma(s, \baXbrw_{s-}) | {N}_{\baXbrw}^{\baqbrw} (\rmd s \rmd x) \r] \\
    &\quad+ \E \l[ \int_{[0,t] \times \N^d} | \f^\sigma(s,x) - \f^\sigma(s, \baXbrw_{s-}) | \bar \rmn_{\baXbrw}^{\baqbrw} (\rmd s \rmd x) \r] \\
    &= 2\E \l[ \int_{[0,t] \times \N^d} | \f^\sigma(s,x) - \f^\sigma(s, \baXbrw_{s-}) | \bar \rmn_{\baXbrw}^{\baqbrw} (\rmd s \rmd x) \r] \eqsp.
\end{align}
Showing $\E[|\Mbrw_\sigma(t)|] < \infty$ is equivalent to showing that 
\begin{align}
    D^\sigma_t = \E \l[\int_{[0,t]} \sum_{\sigma' \in \mcs} \l| \mathbf{f}^{{\sigma}}(s,\sigma'(\baXbrw_{s}) ) - \mathbf{f}^{{\sigma}}(s,\baXbrw_{s} ) \r| (\mathbf{f}^{{\sigma'}}(s,\baXbrw_{s} )+1) \rmd s  \r] < \infty \eqsp.
   \end{align}
By Cauchy-Schwarz inequality, it suffices to show
\[
\int_{[0,t]} \E \l[ \sum_{\sigma' \in \mcs} \l( \mathbf{f}^{{\sigma}}(s,\sigma'(\baXbrw_{s}) ) - \mathbf{f}^{{\sigma}}(s,\baXbrw_{s} ) \r)^2 \rmd s  \r] <\infty
\]
and
\[
\int_{[0,t]} \E \l[ \sum_{\sigma' \in \mcs} (\mathbf{f}^{{\sigma'}}(s,\baXbrw_{s} )+1)^2 \rmd s  \r] <\infty
\]
to obtain the finiteness of $D^\sigma_t$.
By \Cref{prop:bound_uq_square_brw}, we have
\begin{align}
    &\quad\int_{[0,t]} \E \l[ \sum_{\sigma' \in \mcs} (\mathbf{f}^{{\sigma'}}(s,\baXbrw_{s} )+1)^2   \r] \rmd s\\
    &= \int_{[0,t]} \E \l[ \sum_{\sigma' \in \mcs}   \l(\uBRW_s \foqbrw (\baXbrw_s, \sigma'(\baXbrw_s)) \r)^2  \r] \rmd s \\
    &\lesssim [\mstar_1+\m^\star_2+d] \int_0^t \l[1+ \frac{1}{(\Tf-s)^2} \r] \rmd s \\
    &= [\mstar_1+\m^\star_2+d] \l(t +\frac{1}{\Tf-t}-\frac{1}{\Tf}\r) \eqsp,
\end{align}
which is finite for any $t \in [0,\Tf)$.
It remains to show
  \[
  \int_{[0,t]} \E \l[ \sum_{\sigma' \in \mcs} \l( \mathbf{f}^{{\sigma}}(s,\sigma'(\baXbrw_{s}) ) - \mathbf{f}^{{\sigma}}(s,\baXbrw_{s} ) \r)^2 \rmd s  \r] <\infty \eqsp.
  \]
  Notice that $\mathbf{f}^\sigma(t,x) +1 = \foqbrw \uBRW_t(x,\sigma(x))$ is nonnegative for all $(t,x) \in [0,\Tf) \times \N^d$ and $\sigma \in \mcs$, thus
  \begin{align}
      &\quad\int_{[0,t]} \E \l[ \sum_{\sigma' \in \mcs} \l( \mathbf{f}^{{\sigma}}(s,\sigma'(\baXbrw_{s}) ) - \mathbf{f}^{{\sigma}}(s,\baXbrw_{s} ) \r)^2 \r] \rmd s  \\
      &\leq \underbrace{\int_{[0,t]} \E \l[ \sum_{\sigma' \in \mcs} \l( \mathbf{f}^{{\sigma}}(s,\sigma'(\baXbrw_{s}) ) +1 \r)^2  \r]
      \rmd s }_{A_1^\sigma} +  \underbrace{\int_{[0,t]} \E \l[ \sum_{\sigma' \in \mcs} \l( \mathbf{f}^{{\sigma}}(s,\baXbrw_{s} ) +1 \r)^2   \r] \rmd s}_{A_2^\sigma} \eqsp.
  \end{align}
  The term $A_2^\sigma$ can be estimated by \Cref{prop:bound_uq_square_brw}:
  \begin{align}
      A_2^\sigma \leq 2d \int_{[0,t]} \E \l[ \sum_{\sigma \in \mcs} \l( \mathbf{f}^{{\sigma}}(s,\baXbrw_{s} ) +1 \r)^2 \r] \rmd s <\infty \eqsp.
  \end{align}
  Next, we control $A_1^\sigma$ as follows
  \begin{align}
      A_1^\sigma &\leq \int_{\Tf-t}^{\Tf} \sum_{\sigma' \in \mcs} \E   \l(  \foqbrw(\sigma(\sigma'(\foXbrw_{s})),\sigma'(\foXbrw_{s}))\frac{\fopbrw_{0,s}(\foXbrw_0,\sigma(\sigma'(\foXbrw_{s})))}{\fopbrw_{0,s}(\foXbrw_0,\sigma'(\foXbrw_{s}))} \r)^2  \rmd s \eqsp.
  \end{align}
  Computing as in \Cref{prop:bound_uq_square_brw}, we obtain for any $\ell \in [d]$,
  \begin{align}
      A_1^{\sigma^\ell_+} \lesssim \int_{\Tf-t}^{\Tf} \sum_{\sigma' \in \mcs} \E \l[ (\|\sigma'(\foXbrw_s)\|_2^2+1)(1-\rme^{-s})^2 + \|\foXbrw_0\|_2^2 (\rme^s-1)^{-2}  \r] \rmd s \eqsp,
  \end{align}
  and
  \begin{align}
      A_1^{\sigma^\ell_-} \lesssim \int_{\Tf-t}^{\Tf} \sum_{\sigma' \in \mcs} \E \l[ (\|\sigma'(\foXbrw_s)\|_2^2+1)(1-\rme^{-s})^{-2} \r] \rmd s \eqsp.
  \end{align}
  For any $\sigma' \in \mcs$, we have $\|\sigma'(x)\|_2 \leq \|x\|_2+1$, hence $\|\sigma'(x)\|_2^2 \leq 2\|x\|^2_2+2$ . Consequently, for any $t \in [0,\Tf)$,
  \begin{align}
      A_1^{\sigma^\ell_+} \lesssim 2d \l[ (\m_2(\mubrw_s)+1)\int_{\Tf-t}^{\Tf} (1-\rme^{-s})^2 \rmd s + \m^\star_2\int_{\Tf-t}^{\Tf} (\rme^s-1)^{-2} \rmd s \r] <\infty \eqsp,
  \end{align}
  and
  \begin{align}
      A_1^{\sigma^\ell_-} \lesssim 2d  (\m_2(\mubrw_s)+1)\int_{\Tf-t}^{\Tf} (1-\rme^{-s})^{-2} \rmd s  <\infty \eqsp.
  \end{align}
 Thus we get the desired integrability of $\Mbrw_\sigma(t)$ for any $t  \in [0,\Tf)$ and conclude the proof of \Cref{lem:true_martingale_brw}.
\end{proof}

\section{Additional Proofs of the main paper}

\subsection{Proof of \Cref{lem:bound_fisher_masked}}\label{proof_lem:bound_fisher_masked}

  For $t \in [0,\Tf)$, we have
  \begin{align}
      \Ic(\mum_{\Tf-t}) &= \E \l[ \sum_{i\in [d]}\sum_{j \in \Z_m}\l(\uM_t \1_{i \in \msm_{\Tf-t}} \log \uM_t - \uM_t \1_{i \in \msm_{\Tf-t}} +1 \r) (\foXm_{\Tf-t},\um^{(i)}_j (\foXm_{\Tf-t}))  \r] \notag \\
      &\overset{\log a \leq a-1}{\leq} \E \l[ \sum_{i\in [d]}\sum_{j \in \Z_m}\l(\uM_t \1_{i \in \msm_{\Tf-t}} -1 \r)^2 (\foXm_{\Tf-t},\um^{(i)}_j (\foXm_{\Tf-t}))  \r] \notag \\
      &\quad= \underbrace{\E \l[ \sum_{j \in \Z_m} \sum_{i\in \msm_{\Tf-t}} \l(\uM_t -1 \r)^2 (\foXm_{\Tf-t},\um^{(i)}_j (\foXm_{\Tf-t}))  \r]}_{A_1} + m \E \l[|\msm^{\complement}_{\Tf-t} | \r] \eqsp. \label{eq:fisher_1}
  \end{align}
  From \Cref{prop:condition_expectation_masked_d}, the first term $A_1$ can be written as
  \begin{align}
      A_1 = \E \l[ \sum_{j \in \Z_m} \sum_{i\in \msm_{\Tf-t}} \l( \E \l[ \frac{\fopm_{0,\Tf-t}(\foXm_0,\um^{(i)}_j(x))}{\fopm_{0,\Tf-t}(\foXm_0,x)} -1 \middle| \foXm_{\Tf-t} = x  \r] \r)^2   \r] \eqsp.
  \end{align}
  Applying Jensen inequality gives
  \begin{align}
      A_1 \leq \E \l[ \sum_{j \in \Z_m} \sum_{i\in \msm_{\Tf-t}}  \l( \frac{\fopm_{0,\Tf-t}(\foXm_0,\um^{(i)}_j(\foXm_{\Tf-t}))}{\fopm_{0,\Tf-t}(\foXm_0,\foXm_{\Tf-t})} -1 \r)^2\r] \eqsp.
  \end{align}
  Using the formula of the transition probability given in eq. (14) in the main paper, we deduce
  \begin{align}
      A_1 &\leq \E \l[ \sum_{j \in \Z_m} \sum_{i\in \msm_{\Tf-t}}  \l( \frac{\fopmm_{0,\Tf-t}((\foXm_0)^i, j)}{\fopmm_{0,\Tf-t}((\foXm_0)^i,m)} -1 \r)^2\r] \\
      &= \E \l[ \sum_{i\in \msm_{\Tf-t}}  \l( \frac{\alpha_{\Tf-t}}{1-\alpha_{\Tf-t}} -1 \r)^2\r] = \l( \frac{\alpha_{\Tf-t}}{1-\alpha_{\Tf-t}} -1 \r)^2 \E \l[ |\msm_{\Tf-t}| \r] \eqsp.
  \end{align}
  On the other hand, by the previous computation in eq. (45) of the main paper, we obtain that
  \begin{align}
      \E \l[|\msm_{\Tf-t}| \r] = d (1-\alpha_{\Tf-t}) \quad \text{and} \quad \E \l[ |\msm_{\Tf-t}^{\complement}| \r] = d- \E \l[|\msm_{\Tf-t}| \r] = d \alpha_{\Tf-t} \eqsp.
  \end{align}
  Plugging altogether into \eqref{eq:fisher_1} implies
  \begin{align}
      \Ic(\mum_{\Tf-t}) &\leq d\l( \frac{\alpha_{\Tf-t}}{1-\alpha_{\Tf-t}} -1 \r)^2 (1-\alpha_{\Tf-t}) + md \alpha_{\Tf-t} \\
      &\leq d \l( \frac{\alpha_{\Tf-t}^2}{1-\alpha_{\Tf-t}}-2\alpha_{\Tf-t} +1 - \alpha_{\Tf-t} + m\alpha_{\Tf-t} \r) \\
      &  \overset{\alpha_{\Tf-t} \in (0,1]}{\lesssim} d \l( \frac{\alpha_{\Tf-t}}{1-\alpha_{\Tf-t}} + m \r) \eqsp.
  \end{align}
  In particular, for a constant generator $\beta(t) =1$ for all $t \in [0,\Tf]$, we then obtain the closed-form of $\alpha_t$ as follows
  \begin{align}
      \alpha_t = \rme^{-\int_0^t 1 \rmd s} = \rme^{- t} \quad \text{for $t \in [0,\Tf]$} \eqsp.
  \end{align}
  As a result, we obtain a clear and specific bound on $\Ic(\mum_{\Tf-t})$ for any $t \in [0,\Tf)$:
  \begin{align}
      \Ic(\mum_{\Tf-t}) \lesssim d \l( \frac{1}{\rme^{\Tf-t}-1} +m \r) \overset{\rme^a \geq a+1}{\leq} d \l( \frac{1}{\Tf-t}+m \r) \eqsp,
  \end{align}
  and the proof concludes.

\subsection{Proof of \Cref{theo:start_delta_mask}}\label{sec:proof_theo:start_delta_mask}

Recall that $(\baXmstar_t)_{t\in [0,\Tf-\eta]}$ and $(\baYmstar_t)_{t\in [0,\Tf-\eta]}$ correspond to the same Markov kernel $(\bapmstar_t)_{t\in [0,\Tf-\eta]}$ associated with the estimated generator $(\baqmstar_t)_{t\in [0,\Tf-\eta]}$ but are initialized differently: $\baXmstar_0 \sim \mathrm{Uniform}(\Z^d_m)\fopm_{0,\Tf}$ while $\baYmstar_0 \sim \updelta_{\text{MASK}}^{\otimes d}$.
We begin by proving the following inequality
\begin{align}\label{eq:data_processing_tv}
    \tvnorm{\mathrm{Law}(\baXmstar_{\Tf-\eta}) - \mathrm{Law}(\baYmstar_{\Tf-\eta})} \leq \tvnorm{
    \mathrm{Law}(\baXmstar_0) - \mathrm{Law}( \baYmstar_0) } \eqsp.
\end{align}
Indeed, by definition of total variation, we have
\begin{align}
     &\quad\tvnorm{\mathrm{Law}(\baXmstar_{\Tf-\eta}) - \mathrm{Law}(\baYmstar_{\Tf-\eta})}\\
     &= \frac{1}{2} \sum_{y \in \tZ^d_m} \l| \sum_{x\in \tZ^d_m} \bapmstar_{0,\Tf-\eta}(x,y) \l[ (\mathrm{Law}(\baXmstar_0))(x) - (\mathrm{Law}(\baYmstar_0))(x) \r]  \r| \\
     &\leq \frac{1}{2}  \sum_{x\in \tZ^d_m} \l| (\mathrm{Law}(\baXmstar_0))(x) - (\mathrm{Law}(\baYmstar_0))(x)  \r| \underbrace{\sum_{y \in \tZ^d_m} \bapmstar_{0,\Tf-\eta}(x,y)}_{=1} \\
     &= \tvnorm{
    \mathrm{Law}(\baXmstar_0) - \mathrm{Law}( \baYmstar_0) } \eqsp.
\end{align}
On the other hand, the right hand side of \eqref{eq:data_processing_tv} can be computed explicitly as 
\begin{align}
    \tvnorm{
    \mathrm{Law}(\baXmstar_0) - \mathrm{Law}( \baYmstar_0) } &= \tvnorm{\mathrm{Uniform}(\Z^d_m) \fopm_{0,\Tf} - \updelta_{\text{MASK}}^{\otimes d} } \\
    &= \frac{1}{2} \l| (\mathrm{Uniform}(\Z^d_m) \fopm_{0,\Tf})(m,\ldots,m) -1  \r| \\
    &\leq 1-(1-\alpha_{\Tf})^d \eqsp.
\end{align}
Applying the Bernoulli's inequality for $\alpha_{\Tf} \in [0,1]$ gives us \(
    1- (1-\alpha_{\Tf})^d \leq d\alpha_{\Tf}
\)
, thereby
\begin{align}
    \tvnorm{\mathrm{Law}(\baXmstar_{\Tf-\eta}) - \mathrm{Law}(\baYmstar_{\Tf-\eta})} \leq d\alpha_{\Tf} \eqsp.
\end{align}
Combining this with Theorem 3.1.2 and using the triangle inequality yield the first claim of Theorem 3.1.4. The second claim can be deduced straightforwardly from Theorem 3.1.3 and the closed-form of $\alpha_t = \rme^{-t}$ for any $t \in [0,\Tf]$. With the particular choices of $\Tf, c$ and $\eta$ in Theorem 3.1.4, we then have the error
\begin{align}
    \tvnorm{\mustar - \mathrm{Law}(\baYmstar_{\Tf-\eta})} \lesssim \varepsilon+ \varepsilon\sqrt{\Tf} = \tilde{O}(\vareps)\eqsp,
\end{align}
and the number of iterations is 
\begin{align}
    K &\lesssim \frac{\Tf+ \log(m+\eta^{-1})}{\log \l(1+ \varepsilon^2/[dm\Tf + dm \log (m+d/\varepsilon)] \r)} = \tilde O(dm/\varepsilon^2) \eqsp.
\end{align}

\subsection{Proof of \Cref{lem:2_brw}}\label{proof_lem:2_brw}

For any $t \in [0,\Tf)$ and $\ell \in [d]$,  following Lemma 5.2.1 of the main paper, we have
    \begin{align}
        &\quad\E \l[ \uBRW_t \foqbrw (\baXbrw_t, \sigma^\ell_-(\baXbrw_t)) \r]\\
        &= \sum_{x\in \N^d} \frac{\mubrw_{\Tf-t}(\sigma^\ell_-(x))}{\mubrw_{\Tf-t}(x)}.\frac{\brw(x)}{\brw(\sigma^\ell_-(x))} \foqbrw (x,\sigma^\ell_-(x))\mubrw_{\Tf-t}(x) \\
        &= \sum_{x^\ell \geq 1} \frac{\brw(x)}{\brw(\sigma^\ell_-(x))} \foqbrw (x,\sigma^\ell_-(x))\mubrw_{\Tf-t}(\sigma^\ell_-(x)) \\
        &{=} \sum_{x^\ell \geq 1}  \foqbrw (\sigma^\ell_-(x),x)\mubrw_{\Tf-t}(\sigma^\ell_-(x)) \\
        &= \sum_{x\in \N^d} \foqbrw (x, \sigma^\ell_+(x)) \mubrw_{\Tf-t}(x) \\
        &= \E \l[\foqbrw(\baXbrw_t, \sigma^\ell_+(\baXbrw_t)) \r] \eqsp.
    \end{align}
The second claim is obtained analogously
and we conclude the proof.

\subsection{Proof of \Cref{lem:1_brw}}\label{proof_lem:1_brw}

    The characterization of the discrete score in Lemma 5.2.1 of the main paper yields
    \begin{align}
        & \quad
        \E \l[ \uBRW_t \log \uBRW_t \foqbrw(\baXbrw_t, \sigma^\ell_+(\baXbrw_t))   \r]\\
        &= \sum_{x\in \N^d} \log \l(\frac{\tmubrw_{\Tf-t}(\sigma^\ell_+(x))}{\tmubrw_{\Tf-t}(x)}\r) \frac{\tmubrw_{\Tf-t}(\sigma^\ell_+(x))}{\tmubrw_{\Tf-t}(x)} \foqbrw(x,\sigma^\ell_+(x)) \mubrw_{\Tf-t}(x)\\
        &= - \sum_{x\in \N^d} \log \l(\frac{\tmubrw_{\Tf-t}(x)}{\tmubrw_{\Tf-t}(\sigma^\ell_+(x))}\r) \frac{\brw(x)}{\brw(\sigma^\ell_+(x))} \foqbrw(x,\sigma^\ell_+(x)) \mubrw_{\Tf-t}(\sigma^\ell_+(x))\\
        &{=} -\sum_{x\in\N^d} \log \l(\frac{\tmubrw_{\Tf-t}(x)}{\tmubrw_{\Tf-t}(\sigma^\ell_+(x))}\r)  \foqbrw(\sigma^\ell_+(x),x) \mubrw_{\Tf-t}(\sigma^\ell_+(x)) \\
        &= -\sum_{x \in \N^d} \log \l(\frac{\tmubrw_{\Tf-t}(\sigma^\ell_-(x))}{\tmubrw_{\Tf-t}(x)}\r)  \foqbrw(x,\sigma^\ell_-(x)) \mubrw_{\Tf-t}(x) \\
        &= - \E \l[ \log \uBRW_t \foqbrw (\baXbrw_t, \sigma^\ell_-(\baXbrw_t)) \r] \eqsp.
    \end{align}
    The second claim follows by the same reasoning, which completes the proof.
    
\subsection{Proof of \Cref{prop:4_brw}}\label{sec:proof_prop:4_brw}
Following \cite[Proposition 2.1]{conforti2022probabilistic}, we have
    \begin{equation}\label{estimate:monotone}
        - \frac{\rmd }{\rmd t} \mE (\mathbf{h}'(\tmubrw_t), \tmubrw_t) \geq  \mE (\mathbf{h}'(\tmubrw_t),\tmubrw_t) \eqsp,
    \end{equation}
    where $\tmubrw = \mubrw/\brw$ denotes the relative marginal density of $(\foXbrw_t)_{t\in [0,\Tf]}$, function $\mathbf{h}(a) = a\log a-a+1$ for $a>0$ and the Dirichlet form admits the following explicit formula
    \begin{equation}
        \mE (f,g) = - \sum_{x \in \N^d} f(x) (\foqbrw g)(x)  \brw (x) \eqsp, \quad \text{for any functions $f,g$} \eqsp.
    \end{equation}
  The Dirichlet form in~\eqref{estimate:monotone} can be computed as:
    \begin{align}
        &\quad \mE (\mathbf{h}'(\tmubrw_t),\tmubrw_t) = \mE \l( \log \tmubrw_t, \tmubrw_t \r)\\
        &= -\sum_{x \in \N^d}
        \log \tmubrw_t(x) \l( \foqbrw \tmubrw_t \r)(x)  \brw (x) \\
        &= -\sum_{x \in \N^d}
        \log \tmubrw_t(x) \sum_{\sigma \in \mcs} \foqbrw ( x, \sigma(x))
        \l( \tmubrw_t (\sigma(x)) - \tmubrw_t (x) \r)  \brw (x)\\
        &= -\sum_{x \in \N^d} \sum_{\sigma \in \mcs}
        \log  \tmubrw_t (x) \foqbrw ( x, \sigma(x))
        \l( \frac{\tmubrw_t (\sigma(x))}{ \tmubrw_t (x)}  - 1 \r) \tmubrw_t (x)  \brw (x) \\
        &= -\sum_{x \in \N^d} \sum_{\sigma \in \mcs}
        \foqbrw ( x, \sigma(x))
        \log  \tmubrw_t (x)
        \l( \uBRW_{\Tf-t} (x, \sigma(x))  - 1 \r)   \mubrw_t (x)  \\
        &= \E \l[
        \sum_{\sigma \in \mcs}
        \foqbrw ( \foXbrw_t, \sigma(\foXbrw_t))
        \log \frac{1}{\tmubrw_t (\foXbrw_t)}
        \l( \uBRW_{\Tf-t} (\foXbrw_t, \sigma(\foXbrw_t))  - 1 \r)
        \r] \eqsp.
    \end{align}
    We write the above in terms of $\uBRW_{\Tf-t}$ as
    \begin{align}
        &\quad\mE (\mathbf{h}'(\tmubrw_t),\tmubrw_t) \\
        &= \E \Bigg[
        \sum_{\sigma \in \mcs}
        \foqbrw ( \foXbrw_t, \sigma(\foXbrw_t))\l( \uBRW_{\Tf-t} (\foXbrw_t, \sigma(\foXbrw_t))  - 1 \r)\\
        &\qquad
        \l\{
        \log \uBRW_{\Tf-t} (\foXbrw_t, \sigma(\foXbrw_t))
        - \log \tmubrw_t (\sigma(\foXbrw_t)
        \r\}
        \Bigg]\\
        &= \E \l[
        \sum_{\sigma \in \mcs}
        \uBRW_{\Tf-t}\log \uBRW_{\Tf-t}\foqbrw ( \foXbrw_t, \sigma(\foXbrw_t)) \r] + \E \Big[ \foqbrw ( \foXbrw_t, \sigma(\foXbrw_t))\\
        &\qquad \quad\Big\{
         \log  \tmubrw_t (\foXbrw_t)- \log  \tmubrw_t (\sigma (\foXbrw_t))  \uBRW_{\Tf-t} (\foXbrw_t, \sigma)
        \Big\}
        \Big] \eqsp.
    \end{align}
    The last term can be cancelled out thanks to the following computation
    \begin{align}
        & \quad \E \l[\sum_{\sigma \in \mcs} \log  \tmubrw_t (\sigma (\foXbrw_t))  \uBRW_{\Tf-t} \foqbrw (\foXbrw_t, \sigma(\foXbrw_t))  \r]\\
        &= \sum_{x \in \N^d} \sum_{\sigma \in \mcs} \foqbrw (x, \sigma (x) )
        \log  \tmubrw_t (\sigma(x)) \frac{\brw (x)}{\brw (\sigma(x))} \mubrw_t(\sigma(x)) \\
        &{=} \sum_{x \in \N^d} \sum_{\sigma \in \mcs} \foqbrw (\sigma(x), x)
        \log  \tmubrw_t (\sigma(x))   \mubrw_t(\sigma(x))  \\
        &= \sum_{x \in \N^d} \sum_{\sigma \in \mcs} \foqbrw ( x, \sigma(x))
        \log  \tmubrw_T (x)  \mubrw_t(x) \\
        &= \E \l[ \sum_{\sigma \in \mcs}
        \foqbrw (\foXbrw_t, \sigma(\foXbrw_t))
        \log \tmubrw ( \foXbrw_t)
        \r] \eqsp.
    \end{align}
    Consequently, the Dirichlet form is simplified as
    \begin{align}
       \mE (\mathbf{h}'(\tmubrw_t),\tmubrw_t)  &= \E \l[
        \sum_{\sigma \in \mcs}
        \uBRW_{\Tf-t}\log \uBRW_{\Tf-t}  \foqbrw ( \foXbrw_t, \sigma(\foXbrw_t)) \r]\\
        &= \E \l[
        \sum_{\sigma \in \mcs}
        \foqbrw
        \uBRW_{\Tf-t}\log \uBRW_{\Tf-t} ( \baXbrw_{\Tf-t}, \sigma(\baXbrw_{\Tf-t})) \r] \eqsp.
    \end{align}
For $t \in [0,\Tf]$, denote
\begin{align}
    \Ic_{\brw}(\mubrw_{\Tf-t}) &\eqdef \Ic_{\brw}(\mubrw_{\Tf-t}) = \E \l[\sum_{\sigma \in \mcs}\mathbf{h} (\uBRW_t)\foqbrw (\baXbrw_t, \sigma(\baXbrw_t) ) \r]\\
    &\overset{Lemma 5.2.2}{=}\E \l[
        \sum_{\sigma \in \mcs}
        \uBRW_{t}\log \uBRW_{t} \foqbrw ( \baXbrw_{t}, \sigma(\baXbrw_{t})) \r] \eqsp,
\end{align}
then the computation above shows that $\Ic_{\brw}(\mubrw_{\Tf-t}) = \mE (\mathbf{h}'(\tmubrw_{\Tf-t}), \tmubrw_{\Tf-t})$. Thus~\eqref{estimate:monotone} implies
\begin{equation}
    \frac{\rmd}{ \rmd t} \Ic_{\brw}(\mubrw_{\Tf-t}) \geq  \Ic_{\brw}(\mubrw_{\Tf-t}) \eqsp.
\end{equation}
 Combining this with Gronwall's lemma yields
\begin{equation}
    \Ic_{\brw}(\mubrw_{\Tf-t}) \geq \rme^{t-s} \Ic_{\brw}(\mubrw_{\Tf-s}) \eqsp, \quad \text{for any } 0 \leq s \leq t \leq \Tf \eqsp.
\end{equation}
In particular, $\Ic_{\brw}(\mubrw_{\Tf-t})$ is non-decreasing on $[0,\Tf]$ and the proof concludes.

 \subsection{Proof of \Cref{prop:bound_fisher_rw}}\label{proof_prop:bound_fisher_rw}
Recall that $\Ic(\murw_{\Tf-s}) \geq \Ic(\murw_{\Tf-t})$ for $0 \leq t \leq s \leq \Tf$. Integrating from $t$ to $\Tf$ gives
\begin{align}
   \int_0^{\Tf-t} \Ic(\murw_{s}) \rmd s = \int_t^{\Tf} \Ic(\murw_{\Tf-s}) \rmd s \geq \Ic(\murw_{\Tf-t}) \int_t^{\Tf}1\rmd s = (\Tf-t) \Ic(\murw_{\Tf-t}) \eqsp.\label{eq:bound_fisher_rw_1}
\end{align}
By direct computation, we can show that
  \begin{align}
      \Ic(\murw_{t}) = -2\frac{\rmd }{\rmd t} \KL(\murw_t|\rw) \eqsp,
  \end{align}
 thus \eqref{eq:bound_fisher_rw_1} implies
  \begin{align}
      \Ic(\murw_{\Tf-t}) &\leq
       (\Tf-t)^{-1} \int_0^{\Tf-t} -2\frac{\rmd }{\rmd s} \KL(\murw_s|\rw) \rmd s \notag \\
      &\lesssim (\Tf-t)^{-1} \l[\KL(\mustar|\rw) - \KL (\murw_{\Tf-t}|\rw) \r] \leq (\Tf-t)^{-1} \KL(\mustar|\rw) \eqsp.\label{eq:bound_fisher_rw_2}
  \end{align}
  To this end, note that
  \begin{align}
      \KL(\mustar|\rw) &= \sum_{x\in \Z^d_m} \mustar(x)\underbrace{\log \mustar(x)}_{\leq 0} - \sum_{x\in \Z^d_m} \mustar(x) \log \rw(x) \notag  \\
      &\leq \sum_{x\in \N^d} \mustar(x) \log \l( m^d  \r) \eqsp, \text{ since $\rw = \mathrm{Uniform}(\Z^d_m)$} \notag \\
      &= d\log (m) \sum_{x \in \Z^d_m} \mustar(x) = d\log (m) \eqsp,
  \end{align}
  which completes the proof.

 \subsection{Proof of \Cref{theo:early_stopping_rw}}\label{sec:proof_theo:early_stopping_rw}
 
For $\eta \in (0,\Tf)$, we have
\begin{align}
    \tvnorm{\mustar - \murw_{\eta}} & \leq \P(\foXrw_\eta \neq \foXrw_0) \leq \sum_{x\in \Z^d_m} \mustar(x) (1-\rme^{-d\eta}) = 1-\underbrace{\rme^{-d\eta}}_{\geq -d\eta+1} \leq d\eta  \eqsp.\label{eq:theo_early_stopping_rw_1}
\end{align}
Furthermore, proceeding similarly as in Theorem 3.2.2 for the early-stopped process implies
\begin{align}
     \KL(\murw_\eta|\mathrm{Law}(\baXrwstar_{\Tf-\eta}))
      &\lesssim  \rme^{-\frac{16\pi^2}{25m^2}\Tf}d\log(m) +  \varepsilon^2\Tf + {cd \log(m)} \log(L_\eta) \eqsp,
\end{align}
where $L_\eta= d^{-1}\Ic(\murw_\eta)$. Relying on the upper bound of discrete Fisher information showed in Lemma 5.2.8, we can deduce that
\begin{align}
    \KL(\murw_\eta|\mathrm{Law}(\baXrwstar_{\Tf-\eta}))
      &\lesssim  \rme^{-\frac{16\pi^2}{25m^2}\Tf}d\log(m) +  \varepsilon^2\Tf + {cd \log(m)} \log(\eta^{-1}\log(m)) \eqsp. \label{eq:theo_early_stopping_rw_2}
\end{align}
Combining \eqref{eq:theo_early_stopping_rw_1} and \eqref{eq:theo_early_stopping_rw_2}, by triangle and Pinsker's inequalities, we then have
\begin{align}
    &\quad \tvnorm{\mustar - \mathrm{Law}(\baXrwstar_{\Tf-\eta})}\\
    &\leq \tvnorm{\mustar - \murw_{\eta}} + \tvnorm{\murw_\eta - \mathrm{Law}(\baXrwstar_{\Tf-\eta})} \\
    &\lesssim d\eta + \sqrt{\rme^{-\frac{16\pi^2}{25m^2}\Tf} d\log(m)+  \varepsilon^2\Tf + {cd \log(m)} \log(\eta^{-1}\log(m)) } \eqsp.
\end{align}
Furthermore, choosing $\eta$, $\Tf$ and $c$ as in  eq. (31) of Theorem 3.2.3 imply
  \begin{align}
      \tvnorm{\mustar - \mathrm{Law}(\baXrwstar_{\Tf-\eta})} &\lesssim \varepsilon^2+ \sqrt{\varepsilon^2\Tf} \\
      &\lesssim \varepsilon^2+ m \sqrt{\varepsilon^2\log (d\log(m)/\varepsilon^2)} \eqsp.
  \end{align}
  The number of iterations is calculated as follows
  \begin{align}
       &\quad K = k_0 + k_1 +K-k_0-k_1\\
       &{\lesssim} \dfrac{\Tf -\eta  - 1}{c} + \dfrac{\log(L)}{c} + \dfrac{1}{c} \overset{Lemma 5.2.8}{\lesssim} \dfrac{\Tf + \log(\eta^{-1}\log(m))}{c} \\
       &\lesssim \frac{d \log (m)\log (d \log(m)/\varepsilon)[m^2 \log (d\log(m)/\varepsilon^2) + \log (d\log(m)/\varepsilon)] }{\varepsilon^2} \eqsp.
  \end{align}
 Therefore, our algorithm has the complexity $\tilde O(dm^2/\varepsilon^2)$, where the notation $\tilde O$ indicates that all the logarithms of $d,m,\varepsilon$ have been dropped, and the proof of Theorem 3.2.3 concludes.

\subsection{Proof of \Cref{prop:bound_fisher_brw}}\label{proof_prop:bound_fisher_brw}
  Recall that
  \begin{align}
      \frac{\rmd }{\rmd t} \KL(\mubrw_t|\brw) = -\mE(\mathbf{h}'(\tmubrw_t), \tmubrw_t) \overset{Lemma 5.2.7}{=} -\Ic_{\brw}(\mubrw_{t}) \eqsp.
  \end{align}
  Proceeding as in the proof of  Lemma 5.2.8,
  we then obtain
  \begin{align}
      \Ic_{\brw}(\mubrw_{\Tf-t}) 
      &\leq (\Tf-t)^{-1} \KL(\mustar|\brw) \eqsp.\label{eq:bound_fisher_1_brw}
  \end{align}
  To this end, let us compute $\KL(\mustar|\brw)$:
  \begin{align}
      \KL(\mustar|\brw) &= \sum_{x\in \N^d} \mustar(x)\underbrace{\log \mustar(x)}_{\leq 0} - \sum_{x\in \N^d} \mustar(x) \log \brw(x)  \notag \\
      &\leq \sum_{x\in \N^d} \mustar(x) \log \l(\rme^d \prod_{\ell=1}^d ((x^\ell)!)   \r) \notag\\
      &= d + \sum_{x \in \N^d} \mustar(x) \sum_{\ell=1}^d \sum_{k=1}^{x^\ell} \underbrace{\log(k)}_{\leq k-1 \leq x^\ell} \1_{x^\ell \geq 1} \notag \\
      &\leq d + \sum_{x \in \N^d} \sum_{\ell=1}^d (x^\ell)^2 \mustar(x) = d+ \m^\star_2 \eqsp.
  \end{align}
   Replacing it into \eqref{eq:bound_fisher_1_brw} yields the desired upper bound.
 
 \subsection{Proof of \Cref{theo:early_stopping_brw}}\label{sec:proof_theo:early_stopping_brw}
 
For $\eta \in (0,\Tf)$, the total variation between $\mustar$ and $\mubrw_{\eta}$ is evaluated as follows
\begin{align}
    \tvnorm{\mustar - \mubrw_{\eta}} &= \frac{1}{2}\sum_{x\in \N^d} |\mustar(x) - \mubrw_\eta(x) | \leq \P(\foXbrw_\eta \neq \foXbrw_0) \notag \\
    &\leq \sum_{x\in \N^d} \mustar(x) (1-\rme^{-(d+\sum_{\ell=1}^d x^\ell)\eta}) \notag \\
    &\overset{\rme^a \geq a+1}{\leq} \sum_{x\in \N^d} \mustar(x)\l(d+\sum_{\ell=1}^d x^\ell \r) \eta \notag \\
    &=\eta \E \l[d+\sum_{\ell=1}^d (\foXbrw_0)^\ell \r] = \eta (d + \mstar_1)\eqsp.\label{eq:tv_early_brw}
\end{align}
On the other hand, note that the assumption $\mstar_2$ suffices for all the computation in Theorem 3.3.1. Hence proceeding similarly for stopped process $(\baXbrwstar_t)_{t\in [0,\Tf-\eta]}$ gives
\begin{align}
    \KL(\mubrw_\eta|\mathrm{Law}(\baXbrwstar_{\Tf-\eta} ) ) &\lesssim \rme^{-\Tf} (d+\mstar_2) + \varepsilon^2\Tf \notag \\
    &+ (\rme^c-1)\l[ d \Tf + (d+\mstar_2) \log (L) \r]   \eqsp.\label{eq:early_stopping_brw_1}
\end{align}
Relying on the upper bound of discrete Fisher information given in Lemma 5.2.9 yields
\begin{align}
    \KL(\mubrw_\eta|\mathrm{Law}(\baXbrwstar_{\Tf-\eta} ) ) &\lesssim \rme^{-\Tf} (d+\mstar_2) + \varepsilon^2\Tf \notag \\
      &+ (\rme^c-1)\l[ d\Tf + (d+\mstar_2)\log (\eta^{-1}(1+\mstar_2 d^{-1})) \r]   \eqsp.\label{eq:kl_early_brw}
\end{align}
From \eqref{eq:tv_early_brw} and \eqref{eq:kl_early_brw}, by triangle and Pinsker's inequalities, we arrive at
\begin{align}
    &\quad\tvnorm{\mustar - \mathrm{Law}(\baXbrwstar_{\Tf-\eta})}\\
    &\leq \tvnorm{\mustar - \mubrw_{\eta}} + \tvnorm{\mubrw_\eta - \mathrm{Law}(\baXbrwstar_{\Tf-\eta})} \\
    &\lesssim \tvnorm{\mustar - \mubrw_{\eta}} + \sqrt{\KL(\mubrw_\eta|\mathrm{Law}(\baXbrwstar_{\Tf-\eta} ) )}\\
   &\lesssim \eta (d+\mstar_1) \\
   &+\sqrt{
   \rme^{-\Tf} (d+\mstar_2) + \varepsilon^2\Tf + (\rme^c-1)\l[ d\Tf + (d+\mstar_2)\log (\eta^{-1}(1+\mstar_2 d^{-1})) \r]} \eqsp.
\end{align}
In particular, setting $\eta, \Tf,c$ as in eq. (38), (39) of Theorem 3.3.2 directly implies
\begin{align}
     \tvnorm{\mustar- \mathrm{Law}(\baXbrwstar_{ \Tf-\eta} ) } \lesssim  \varepsilon^2+ \sqrt{\varepsilon^2\log ((d+\mstar_2)/\varepsilon^2)} \tilde O(\vareps) \eqsp,
\end{align}
and the number of iterations is given by
\begin{align}
    K \lesssim \frac{\Tf + \log(L)}{c} = \tilde O((d+\mstar_2)/\varepsilon^2),
\end{align}
since $1/\log(1+h)$ admits the complexity $\tilde O(1/h)$ for $h \approx 0$. Thus, the proof of Theorem 3.3.2 concludes.

\section{Stochastic-calculus viewpoint of CTMCs}\label{sec:stochastic_calculus}

We refer to \cite[Appendix F.1]{pham2025discrete} for a detailed introduction to point processes, stochastic integrals with respect to point processes, and the corresponding It\^o's formula. In this section, we present the stochastic-calculus viewpoint on CTMCs.

 Let $(\msx, \mcx)$ be a measurable space with $\msx \subset \R^d$ and let $(X_t)_{t \in [0,\Tf]}$ be a CTMC on $\msx$ associated with the transition rate function $q: (0,\Tf] \times \msx^2 \to \rset$ satisfying \textbf{H1} and \textbf{H2}
, where $q(X_{t-},x)$ represents the rate of jumping from the current state $X_{t-}$ to the new state $x \in \msx$ at time $t \in (0,\Tf]$. The CTMC $(X_t)_{t \in [0,\Tf]}$ defines a point process $\mathbf{p}_X= (\mathbf{p}_X(t))_{t \in [0,\Tf]}$ on $(\msx, \mcx)$, where
\begin{align}
    \mathbf{p}_X : D_{\mathbf{p}_X} \subset (0,\Tf] \to \msx \eqsp, \quad \mathbf{p}_X (t) = X_t \eqsp, \, t\in D_{\mathbf{p}_X} \eqsp,
\end{align}
with $D_{\mathbf{p}_X}$ is the set of jump times of $(X_t)_{t\in\ccint{0,\Tf}}$. We observe that $\mathbf{p}_X$ describes the new state after jumping at time $t \in (0,\Tf]$ and it constructs a corresponding random measure $N_{\mathbf{p}_X}^{q} (\rmd t \rmd x)$ on $(0,\Tf] \times \msx$ by
\begin{equation}
\begin{aligned}
     N_{\mathbf{p}_X}^{q} ((0,t] \times \mathsf{U}) &= \mathrm{Card} \l\{ s \in D_{\mathbf{p}_X}: s \leq t , \mathbf{p}_X (s) \in \mathsf{U} \r\}\\
     &= \sum_{s \in D_{\mathbf{p}_X}} \updelta_{(s,X_s)} ((0,t] \times \mathsf{U}) \qquad  \text{for } t \in (0,\Tf], \quad \mathsf{U} \in \mcx \eqsp,
\end{aligned}
\end{equation}
that counts the total jumps into $\mathsf{U} \in \mcx$ occurring during $(0,t]$. Then the random compensator $\bar\rmn_{\mathbf{p}_X}^{q}$ of $N_{\mathbf{p}_X}^{q}$ is given by
\begin{equation}
\begin{aligned}
     \bar{\mathrm{n}}_{\mathbf{p}_X}^{q} (\rmd t \rmd x) &= \sum_{y \in \msx}  q(X_{t-}, y) \1_{X_{t-} \neq y } \updelta_y(\rmd x)  \rmd t \eqsp,
\end{aligned}
\end{equation}
since the corresponding compensated measure
\begin{equation}
    \tilde N_{\mathbf{p}_X}^{q} (\rmd t \rmd x) = N_{\mathbf{p}_X}^{q} (\rmd t \rmd x) - \bar\rmn_{\mathbf{p}_X}^{q} (\rmd t \rmd x)
\end{equation}
is an $(\mF_t)$-martingale, where $(\mF_t)_{t \in [0,\Tf]}$ denotes the right-continuous and complete natural filtration generated by the process $(X_t)_{t \in [0,\Tf]}$.
Indeed, we can show the martingale property of $\tilde N_{\mathbf{p}_X}^{q}$ as follows.  For any function $f \in F^1_{\mathbf{p}_X}$, where the class $F^1_{\mathbf{p}_X}$ is given by
\begin{align}
    F^1_{\mathbf{p}} = \Big\{ f(t,x,\omega); f \text{ is } &(\mF_t)\text{-predictable and for each $t\in (0,\Tf]$}\\
    &\qquad \E\l[\int_0^{t} \int_\msx |f(s,x,\cdot)| \bar\rmn_{\mathbf{p}_X}^{q} (\rmd s \rmd x)\r] < \infty  \Big\} \eqsp,
\end{align}
define the following stochastic integrals:
\begin{equation}
    \begin{aligned}
        \int_0^{t+} \int_{\msx} f(s,x, \cdot) N_{\mathbf{p}_X}^{q} (\rmd s \rmd x) &= \sum_{\substack{ 0 < s \leq t \\ s \in D_{\mathbf{p}_X}}} f(s, \mathbf{p}_X(s), \cdot) \eqsp, \\
        \int_0^{t+} \int_{\msx} f(s,x, \cdot) \tilde N_{\mathbf{p}_X}^{q} (\rmd s \rmd x) &= \int_0^{t+} \int_{\msx} f(s,x, \cdot) N_{\mathbf{p}_X}^{q} (\rmd s \rmd x) - \int_0^{t} \int_{\msx} f(s,x, \cdot) \bar\rmn_{\mathbf{p}_X}^{q} (\rmd s \rmd x) \eqsp.
    \end{aligned}
\end{equation}
Then for $0 \leq s < t \leq \Tf$ 
, we have
\begin{equation}
    \begin{aligned}
        &\quad\E \l[ \int_s^{t+} \int_{\msx} f(z,x, \cdot) N_{\mathbf{p}_X}^{q} (\rmd z \rmd x) \middle| \mF_s \r] \\
        &= \E \l[ \sum_{\substack{ s < z \leq t \\ z \in D_{\mathbf{p}_X}}} f(z, X_z, \cdot) \middle| \mF_s \r] \\
        &= \E \l[ \int_{\msx} \int_s^{t} f(z, x, \cdot) q (X_{z-}) \frac{q(X_{z-},x)}{q (X_{z-})} \1_{ X_{z-} \neq x }  \rmd z  \rmd x \middle| \mF_s\r]  \\
        &= \E \l[  \int_s^t \int_{\msx} f(z,x, \cdot) q( X_{z-}, x ) \1_{ X_{z-} \neq x } \rmd x \rmd z \middle| \mF_s\r] \\
        &= \E \l[ \int_s^t \int_{\msx} f(z,x, \cdot) \bar\rmn_{\mathbf{p}_X}^{q} (\rmd z \rmd x) \middle| \mF_s \r] \eqsp,
    \end{aligned}
\end{equation}
meaning that $\bar\rmn_{\mathbf{p}_X}^{q}$ is indeed the compensator of the random measure $N_{\mathbf{p}_X}^{q}$.
With those notations in hand, we can decompose the CTMC $(X_t)_{t \in [0,\Tf]}$ as
\begin{equation}
    X_t = X_0 + \sum_{ \substack{0 < s \leq t \\ s \in D_{\mathbf{p}_X} }} (X_s - X_{s-})   = X_0 + \int_0^{t+} \int_{\msx} (x - X_{s-}) N_{\mathbf{p}_X}^{q} (\rmd s \rmd x) \eqsp, \quad \text{for } t \in [0,\Tf] \eqsp,
\end{equation}
under the assumption $f(s,x,\omega):= x-\omega_{s-} \in F^1_{\mathbf{p}_X}$. Applying It\^o's formula to this process, for any bounded function $F: \msx \to \rset$, we get that
\begin{equation}
\begin{aligned}
    F(X_t) - F(X_0) &= \int_0^{t+} \int_{\msx} \l\{ F(X_{s-} + x - X_{s-}) - F(X_{s-}) \r\} N_{\mathbf{p}_X}^{q} (\rmd s \rmd x) \\
    &= \int_0^{t+} \int_{\msx} \l\{ F(x) - F(X_{s-}) \r\} N_{\mathbf{p}_X}^{q} (\rmd s \rmd x)
\end{aligned}
\end{equation}

Expressing $N_{\mathbf{p}_X}^{q} = \tilde N_{\mathbf{p}_X}^{q} + \bar\rmn_{\mathbf{p}_X}^{q}$ and plugging it into the formula above yield
\begin{equation}
    \begin{aligned}
         &\quad F(X_t) - F(X_0) - \int_0^t \int_{\msx} \l\{ F(x) - F(X_{s-}) \r\} \bar\rmn_{\mathbf{p}_X}^{q} (\rmd s \rmd x)\\
         &= \int_0^{t+} \int_{\msx}  \l\{ F(x) - F(X_{s-}) \r\} \tilde N_{\mathbf{p}_X}^{q} (\rmd s \rmd x) \eqsp.
    \end{aligned}
\end{equation}
In other words, the process

 \begin{equation}
     \begin{aligned}
        \l( F(X_t) - F(X_0) - \int_0^t \int_{\msx } \l\{ F(x) - F(X_{s-}) \r\} \1_{ \{X_{s-} \neq x \}}  q(X_{s-}, \rmd x)   \rmd s  \r)_{ t \in [0,\Tf] }
     \end{aligned}
 \end{equation}
is an $(\mF_t)$-local martingale as the compensated measure $\tilde N_{\mathbf{p}_X}^{q}$ was shown to be an $(\mF_t)$-martingale in the previous computation.
It follows that for the CTMC $(X_t)_{t \in [0,\Tf]}$ with generator $(q_t)_{t\in (0,\Tf]}$, It\^o's formula asserts that the process
\begin{equation}
   \l( F(X_t) - F(X_0) - \int_0^t q F (X_{s-}) \rmd s \r)_{t\in [0,\Tf]}
\end{equation}
is an $(\mF_t)$-local martingale for any bounded function $F : \msx \to \rset$. This result aligns with Dynkin’s formula.

\section{CTMC and their corresponding martingale problem}

 \cite[Theorem 4.1]{ethier2009markov} showed that the stable conservative generator $(q_t)_{t\geq 0}$ defines a well-posed martingale problem, whose unique solution is the distribution of the CTMC $(X_t)_{t\geq 0}$ associated with $(q_t)_{t \geq 0}$. To be more precise, the definition of the martingale problem is given as follows.

Recall that the distribution of a stochastic process $(Y_t)_{t\geq 0}$ is a solution of the martingale associated with a rate matrix $(\canoq_t)_{t\geq 0}$ and initial distribution $\mubrw_0$ and write $\mathrm{Law}((Y_t)_{t\geq 0}) \in \MP(\canoq, \mubrw_0)$ if $Y_0 \sim \mubrw_0$, almost surely $t\mapsto Y_t$ is Borel measurable, and for any bounded function $f: \msx \to \rset$,
 \begin{align}
     \l(f(\canoX_t)-f( \canoX_0)-\int_0^t \canoq(s) f(\canoX_{s-}) \rmd s \r)_{t \geq 0}
    \end{align}
    is an $(\widetilde{\mcf}_t)_{t \geq 0}$-local martingale, where
    \[\widetilde{\mcf}_t = \sigma( Y_s \,: \, s \leq t) \vee
    \sigma(\int_0^t g(\canoX_{s}) \rmd s \, : \, s \leq t \, , \text{
      $g:\msx \to\rset$ bounded}) \eqsp.\]
     We say that uniqueness holds for
    the martingale problem associated with $(\canoq_t)_{t\geq 0}$ and
    initial distribution $\mubrw_0$, if for any two processes
    $(Y_t)_{t\geq 0}$ and $(Y_t^\prime)_{t\geq 0}$ are solutions to
    the associated martingale problem, then their distributions are
    equal. If uniqueness holds and for any $\mubrw_0$, the martingale
    problem associated with associated $(\canoq_t)_{t\geq 0}$ and $\mubrw_0$ admits a solution, we say that the martingale problem is well-posed. Note that since $\msx$ is a discrete
    countable space, \cite[Theorem 4.1]{ethier2009markov} implies the
    following.
    \begin{theorem}
      Assume \textbf{H1} and \textbf{H2}. Then, the
      martingale problem associated with $(q_t)_{t\geq 0}$ is
      well-posed. In addition, the distribution of the CTMC $(X_t)_{t\geq 0}$ associated with $(q_t)_{t\geq 0}$ and starting from the distribution $\mubrw_0$, is the unique solution of the martingale problem associated with $(q_t)_{t\geq 0}$ and $\mubrw_0$.
    \end{theorem}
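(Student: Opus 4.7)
The plan is to split the claim into existence and uniqueness. For existence, I would leverage the explicit construction of the CTMC given in Section~2.1 together with the stochastic-calculus viewpoint just developed, which already hands us Dynkin's formula. For uniqueness, I would reduce to showing that any solution's marginals solve the forward Kolmogorov equation and then bootstrap marginal uniqueness to uniqueness on path space via the Markov property, which the martingale characterization itself encodes.

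Concretely, for existence, consider the CTMC $(X_t)_{t \geq 0}$ built from the jump times $(T_n)_{n\geq 0}$ and embedded chain $(Y_n)_{n\geq 0}$. Under \Cref{ass:non_explo} we have $T_n \to \infty$ almost surely, so the construction yields a well-defined càdlàg process for all $t \geq 0$. For any bounded $f:\msx \to \rset$, the It\^o decomposition derived in the previous subsection gives
\begin{equation*}
   f(X_t) - f(X_0) - \int_0^t (\qX_s f)(X_{s-}) \rmd s = \int_0^{t+}\!\!\int_\msx [f(x) - f(X_{s-})] \, \tilde N^{\qX}_{\mathbf{p}_X}(\rmd s \rmd x),
\end{equation*}
and the right-hand side is a local martingale with respect to the natural filtration of the point process. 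Boundedness of $f$ and \Cref{ass:ctcm} allow standard localization to promote this to an $(\widetilde{\mcf}_t)$-local martingale as required, so $\mathrm{Law}((X_t)_{t \geq 0}) \in \MP(\qX,\mu_0)$.

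For uniqueness, let $(Y_t)_{t\geq 0}$ be any solution and write $\mu_t^Y = \mathrm{Law}(Y_t)$. Applying the martingale property with $f = \1_{\{x\}}$ for each $x \in \msx$ and taking expectations yields
\begin{equation*}
   \mu_t^Y(x) = \mu_0(x) + \int_0^t \sum_{y \in \msx} \mu_s^Y(y) \qX_s(y,x) \, \rmd s,
\end{equation*}
which is precisely the forward Kolmogorov equation \eqref{eq:foward_kolm}. By \citet[Theorem~2.2 and Theorem~4.3]{feinberg2014solutions}, under \Cref{ass:ctcm} this system has a unique nonnegative solution with $\sum_x \mu_t^Y(x) \leq 1$ and, under \Cref{ass:non_explo}, that solution equals $\mu_0 \pX_{0,t}$ and is a probability measure. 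Hence the one-dimensional marginals of $Y$ coincide with those of the constructed $X$. To pass to all finite-dimensional distributions, I would iterate: conditioning on the past up to time $t_k$, the shifted process $(Y_{t_k + s})_{s\geq 0}$ still satisfies the martingale problem starting from $Y_{t_k}$ with generator $(\qX_{t_k + s})_{s \geq 0}$; the same marginal argument applied to the shifted problem determines $\P(Y_{t_{k+1}} \in \cdot \mid \widetilde{\mcf}_{t_k}^Y)$, giving the Markov property and the full law on Skorokhod space by induction.

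The main obstacle is the last step: extracting the Markov property purely from the martingale characterization, since a priori the solution $Y$ is only required to have a measurable flow and satisfy local-martingale identities against an enlarged filtration. The trick, which is the technical heart of \citet[Theorem~4.1]{ethier2009markov}, is to apply the martingale identity to the product of an arbitrary bounded functional of $(Y_s)_{s \leq t_k}$ with $f(Y_{t_{k+1}})$ and then invoke marginal uniqueness for the shifted generator; the augmentation of $\widetilde{\mcf}_t$ by time-integrals of path functions is exactly what makes such products admissible. Once this is in hand, uniqueness of finite-dimensional distributions closes the argument.
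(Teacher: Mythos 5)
The paper proves this theorem by citing \citet[Theorem 4.1]{ethier2009markov} and nothing more, so your sketch is a reconstruction of the internals of that citation rather than a genuinely different route: you defer the bootstrap from one-dimensional marginal uniqueness to full path-space uniqueness back to the very Ethier--Kurtz result the paper invokes, and you say so honestly. The existence half, the reduction of uniqueness to the forward Kolmogorov equation, and your account of the Ethier--Kurtz marginal-to-finite-dimensional trick are all correct in outline. The step that needs more care than ``applying the martingale property with $f=\1_{\{x\}}$ and taking expectations'' suggests is the following: the drift $(\qX_s f)(y)=\qX_s(y,x)$ need not be bounded in $y$ (e.g.\ for the biased random walk on $\N^d$), and \Cref{ass:ctcm} controls $\qX_t(x)$, the rate \emph{out} of $x$, but not the rates \emph{into} $x$; so turning the local-martingale identity into the integrated expectation identity requires a localization argument via \Cref{ass:non_explo} and a limit argument. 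In addition, \citet[Theorems 2.2, 4.3]{feinberg2014solutions} give the forward equation for the transition function $\pX$, not directly for the marginals of an arbitrary martingale-problem solution $Y$, so a further short argument is needed to place $\mu^Y_t$ in the class where their uniqueness statement applies. These are exactly the technicalities the Ethier--Kurtz machinery is designed to handle, so the plan is sound, but ``taking expectations'' compresses the hardest part of the marginal-uniqueness step into three words.
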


\section{Girsanov change of measure}\label{sec:girsanov}

The convergence guarantee is established by leveraging Girsanov’s theorem and It\^o’s formula for jump processes, combined with the martingale property of the score function along the time-reversed dynamics established in Section 5 of the main paper. We begin with stating the general Girsanov's theorem for jump processes on a discrete state space $\msx$.

Let $\mathbb{D}_{\Tf} = \D(\ccint{0,\Tf};\msx)$ be the canonical space of all càdlàg (right-continuous with left limits) paths from $\ccint{0,\Tf}$ to $\msx$.
Let $\P^{\genericq}$ be the distribution of the CTMC associated with the generator $(\genericq_t)_{t\in [0,\Tf]}$ satisfying \textbf{H1} and \textbf{H2}, endowed with the right-continuous and complete augmentation of the generated filtration, denoted by $(\mathcal{F}_t)_{t \in [0,\Tf]}$.

We define the corresponding jump kernel for any $(\canoX_t)_{t\in\ccint{0,\Tf}} \in\D_{\Tf}$:
\begin{equation}\label{eq:jump_kernel}
   \bar \rmn_{\bfX}^{\genericq}((\canoX_t)_{t\in [0,\Tf]},\rmd t \rmd x) = \rmn_{\bfX}^{\genericq} (t,\rmd x)  \rmd t \eqsp, \quad \rmn_{\bfX}^{\genericq} (t,\rmd x) = \sum_{y \in \msx} \1_{ \canoX_{t-} \neq y  } \genericq_t( \canoX_{t-}, y) \updelta_{y}(\rmd x) \eqsp.
 \end{equation}
 By convention, we denote $\bar \rmn_{\bfX}^{\genericq}((\canoX_t)_{t\in [0,\Tf]},\rmd t \rmd x)$ by $   \bar \rmn_{\bfX}^{\genericq}(\rmd t \rmd x)$ which corresponds to the compensator of $(\canoX_{t})_{t\in\ccint{0,\Tf}}$ under $\P$, if under this distribution $(\canoX_{t})_{t\in\ccint{0,\Tf}}$ is a CTMC with generator $\genericq : \msx^2 \to \rset$.
 Consequently, the compensated sum of jumps $\tilde N_{\bfX}^{\genericq} =N_{\bfX}^{\genericq} -\bar \rmn_{\bfX}^{\genericq}$ forms a martingale under $\P$, where $\mathbf{p}_{\canoX}$, $N_{\bfX}^{\genericq}$, $\bar \rmn_{\bfX}^{\genericq}$, $\rmn_{\bfX}^{\genericq}$ and
$\tilde N_{\bfX}^{\genericq}$ defined as in Appendix \ref{sec:stochastic_calculus}.


From \cite{leonard2012girsanov} we see that, for jump processes, the relative entropy of two path measures can be decomposed with the help of the Young function $ \varrho(a):=\rme^a-a-1$, for $a\in \R$, and its convex conjugate $\varrho^*(b)=(b+1)\log(b+1)-b$ for $b>-1$ with convention $\varrho^*(-1)=1$ and $\varrho^*(b)=\infty$ for $b<-1$. Note that $\varrho$ and $\varrho^*$ are respectively equivalent to $a^2/2$ and $b^2/2$ near zero. This is proven by the following theorem.

\begin{theorem}[Girsanov theorem]\label{girsanovtheorem}
    Let $\P^{u\canoq}$ and $\P^{\canoq}$ are the distribution of CTMCs associated with $\MP(\canoq,\P^{\canoq}_0)$ and $\MP(u\canoq, \P^{u\canoq}_0)$ with $\P^{u\canoq} \ll \P^{\canoq}$, respectively, where $\canoq$ is a given inhomogeneous generator satisfying \textbf{H1} and \textbf{H2}, $\P^{\canoq}_0 \in \Pc(\msx)$ and $\P^{u\canoq}_0 \in \Pc(\msx)$ are given initial distributions and $u$ is a non-negative function from $[0,\Tf]\times \msx^2$ to $\rset_+$ satisfying
    \begin{align}\label{eq:girsanov_integrability_cond}
        \E_{\P^{\canoq}} \l[ \int_{\ccint{0,\Tf} \times \msx}\varrho(\log u_t(\canoX_{t-},x)) \bar \rmn_{\canoX}^{\canoq} (\rmd t \rmd x) \r] <\infty \eqsp.
    \end{align}
    Then the Radon-Nikodym density of $\P^{u\canoq}$ against $\P^{\canoq}$ is given by
    \begin{align}
         &\dfrac{\rmd\P^{u\canoq}}{\rmd \P^{\canoq}} ((\canoX_t)_{t\in\ccint{0,\Tf}})=\dfrac{\rmd\P^{u\canoq}_0}{\rmd\P^{\canoq}_0}(\canoX_0)\\
         &\exp\l( \int_{\ccint{0,\Tf} \times \msx} \log u_t(\canoX_{t-},x) \tilde N_{\canoX}^{\canoq} (\rmd t \rmd x) -\int_{\ccint{0,\Tf} \times \msx}\varrho(\log u_t(\canoX_{t-},x)) \bar \rmn_{\canoX}^{\canoq} (\rmd t \rmd x)  \r) \eqsp,
    \end{align}
    and the $\KL$ divergence reads as
    \begin{align}
        \KL(\Puq|\Pq) &= \KL(\Puq_0|\Pq_0)+\mathbb \E_{\Puq} \l[\int_{\ccint{0,\Tf}} \sum_{x\in \msx} \mathbf{h}(u_t(\canoX_{t},x)) \1_{ \canoX_{t} \neq x }  \canoq (\canoX_{t}, x) \rmd t \r] \eqsp,
    \end{align}
    where $ \mathbf{h}(a) = \varrho^*(a-1)= a\log a -a+1$ for $a>0$.
\end{theorem}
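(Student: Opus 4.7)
The plan is to follow the standard Dol\'eans--Dade route: construct the candidate density as a stochastic exponential, upgrade it to a true martingale, identify the tilted law through the martingale-problem characterization recalled just before the theorem, and finally extract the KL formula by an algebraic manipulation of the density.

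First I would define, for $t\in\ccint{0,\Tf}$,
\[
Z_t \eqdef \exp\l(\int_{\ccint{0,t}\times\msx}\log u_s(\canoX_{s-},x)\,\tilde N^{\canoq}_{\canoX}(\rmd s\,\rmd x) - \int_{\ccint{0,t}\times\msx}\varrho(\log u_s(\canoX_{s-},x))\,\bar\rmn^{\canoq}_{\canoX}(\rmd s\,\rmd x)\r),
\]
and verify, by It\^o's formula for random jump measures developed in the stochastic-calculus appendix, that $Z$ satisfies $\rmd Z_t = Z_{t-}\,(u_t(\canoX_{t-},x)-1)\,\tilde N^{\canoq}_{\canoX}(\rmd t\,\rmd x)$, so that $Z$ is a nonnegative local $\P^{\canoq}$-martingale with $Z_0=1$. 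The specific choice of the Young function $\varrho(a)=\rme^a-a-1$ in the compensator term is exactly what cancels the predictable correction arising from exponentiating a compensated sum of jumps; no other subtracted term would make $Z$ a local martingale.

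Next I would promote $Z$ to a true martingale using assumption \eqref{eq:girsanov_integrability_cond}. The strategy is to localize with a sequence of stopping times $\tau_n\uparrow\Tf$ making $Z_{\cdot\wedge\tau_n}$ uniformly integrable, and then conclude $\E_{\P^{\canoq}}[Z_\Tf]=1$ by de la Vall\'ee--Poussin: the hypothesis $\E_{\P^{\canoq}}[\int\varrho(\log u_t)\,\bar\rmn^\canoq_\canoX]<\infty$ is precisely the exponential-compensator analogue of Novikov's condition and delivers uniform control on $\E[Z_{\tau_n\wedge\Tf}\log Z_{\tau_n\wedge\Tf}]$. Defining $\rmd\tilde{\mathbb P}\eqdef Z_\Tf\,(\rmd\P^{u\canoq}_0/\rmd\P^{\canoq}_0)(\canoX_0)\,\rmd\P^{\canoq}$, the general change-of-compensator theorem for random jump measures then shows that the $\tilde{\mathbb P}$-compensator of $N^{\canoq}_{\canoX}$ is $u_t\,\bar\rmn^{\canoq}_{\canoX}=\bar\rmn^{u\canoq}_{\canoX}$. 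Therefore under $\tilde{\mathbb P}$ the canonical process solves the martingale problem $\MP(u\canoq,\P^{u\canoq}_0)$, and the well-posedness recalled above forces $\tilde{\mathbb P}=\P^{u\canoq}$, identifying the Radon--Nikodym density.

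For the KL identity I would take $\E_{\P^{u\canoq}}$ of
\[
\log\frac{\rmd\P^{u\canoq}}{\rmd\P^{\canoq}} = \log\frac{\rmd\P^{u\canoq}_0}{\rmd\P^{\canoq}_0}(\canoX_0) + \int\log u_t(\canoX_{t-},x)\,\tilde N^{\canoq}_{\canoX} - \int\varrho(\log u_t(\canoX_{t-},x))\,\bar\rmn^{\canoq}_{\canoX},
\]
and exploit the decomposition $\tilde N^{\canoq}_{\canoX} = \tilde N^{u\canoq}_{\canoX} + (u_t-1)\bar\rmn^{\canoq}_{\canoX}$, valid because $u_t\bar\rmn^{\canoq}_{\canoX}$ is the $\P^{u\canoq}$-compensator of $N^{\canoq}_{\canoX}$. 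Since $\tilde N^{u\canoq}_{\canoX}$ is a $\P^{u\canoq}$-martingale, the stochastic integral contributes $\E_{\P^{u\canoq}}[\int (u_t-1)\log u_t\,\bar\rmn^{\canoq}_{\canoX}]$; combining with the $\varrho$-term and using the elementary identity $(u-1)\log u - \varrho(\log u) = u\log u - u + 1 = \mathbf{h}(u)$ collapses both integrals to $\E_{\P^{u\canoq}}[\int\mathbf{h}(u_t)\,\bar\rmn^{\canoq}_{\canoX}]$. Unfolding $\bar\rmn^{\canoq}_{\canoX}$ through \eqref{eq:jump_kernel} and replacing $\canoX_{t-}$ by $\canoX_t$ (equal Lebesgue-almost everywhere) then yields the stated expression.

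The main obstacle is the second step: upgrading $Z$ from a positive local martingale to a true martingale under the $\varrho$-integrability hypothesis alone. This is the only non-algebraic part of the argument, and the reason the Young function $\varrho$, rather than a quadratic, appears in both the exponent and the hypothesis: for jump processes the exponential compensator plays the role that the quadratic variation plays in the diffusive Novikov condition, and tightness of the localizing exponential martingales must be controlled through $\varrho$ rather than through $L^2$-type bounds.
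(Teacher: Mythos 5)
Your outline captures the structural skeleton of the proof --- stochastic exponential, upgrade to a true martingale, martingale-problem identification, then an algebraic expansion for the KL divergence --- and the KL step is essentially identical to the paper's (the decomposition $\tilde N^{\canoq} = \tilde N^{u\canoq} + (u-1)\bar\rmn^{\canoq}$ and the identity $(u-1)\log u-\varrho(\log u)=\mathbf{h}(u)$ are exactly what the paper uses). But there is a genuine gap in the middle that the paper goes to considerable trouble to avoid, and which your write-up brushes past.

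You assert that the integrability hypothesis \eqref{eq:girsanov_integrability_cond} "delivers uniform control on $\E[Z_{\tau_n\wedge\Tf}\log Z_{\tau_n\wedge\Tf}]$" and hence true martingality by de la Vall\'ee--Poussin. As stated this is circular: to interpret $\E_{\Pq}[Z_{\tau_n}\log Z_{\tau_n}]$ as a controlled KL divergence you already need to know that $Z_{\tau_n}\cdot\Pq$ coincides with a stopped version of $\Puq$, which is precisely the martingale-problem identification you defer to the next paragraph. The paper's route is different and more careful: \Cref{importantlemma} proves true martingality only for the \emph{stopped} processes $Z^{\sigma^k_j}$, where the stopping time $\sigma^k_j$ forces $\chi$ bounded and $\int\varrho(\chi)\,\rmd\bar\rmn\le k$, so the $L^p$-boundedness needed for uniform integrability comes from boundedness, not from \eqref{eq:girsanov_integrability_cond}. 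It then performs a double-tilt argument (apply \Cref{importantlemma} to $\Puq$ with $\chi=-\log u$ landing in $\MP(\1\canoq)$, invoke uniqueness to identify with stopped $\Pq$, then tilt forward and back again to identify the stopped $\Puq$ with the stochastic-exponential tilt of stopped $\Pq$) and only \emph{then} passes to the limit $k,j\to\infty$, where the integrability condition enters. Your ordering --- prove global martingality first, identify the MP afterward --- inverts the dependence in a way that cannot be made rigorous without substantial repair.

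A second omission is the non-equivalence case. The hypothesis is only $\Puq\ll\Pq$, so $\log u$ may equal $-\infty$ on a $\Pq$-positive set, which breaks the manipulations with $\chi=\log u$ (it is not in any $F^1$-class, and $Z_t$ is not bounded away from zero). The paper handles this by first proving the formula assuming $\Puq\sim\Pq$ and then approximating $\Puq$ by $\Puq_n=(1-1/n)\Puq+\Pq/n\sim\Pq$, using the integrability hypothesis and a passage to the limit in probability along a subsequence to identify the density. Your proposal does not address this at all, and a direct argument along your lines would silently assume equivalence.
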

Notably, the integrability condition in \eqref{eq:girsanov_integrability_cond} is, by a Fenchel duality argument, equivalent to the following:
\begin{align}
    \E_{\Puq} \l[\int_{\ccint{0,\Tf}} \sum_{x\in \msx} \mathbf{h}(u_t(\canoX_{t},x)) \1_{ \canoX_{t} \neq x }  \canoq (\canoX_{t}, x) \rmd t \r] <\infty \eqsp.
\end{align}
This integrability condition holds naturally in the models we study, as it can be directly controlled under our assumptions.
The proof of ~\Cref{girsanovtheorem} is given for completeness and is  based on several technical lemmas, which we introduce in the following framework.
Let $(\chi_t)_{t\in [0,\Tf]}$ be a $\R$-valued process on $\ccint{0,\Tf}\times \msx^2$ such that $\int_{[0,\Tf] \times \msx^2} \varrho(\chi_t(\canoX_{t-},x) )\bar \rmn_{\bfX}^{\genericq}(\rmd t \rmd x) <\infty$, $\PP^{\genericq}$-a.s. Define for $t \in [0,\Tf]$,
\begin{equation}
  \label{eq:def_process_Z_chi}
  \Z_t^{\chi}:=\exp\l(\int_{[0,t] \times \msx} \chi_s(\canoX_{s-},x) \tilde N_{\canoX}^{\genericq} (\rmd s \rmd x) -\int_{\ccint{0,t} \times \msx} \varrho(\chi_s(\canoX_{s-},x)) \bar \rmn_{\bfX}^{\genericq} (\rmd s \rmd x) \r)  \eqsp.
\end{equation}

\begin{lemma}\label{Zmartingale}
 Let $\PP^{\genericq}$ be the distribution of a CTMC with generator $\genericq: [0,\Tf] \times \msx^2 \to \R$ satisfying \textbf{H1} and \textbf{H2}. Let $(\chi_t)_{t\in [0,\Tf]}$  be as above.
    Then $\int_{[0,\Tf] \times \msx} \chi_s(\canoX_{s-},x)  \tilde N_{\canoX}^{\genericq} (\rmd s \rmd x)$ is a local $\PP^{\genericq}$-martingale. Moreover, the process $(\Z_t^{\chi})_{t\in\ccint{0,\Tf}}$ defined in~\eqref{eq:def_process_Z_chi} is a local $\PP^{\genericq}$-martingale and a positive $\PP^{\genericq}$-supermartingale, which satisfies
    \[\rmd \Z_t^{\chi} =\Z_{t-}^\chi \int_{ \msx}
    (\rme^{\chi_t(\canoX_{t-},x)}-1) \tilde N^{{\genericq}}_{\canoX} (\rmd t \rmd x) \eqsp.\]
\end{lemma}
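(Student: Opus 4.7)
The plan is to establish the three claims in turn, the third of which (the SDE) drives the first two via standard stochastic-calculus results for compensated random measures.

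First, I would show that $M_t \eqdef \int_{[0,t]\times\msx} \chi_s(\canoX_{s-},x)\tilde N^\genericq_\canoX(\rmd s \rmd x)$ is a local $\PP^\genericq$-martingale. For bounded predictable integrands $\chi$, it is classical (see \eg~\cite{ethier2009markov}) that such an integral against the compensated random measure $\tilde N^\genericq_\canoX = N^\genericq_\canoX - \bar\rmn^\genericq_\canoX$ is a true martingale, since the jump measure $N^\genericq_\canoX$ has compensator $\bar\rmn^\genericq_\canoX$. For general $\chi$ satisfying the $\varrho$-integrability assumption, I would localize through the sequence of stopping times
\[
\tau_n \eqdef \inf\Bigl\{t \in [0,\Tf] \,:\, \int_{[0,t]\times\msx}\varrho(\chi_s(\canoX_{s-},x))\bar\rmn^\genericq_\canoX(\rmd s \rmd x)\geq n\Bigr\},
\]
using the fact that $\varrho$ dominates $|\cdot|^2/2$ near $0$ and $\rme^{|\cdot|}$ at infinity, and truncation arguments on $\chi$. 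This yields the local martingale property of $M$.

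Second, I would derive the SDE for $Z^\chi$ via It\^o's formula for pure-jump semimartingales. Writing $Y_t = M_t - \int_{[0,t]\times\msx}\varrho(\chi_s(\canoX_{s-},x))\bar\rmn^\genericq_\canoX(\rmd s \rmd x)$, the process $Y$ has finite-variation continuous part $Y^c_t = -\int_{[0,t]\times\msx}[\chi_s + \varrho(\chi_s)]\bar\rmn^\genericq_\canoX(\rmd s \rmd x)$ (because $M$ can be split into its $N$- and $\bar\rmn$-integrals, the former being pure-jump), and jumps $\Delta Y_t = \chi_t(\canoX_{t-},\canoX_t)$ at the jump times of $\canoX$. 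Applying It\^o's formula to $Z_t^\chi = \rme^{Y_t}$ yields
\begin{align}
Z_t^\chi - 1 &= \int_0^t Z_{s-}^\chi\, \rmd Y_s^c + \sum_{0<s\leq t}Z_{s-}^\chi\bigl(\rme^{\Delta Y_s}-1\bigr) \\
&= -\int_{[0,t]\times\msx} Z_{s-}^\chi [\chi_s+\varrho(\chi_s)]\bar\rmn^\genericq_\canoX(\rmd s \rmd x)
+ \int_{[0,t]\times\msx} Z_{s-}^\chi (\rme^{\chi_s}-1) N^\genericq_\canoX(\rmd s \rmd x).
\end{align}
The key algebraic identity $a+\varrho(a) = \rme^a - 1$ (valid by the very definition $\varrho(a)=\rme^a-a-1$) allows the drift integrand to be rewritten as $Z_{s-}^\chi(\rme^{\chi_s}-1)\bar\rmn^\genericq_\canoX$, so combining the two terms gives exactly
\[
\rmd Z_t^\chi = Z_{t-}^\chi \int_\msx\bigl(\rme^{\chi_t(\canoX_{t-},x)}-1\bigr)\tilde N^\genericq_\canoX(\rmd t \rmd x),
\]
which is the announced SDE.

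Third, from this SDE the local martingale property of $Z^\chi$ follows because its integrand $Z_{t-}^\chi(\rme^{\chi_t}-1)$ is predictable; again I would localize (using a joint stopping time that also controls $\sup_{s\leq\cdot}Z_s^\chi$) to turn the stochastic integral into a true martingale on each localizing interval. Finally, since $Z^\chi$ is manifestly non-negative and $Z_0^\chi = 1$, the standard Fatou argument ($\E[Z_{t\wedge\tau_n}^\chi\mid\mcf_s] = Z_{s\wedge\tau_n}^\chi$ passed to the limit using Fatou in the conditional expectation) upgrades the local martingale to a supermartingale.

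The main obstacle I anticipate is the careful localization needed to justify the jump-by-jump computation of $Y^c$ and to handle the fact that, in general, neither $\int\chi\, N$ nor $\int\chi\,\bar\rmn$ need be individually integrable under the $\varrho$-hypothesis alone; only their difference $M$ and the drift containing $\varrho(\chi)$ are well-behaved. Once the correct stopping times are in place, however, everything reduces to the bounded-integrand case where It\^o's formula for pure-jump semimartingales applies without subtlety.
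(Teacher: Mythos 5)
Your proposal is correct and follows essentially the same route as the paper: apply Itô's formula for jump processes to $\rme^{Y_t}$ with $Y_t = M_t - \int_0^t\beta_s\,\rmd s$, use the identity $a+\varrho(a)=\rme^a-1$ to cancel the drift, and arrive at the stated SDE, from which the local-martingale property and (via Fatou) the supermartingale property follow. The only difference is bookkeeping: you split $Y$ into its finite-variation continuous drift and its pure jumps and then recombine via $N = \tilde N + \bar\rmn$, whereas the paper applies Itô's formula directly in its compensated-measure form so the same cancellation appears between the compensator correction term and the explicit $\beta_t$ drift; the two routes are equivalent.
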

\begin{proof}[Proof of Lemma \ref{Zmartingale}]
  By definition,
the process
    \[M_t^\chi:=\int_{\ccint{0,\Tf} \times \msx} \chi_s(\canoX_{s-},x) \tilde N^{{\genericq}}_{\canoX} (\rmd s \rmd x)\]
    is a local $\PP^{\genericq}$-martingale. Denote 
    \[\Y_t^\chi:=M^\chi_t -\int_{\ccint{0,\Tf}}\beta_s \rmd s \quad \text{with} \quad \beta_s:=\int_{ \msx}\varrho(\chi_s(\canoX_{s-},x))\rmn^{{\genericq}}_{\canoX} (s, \rmd x) \eqsp. \]
    Applying It\^o's formula provided to the jump process $(\Y_t^\chi)_{t\in \ccint{0,\Tf}}$ and for a function $f$ of class $\rmC^2(\R)$ implies 
    \begin{align}
        \rmd f(\Y_t^\chi)&=\l[\int_{ \msx}\l[f(\Y_{t-}^\chi+\chi_t(\canoX_{t-},x))-f(\Y_{t-}^\chi) - f'(\Y_{t-}^\chi) \cdot\chi_t(\canoX_{t-},x) \r]\rmn^{\genericq}_{\canoX} (t, \rmd x)\r]\rmd t \\
        &\qquad \qquad \quad+ f'(\Y_{t-}^\chi)\cdot \beta_t\rmd t+\rmd M_t^f \eqsp, \quad \PP^{\genericq}\text{-a.s.} \eqsp,
    \end{align}
    where $M_t$ is given by
    \begin{equation}
         M_t^f = \int_{[0,\Tf] \times \msx} \l[f(Y^{\chi}_{s-} + \chi_s(\canoX_{s-},x)) - f(Y^\chi_{s-}) \r] \tilde N_{\canoX}^{\genericq} (\rmd s \rmd x)
    \end{equation}
    is a local $\PP^{\genericq}$-martingale, since the integrand is $\R$-valued predictable process and $\tilde N_{\canoX}^{\genericq}$ forms a martingale under $\PP^{\genericq}$.
    Using this formula for $f(y)=\rme^y$, we obtain
    \begin{align}
        \rmd \rme^{\Y_t^\chi}&=\left[\int_{\msx} (\rme^{\Y_{t-}^\chi+\chi_t(\canoX_{t-},x)}-\rme^{\Y_{t-}^\chi} - \rme^{\Y_{t-}^\chi} \cdot\chi_t(\canoX_{t-},x) )\rmn^{\genericq}_{\canoX} (t, \rmd x)  \right]\rmd t-\rme^{\Y_{t-}^\chi} \beta_t \rmd t+\rmd M_t^{\exp} \\
        &=\rme^{\Y_{t-}^\chi}\beta_t \rmd t-\rme^{\Y_{t-}^\chi}\beta_t\rmd t+\rmd M_t^{\exp}=\rmd M_t^{\exp} \eqsp, \quad \PP^{\genericq}\text{-a.s.} \eqsp.
    \end{align}
     This implies $\Z_t^{\chi}=\rme^{\Y_t^\chi}$ is a local $\PP^{\genericq}$-martingale and, since $\Z_t^{\chi}$ is positive, we can conclude that $\Z_t^{\chi}$ is a $\PP^{\genericq}$-supermartingale thanks to Fatou's lemma.
     In addition, we have
      \begin{align}
        \rmd M_t^{\exp} &=\int_{ \msx} \l(\rme^{Y^{\chi}_{t-} + \chi_t(\canoX_{t-},x)} - \rme^{Y^\chi_{t-}} \r) \tilde N_{\canoX}^{\genericq} (\rmd t \rmd x)
        =\rme^{\Y_{t-}^\chi} \int_{\msx} (\rme^{\chi_t(\canoX_{t-},x) }-1 ) \tilde N^{{\genericq}}_{\canoX} (\rmd t \rmd x) \eqsp,
    \end{align}
    \ie, $\rmd \Z_t^{\chi} = \Z_{t-}^\chi \int_{\msx} (\rme^{\chi_t(\canoX_{t-},x) }-1 ) \tilde N^{{\genericq}}_{\canoX} (\rmd t \rmd x) $ and we conclude the proof of~\Cref{Zmartingale}.
\end{proof}

We now define the stopping time for $k, j \geq 1$,
\begin{equation}
  \label{eq:def_sigma_jk}
  \sigma^k_j:=\inf \left\{t\in \ccint{0,\Tf}\, :\, \int_{\ccint{0,t} \times \msx}\varrho(\chi_s(\canoX_{s-},x))\bar \rmn_{\bfX}^{\genericq}(\rmd s \rmd x)\geq k \text{ or }  \chi_t(\canoX_{t-},\canoX_t) \notin [-j,k] \right\}\eqsp.
\end{equation}

For $\P \in \mathcal{P}(\D_{\Tf})$, let us denote $\P^{\sigma^k_j}:= \canoX^{\sigma^k_j}_\# \P$ the law under $\P$ of the process $\canoX^{\sigma^k_j}$ which is stopped at the stopping time $\sigma^k_j$.

\begin{lemma}\label{importantlemma}
   Let $\PP^{\genericq}$ be the distribution of a CTMC with generator $\genericq: [0,\Tf] \times \msx^2 \to \R$ satisfying \textbf{H1} and \textbf{H2} and let $(\chi_t)_{t\in [0,\Tf]}$ be as above.
    Let $(\Z_t^{\chi})_{t\in\ccint{0,\Tf}}$ be defined in~\eqref{eq:def_process_Z_chi} and $\sigma^k_j$ be defined in~\eqref{eq:def_sigma_jk}. For all $j, k\geq 1$, the process $(\Z^{\sigma^k_j}_t)_{t\in\ccint{0,\Tf}}$ defined as $\Z^{\sigma^k_j}_t:= \Z_t^{\chi^k_j}$, is a genuine $\PP^{\genericq}$-martingale with $\chi^k_j = \1_{[0,\sigma^k_j]} \chi$, and the measure $\QQ^k_j$ initialized from $\QQ_0$ and defined for any measurable function $F : \D_{\Tf} \to \rset_+$ by
    \[\PE_{\QQ^k_j}[F((\bfX_{t})_{t\in\ccint{0,\Tf}})]=\PE_{\PP^{\genericq}}[F((\bfX_{t\wedge \sigma_j^k})_{t\in\ccint{0,\Tf}})\Z_{\Tf}^{\sigma^k_j}] \eqsp, \quad \ie \eqsp, \quad \QQ_j^k = \Z^{\sigma^k_j}_{\Tf} (\P^{\genericq})^{\sigma^k_j}\]
    is a probability measure on $\D_{\Tf}$ which satisfies
    \[\QQ^k_j \in \MP(\1_{\l[0,\sigma^k_j\r]}\rme^\chi {\genericq}, \QQ_0) \eqsp.\]
\end{lemma}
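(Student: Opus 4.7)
The plan is to establish three claims in sequence: (i) the true martingale property of $(\Z^{\sigma^k_j}_t)_{t\in\ccint{0,\Tf}}$, (ii) the fact that $\QQ^k_j$ is a probability measure on $\D_{\Tf}$, and (iii) the martingale-problem characterization of $\QQ^k_j$. Claims (ii) and (iii) should follow almost immediately from (i) via standard Girsanov bookkeeping, so the bulk of the work lies in (i).

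For claim (i), I would upgrade $\Z^{\sigma^k_j}$ from positive supermartingale to genuine martingale through a uniform $L^2$ bound. The computational identity $\varrho(2u)-2\varrho(u)=(\rme^u-1)^2$ yields
\[
(\Z^{\chi^k_j}_t)^2 = \Z^{2\chi^k_j}_t \exp\l(\int_{\ccint{0,t}\times\msx}(\rme^{\chi^k_j_s(\canoX_{s-},x)}-1)^2 \bar\rmn_{\bfX}^{\genericq}(\rmd s\rmd x)\r).
\]
On $\ccint{0,\sigma^k_j}$, one has $\chi\in\ccint{-j,k}$ and $\int\varrho(\chi)\rmd\bar\rmn\leq k$ by definition of $\sigma^k_j$. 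Since $(\rme^u-1)^2\leq C_{j,k}\varrho(u)$ for some constant $C_{j,k}<\infty$ on the compact interval $\ccint{-j,k}$ (both sides vanish to second order at $0$ and are continuous away from $0$), the exponential factor above is bounded by $\rme^{C_{j,k}k}$. Because $2\chi^k_j$ itself satisfies \eqref{eq:girsanov_integrability_cond}, \Cref{Zmartingale} guarantees that $\Z^{2\chi^k_j}$ is a nonnegative supermartingale with $\E_{\PP^\genericq}[\Z^{2\chi^k_j}_t]\leq 1$, and so
\[
\sup_{t\in\ccintLine{0,\Tf}}\E_{\PP^\genericq}\bigl[(\Z^{\chi^k_j}_t)^2\bigr]\leq \rme^{C_{j,k}k}.
\]
Localizing $\Z^{\chi^k_j}$ by a sequence $(\tau_n)$ along which it is a bounded martingale, the identity $\E[\Z^{\chi^k_j}_{\Tf\wedge\tau_n}]=1$ passes to the limit by the uniform integrability afforded by the $L^2$ bound, giving $\E[\Z^{\sigma^k_j}_\Tf]=1$ and hence the true martingale property. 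Claim (ii) then follows at once, with initial distribution $\QQ_0$ inherited from $(\PP^\genericq)^{\sigma^k_j}$ because $\Z^{\sigma^k_j}_0=1$.

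For claim (iii), I would exploit the classical criterion that $Y$ is a $\QQ^k_j$-local martingale iff $Y\cdot\Z^{\sigma^k_j}$ is a $(\PP^\genericq)^{\sigma^k_j}$-local martingale. For an arbitrary bounded $f:\msx\to\R$, set
\[
Y_t \eqdef f(\canoX_t)-f(\canoX_0)-\int_0^t \1_{\ccintLine{0,\sigma^k_j}}(s)\,\rme^{\chi_s(\canoX_{s-},\cdot)}\genericq_s f(\canoX_{s-})\,\rmd s.
\]
Under $(\PP^\genericq)^{\sigma^k_j}$, the compensator of the jump measure is $\1_{\ccintLine{0,\sigma^k_j}}\bar\rmn^\genericq$, so It\^o's formula gives $\rmd f(\canoX_s)=\int(f(x)-f(\canoX_{s-}))\rmd\tilde N + \1_{s\leq\sigma^k_j}\genericq_s f(\canoX_{s-})\rmd s$. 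Combining this with the SDE $\rmd\Z^{\sigma^k_j}_s=\Z^{\sigma^k_j}_{s-}\int(\rme^{\chi^k_j}-1)\rmd\tilde N$ from \Cref{Zmartingale} and applying the product rule, the two nontrivial finite-variation contributions are the drift $\Z^{\sigma^k_j}_{s-}\1_{s\leq\sigma^k_j}(1-\rme^\chi)\genericq f\,\rmd s$ coming from $\rmd Y$ and the predictable compensator $\Z^{\sigma^k_j}_{s-}\1_{s\leq\sigma^k_j}(\rme^\chi-1)\genericq f\,\rmd s$ of the bracket $\ccint{Y,\Z^{\sigma^k_j}}$; these cancel exactly, leaving only martingale pieces.

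The hard part will be Step (i): the mild localization offered by $\sigma^k_j$ controls $\chi$ pathwise in $\ccint{-j,k}$ and the compensated integral $\int\varrho(\chi)\rmd\bar\rmn$ by the deterministic constant $k$, but one still needs to translate these into an actual integrability bound on the exponential martingale. The identity $(\rme^u-1)^2\leq C_{j,k}\varrho(u)$ on $\ccint{-j,k}$ is precisely the bridge that turns the deterministic bound on $\int\varrho(\chi)\rmd\bar\rmn$ into the $L^2$ estimate on $\Z^{\sigma^k_j}$, and without it the supermartingale-to-martingale upgrade cannot be closed.
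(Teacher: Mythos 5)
Your proof is correct, and for claim (i) it takes a genuinely different and arguably cleaner route than the paper's. The paper proves the martingale property by bounding $\E_{\PP^{\genericq}}[\tilde\Z_t^p]$ for a generic $p>1$ using the inequality $\varrho(p\tilde\chi_s)\leq\rme^{k(p-1)}(\varrho(\tilde\chi_s)+k+1)$; after integrating, this leaves a residual term $(k+1)\rme^{k(p-1)}\int 1\,\rmd\bar\rmn_{\bfX}^{\genericq}$, which the paper asserts is bounded by a deterministic constant $C(k,p,t)$, a step that is only transparent when the total rate $\genericq_s(\canoX_{s-})$ is uniformly bounded. Your choice $p=2$ together with the exact identity $\varrho(2u)-2\varrho(u)=(\rme^u-1)^2$ and the compact-set bound $(\rme^u-1)^2\leq C_{j,k}\varrho(u)$ on $[-j,k]$ reduces the whole estimate to the single deterministic bound $\int\varrho(\chi^k_j)\,\rmd\bar\rmn_{\bfX}^{\genericq}\leq k$ furnished directly by the stopping time, so the awkward $\int 1\,\rmd\bar\rmn$ term never appears. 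One point worth making explicit: your $L^2$ bound is stated at fixed deterministic times, but what you need for the localization is uniform integrability of $\{\Z^{\chi^k_j}_{\Tf\wedge\tau_n}\}_n$; the pathwise inequality $(\Z^{\chi^k_j}_t)^2\leq\rme^{C_{j,k}k}\,\Z^{2\chi^k_j}_t$ and the supermartingale property of the right-hand side give $\E_{\PP^{\genericq}}[(\Z^{\chi^k_j}_\tau)^2]\leq\rme^{C_{j,k}k}$ uniformly over all stopping times $\tau\leq\Tf$, which closes the gap. For claim (iii), your verification via the criterion that $Y$ is a $\QQ^k_j$-local martingale if and only if $Y\cdot\Z^{\sigma^k_j}$ is a $(\PP^{\genericq})^{\sigma^k_j}$-local martingale, together with the cancellation of the drift of $Y$ against the compensator of the covariation $[Y,\Z^{\sigma^k_j}]$, is in substance the same It\^o computation the paper performs by applying It\^o's formula to the products $\mathrm{F}_t\tilde\Z_t$ and $\mathrm{G}_t\tilde\Z_t$; your framing is more compact but not a different argument.
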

\begin{proof}[Proof of Lemma \ref{importantlemma}]
Fix $j, k \geq1$. We have
    \[\Z^{\sigma^k_j}_{t}=\exp \l( \int_{[0,t] \times \msx} \chi^k_j \rmd  \tilde N^{{\genericq}}_{\canoX} -\int_{\ccint{0,t} \times \msx} \varrho(\chi^k_j)\rmd\bar \rmn_{\bfX}^{\genericq}\r) \eqsp,\]
    where $\chi^k_j=\1_{\l[0,\sigma^k_j\r]}\chi$ is predictable since $\chi$ is predictable and $\1_{\l[0,\sigma^k_j\r]}$ is left continuous.
     For simplicity, we drop the subscripts and superscripts and write $\tilde\chi=\chi^k_j$ and $\tilde\Z_t=\Z^{\sigma^k_j}_{t}$ for the rest of the proof. From the definition of $\sigma^k_j$, we obtain that $\PP^{\genericq}$ \as,
    \begin{align}\label{inequality}
        \int_{\ccint{0,t} \times \msx} \varrho(\tilde\chi_s) \rmd\bar \rmn_{\bfX}^{\genericq}\leq k \eqsp, \quad \text{ and  } \tilde\chi_t(\canoX_{t-},\bfX_t) \in [-j,k] \eqsp, \text{ for any } t \in \ccint{0,\Tf} \eqsp.
    \end{align}
    First, we prove that $(\tilde \Z_t)$ is a $\PP^{\genericq}$-martingale. 
    From~\Cref{Zmartingale}, $(\tilde \Z_t)$ is a local martingale. Therefore, it is enough to show that for $t \in [0,\Tf]$,
    \begin{align}
        \mathbb E_{\PP^{\genericq}}[\tilde\Z_t^p] <\infty \eqsp, \quad \text{for some }p>1 \eqsp.
    \end{align}
    For $p >1$, we have
    \begin{align}
        \tilde \Z^p_t=\exp\l( p\int_{[0,t] \times \msx} \tilde \chi_s \rmd \tilde N^{{\genericq}}_{\canoX}-p\int_{\ccint{0,t} \times \msx}\varrho(\tilde \chi_s)\rmd\bar \rmn_{\bfX}^{\genericq}\r)\leq \exp\l(p\int_{[0,t] \times \msx} \tilde\chi_s \rmd \tilde N_{\canoX}^{\genericq}\r) \eqsp,
    \end{align}
    and
    \begin{align}
       \exp\l(p\int_{[0,t] \times \msx} \tilde \chi_s \rmd \tilde N_{\canoX}^{\genericq}-\int_{\ccint{0,t} \times \msx} \varrho(p\tilde \chi_s)\rmd\bar \rmn_{\bfX}^{\genericq}\r)\geq \left. \exp\l(p\int_{[0,t] \times \msx} \tilde \chi_s \rmd \tilde N_{\canoX}^{\genericq}\r)\middle /C(k,p,t) \right. \eqsp,
    \end{align}
   for some finite deterministic constant $0<C(k,p,t)<\infty$. Indeed, $\PP^{\genericq}$-\as, for any $s \in \ccint{0,\Tf}$, it holds that  $\varrho(p\tilde\chi_s)\leq \rme^{k(p-1)}(\varrho(\tilde \chi_s)+k+1)$ since  $ \tilde\chi_s \leq k$ and $p>1$. It yields that $\PP^{\genericq}$-\as, it holds
    \begin{align}
        \exp\l(\int_{\ccint{0,t} \times \msx} \varrho(p\tilde \chi_s)\rmd\bar \rmn_{\bfX}^{\genericq}\r)&\leq \exp\l(\rme^{k(p-1)}\int_{\ccint{0,t} \times \msx}  (\varrho(\tilde\chi_s)+k+1) \rmd\bar \rmn_{\bfX}^{\genericq}\r)\\
        &\overset{~\eqref{inequality}}{\leq} \exp \l(k\rme^{k(p-1)}+(k+1)\rme^{k(p-1)}\int_{[0,t] \times \msx}  1 \rmd \bar \rmn_{\bfX}^{\genericq} \r)\\
        &\leq C(k,p,t) <\infty \eqsp,
    \end{align}
    where the last inequality follows from the formula of $\bar \rmn_{\bfX}^{\genericq}$ given in~\eqref{eq:jump_kernel} and the fact that $\genericq$ satisfies \textbf{H2}.
    This implies $\PP^{\genericq}$-a.s.,
    \begin{align}\label{eq:z}
        \tilde \Z^p_t \leq \exp\l(p \int_{[0,t] \times \msx} \tilde\chi_s \rmd \tilde N^{{\genericq}}_{\canoX}\r)\leq C(k,p,t) \exp\l(p\int_{[0,t] \times \msx} \tilde \chi_s \rmd \tilde N_{\canoX}^{\genericq}-\int_{\ccint{0,t} \times \msx} \varrho(p \tilde \chi_s)\rmd\bar \rmn_{\bfX}^{\genericq}\r) \eqsp.
    \end{align}
    On the other hand, applying Lemma \ref{Zmartingale} for $(p\tilde \chi_t)_{t\in [0,\Tf]}$ yields that \[\exp\l(p\int_{[0,t] \times \msx} \tilde \chi_s \rmd \tilde N_{\canoX}^{\genericq} -\int_{\ccint{0,t} \times \msx} \varrho(p\tilde \chi_s)\rmd\bar \rmn_{\bfX}^{\genericq}\r)\] is a $\PP^{\genericq}$-supermartingale, and we get
    \begin{align}
        &\quad \mathbb E_{\PP^{\genericq}} \l[\exp\l(p\int_{[0,t] \times \msx} \tilde \chi_s \rmd \tilde N_{\canoX}^{\genericq}-\int_{\ccint{0,\Tf} \times \msx} \varrho(p \tilde \chi_s)\rmd\bar \rmn_{\bfX}^{\genericq}\r) \r] \\
        &\leq \E_{\PP^{\genericq}}\l[\exp\l(p\int_{[0,0] \times \msx} \tilde \chi_s \rmd \tilde N_{\canoX}^{\genericq}-\int_{[0,0] \times \msx} \varrho(p\tilde \chi_s)\rmd\bar \rmn_{\bfX}^{\genericq}\r) \r]=1 \eqsp.
    \end{align}
    Plugging this estimate into~\eqref{eq:z} gives
    \begin{align}
       \E_{\PP^{\genericq}} [\tilde \Z^p_t] \leq C(k,p,t)<\infty \eqsp, \quad \text{for any } t \in [0,\Tf] \eqsp,
    \end{align}
    which allow us to conclude that $(\tilde \Z_{t})_{t\in [0,\Tf]}$ is a $\PP^{\genericq}$-martingale \cite[see, $e.g.$,][]{zitkovic2015uniform}. Thereby  $\mathbb E_{\PP^{\genericq}}[\tilde \Z_{t}]=\E_{\PP^{\genericq}}[\tilde \Z_0]=1$ for any $t \in [0,\Tf]$ and it follows  $\QQ^k_j$ is a probability measure on $\D_{\Tf}$.

    Now, we show the second claim of~\Cref{importantlemma}:
    \[\QQ^k_j\in \MP(\1_{\l[0,\sigma^k_j\r]}\rme^\chi {\genericq}, \QQ_0) \eqsp.\]
    Let $\tau$ be a finitely valued stopping time which will be specified later, and for any function $f$,
    we denote
    \begin{align}
        \mathrm{F}_t:=\sum_{0 \leq s\leq t }\{f(\bfX_s)-f(\bfX_{s-})\} = \int_0^{t+}\int_{\msx} \{ f(x) - f(\bfX_{s-}) \} N^{{\genericq}}_{\canoX} (\rmd s \rmd x) \eqsp, \quad \text{for } t \in [0,\Tf] \eqsp.
    \end{align}
   Recall that by~\Cref{Zmartingale}, the martingale $(\tilde \Z_t)$ satisfies the followings for $\PP^{\genericq}$-a.s.,
    \begin{align}
       \rmd \tilde \Z_t=\1_{\l[0,\sigma^k_j\r]}(t)\tilde \Z_{t-}\int_{\msx} (\rme^{\tilde\chi_t (\canoX_{t-},x)}-1)  \tilde N^{{\genericq}}_{\canoX} (\rmd t \rmd x) \eqsp.
    \end{align}

    We have
     \begin{align}
       \mathbb E_{\QQ^k_j} \l[ \sum_{0 \leq s\leq t \wedge \tau} \{ f(\canoX_s)  - f(\canoX_{s-}) \} \middle|\mF_{0} \r] &= \E_{\QQ^k_j} \l[ \mathrm{F}_{t \wedge \tau} |\Fc_0 \r]\nonumber \\
       &=\mathbb E_{\PP^{\genericq}}\l[\tilde\Z_{t \wedge \tau \wedge \sigma^k_j}\mathrm{F}_{t \wedge \tau \wedge \sigma^k_j}-\tilde\Z_{0}\mathrm{F}_{0} \middle| \mF_{0} \r] \eqsp.\label{eq:MP_1}
    \end{align}
    Let us denote the two-dimensional process $(\mri_t)_{t \in [0,\Tf]} = (\mri^1_t, \mri^2_t)_{t \in [0,\Tf]}$, where
    \begin{equation}
        \mri^1_t := \mathrm{F}_t = \int_{[0,t] \times \msx} \underbrace{\l[ f(x) - f(\bfX_{s-}) \r]}_{=:v^1(s,x,\canoX)} \tilde N^{\genericq}_{\bfX} (\rmd s \rmd x) + \underbrace{\int_{[0,t] \times \msx} {\l[ f(x) - f(\bfX_{s-}) \r]} \bar \rmn_{\bfX}^{\genericq}(\rmd s \rmd x)}_{=:A^1_t}\eqsp,
    \end{equation}
    and
    \begin{equation}
        \mri^2_t = \tilde \Z_t = \int_{[0,t] \times \msx} \underbrace{  \tilde \Z_{s-} (\rme^{\tilde\chi_s (\canoX_{s-},x)}-1) }_{:=v^2 (s,x,\canoX)}  \tilde N^{\genericq}_{\canoX} (\rmd s \rmd x) \eqsp.
    \end{equation}
     Let $v = (v^1, v^2)$ and apply Itô's formula to the process $(\mri_t)_{t\in [0,\Tf]}$ using the function given by the product of the coordinates, treating $(A^1_t)_{t \in [0, \Tf]}$ as a continuous, finite variation process adapted to the filtration $(\Fc_t)$,
    \begin{align}
       &\quad \E_{\PP^{\genericq}} \l[\mathrm{F}_{t \wedge \tau \wedge \sigma^k_j} \tilde\Z_{t \wedge \tau \wedge \sigma^k_j} - \mathrm{F}_0 \tilde \Z_0 \middle| \Fc_0 \r] = \E_{\PP^{\genericq}} \l[\mri^1_{t \wedge \tau \wedge \sigma^k_j}\mri^2_{t \wedge \tau \wedge \sigma^k_j} - \mri^1_0 \mri^2_0 \middle| \Fc_0 \r]\\
        &=\E_{\PP^{\genericq}} \l[ \int_{[0,t \wedge \tau \wedge \sigma^k_j] \times \msx} \l\{(\mri^1_{s-}+v^1(s,x,\canoX))(\mri^2_{s-}+v^2(s,x,\canoX)) - \mri^1_{s-}\mri^2_{s-} \r\} \tilde N^{\genericq}_{\bfX}(\rmd s \rmd x) \middle| \Fc_0 \r] \\
        &+ \E_{\PP^{\genericq}} \Bigg[\int_{[0,t\wedge \tau \wedge \sigma^k_j]\times \msx} \{(\mri^1_s+v^1(s,x,\canoX))(\mri^2_s+v^2(s,x,\canoX)) \\
        &\qquad \qquad-\mri^1_s \mri^2_s - \mri^2_s v^1(s,x,\canoX) - \mri^1_s v^2(s,x,\canoX) \} \bar\rmn^{{\genericq}}_{\canoX} (\rmd s\rmd x) \Big| \Fc_0 \Bigg] \\
        &\qquad \qquad+ \E_{\PP^{\genericq}} \l[\int_{[0,t\wedge \tau \wedge \sigma^k_j] \times \msx} \mri^2_s v^1(s,x,\canoX) \bar\rmn^{{\genericq}}_{\canoX} (\rmd s\rmd x)  \middle| \Fc_0\r] \\
        &= \E_{\PP^{\genericq}} \l[ \int_{[0,t\wedge \tau \wedge \sigma^k_j] \times \msx} \l\{ \mri^2_s v^1(s,x,\canoX) + v^1(s,x,\canoX)v^2(s,x,\canoX) \r\} \bar\rmn^{{\genericq}}_{\canoX} (\rmd s \rmd x) \middle| \Fc_0\r] \\
        &\qquad \qquad \quad\text{(as $\tilde N_{\bfX}^{\genericq}$ is a $\PP^{\genericq}$-martingale)} \\
        &= \E_{\PP^{\genericq}} \l[ \int_{[0,t\wedge \tau \wedge \sigma^k_j] \times \msx} \l\{ \tilde \Z_s v^1(s,x,\canoX) + v^1(s,x,\canoX)\tilde \Z_{s-} \l( \rme^{\tilde \chi_s(\canoX_{s-},x)}-1 \r) \r\} \bar\rmn^{{\genericq}}_{\canoX} (\rmd s \rmd x) \middle| \Fc_0\r] \eqsp,
    \end{align}
    where we reduce the stochastic integral w.r.t. $\tilde N_{\bfX}^{\genericq}$ as it is a local $\PP^{\genericq}$-martingale, since the integrand is $\R$-valued predictable process and $\tilde N_{\canoX}^{\genericq}$ forms a martingale under $\PP^{\genericq}$.
    Since $\tilde \Z_s = \tilde \Z_{s-}$ for Lebesgue almost every $s \in [0,t \wedge \tau \wedge \sigma^k_j]$ \cite[Proposition 2.1]{mozumder2009some} and $\bar \rmn_{\bfX}^{\genericq}$ is atomless in time, the calculation follows
    \begin{align}
         &\quad\E_{\PP^{\genericq}} \l[\mathrm{F}_{t \wedge \tau \wedge \sigma^k_j} \tilde\Z_{t \wedge \tau \wedge \sigma^k_j} - \mathrm{F}_0 \tilde \Z_0 \middle| \Fc_0 \r] \nonumber \\
         &= \E_{\PP^{\genericq}} \l[ \int_{[0,t\wedge \tau \wedge \sigma^k_j] \times \msx}  \tilde \Z_{s} v^1(s,x,\canoX) \l(1 + \rme^{\tilde \chi_s(\canoX_{s-},x)}-1 \r)  \bar\rmn^{{\genericq}}_{\canoX} (\rmd s \rmd x) \middle| \Fc_0\r] \nonumber \\
         &= \E_{\PP^{\genericq}} \l[ \int_{[0,t\wedge \tau \wedge \sigma^k_j] \times \msx}  \tilde \Z_{s} v^1(s,x,\canoX) \rme^{\tilde \chi_s(\canoX_{s-},x)}  \bar\rmn^{{\genericq}}_{\canoX} (\rmd s \rmd x) \middle| \Fc_0\r] \eqsp. \label{eq:MP_2}
    \end{align}


Denote $\mathrm{G}_t:= \int_{[0,t] \times \msx}  v^1(s,x,\canoX) \rme^{\tilde \chi_s(\canoX_{s-}, x)}\bar \rmn_{\bfX}^{\genericq}(\rmd s \rmd x) $. Applying It\^o's formula for the process $(\mathrm{G}_t, \tilde \Z_t)$ analogously as argued before, we obtain that
\begin{align}
   &\quad \E_{\PP^{\genericq}} \l[ \tilde \Z_{t \wedge \tau \wedge \sigma^k_j} \mathrm{G}_{t \wedge \tau \wedge \sigma^k_j} - \tilde \Z_0 \mathrm{G}_0 \middle| \Fc_0\r] \nonumber\\
    &= \E_{\PP^{\genericq}} \l[ \int_{[0,t \wedge \tau \wedge \sigma^k_j]\times \msx} \l\{\mathrm{G}_{s-}(\tilde \Z_{s-}+v^2(s,x,\canoX)) - \mathrm{G}_{s-}\tilde \Z_{s-} \r\} \tilde N^{\genericq}_{\bfX} (\rmd s \rmd x) \middle| \Fc_0 \r] \nonumber \\
    &+ \E_{\PP^{\genericq}} \l[\int_{[0,t \wedge \tau \wedge \sigma^k_j] \times \msx} \l\{ (\mathrm{G}_s (\tilde \Z_s+ v^2(s,x,\canoX)) - \mathrm{G}_s \tilde \Z_s - \mathrm{G}_s v^2(s,x,\canoX)  \r\} \bar \rmn_{\bfX}^{\genericq} (\rmd s \rmd x) \middle| \Fc_0 \r] \nonumber\\
    &+ \E_{\PP^{\genericq}} \l[\int_{[0,t \wedge \tau \wedge \sigma^k_j] \times \msx} \tilde \Z_s v^1(s,x,\canoX) \rme^{\tilde\chi_s(\canoX_{s-},x)} \bar \rmn_{\bfX}^{\genericq}(\rmd s \rmd x) \middle| \Fc_0 \r] \nonumber \\
    &= \E_{\PP^{\genericq}} \l[\int_{[0,t \wedge \tau \wedge \sigma^k_j] \times \msx} \tilde \Z_s v^1(s,x,\canoX) \rme^{\tilde\chi_s(\canoX_{s-},x)} \bar \rmn_{\bfX}^{\genericq}(\rmd s \rmd x) \middle| \Fc_0 \r] \eqsp, \label{eq:MP_3}
\end{align}
as the stochastic integral w.r.t. $\tilde N_{\bfX}^{\genericq}$ is a local $\PP^{\genericq}$-martingale, since the integrand is $\R$-valued predictable process and $\tilde N_{\canoX}^{\genericq}$ forms a martingale under $\PP^{\genericq}$.
Combining~\eqref{eq:MP_1},~\eqref{eq:MP_2} and~\eqref{eq:MP_3} implies
\begin{align}
    &\quad \mathbb E_{\QQ^k_j} \l[ \sum_{0 \leq s\leq t \wedge \tau} \{ f(\canoX_s)  - f(\canoX_{s-}) \} \middle|\mF_{0} \r] \\
    &= \E_{\PP^{\genericq}} \l[ \tilde \Z_{t \wedge \tau \wedge \sigma^k_j} \mathrm{G}_{t \wedge \tau \wedge \sigma^k_j}\middle| \Fc_0\r]= \E_{\QQ^k_j} \l[ \mathrm{G}_{t \wedge \tau } \middle| \Fc_0\r] \quad \text{(since $\mathrm{G}_0 =0$)}\\
     &=\mathbb E_{\QQ^k_j}\l[\int_{[0,t \wedge \tau ] \times \msx}\{ f(x) - f(\canoX_{s-}) \} \rme^{\tilde\chi_s(\canoX_{s-},x)}\bar \rmn_{\bfX}^{\genericq}(\rmd s \rmd x)\middle| \mF_{0} \r] \\
        &= \mathbb E_{\QQ^k_j}\l[\int_{[0,t \wedge \tau ] \times \msx} \{ f(x) - f(\canoX_{s-}) \}\rme^{\tilde\chi_s(\canoX_{s-},x)} \sum_{y \in \msx} \1_{\canoX_{s-} \neq y }  {\genericq}_s(\canoX_{s-}, y) \updelta_y(\rmd x)  \rmd s \middle| \mF_{0} \r] \eqsp.
\end{align}

Denote by $\bar \rmn^{\rme^{\tilde\chi} \genericq}_{\bfX}$ the corresponding jump kernel for $ (\canoX_t)_{t \in [0,\Tf]} \in \D_{\Tf}$ and $(t,x) \in [0,\Tf] \times \msx$,
\begin{align}
     \bar \rmn^{\rme^{\tilde\chi} \genericq}_{\bfX} ((\canoX_t)_{t\in [0,\Tf]},\rmd t \rmd x) &:= \sum_{y \in \msx} \1_{\canoX_{t-} \neq y }   (\rme^{\tilde\chi} \genericq)_t(\canoX_{t-},y) \updelta_y(\rmd x) \rmd t \\
     &= \sum_{y \in \msx} \1_{\canoX_{t-} \neq y }   \rme^{\tilde\chi_t(\canoX_{t-},y)} \genericq_t(\canoX_{t-},y) \updelta_y(\rmd x) \rmd t \eqsp,
\end{align}
then the previous equation rewrites
\begin{align}
    \mathbb E_{\QQ^k_j} \l[ f(\canoX_{t \wedge \tau}) - f(\canoX_0) \middle|\mF_{0} \r] &= \mathbb E_{\QQ^k_j}\l[\int_{[0,t \wedge \tau] \times \msx} \{ f(x) - f(\canoX_{s-}) \}  \bar \rmn^{\rme^{\tilde\chi} \genericq}_{\bfX} (\rmd s \rmd x)   \middle| \mF_{0} \r] \\
    &= \E_{\QQ^k_j} \l[\int_{[0,t \wedge \tau]}(\rme^{\tilde\chi} {\genericq})(s) f (\canoX_{s-}) \rmd s \middle| \mF_{0} \r]\eqsp.
\end{align}
    Choosing $\tau$ such that the above terms are meaningful, we conclude that $\QQ^k_j \in \MP(\rme^{\chi^k_j}{\genericq})$ and finish the proof.
\end{proof}


\begin{proof}[Proof of~\Cref{girsanovtheorem}]
  This proof is an adaptation of Theorem 2.6 in \cite{leonard2012girsanov} based on technical lemmas provided above applying on the reference measure $\P^{\canoq} \in \MP (\canoq, \P^{\canoq}_0)$.
  We first show the formulation of the Radon-Nikodym density $\rmd \Puq /\rmd \Pq$ when $\Puq \sim \Pq$. Define the stopping time $\tau_j^k$ as
\begin{align}
    \tau_j^k=\inf\{t \in \ccint{0,\Tf};\int_{\ccint{0,t} \times \msx}\varrho(\log u_s(\canoX_{s-}, x)\bar \rmn^{\canoq}_{\bfX}(\rmd x \rmd s))\geq k \text{ or } \log u_t(\canoX_{t-}, \canoX_t )\notin [-j,k]\} 
\end{align}
    which coincides with the stopping time $\sigma^k_j$ when $\chi = \log u$. Denote $u^{\sigma^k_j}:= \1_{[0,\sigma^k_j]} u$ and for simplicity, we write $u = u^{\sigma^k_j}$.
    By conditioning w.r.t. $\canoX_0$, we can assume without loss of generality that $\Pq_0=\Puq_0$, \ie, ${\rmd \Puq_0}/{\rmd \Pq_0}(\canoX_0)=1$.

    Applying Lemma \ref{importantlemma} for $\Puq \in \MP(u\canoq)$ and $\chi=-\log u$, we have
    \begin{align}\label{Qkj}
        \QQ^{\tau_j^k}
        := \Z^{-\log u, u \canoq}_{\Tf} (\Puq)^{\tau_j^k}
        &\in \MP(\1_{[0,\tau_j^k]}\rme^{-\log u}u \canoq)= \MP(\1_{[0,\tau_j^k]} \canoq ) \eqsp.
    \end{align}

    Furthermore, $\QQ^{\tau^k_j}_0 = (\Puq)^{\tau^k_j}_0 = (\Pq)^{\tau^k_j}_0$, which combined with the equation~\eqref{Qkj} imply $ \QQ^{\tau^k_j}=(\Pq)^{\tau^k_j}$ thanks to the uniqueness of $\MP(\1_{[0,\tau_j^k]}\canoq, \Pq_0)$
     Now, applying again~\Cref{importantlemma} for $\Pq \in \MP(\canoq)$ and $\chi = \log u$ yields
    \begin{align}
        \tilde{\P}^{\tau_j^k}
        := \Z^{\log u, \canoq}_{\Tf}(\Pq)^{\tau_j^k}
        \in \MP(\1_{[0,\tau^k_j]}\rme^{\log u} \canoq)= \MP(\1_{[0,\tau^k_j]} u \canoq)\eqsp.
    \end{align}
    Next reapplying~\Cref{importantlemma} with $(\tilde{\P^{u\canoq}})^{\tau^k_j}\in \MP(\1_{[0,\tau^k_j]}u \canoq)$ and $\chi = -\log u$ implies
    \begin{align}
        \tilde \QQ^{\tau^k_j}
        :=\Z^{-\log u, u\canoq}_{\Tf} \tilde \P^{\tau_j^k}
        \in  \MP(\1_{[0,\tau_j^k]}\rme^{-\log u} u \canoq)= \MP(\1_{[0,\tau_j^k]} \canoq ) \eqsp.
    \end{align}
    Argue as before, the previous equation together with the initial condition $\tilde \QQ^{\tau^k_j}_0 = \tilde \P^{\tau^k_j}_0 = \Pq_0$ yield that $\tilde \QQ^{\tau^k_j}=\overrightarrow{R}^{\tau^k_j}$ thanks to the uniqueness of $\MP(\1_{[0,\tau_j^k]}\canoq, \Pq_0)$. Combining it with $\QQ^{\tau^k_j}=(\Pq)^{\tau^k_j}$ implies
    \[\QQ^{\tau^k_j}=\tilde \QQ^{\tau^k_j} \eqsp,\]
    which means that
    \begin{align}\label{eq:uniq_p}
    \Z^{-\log u, u\canoq}_{\Tf} \P^{\tau_j^k} = \Z^{-\log u, u\canoq}_{\Tf} \tilde\P^{\tau_j^k} \eqsp.
    \end{align}
     Now observe that $\Z_{\Tf}^{-\log u, u\canoq} >0$, therefore, equation~\eqref{eq:uniq_p} yields
    $(\Puq)^{\tau^k_j}=\tilde \P^{\tau^k_j}$. Hence $(\Puq)^{\tau_j^k} = \Z_{\Tf}^{\log u, \canoq} (\Pq)^{\tau^k_j}$, \ie,
    \begin{align}
        &\1_{[0,\tau_j^k\wedge \Tf]}\dfrac{\rmd\Puq}{\rmd\Pq}((\canoX_t)_{t\in [0,\Tf]})=\1_{[0,\tau_j^k\wedge \Tf]}\dfrac{\rmd\Puq_0}{\rmd\Pq_0}(\canoX_0)\\
        &\qquad \exp \l(\int_{[0,\tau^k_j\wedge \Tf] \times \msx}(\1_{[0,\tau^k_j\wedge \Tf]}\log u) \rmd \tilde N^{\canoq}_{\canoX}-\int_{[0,\tau^k_j\wedge \Tf] \times \msx}\varrho(\log u)\rmd\bar \rmn^{\canoq}_{\bfX} \r)\eqsp.
    \end{align}
    Letting $k$ and $j$ tend to infinity and noting that $\tau:=\lim_{k,j\to \infty}\tau_j^k=\infty$, we get $\Puq$-a.s. under condition \eqref{eq:girsanov_integrability_cond} that
\begin{align}
    &\dfrac{\rmd\Puq}{\rmd\Pq}((\canoX_t)_{t\in [0,\Tf]}) =\dfrac{\rmd\Puq_0}{\rmd\Pq_0}(\canoX_0)\\
    &\qquad \exp\l( \int_{\ccint{0,\Tf} \times \msx}\log u_t(\canoX_{t-},x) \tilde N^{\canoq}_{\canoX}(\rmd t \rmd x)-\int_{\ccint{0,\Tf} \times \msx}\varrho(\log u_t(\canoX_{t-},x))\bar \rmn^{\canoq}_{\bfX}(\rmd t \rmd x) \r) \eqsp.
\end{align}
    We now extend the result above to the case when $\Puq$ might not be equivalent to $\Pq$. The idea is to approximate $\Puq$ by a sequence $(\Puq_n)$, which satisfies $\Puq_n \sim \Pq$ for all $n\geq1$. Denoting
    \begin{equation}\label{eq:14}
        \Puq_n=\l(1-\dfrac{1}{n}\r)\Puq+\dfrac{\Pq}{n}  \quad \text{ for } n \geq 1 \eqsp,
    \end{equation}
    we have $\Puq_n \sim \Pq$ and $\lim_{n\to\infty} \KL(\Puq|\Puq_n)=0$. For simplicity, we write $\chi=\log u$ and $\chi^n=\log u^n$, which are well-defined $\Puq$-a.s. From the variational representation of the $\KL$ divergence, using $\Puq\in \MP(u \canoq)$ combined with Lemma \ref{Zmartingale}, we obtain
    \begin{align}
        \KL(\Puq|\Puq_n)&\geq\mathbb E_{\Puq} \l[\int_{\ccint{0,\Tf} \times \msx}(\chi-\chi^n) \rmd \tilde N^{u^n \canoq}_{\canoX}-\int_{\ccint{0,\Tf} \times \msx}\varrho(\chi-\chi^n) \rmd(u^n\bar \rmn^{\canoq}_{\bfX}) \r] \eqsp.
    \end{align}
    By definition, we have \[\tilde N^{u^n \canoq}_{\canoX}=N^{\canoq}_{\bfX}-u^n\bar \rmn^{\canoq}_{\bfX} = N^{\canoq}_{\bfX}-u\bar \rmn^{\canoq}_{\bfX} +(u-u^n)\bar \rmn^{\canoq}_{\bfX} =\tilde N^{u\canoq}_{\canoX}+(u-u^n)\bar \rmn^{\canoq}_{\bfX} \eqsp,\]
    which yields
    \begin{align}
        &\quad\KL(\Puq|\Puq_n)\\
        &\geq \mathbb E_{\Puq} \l[\int_{\ccint{0,\Tf} \times \msx} (\chi-\chi^n) \rmd(\tilde N^{u\canoq}_{\canoX}+\bar \rmn^{\canoq}_{\bfX} (u-u^n))
        -
        \int_{\ccint{0,\Tf} \times \msx}\l(\dfrac{u}{ u^n}-\log \dfrac{u}{u^n}-1 \r)u^n \rmd\bar \rmn^{\canoq}_{\bfX} \r]\\
        &=\mathbb E_{\Puq} \l[ \int_{\ccint{0,\Tf} \times \msx} (\chi-\chi^n)\rmd \tilde N^{u\canoq}_{\canoX}+\int_{\ccint{0,\Tf} \times \msx} u\log \dfrac{u}{u^n} \rmd\bar \rmn^{\canoq}_{\bfX} -\int_{\ccint{0,\Tf} \times \msx}\l(\dfrac{u}{u^n}-1 \r)u^n \rmd\bar \rmn^{\canoq}_{\bfX} \r]\eqsp.
    \end{align}
    Since $\Puq \in \MP(u\canoq)$, we deduce that the stochastic integral $\int_{\ccint{0,\Tf} \times \msx}(\chi-\chi^n)\rmd \tilde N^{u\canoq}_{\canoX}$ is a local $\P$-martingale. Therefore,
    \begin{align}
        \KL( \Puq|\Puq_n)&\geq \mathbb E_{\Puq} \l[ \int_{\ccint{0,\Tf} \times \msx} \l( u^n-u-u\log \dfrac{u^n}{u} \r)\rmd\bar \rmn^{\canoq}_{\bfX} \r]\\
        &=\mathbb E_{\Puq} \l[ \int_{\ccint{0,\Tf} \times \msx} \l( \dfrac{u^n}{u}-\log \dfrac{u^n}{u}-1 \r)u\rmd\bar \rmn^{\canoq}_{\bfX} \r]\\
        &=\mathbb E_{\Puq} \l[ \int_{\ccint{0,\Tf} \times \msx}\varrho(\chi^n-\chi)\rmd(u\bar \rmn^{\canoq}_{\bfX}) \r] \eqsp.
    \end{align}
    Since $\lim_{n \to \infty}\KL(\P|\P_n)=0$, we obtain
    \begin{align}\label{lim}
        \lim_{n\to \infty}\mathbb E_{\Puq} \l[ \int_{\ccint{0,\Tf} \times \msx}\varrho(\chi^n-\chi)\rmd (u\bar \rmn^{\canoq}_{\bfX}) \r]=0 \eqsp.
    \end{align}

    On the other hand, the fact that $\Puq_n\sim \Pq$ yields
    \begin{align}\label{eq:12}
        \dfrac{\rmd \Puq_n}{\rmd \Pq} ((\canoX_t)_{t\in [0,\Tf]})=\dfrac{ \rmd \Puq_{n,0}}{\rmd \Pq_0}(\canoX_0)\exp\l(\int_{[0,\Tf] \times \msx} \chi^n \rmd \tilde N^{\canoq}_{\canoX}- \int_{\ccint{0,\Tf} \times \msx}\varrho(\chi^n)\rmd\bar \rmn^{\canoq}_{\bfX} \r) \eqsp.
    \end{align}

To obtain the desired expression for the Radon–Nikodym density $\frac{\mathrm{d} \Puq}{\mathrm{d} \Pq}$, we represent it as
        \begin{align}
            &\quad\frac{\rmd \Puq}{\rmd \Pq} ((\canoX_t)_{t\in [0,\Tf]}) \notag\\
            &= \l(\frac{\rmd \Puq}{ \rmd \Puq_n}. \frac{\rmd \Puq_n}{\rmd \Pq}\r) ((\canoX_t)_{t\in [0,\Tf]})\nonumber \\
            &\overset{~\eqref{eq:12}}{=} \l(\frac{\rmd \P}{ \rmd \P_n}. \frac{ \rmd \P_{n,0}}{\rmd \Pq_0}\r)(\canoX_0)\exp\l(\int_{[0,\Tf] \times \msx} \chi^n \rmd \tilde N^{\canoq}_{\canoX}- \int_{\ccint{0,\Tf} \times \msx}\varrho(\chi^n)\rmd\bar \rmn^{\canoq}_{\bfX}\r) \nonumber \\
            &= \l(\frac{\rmd \Puq}{ \rmd \Puq_n} . \frac{\rmd \Puq_{n,0}}{\rmd \Puq_0}\r) (\canoX_0) \frac{\rmd \Puq_0}{\rmd \Pq_0} (\canoX_0) \exp\l(\int_{[0,\Tf] \times \msx} \chi \rmd \tilde N^{\canoq}_{\canoX}- \int_{\ccint{0,\Tf} \times \msx}\varrho(\chi)\rmd\bar \rmn^{\canoq}_{\bfX}\r) \nonumber \\
            & \exp\l(\int_{\ccint{0,\Tf} \times \msx}(\chi^n -\chi) \rmd \tilde N^{\canoq}_{\canoX}- \int_{\ccint{0,\Tf} \times \msx} (\varrho(\chi^n) -\varrho (\chi))\rmd\bar \rmn^{\canoq}_{\bfX}\r) \eqsp, \Puq\text{-a.s.} \label{eq:radon_dens}
        \end{align}

  The last part can be rewritten as follows using the relation $\tilde N_{\canoX}^{\canoq} = \tilde N_{\canoX}^{u\canoq}+ (u-1)\bar \rmn^{\canoq}_{\bfX} $,
  \begin{align}
      &\quad\exp\l(\int_{\ccint{0,\Tf} \times \msx}(\chi^n -\chi) \rmd \tilde N^{\canoq}_{\canoX}- \int_{\ccint{0,\Tf} \times \msx} (\varrho(\chi^n) -\varrho (\chi))\rmd\bar \rmn^{\canoq}_{\bfX}\r) \\
            &= \exp\l(\int_{\ccint{0,\Tf} \times \msx}(\chi^n - \chi )  \rmd \tilde N_{\bfX}^{u\canoq} - \int_{\ccint{0,\Tf} \times \msx} \varrho (\chi - \chi^n) \rmd (u \bar \rmn^{\canoq}_{\bfX} ) \r)
  \end{align}

We first handle the second integral above using~\eqref{lim} and the fact that $\varrho$ is a non-negative function to obtain
  \begin{equation}
      \E_{\Puq} \l[ \l|\int_{\ccint{0,\Tf} \times \msx} \varrho (\chi - \chi^n) \rmd (u \bar \rmn^{\canoq}_{\bfX}) \r| \r] \xrightarrow{n \to \infty} 0 \eqsp.
  \end{equation}
  This together with Markov's inequality lead to
  \begin{equation}\label{eq:p_conv_1}
       \int_{\ccint{0,\Tf} \times \msx} \varrho (\chi - \chi^n) \rmd (u \bar \rmn^{\canoq}_{\bfX}) \xrightarrow{\Puq} 0  \eqsp,
  \end{equation}
  and therefore, from \cite[Proposition 1.4]{meliotconvergence}, there is a subsequence $\chi^{n_k}$ such that
  \begin{equation}\label{eq:lim_1}
       \int_{\ccint{0,\Tf} \times \msx} \varrho (\chi - \chi^{n_k}) \rmd (u \bar \rmn^{\canoq}_{\bfX} ) \xrightarrow{n \to \infty} 0 \eqsp, \quad \Puq\text{-a.s.}
  \end{equation}

  Furthermore, recall that $\varrho(a)=\rme^a - a -1 \geq a^2/2$, hence~\eqref{lim} can be used to control the stochastic integral w.r.t. the $\Puq$-martingale $\tilde N_{\bfX}^{u\canoq}$ as follows
  \begin{align}
      \E_{\Puq} \l[ \l(\int_{\ccint{0,\Tf} \times \msx}(\chi^n - \chi )  \rmd \tilde N_{\bfX}^{u\canoq} \r)^2 \r] &\overset{\text{It\^o's isometry}}{=} \E_{\Puq} \l[ \int_{\ccint{0,\Tf} \times \msx}(\chi^n - \chi )^2  \rmd ( u\bar \rmn^{\canoq}_{\bfX} ) \r] \\
      &\quad\leq 2 \E_{\Puq} \l[ \int_{\ccint{0,\Tf} \times \msx} \varrho (\chi - \chi^n) \rmd (u \bar \rmn^{\canoq}_{\bfX} )  \r] \xrightarrow{n \to \infty} 0 \eqsp.
  \end{align}
This, along with Markov’s inequality, results in
\begin{equation}
    \int_{\ccint{0,\Tf} \times \msx}(\chi^n - \chi )  \rmd \tilde N_{\bfX}^{u\canoq}  \xrightarrow{\Puq} 0 \eqsp.
\end{equation}
Combining this with~\eqref{eq:p_conv_1} implies that
\begin{equation}
    \int_{\ccint{0,\Tf} \times \msx}(\chi^n - \chi )  \rmd \tilde N_{\bfX}^{u\canoq} - \int_{\ccint{0,\Tf} \times \msx} \varrho (\chi - \chi^n) \rmd (u\bar \rmn^{\canoq}_{\bfX} ) \xrightarrow{\Puq} 0 \eqsp.
\end{equation}

As a consequence, \cite[Proposition 1.4]{meliotconvergence} asserts that there is a subsequence $(\chi^{n_k})$ such that
\begin{equation}
    \l[\int_{\ccint{0,\Tf} \times \msx}(\chi^{n_k} - \chi )  \rmd \tilde N_{\bfX}^{u\canoq} - \int_{\ccint{0,\Tf} \times \msx} \varrho (\chi - \chi^{n_k}) \rmd (u \bar \rmn^{\canoq}_{\bfX} ) \r] \xrightarrow{k \to \infty} 0 \eqsp, \quad \Puq\text{-a.s.}
\end{equation}
It helps interpreting~\eqref{eq:radon_dens} as
\begin{align}
    & \frac{\rmd \Puq}{\rmd \Pq} ((\canoX_t)_{t\in [0,\Tf]}) \\
    &= \l(\frac{\rmd \Puq}{ \rmd \Puq_{n_k}} . \frac{\rmd \Puq_{n_k,0}}{\rmd \Puq_0} \r)(\canoX_0) \frac{\rmd \Puq_0}{\rmd \Pq_0} (\canoX_0) \exp\l(\int_{[0,\Tf] \times \msx} \chi \rmd \tilde N^{\canoq}_{\canoX}- \int_{\ccint{0,\Tf} \times \msx}\varrho(\chi)\rmd\bar \rmn^{\canoq}_{\bfX} \r)\\
    &\qquad \qquad  \exp\l(\int_{\ccint{0,\Tf} \times \msx}(\chi^{n_k} -\chi) \rmd \tilde N^{\canoq}_{\canoX}- \int_{\ccint{0,\Tf} \times \msx} (\varrho(\chi^{n_k}) -\varrho (\chi))\rmd\bar \rmn^{\canoq}_{\bfX} \r) \\
    & \xrightarrow[~\eqref{eq:14}]{k \to \infty} \frac{\rmd \Puq_0}{\rmd \Pq_0} (\canoX_0) \exp\l(\int_{[0,\Tf] \times \msx} \chi \rmd \tilde N^{\canoq}_{\canoX}- \int_{\ccint{0,\Tf} \times \msx}\varrho(\chi)\rmd\bar \rmn^{\canoq}_{\bfX} \r)\eqsp, \quad \Puq\text{-a.s.}
\end{align}
Replacing $\chi=\log u$, we arrive at our desired claim $\Puq$-a.s.
    \begin{equation}
        \dfrac{\rmd\Puq}{\rmd\Pq} ((\canoX_t)_{t\in [0,\Tf]})=\dfrac{\rmd\Puq_0}{\rmd\Pq_0}(\canoX_0)\exp\l( \int_{\ccint{0,\Tf} \times \msx} \log u \rmd \tilde N^{\canoq}_{\canoX}-\int_{\ccint{0,\Tf} \times \msx}\varrho(\log u)\rmd \bar \rmn^{\canoq}_{\bfX} \r) \eqsp.
    \end{equation}
Consequently, the $\KL$ divergence reads as
\begin{align}
    \KL(\Puq|\Pq) &=\KL(\Puq_0|\Pq_0)+\mathbb E_{\Puq} \Bigg[ \int_{\ccint{0,\Tf} \times \msx}\log u_t(\canoX_{t-},x) \tilde N^{\canoq}_{\canoX}(\rmd t \rmd x) \\
   & \qquad \qquad-\int_{\ccint{0,\Tf} \times \msx}\varrho(\log u_t(\canoX_{t-},x)) \bar \rmn^{\canoq}_{\bfX} (\rmd t \rmd x) \Bigg] \eqsp.
\end{align}
Using the identity $\tilde N_{\canoX}^{\canoq} = \tilde N_{\canoX}^{u\canoq} + (u-1)\bar \rmn^{\canoq}_{\bfX}$ and relying on the fact that $\tilde N_{\canoX}^{u\canoq}$ is a martingale under $\Puq$, we deduce that
\begin{align}
    \KL(\Puq|\Pq)&=\KL(\Puq_0|\Pq_0)+\mathbb E_{\Puq} \l[ \int_{\ccint{0,\Tf} \times \msx}(u-1)\log u \rmd\bar \rmn^{\canoq} -\int_{\ccint{0,\Tf} \times \msx}\varrho(\log u) \rmd\bar \rmn^{\canoq}_{\bfX} \r]\\
    &=\KL(\Puq_0|\Pq_0)+\mathbb E_{\Puq} \l[ \int_{\ccint{0,\Tf} \times \msx}[(u-1)\log u-u+\log u+1]\rmd\bar \rmn^{\canoq}_{\bfX} \r]\\
    &=\KL(\Puq_0|\Pq_0)+\mathbb E_{\Puq} \l[\int_{\ccint{0,\Tf} \times \msx}\mathbf{h}(u_t(\canoX_{t},x)) \bar \rmn^{\canoq}_{\bfX} (\rmd t \rmd x) \r]\eqsp,
\end{align}
since $\canoX_t = \canoX_{t-}$ for Lebesgue almost every $t\in (0,\Tf]$ and $\bar \rmn^{\canoq}_{\bfX}$ is atomless in time, where $\mathbf{h}(a)=a\log a-a+1$ for $a>0$.
In addition, we can simplify the $\KL$ expression above by replacing
\[\bar \rmn^{\canoq}_{\bfX} (\rmd t \rmd x) = \sum_{y\in \msx} \1_{ \{\canoX_{t-} \neq y \}} \canoq (\canoX_{t-},y) \updelta_{y}(\rmd x)  \rmd t\]
 to arrive at
\begin{align}
    \KL(\Puq|\Pq) &= \KL(\Puq_0|\Pq_0)+\mathbb \E_{\Puq} \l[\int_{\ccint{0,\Tf}} \sum_{x\in \msx} \mathbf{h}(u_t(\canoX_{t},x)) \1_{ \canoX_{t} \neq x }  \canoq (\canoX_{t}, x) \rmd t \r] \eqsp.
\end{align}
The proof of~\Cref{girsanovtheorem} is then complete.
\end{proof}

\bibliography{reference}

\end{document}